\tikzstyle{dot}=[fill=white, draw=black, shape=circle]
\tikzstyle{box}=[fill=white, draw=black, shape=rectangle]
\tikzstyle{RandomVar}=[fill=white, draw=black, shape=circle]
\tikzstyle{elemt}=[fill={rgb,255: red,0; green,128; blue,128}, draw=black, shape=circle]
\tikzstyle{discard}=[fill=black, draw=black, shape=circle]
\tikzstyle{codiscard}=[fill=red, draw=red, shape=circle]
\tikzstyle{coelemt}=[fill={rgb,255: red,255; green,128; blue,0}, draw=black, shape=circle]
\tikzstyle{arrow}=[->]
\tikzstyle{dashedline}=[-, dashed]
\tikzstyle{doubleline}=[-, double]
\tikzstyle{thickarrow}=[->, very thick]
\tikzstyle{thickline}=[-, very thick]
\newcommand{\boxmargin}{5pt}
\tikzset{discard/.style={ground, scale=1.5, rotate=-90}}
\tikzset{codiscard/.style={ground, scale=1.5, rotate=90}}
\tikzset{elemt/.style={append after command={
   \pgfextra
        \draw[sharp corners, fill=white]% 
    ([shift={(-\boxmargin, 0)}]\tikzlastnode.south west)-- ([shift={(+\boxmargin, 0)}]\tikzlastnode.south east) -- ([shift={(0, 10pt)}]\tikzlastnode.north)-- ([shift={(-\boxmargin, 0)}]\tikzlastnode.south west);
   \endpgfextra}}}
\tikzset{coelemt/.style={append after command={
   \pgfextra
        \draw[sharp corners, fill=white]% 
         ([shift={(-\boxmargin, 0)}]\tikzlastnode.north west)-- ([shift={(+\boxmargin, 0)}]\tikzlastnode.north east) -- ([shift={(0, -10pt)}]\tikzlastnode.south)-- ([shift={(-\boxmargin, 0)}]\tikzlastnode.north west);
   \endpgfextra}}}
\tikzset{projection/.style={append after command={
   \pgfextra
        \draw[sharp corners, fill=white]% 
    ([shift={(-\boxmargin, 0)}]\tikzlastnode.south west)-- ([shift={(+\boxmargin, 0)}]\tikzlastnode.south east) -- ([shift={(0, 10pt)}]\tikzlastnode.north east) -- ([shift={(0, 10pt)}]\tikzlastnode.north west) -- ([shift={(-\boxmargin, 0)}]\tikzlastnode.south west);
   \endpgfextra}}}
\tikzset{tinycircleBlack/.style={append after command={
   \pgfextra
        \draw[fill=black]% 
    (\tikzlastnode.center) circle (2mm);
   \endpgfextra}}}
\tikzstyle{dot}=[tinycircleBlack]
\theoremstyle{plain}
\newtheorem{thm}{Theorem}[chapter]
\newtheorem{prop}[thm]{Proposition}
\newtheorem{lemma}[thm]{Lemma}
\newtheorem{cor}[thm]{Corollary}
\newtheorem{claim}[thm]{Claim}
\theoremstyle{definition}
\newtheorem{defs}[thm]{Definition}
\theoremstyle{remark}
\newtheorem{rmk}[thm]{Remark}
\newtheorem{ex}[thm]{Example}
\newenvironment{abstract}{\newpage 

\vspace*{5cm}

{\fontfamily{qag}\huge\bfseries Abstract}

\vspace*{2.5cm}

}{\newpage}
\newenvironment{acknowledgements}{\newpage 

\vspace*{5cm}

{\fontfamily{qag}\huge\bfseries Acknowledgements}

\vspace*{2.5cm}

}{\newpage}
\titleformat{\part}[display]{\huge\fontfamily{qag}\bfseries}{\partname\quad\thepart}{1pc}{\thispagestyle{empty}\HUGE\bfseries\scshape}
\titleformat{\chapter}[block]{\Huge\fontfamily{qag}\bfseries}{\begin{center}\chaptername~\thechapter\end{center}}{1pc}{\centering\huge\bfseries\scshape}[\vspace*{1cm}\hrule]
\titleformat{\section}{\LARGE\fontfamily{qag}\bfseries}{\thesection}{1pc}{\LARGE\bfseries}
\titleformat{\subsection}{\Large\fontfamily{qag}\bfseries}{\thesubsection}{1pc}{\Large\bfseries}
\titleformat{\subsubsection}{\large\fontfamily{qag}\bfseries}{\thesubsubsection}{1pc}{\large\bfseries}
\newcommand{\ket}[1]{\left|#1\right>}
\newcommand{\bra}[1]{\left<#1\right|}
\newcommand{\DR}{\mathcal{D}_{\mathbb{R}_+}}
\title{A Quantum-inspired Analysis of Human Disambiguation Processes}%\\From Foundational Theory to Natural Language Processing Applications}
\author{Daphne Pauline Wang}
\date{}
\begin{document}

% \maketitle
\begin{titlepage}
    {\fontfamily{qag}\centering
    \hrule
    \vspace*{1cm}
    
    \textbf{\huge A Quantum-Inspired Analysis of Human Disambiguation Processes}\\
    \vspace*{.5cm}
    \textit{\Large Foundational Theory and Applications}

    \vspace*{1cm}
    \hrule

    \vspace*{3cm}

    {\Large Daphne Pauline Wang}
    
    \vspace*{1cm}

    {\normalsize Thesis submitted as part of:}
    
    \textit{PhD in Computer Science}

    \vfill

    \includegraphics[width=40mm]{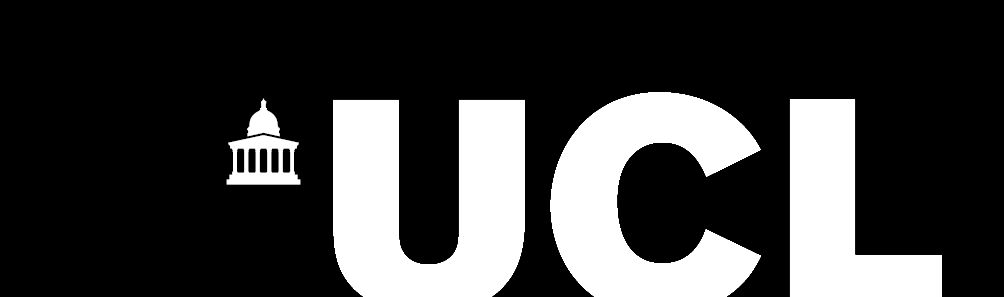}\\
    {\normalsize University College London}
    % University College London

    \vspace*{.5cm}

    }
\end{titlepage}

% \fontfamily{ppl}\normalsize
\pagenumbering{roman}
% \begin{declaration}
% \paragraph{}I, Daphne Pauline Wang, confirm that the work presented in my thesis is my own. Where information has been derived from other sources, I confirm that this has been indicated in the thesis.
% \end{declaration}

\begin{abstract}
    Formal languages are essential for computer programming and are constructed to be easily processed by computers. In contrast, natural languages are much more challenging and instigated the field of  Natural Language Processing (NLP). One major obstacle is the ubiquity of ambiguities. Recent advances in NLP  have led to the development of large language models, which can resolve ambiguities with high accuracy. At the same time, quantum computers have gained much attention in recent years as they can solve some computational problems faster than classical computers. This new computing paradigm has reached the fields of machine learning and NLP, where hybrid classical-quantum learning algorithms have emerged. However, more research is needed to identify which NLP tasks could benefit from a genuine quantum advantage. In this thesis, we applied formalisms arising from foundational quantum mechanics, such as contextuality and causality, to study ambiguities arising from linguistics. By doing so, we also reproduced psycholinguistic results relating to the human disambiguation process. These results were subsequently used to predict human behaviour and outperformed current NLP methods. %This work shows that the mathematical framework of quantum mechanics can be used to study linguistic phenomena and provides a procedure for determining where a quantum advantage can be achieved in NLP.
\end{abstract}

% \chapter*{Impact Statement}
% \begin{impactstatement}
% \input{ImpactStatement.tex}
% \end{impactstatement}

% \includepdf[pages=-]{Research Paper Declaration Form/signed/UCL-Research-Paper-Declaration-Form A.pdf}
% \includepdf[pages=-]{Research Paper Declaration Form/signed/UCL-Research-Paper-Declaration-Form B.pdf}
% \includepdf[pages=-]{Research Paper Declaration Form/signed/UCL-Research-Paper-Declaration-Form C.pdf}
% \includepdf[pages=-]{Research Paper Declaration Form/signed/UCL-Research-Paper-Declaration-Form D.pdf}

% \chapter*{Acknowledgements}
\begin{acknowledgements}

\paragraph{}First and foremost, I would like to express my deepest gratitude to my supervisor, Prof Mehrnoosh Sadrzadeh, for her invaluable help throughout my PhD. I am incredibly lucky to benefit from knowledge and experience, and she has always encouraged me to achieve more. I could not have wished to have a better supervisor. I would also like to thank my examiners Fabio Zanasi and Rui Soares-Barbosa for their very helpful comments on the thesis. In addition, I would like to give special thanks to the many people who have also guided me through this journey, including Samson Abramsky, Ehtibar Dzhafarov, Shane Mansfield, Ruth Kempson, Wing Yee Chow, and Richard Breheny. I am also grateful to my office mates, Lachlan McPheat, Kin Ian Lo, and Hadi Wazni, for the many discussions we have had (and the countless hours of procrastination).

I also could not have done it without my husband, Joel Ingram, who kept me sane throughout the years. He has been my constant source of comfort and happiness. Additionally, I would like to thank my parents for their incredible influence on my life and emotional support. Finally, I would like to mention my friends and family from home, who have always believed in me more than I believed in myself.

\end{acknowledgements}
\thispagestyle{empty}

\vspace*{5cm}

\begin{flushright}
    \textit{To Popo.}
\end{flushright}

\tableofcontents

\pagestyle{fancy}
\fancyhead{}
\fancyfoot{}
\fancyhead[RE]{\nouppercase{\it\sffamily \rightmark}}
\fancyhead[LE]{\thepage}
\fancyhead[LO]{\nouppercase{\it\sffamily \leftmark}}
\fancyhead[RO]{\thepage}

\chapter*{Introduction}
\pagenumbering{arabic}
\addcontentsline{toc}{chapter}{Introduction}
\markboth{Introduction}{Introduction}
\paragraph{}In this research program, we investigate some properties of the English language using mathematical tools from Quantum Mechanics. We created quantum-inspired models of human disambiguation processes from linguistic data. Using these models, we provide promising evidence that this method leads to novel quantum computing methods for Natural Language Processing.

\subsection*{Computational Linguistics}
\paragraph{} Artificial Intelligence (AI), or how to perform intelligent tasks done by humans algorithmically, is a longstanding challenge in Computer Science. Amongst one of the most widely studied areas in AI is the field of Natural Language Processing (NLP), whose goal is to understand natural language. This field is currently widely dominated by the use of Large Language Models (LLMs), such as \texttt{BERT} or \texttt{GPT}, which consist of artificial neural networks trained over large corpora. These LLMs are incredibly successful in various NLP tasks, such as text generation or knowledge extraction. There are, however, several criticisms of them, including:
\begin{itemize}
    \item \textbf{Lack of reproducibility}, due to the immense resources needed for training;
    \item \textbf{Transparency}, i.e. the impossibility of tracing back the decision process of the neural network;
    \item \textbf{Cognitive plausibility}, i.e. whether these neural networks reproduce the way humans learn and process natural language data.
\end{itemize}

In parallel, computational linguistics aims to use computational tools to study human cognitive processes and natural languages. Computational linguistics and NLP are highly related, and their distinction is increasingly blurred. 

The aim of this thesis is more in line with computational linguistics since we investigate features of the natural language using tools from NLP (e.g. LLMs). The thesis focuses on English, although our approach could be replicated in other languages.

\subsection*{Quantum Computing}
\paragraph{}In 1994, Peter Shor published his famous article describing a quantum algorithm that can factorise an integer in polynomial time~\cite{shor1999polynomial}, thus demonstrating the use of quantum systems to solve hard computational tasks. The discovery of this algorithm sparked the interest of the computer science research community in quantum information theory and quantum computing. The idea behind quantum computations is straightforward. Instead of using bits -- as in classical computing -- information is encoded as \emph{qubits}. Qubits can not only take values the values $\ket{0}$ or $\ket{1}$ but can also be expressed as the (complex) linear combination:
\begin{equation}
    \alpha \ket{0} + \beta \ket{1} \quad \text{such that} \quad \left|\alpha\right|^2 + \left|\beta\right|^2 = 1
\end{equation} 
Similarly, instead of considering operations between strings of bits as computations, we use operations between systems of qubits, which are subject to the laws of quantum mechanics. 

\paragraph{}Since Shor's algorithms, it has been shown that quantum systems are capable of achieving speed-ups in various computational tasks in theory, in tasks such as database search~\cite{Grover}, optimisation tasks~\cite{QAOAAdvantage} or simulation of physical systems~\cite{berry2007efficient,babbush2023exponential}, and more recently in practice, in tasks such as sampling from a random quantum circuit~\cite{Arute2019} or boson sampling~\cite{zhong2020quantum,Madsen2022}. In addition, although quantum advantages are more difficult to prove, quantum computations have started to find applications in various fields of computer science, including optimisation problems (e.g.~\cite{QAOA,QAOAAdvantage}) and AI (e.g.~\cite{Flamini2020, Zeng2016, Trugenberger2002,Rebentrost2014}).

\subsection*{This project}
\paragraph{}

\paragraph{}Most research on quantum computing applications to AI and NLP consists of creating a quantum version of existing algorithms. Therefore, these approaches also suffer from the same issues as the classical approaches, including the need for more transparency and cognitive plausibility. In addition, the advantage of using quantum computing resources is not always clear and usually relies on heuristics. 

In this work, we aim to address these problems for quantum NLP. By studying linguistic data using the formalisms of quantum mechanics, we create a parallel between linguistic phenomena and quantum systems, from which we can identify which features of natural languages would benefit from simulations on quantum hardware. We also show that, by using the mathematical frameworks developed to study quantum mechanics, we can uncover properties of the human disambiguation process.

\paragraph{}This thesis uses the mathematics of \emph{category theory}, in particular the notions of \emph{sheaves} and \emph{presheaves}. The main reason for doing so is the level of abstraction allowed by category theory, allowing us to draw parallels between quantum and linguistic systems. A second motivation is the \emph{categorical quantum mechanics} research project which originated from the seminal paper of Samson Abramsky and Bob Coecke~\cite{Abramsky2004}. Indeed, the line of research showed that quantum mechanics can very elegantly be described in categorical terms (see more details in Chapter~\ref{chap:BackQM}). 

On top of that, this description has also been applied to various aspects of linguistics, notably in the Distributional Compositional Categorical models of meanings (also known as DisCoCat), which emanated from~\cite{DisCoCat}, or as semantics of Discourse Representation Theory~\cite{Abramsky2014semanticUnification}. 

\paragraph{} This thesis is an additional example of the application of categorical quantum mechanics in linguistics to ambiguities in the English language. Ambiguities in English occur at different levels, from words to discourses. This project investigates the disambiguation process of two types of ambiguities, namely:
\begin{itemize}
    \item \textbf{Lexical ambiguity} which happens when a single word has multiple interpretations. For example, the word \textit{bank}, which could refer to a financial institution or the bank of a river;
    \item \textbf{Syntactic ambiguity} which happens when a phrase can have multiple grammatical structures. For example, in the sentence \textit{She saw a man with binoculars}, the phrase \textit{with binoculars} can either attach to the verb \emph{saw} or to the noun-phrase \textit{a man}.
\end{itemize}

In the case of lexical ambiguity, we have studied the statistics of the meaning activations of subject-verb and verb-object phrases, where each word is lexically ambiguous. We then used the mathematical framework arising from the study of quantum contextuality and causality to study these statistics. We were able to show that the observed statistics from lexical ambiguity data show are capable to exhibit quantum-like contextuality. In addition, we were able to rederive some psycholinguistics results that were originally based on eye-tracking data (which is not easily reproducible and expensive to obtain). Using these results, we produced quantum simulations of the disambiguation process of subject-verb and verb-object phrases, which could then be applied to NLP tasks.

Regarding syntactic ambiguity, we also used the sheaf-theoretic frameworks originating from quantum contextuality and causality to create a model of the syntactic parsing process. This model was then used to make reading time predictions in special sentences, known as garden-path sentences. These predictions were closer to the human baseline than the ones obtained from state-of-the-art methods of computational linguistics.

% \subsection*{Related work}
% \begin{color}{red}
%     \begin{itemize}
%         \item Quantum Psychology etc.
%         \item Quantum NLP and ML (the mainstream approaches)
%         \item Monoidal categories and DisCoCat
%     \end{itemize}
% \end{color}
% \paragraph{}\blue{Related work??}

\subsection*{Outline of the thesis}
\paragraph{}The aim of Part~\ref{part:background} is to introduce the concept we will use in the rest of the thesis.
\begin{itemize}
    \item In Chapter~\ref{chap:BackQM}, we introduce the different branches of categorical quantum mechanics that we will use in the subsequent parts. 
    \item In Chapter~\ref{chap:ambiguities}, we introduce the main literature regarding lexical (Section~\ref{sec:lexicalBack}) and syntactic (Section~\ref{sec:SyntacticBackground}) ambiguities. In particular, we will describe the psycholinguistic theories relating to these ambiguity types and the computational tools that have been used in the past in NLP and computational linguistics. 
    
    Elements of Section~\ref{sec:lexicalBack} will be used in Part~\ref{part:Lexical}, while theories introduced in Section~\ref{sec:SyntacticBackground} will be used in Part~\ref{part:Syntactic}.
\end{itemize}

\paragraph{}Parts~\ref{part:Lexical} and~\ref{part:Syntactic} correspond to the original contributions of the thesis. These parts can be read independently.

\paragraph{}In Part~\ref{part:Lexical}, we focus on the lexical disambiguation process. 
\begin{itemize}
    \item We start by studying the features of lexically ambiguous phrases in terms of quantum \emph{contextuality} and \emph{causality} in Chapter~\ref{chap:lexicalFeatures}. 
    \item In Chapter~\ref{chap:lexicalCircuits}, we use the conclusions of Chapter~\ref{chap:lexicalFeatures} to generate a quantum model of the human disambiguation process using variational quantum circuits.
\end{itemize}

\paragraph{}In Part~\ref{part:Syntactic}, we study the human parsing process by looking at \emph{garden-path sentences}.
\begin{itemize}
    \item Inspired by the psycholinguistic theories described in Section~\ref{sec:SyntacticBackground}, we describe our sheaf-theoretic model of the human parsing process in Chapter~\ref{chap:SyntacticModel}.
    \item In Chapter~\ref{chap:GPPred}, we evaluate the models from Chapter~\ref{chap:SyntacticModel} empirically. We then compare the model's predictions with those from state-of-the-art computational linguistics models.
\end{itemize}

\subsection*{Published contributions}

\paragraph{}Several original contributions presented in thesis were published before the submission of this thesis. These are the following:
\begin{itemize}
    \item \underline{Title}: \textit{On the Quantum-like Contextuality of Ambiguous Phrases}\\
    \underline{Authors} (publication order): \textbf{D. Wang}, M. Sadrzadeh, S. Abramsky and V. H. Cervantes\\
    \underline{Published in:} \textit{ACL Anthology} as part of the \textit{Proceedings of the 2021 Workshop on Semantic Spaces at the Intersection of NLP, Physics, and Cognitive Science (SemSpace)}\\
    \underline{Publication date:} 2021\\
    \underline{Material presented in thesis in:} Section~\ref{sec:lexicalContext}
    \item  \underline{Title}: \textit{Analysing Ambiguous Nouns and Verbs with Quantum Contextuality Tools}\\
    \underline{Authors} (publication order): \textbf{D. Wang}, M. Sadrzadeh, S. Abramsky and V. H. Cervantes\\
    \underline{Published in:} \textit{Journal of Cognitive Science}\\
    \underline{Publication date:} 2021\\
    \underline{Material presented in thesis in:} Section~\ref{sec:lexicalSignal}
    \item  \underline{Title}: \textit{Causality and Signalling of Garden-Path Sentences}\\
    \underline{Authors} (publication order): \textbf{D. Wang} and M. Sadrzadeh\\
    \underline{Published in:} \textit{Philosophical Transaction of Royal Society A}\\
    \underline{Publication date:} 2024\\
    \underline{Material presented in thesis in:} Part~\ref{part:Syntactic} (Chapters~\ref{chap:SyntacticModel} \&~\ref{chap:GPPred})
\end{itemize}

% In Chapter~\ref{chap:background}, we lay out the technical prerequisites to the original contributions of the PhD project, including category theory and the different branches of categorical quantum mechanics. The main contributions of the project are then described in Chapter~\ref{chap:lexical} and~\ref{chap:syntactic}. These two chapters can be read independently. In Chapter~\ref{chap:lexical}, we describe our analysis of the data relating to lexical ambiguous phrases, as well as the details of our quantum implementation of the disambiguation process. In Chapter~\ref{chap:syntactic}, we turn our focus to syntactic ambiguities in garden-path sentences. There, we introduce our computational model, as well as the methods used to predict human reading times. Finally, Chapter~\ref{chap:conclusion} concludes the thesis by summarising the contributions and describing possible future avenues of research. \red{More narrative, state goal say why for each chapter + part .}

\part{Background}\label{part:background}
% \red{part introduction}
\chapter{Quantum theory and applications}\label{chap:BackQM}
\paragraph{}This chapter introduces the quantum physics formalisms we will use in Parts~\ref{part:Lexical} and~\ref{part:Syntactic}. In particular, we will make use of the categorical description of quantum mechanics. We will, therefore, start by introducing the mathematics of category theory in Section~\ref{sec:categories}. In Section~\ref{sec:QDescription}, we will describe quantum correlations, notably in terms of sheaf theory, and in Section~\ref{sec:QProcess}, we will give a categorical description of quantum processes. 
\section{A crash course in Category Theory}\label{sec:categories}
\paragraph{}Category theory originated in the work of Eilenberg and MacLane for homological algebra~\cite{Eilenberg1945}. The field of category theory then rapidly evolved and reached other areas of mathematics such as algebraic geometry~\cite{Grothendieck1957}, set theory~\cite{Lawvere1964}, as well as computer science~\cite{lambek1980lambda,fong2018seven} and physics~\cite{Abramsky2004,BaezLauda2011}.

In this section we introduce the basic concepts at the heart of category theory. This introduction is by no means comprehensive, and we will only present the elements that will be useful in subsequent chapters.

\subsection{Basics of Category Theory}

\paragraph{}In this subsection, we start by defining the main notions of category theory, which will be subsequently expanded in Section~\ref{subsec:sheaves} and Section~\ref{subsec:monoidal}.

\begin{defs}[Category]
    A \emph{category} $\mathcal{C}$ consists of:
    \begin{enumerate}
        \item A collection of \emph{objects} denoted as $ob\left(\mathcal{C}\right)$
        \item A set\footnote{We are here only considering locally small categories; in general categories, $\mathcal{C}(A,B)$ coulbe be a proper class.} of \emph{morphisms} for each pair of objects $A,B\in ob\left(\mathcal{C}\right)$, denoted as $\mathcal{C}(A, B)$ equipped with:
        \begin{enumerate}
            \item Sequential composition: for morphisms $f\in \mathcal{C}(A,B)$ and $g \in \mathcal{C}(B, C) $ we have $g \circ f \in \mathcal{C}(A, C)$.
            \item Identity morphism: for each object $A\in ob\left(\mathcal{C}\right)$, there exists a unique morphism $id_A\in \mathcal{C}(A,A)$.
        \end{enumerate}
        satifsfying the following properties:
        \begin{enumerate}
            \item For any $f\in \mathcal{C}(A,B)$, $g \in \mathcal{C}(B, C)$ and $k \in \mathcal{C}(C, D)$:
            \begin{equation}\label{eq:morphTrans}
                k \circ (g\circ f) = (k\circ g) \circ f
            \end{equation} 
            In other words, sequential composition is associative.
            \item For any $f\in \mathcal{C}(A, B)$ and $g\in \mathcal{C}(C, A)$:
            \begin{equation}\label{eq:idUnit}
                f \circ id_A = f \qquad id_A\circ g = g
            \end{equation}
        \end{enumerate}
    \end{enumerate}
\end{defs}

\begin{ex} Here are standard examples of categories:
    \begin{enumerate}[label=\alph*.]
        \item The category of sets and functions denoted as $\mathbf{Sets}$. The objects of $\mathbf{Sets}$ are sets, and morphisms are functions between sets. Composition is the standard functional composition, and identity morphisms are identity functions:
        \begin{equation}
            id_A\in  \mathbf{Sets}(A,A) = a \mapsto a
        \end{equation}
        \item The category of sets and relations denoted as $\mathbf{Rel}$. The objects are the same as the ones of $\mathbf{Sets}$, and the morphisms are binary relations, i.e. $\mathbf{Rel}(A, B) = \mathcal{P}(A\times B)$. The composition of relations $u\in \mathbf{Rel}(A,B)$ and $v\in \mathbf{Rel}(B, C)$ is defined as:
        \begin{equation}
            v\circ u\in \mathbf{Rel}(A,C)  = \left\{(a,c)~\middle|~ \exists b\in B. ~ (a,b)\in u\wedge (b,c)\in v\right\}
        \end{equation}
        The identity morphisms in $\mathbf{Rel}$ are then defined as:
        \begin{equation}
            id_A\in \mathbf{Rel}(A,A) = \left\{(a,a)~\middle|~ a\in A\right\}
        \end{equation}
        \item The category of vector spaces denoted as $\mathbf{Vect}$. Its objects are vector spaces, and morphisms between two vector spaces are linear maps. Composition is the standard composition of (linear) maps. We can also define the subcategory $\mathbf{FdVect}$ of $\mathbf{Vect}$ in which the objects are finite-dimensional vector spaces, and morphisms are defined as in $\mathbf{Vect}$. In $\mathbf{FdVect}$, morphisms can be seen as matrices (by fixing a basis), and composition as matrix multiplication. Similarly, the identity morphisms become identity matrices.
        \item A category is said to be a \emph{preorder} if there is at most one morphism between two objects. This notion generalises the notion of order by taking the existence of a morphism between $A$ and $B$ to represent $A\leq B$. The presence of composition morphisms in a preorder means that the relation $\leq$ is transitive, i.e. if $A\leq B$ and $B\leq C$ then $A\leq C$. Similarly, the existence of identity morphism indicates that the relation $\leq$ is reflexive, i.e. $A\leq A$ for any object $A$.
        \item A preorder is said to be a \emph{partial order} (or a \emph{poset}) iff $A\leq B$ and $B\leq A$ implies that $A = B$. Furthermore, a partial order is a \emph{total order} iff for any pair of distinct objects $A$ and $B$, either $A\leq B$ or $B\leq A$.
    \end{enumerate}
\end{ex}

It is often convenient to denote morphisms $f\in \mathcal{C}(A,B)$ as arrows:
\begin{equation*}
    A \xrightarrow{f} B \qquad\text{or equivalently}\qquad f:A\to B
\end{equation*}
Then, from associativity equation \eqref{eq:morphTrans}, the composition of several morphisms, say $f\in \mathcal{C}(A,B)$, $g\in \mathcal{C}(B,C)$, $h\in \mathcal{C}(C, D)$ and $k\in \mathcal{C}(D, E)$ can unambiguously be written as:
\begin{equation*}
    A \xrightarrow{f} B \xrightarrow{g} C \xrightarrow{h} D \xrightarrow{k} E
\end{equation*}
In addition, identity morphisms can be omitted from equation \eqref{eq:idUnit}.

From this, we can define the notion of a \emph{diagram} in a category $\mathcal{C}$, which corresponds to a (labelled) directed graph such that nodes are objects in $\mathcal{C}$ and (labelled) arrows $A\xrightarrow{f} B$ correspond to the morphism $f\in \mathcal{C}(A,B)$. Paths, therefore, correspond to the composition of morphisms. We then say that a diagram \emph{commutes} iff the paths having the same endpoints are equal.
% \footnote{Sometimes a little bit a common sense has to be applied, e.g. $\begin{tikzpicture}[baseline=-.2pt]\node(A) at (-.5,0){$A$}; \node (B) at (.5, 0){B}; \draw [->] (A.north east) to node[above]{$f$} (B.north west);\draw [->] (A.south east) to node[below]{$g$} (B.south west);\end{tikzpicture}$ does not generally mean $f=g$.}
For example, the commutativity of the following diagram represents the equation $g \circ f = l \circ k\circ h$ :
\begin{equation*}
    \begin{tikzpicture}
        \node (A) at (-1, 1) {$A$};
        \node (B) at (1, 1) {$B$};
        \node (C) at (-1, 0){$C$};
        \node (D) at (1, 0){$D$};
        \node (E) at (0,0){$E$};
        \draw[->] (A) to node[above]{$f$} (B);
        \draw[->] (B) to node[right]{$g$} (D);
        \draw[->] (A) to node[left]{$h$} (C);
        \draw[->] (C) to node[below]{$k$} (E);
        \draw[->] (E) to node[below]{$l$} (D);
    \end{tikzpicture}
\end{equation*}

\paragraph{}In addition, it can be seen that given a set of arrows of a category, reversing all of the arrows still leads to a valid category. This category is known as the \emph{opposite category}.
\begin{defs}
    Given a category $\mathcal{C}$, its \emph{opposite category}, denoted as $\mathcal{C}^{op}$, is defined as follows:
    \begin{itemize}
        \item The objects of $\mathcal{C}^{op}$ are the same as the objects of $\mathcal{C}$;
        \item Each morphism $f\in \mathcal{C}(A,B)$ gives a morphism $\tilde{f}\in \mathcal{C}^{op}(B,A)$. The composition $g\circ f \in \mathcal{C}(A, C)$, where $f\in \mathcal{C}(A,B)$ and $g\in \mathcal{C}(B,C)$, then gives $\widetilde{g\circ f} = \tilde{f}\circ \tilde{g} \in \mathcal{C}^{op}(C,A)$.
    \end{itemize} 
\end{defs}

So far, given a category $\mathcal{C}$, the notion of ``sameness'' of objects is only captured as the equality of objects. However, equality might be too restrictive in general. For example, in $\mathbf{Sets}$, the two sets $A_1 = \left\{0, 1, 2\right\}$ and $A_2 = \left\{1,2,3\right\}$ are different, but they have the same expressive power in the sense that any map $f_1:A_1 \to B$ can be translated into a map $f_2: A_2 \to B$ and conversely. In the category $\mathbf{Sets}$, this notion is conveyed by the existence of a \emph{bijection} between the sets $A_1$ and $A_2$, and this notion extends to an arbitrary category as the notion of \emph{isomorphism}.

\begin{defs}
    A morphism $f: A\to B$ in a category $\mathcal{C}$ is an \emph{isomorphism} if there exists a morphism $f^{-1}: B\to A$ such that:
    \begin{equation}
        f \circ f^{-1} = id_{B} \qquad\text{and}\qquad f^{-1}\circ f = id_A
    \end{equation}
\end{defs}

\begin{ex}
    \begin{enumerate}[label=\alph*.]
        \item As expected, the isomorphisms in $\mathbf{Sets}$ are the bijections.
        \item In $\mathbf{Vect}$, the isomorphisms are the isomorphisms of vector spaces.
        \item In partial orders, the only isomorphisms are the identity morphisms.
    \end{enumerate}
\end{ex}

% \blue{Definition of initial object / terminal object / product / coproduct / exponential ??}

\paragraph{}Up to now, we have studied categories individually. We now look at relationships \emph{between categories}.

\begin{defs}
    A \emph{functor} $\mathcal{F} : \mathcal{C}\to \mathcal{D}$ between two categories $\mathcal{C}$ and $\mathcal{D}$ is defined as follows:
    \begin{itemize}
        \item For each object $A$ in the category $\mathcal{C}$ gives an object $\mathcal{F}A$ in $\mathcal{D}$ under the action of $\mathcal{F}$.
        \item Similarly, each morphism $f\in \mathcal{C}(A,B)$ gives a morphism $\mathcal{F}f\in \mathcal{D}(\mathcal{F}A, \mathcal{F}B)$ satisfying:
        \begin{align}
            \mathcal{F}id_A =& id_{\mathcal{F}A}\\
            \mathcal{F}(g\circ f) =& \mathcal{F}g\circ \mathcal{F}f
        \end{align}
        for every object $A$ in $\mathcal{C}$, and any arrows $f,g$ in $\mathcal{C}$.
    \end{itemize}
\end{defs}

\begin{ex}
    Let us look at some examples of functors.
    \begin{enumerate}[label=\alph*.]
        \item We can define a functor $F: \mathbf{Sets} \to \mathbf{Rel}$ which sends any set $A\in ob \left(\mathbf{Sets}\right)$ to the same set $A\in ob\left(\mathbf{Rel}\right)$. The action on morphisms in $\mathbf{Sets}$ (i.e. functions), will simply become the equivalent relation on $\mathbf{Rel}$. Namely, for any $f: A\to B$ in $\mathbf{Sets}$, we get:
        \begin{equation*}
            Ff = \left\{(a,f(a))~\middle|~a\in A\right\}: A\to B
        \end{equation*}
        in $\mathbf{Rel}$.
        \item We can define a functor $\mathcal{D}_{\mathbb{R}_+}:\mathbf{Sets} \to \mathbf{Sets}$ which is defined on object as:
        \begin{equation*}
            \mathcal{D}_{\mathbb{R}_+}::
        U \mapsto \left\{d:U\to \mathbb{R}_{+}~\middle|~d\text{ is a probability distribution over }U\right\}
        \end{equation*}
        For any morphism $f: U\to V$, we then define $\mathcal{D}_{\mathbb{R}_+}f$ as:
        \begin{equation*}
            \mathcal{D}_{\mathbb{R}_+}f:: d_U\mapsto d_V\text{ such that }d_V(v) = \sum_{u\in f^{-1}(v)} d_U(u)
        \end{equation*}
        The functor $\mathcal{D}_{\mathbb{R}_+}$ is called the \emph{distribution monad}. 
        \item For any set $A$, we can define the \emph{free monoid} over $A$, denoted as $A^\star$ which correspond to the set of lists of elements in $A$. In other words, $A^\star \cong \amalg_{n\in \mathbb{N}} A^n$. This correspondence can be extended to a functor $F: \mathbf{Sets} \to \mathbf{Mon}$, where $\mathbf{Mon}$ is the category of monoids and monoid homomorphisms. In this functor, any function $f:A\to B$ in $\mathbf{Sets}$ is mapped to:
        \begin{equation*}
            Ff: A^\star \to B^\star :: (a_1, a_2, \ldots) \mapsto (f(a_1), f(a_2), \ldots)
        \end{equation*}
    \end{enumerate}
\end{ex}

\paragraph{}We can, moreover, study relationships between functors by looking at \emph{natural transformations}.
\begin{defs}
    Given two functor $\mathcal{F}, \mathcal{G}: \mathcal{C} \to \mathcal{D}$, a \emph{natural transformation} $\eta: \mathcal{F} \Rightarrow \mathcal{G}$ is a family of maps $\left\{\eta_A: \mathcal{F}A \to \mathcal{G}A\right\}_{A\in ob\left(\mathcal{C}\right)}$ such that for any morphism $f:A\to B$ in $\mathcal{C}$ the following (naturality) square commutes:
    \begin{equation}
        \begin{tikzpicture}
            \node(FA) at (-1,1){$\mathcal{F}A$};
            \node(FB) at (1,1){$\mathcal{F}B$};
            \node(GA) at (-1,-1){$\mathcal{G}A$};
            \node(GB) at (1,-1){$\mathcal{G}B$};
            \draw [->] (FA) to node[above]{$\mathcal{F}f$} (FB);
            \draw [->] (GA) to node[below]{$\mathcal{G}f$} (GB);
            \draw [->] (FA) to node[left]{$\eta_A$} (GA);
            \draw [->] (FB) to node[right]{$\eta_B$} (GB);
        \end{tikzpicture}
    \end{equation}
    In addition, a natural transformation is a \emph{natural isomorphism} if the morphisms $\eta_A$ are isomorphisms for all $A\in ob\left(\mathcal{C}\right)$.
\end{defs}

Using the natural transformations as ``morphisms'' between functors also leads to the notion of functor category.% \blue{Keep definition of Functor category?}

\begin{defs}[Functor category]
    Given two categories $\mathcal{C}$ and $\mathcal{D}$, we define the \emph{functor category} $\left[\mathcal{C}, \mathcal{D}\right]$ (also written $\mathcal{D}^\mathcal{C}$) where:
    \begin{itemize}
        \item Objects of $\left[\mathcal{C}, \mathcal{D}\right]$ are functors $\mathcal{F}: \mathcal{C} \to \mathcal{D}$.
        \item Morphisms are natural transformations, and compositions are defined pointwise, i.e. $\mu \circ \eta = \left\{\mu_A\circ \eta_A ~\middle|~ A\in ob\left(\mathcal{C}\right)\right\}$.
    \end{itemize}
\end{defs}

\subsection{Sheaf theory}\label{subsec:sheaves}

\paragraph{}In this project, we will make use of sheaves and presheaves. The general idea is that presheaves and sheaves define a mathematical notion of \emph{consistency}. These concepts will be at the core of description of sheaf-theoretic contextuality described in Section~\ref{subsec:BackgroundContext}, and subsequently used in the models developed in Part~\ref{part:Lexical} and~\ref{part:Syntactic}. Here, we review the main definitions of sheaf theory.

\begin{defs}
    Given a category $\mathcal{C}$, we define a \emph{presheaf} over $\mathcal{C}$ as a functor $\mathcal{F}: \mathcal{C}^{op} \to \mathbf{Set}$.
\end{defs}
\begin{rmk}
    The above definition corresponds to the notion of set-valued presheaf. Depending on the school of thought, the term presheaf may also refer to the more general abelian presheaves or presheaves of modules, which take values in abelian groups or modules respectively instead of sets. Here, we will only consider set-valued sheaves and presheaves.
\end{rmk}

\paragraph{}This work will mainly look at presheaves and sheaves over topological spaces. A topological space is usually defined as a tuple $(X, \tau)$ where $X$ is a set of \emph{points} and $\tau\subseteq \mathcal{P}(X)$ is the set of \emph{open sets} which contains the empty set and is closed under arbitrary unions and finite intersections. 
\begin{rmk}
    The class of topological spaces then extends to the category $\mathbf{Top}$ where objects are topological spaces and morphisms are continuous functions between them.
\end{rmk}

Given a topological space $\mathcal{X} = (X, \tau)$, we can define a preorder category $\mathcal{T}(\mathcal{X})$ where $ob\left(\mathcal{T}(\mathcal{X})\right) = \tau$ and morphisms are inclusion relations, i.e. for any two open subsets $U$ and $V$ of $X$, there exists a morphism $V\to U$ iff $V\subseteq U$. A \emph{presheaf over a topological space} $\mathcal{X}$ is then defined as a functor $P: \mathcal{T}(\mathcal{X})^{op}\to \mathbf{Sets}$. These inclusion morphisms $V\to U$ are mapped under the presheaf to \emph{restriction morphisms} $res_V^U: PU\to PV$. For each element of $s\in PU$, we will denote the action of the restriction morphism on $s$ as $\left.s\right|_V$, i.e.:
\begin{equation}
    \begin{matrix}
        res_V^U &:& PU &\to &PV\\
        &::& s &\mapsto& \left.s\right|_V
    \end{matrix}
\end{equation}

\begin{figure}[htb!]
    \centering
    \includegraphics[scale=.4]{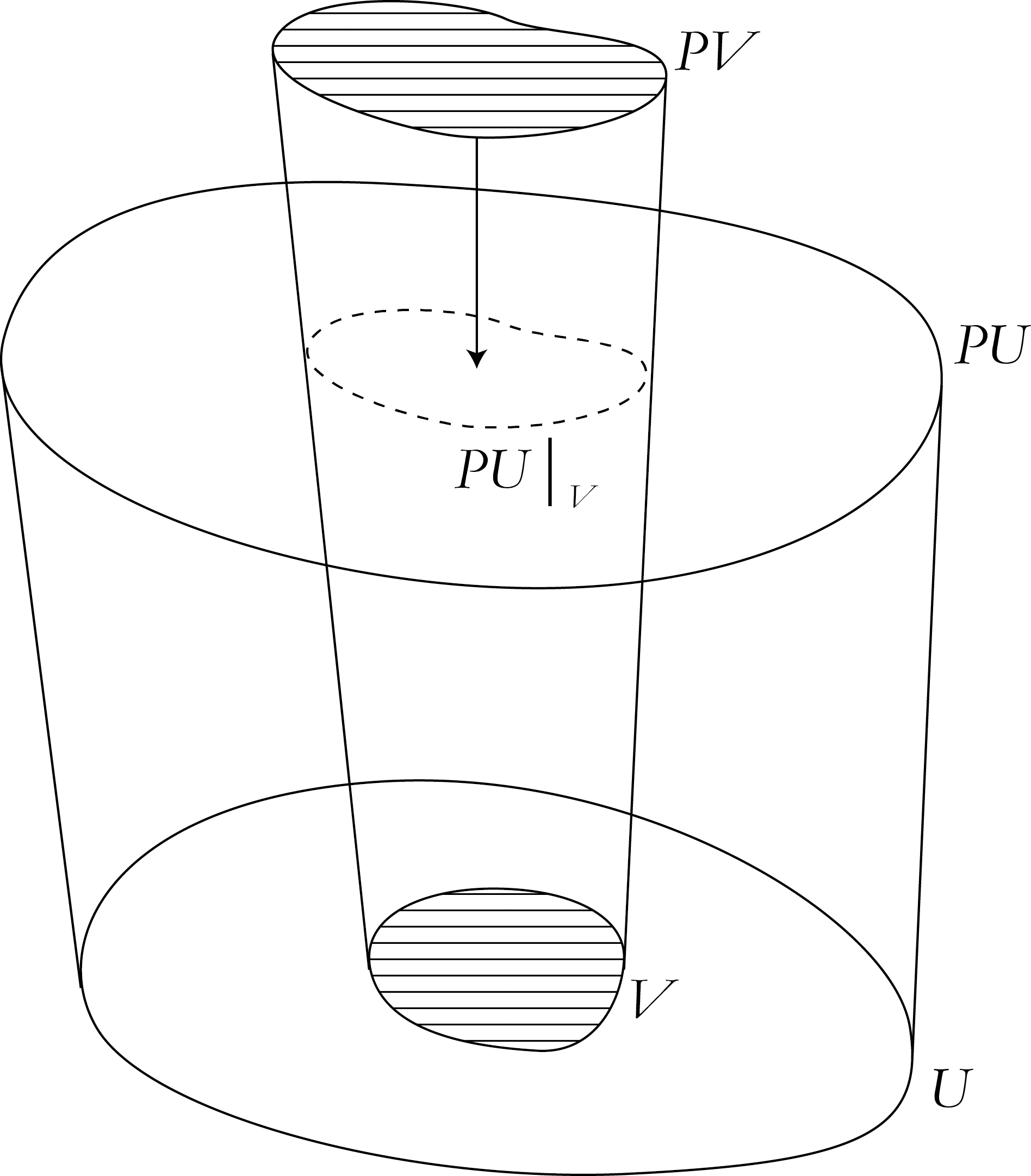}
    \caption{Illustration of the restriction morphsims of a presheaf.\label{fig:presheafRes}}
\end{figure}

Given an open subset $U$, we will define \emph{sections of $P$ at $U$} as the elements of the set $PU$. The intuition is that sections of a presheaf encode \emph{data} over the topological space $\mathcal{X}$. The existence of the restriction morphisms signifies that data of a smaller subset $V$ can be retrieved from the data of a larger set $U$ (see Fig.~\ref{fig:presheafRes}).

\begin{rmk}
    The notion of \emph{section} (or \emph{cross section}) is usually defined in the literature in terms of a \emph{bundle} (i.e. map) $p:E\to X$ as maps $\sigma: U \to E$ such that $U\xrightarrow{\sigma} E \xrightarrow{p} X$ is the inclusion map $U\hookrightarrow X$~\cite{maclane2012sheaves,topoi}. There is, in fact, a one-to-one correspondence between elements of $PU$ and sections of a canonical bundle~\cite{maclane2012sheaves} (see Appendix~\ref{app:sections} for more details).
\end{rmk}

% \begin{itemize}
%     \item Definition of simplex and simplicial complex
%     \item From topology to simplicial complexes via nerve complexes (of a basis)
%     \item Definition of presheaf over simplical complexes as a subpresheaf (check that's correct)
% \end{itemize}

\paragraph{}So far, we have looked at the consistency of data between subsets via restriction maps. We now describe the notion of consistency ``across'' different open sets. Given a presheaf $P$ over a topological space $\mathcal{X}$, we say that there is \emph{a gluing} between two sections $s_U \in PU$ and $s_V\in PV$ of the open sets $U$ and $V$, or equivalent that $s_U$ and $s_V$ are \emph{locally consistent} or \emph{compatible}, iff:
\begin{equation}
    \left.s_U\right|_{U\cap V} = \left.s_V\right|_{U\cap V}
\end{equation}
This gluing condition is illustrated in Fig.\ref{fig:sheafGlue}. The existence of a gluing represents consistency of data at a local level. 

In terms of \emph{global consistency}, we want to be able to define a gluing for every pair of open subsets. The notion of \emph{sheaves} encodes this global consistent condition.

\begin{figure}[htb!]
    \centering
    \includegraphics[scale=.4]{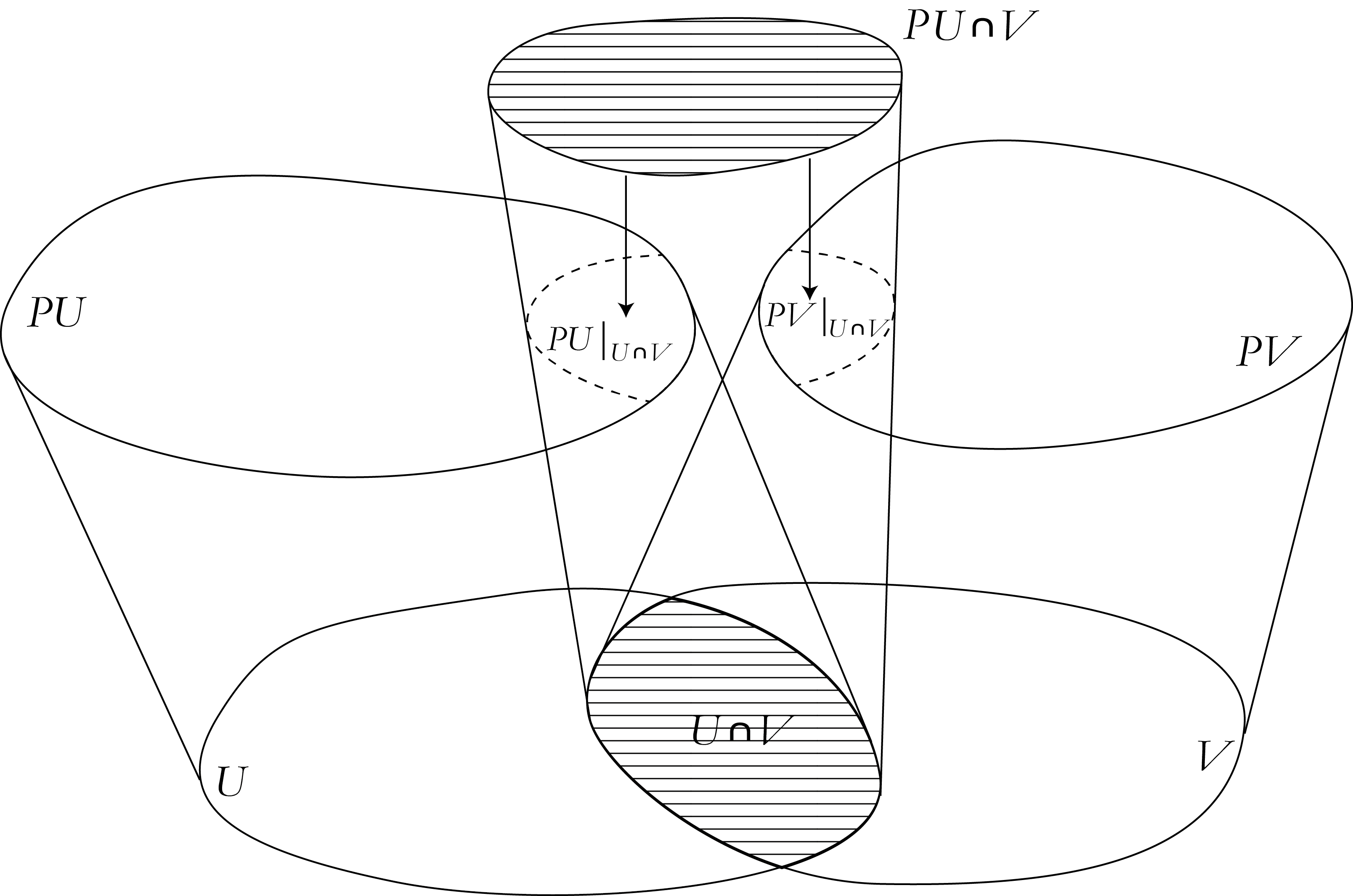}
    \caption{Illustration of the general presheaf structure over intersecting sets. If there exists a gluing between two sections in $PU$ and $PV$, then there will be an intersection between they will coincide in the two dashed regions $\left.PU\right|_{U\cap V}$ and $\left.PV\right|_{U\cap V}$.\label{fig:sheafGlue}}
\end{figure}

\begin{defs}\label{def:sheaf}
    A presheaf $P:\mathcal{T}(\mathcal{X})^{op}\to \mathbf{Sets}$ is a \emph{sheaf} iff for any covering $\left\{U_i\right\}_{i\in I}$ of the space $\mathcal{X}=(X, \tau)$, i.e. $\bigcup_{i\in I} U_i = X$, for every family of pairwise compatible sections $\left\{s_i\in U_i\right\}_{i\in I}$, then there exists a section $s\in PX$ such that:
    \begin{equation}
        s|_{U_i} = s_i
    \end{equation}
    for all $i\in I$.
\end{defs}

\paragraph{}Although many had used concepts similar to sheaves before, the exact notion originates from the work of Jean Leray~\cite{Leray1945}, who introduced them to study equations and transformations from a purely topological perspective -- by getting rid of notions he found unnecessary. Subsequent work of Cartan and Serre then exported the ideas from sheaf theory to algebraic geometry~\cite{cartan1950,serre1955faisceaux}, which was then made categorical by Grothendieck in the seminal article~\cite{Grothendieck1957}. The machinery of sheaf theory was then notably used to prove Weil's conjectures in algebraic geometry~\cite{Grothendieck1965,Deligne}.

The use of sheaves also arose, somewhat independently, from a logical perspective, notably from the work of Lawvere and Tierney~\cite{lawvere1970quantifiers,Tierney2011}. In particular, the category of sheaves (as well as the category of presheaves) forms a \emph{topos}, which can be associated with a logic~\cite{topoi,LambekScott,PTJelephant}. In foundations of mathematics, topos theory has provided alternative (and much simpler) proofs of results from set theory, notably the independence of the Continuum Hypothesis~\cite{TierneyContinuum} and the Axiom of Choice~\cite{FREYDchoice} from the Zermelo-Fr\ae{}nkel axioms. In addition, categories of presheaves and sheaves were found to provide semantics for \emph{intiutionistic logic}, usually referred to as \emph{Kripke-Joyal semantics}~\cite{Kripke1965}. 

Sheaf theory has recently been applied to topological data analysis~\cite{curry2015topological} and quantum mechanics~\cite{AbramskyBrad}. We will discuss the latter in more detail in Section~\ref{subsec:BackgroundContext}.

\subsection{Monoidal categories}\label{subsec:monoidal}

\paragraph{}We now turn our attention to another type of category which we will use in Chapter~\ref{chap:lexicalCircuits}, namely \emph{monoidal categories}. As we will see in Section~\ref{sec:QProcess}, these categories are particularly useful in modeling process theories.

\begin{defs}
    A \emph{monoidal category} is category $\mathcal{C}$ which is equipped with the structure $(\otimes, I, \alpha, \lambda, \rho)$ where:
    \begin{itemize}
        \item The \emph{tensor product} or \emph{monoidal product} $\otimes$ is a bifunctor: $\otimes: \mathcal{C}\times \mathcal{C} \to \mathcal{C}$
        \item $I\in ob\left(\mathcal{C}\right)$ is the \emph{unit object}. The morphisms $e: I\to A$, where $A\in ob\left(\mathcal{C}\right)$, will be called the \emph{elements} of $A$
        \item The \emph{associator} $\alpha$ is a natural isomorphism with elements $\alpha_{A,B,C}: \left(A\otimes B\right)\otimes C \to A \otimes \left(B\otimes C\right)$ for any $A,B,C\in ob\left(\mathcal{C}\right)$
        \item The \emph{left unitor} $\lambda$ is an natural isomorphism with elements $\lambda_A: I\otimes A \to A$
        \item The \emph{right unitor} $\rho$ is an natural isomorphism with elements $\rho_A: A\otimes I \to A$
    \end{itemize}
    such that the following diagrams commute:
    \begin{gather}
        \begin{tikzpicture}
            \node(left) at (-2, 0.75){$(A\otimes I)\otimes B$};
            \node (right) at (2, 0.75){$A\otimes(I\otimes B)$};
            \node (mid) at (0, -0.75){$A\otimes B$};
            \draw [->] (left) to node[above]{$\alpha_{A,I,B}$} (right);
            \draw [->] (left) to node[left]{$\rho_A\otimes id_B$} (mid);
            \draw [->] (right) to node[right]{$id_A\otimes\lambda_B$} (mid);
        \end{tikzpicture}\label{eq:triangleEq}\\
        \begin{tikzpicture}
            \node (1) at (-2.75, 0){$((A\otimes B)\otimes C)\otimes D$};
            \node (2a) at (0, 1.25){$(A\otimes B) \otimes (C\otimes D)$};
            \node (final) at (2.75, 0){$A\otimes (B\otimes (C\otimes D))$};
            \node (2b) at (-2.75, -1.25){$(A\otimes (B\otimes C))\otimes D$};
            \node (3b) at (2.75, -1.25){$A\otimes ((B\otimes C)\otimes D)$};
            \draw [->] (1) to node[left]{$\alpha_{A\otimes B, C, D}$} (2a);
            \draw [->] (2a) to node[right]{$\alpha_{A, B, C\otimes D}$} (final);
            \draw [->] (1) to node[left]{$\alpha_{A,B,C}\otimes id_D$} (2b);
            \draw [->] (2b) to node[below]{$\alpha_{A, B\otimes C, D}$} (3b);
            \draw[->] (3b) to node[right]{$id_A\otimes \alpha_{B,C,D}$} (final);
        \end{tikzpicture}\label{eq:pentagonEq}
    \end{gather}
    The equations \eqref{eq:triangleEq} and \eqref{eq:pentagonEq} are referred to as the \emph{triangle} and \emph{pentagon} equations respectively.
\end{defs}

\begin{ex}Here are some examples of monoidal categories
    \begin{enumerate}[label=\alph*.]
        \item The category $\mathbf{Sets}$ is a monoidal category where the tensor product is the standard cartesian product of sets, and the unit is the singleton set $I=\{\star\}$. Since for any sets $A, B, C$, we have $(A\times B) \times C = A \times (B\times C)$, the associator consists of identity morphisms. In addition, we have $\left\{\star\right\}\times A\simeq A$ for any set $A$ and the the left unitor is defined as:
        \begin{equation*}
            \lambda_A:: \left(\star, a\right)\in I\times A \mapsto a \in A \qquad \lambda_A^{-1}:: a\in A \mapsto (\star, a) \in I\times A 
        \end{equation*}
        The right unitor can be defined similarly. Elements of an object $A$ will correspond to the elements $a$ of the set $A$. 
        \item Similarly to the category of $\mathbf{Sets}$, the category of sets and relations $\mathbf{Rel}$ is also monoidal, where the monoidal product is also the cartesian product. The associator is, as in $\mathbf{Sets}$, simply consisting of identities and the left and right unitors are defined as:
        \begin{align}
            \lambda_A=\left\{\left(\left(\star, a\right), a\right)\right\} \subseteq \left(I\times A\right) \times A \qquad \rho_A = \left\{(\left(a,\star\right), a)\right\} \subseteq \left(A\times I\right) \times A
        \end{align}
        Elements of an object $A \in ob\left(\mathbf{Rel}\right)$ will be isomorphic to subsets of $A$.
        \item The category of finite dimensional vector spaces $\mathbf{FdVect}$ is also a monoidal category. The standard choice of monoidal product is the \emph{tensor product} $\otimes$. For example, given two vector spaces $V$ and $W$ of dimension $n$ and $m$ respectively, the vector space $V\otimes W$ will have dimension $n\times m$. Furthermore, for each pair of linear maps $\mathbf{A}: U\to V$ and $\mathbf{B}: W\to X$ which can be seen as matrices:
        \begin{equation*}
            \mathbf{A} = \begin{pmatrix}
                a_{11} & a_{12} &\ldots &a_{1k}\\
                a_{21} & a_{22} &\ldots &a_{2k}\\
                \vdots & \vdots & \ddots& \vdots\\
                a_{l1} & a_{l2} &\ldots &a_{lk}
            \end{pmatrix}\qquad\mathbf{B} = \begin{pmatrix}
                b_{11} & b_{12} &\ldots &b_{1m}\\
                b_{21} & b_{22} &\ldots &b_{2m}\\
                \vdots & \vdots & \ddots& \vdots\\
                b_{n1} & b_{n2} &\ldots &b_{nm}
            \end{pmatrix}
        \end{equation*} 
        then the matrix corresponding to $\mathbf{A}\otimes \mathbf{B}$ will correspond to:
        \begin{equation*}
            \mathbf{A}\otimes \mathbf{B} = \begin{pmatrix}
                a_{11}\mathbf{B} & a_{12}\mathbf{B} &\ldots &a_{1k}\mathbf{B}\\
                a_{21}\mathbf{B} & a_{22}\mathbf{B} &\ldots &a_{2k}\mathbf{B}\\
                \vdots & \vdots & \ddots& \vdots\\
                a_{l1}\mathbf{B} & a_{l2}\mathbf{B} &\ldots &a_{lk}\mathbf{B}
            \end{pmatrix} = \begin{pmatrix}
                a_{11}b_{11} & a_{12}b_{11} &\ldots & a_{11}b_{1m}\\
                a_{21}b_{21} & a_{22}b_{11} &\ldots & a_{21}b_{2m}\\
                \vdots & \vdots & \ddots& \vdots\\
                a_{l1}b_{n1} & a_{l2}b_{n1} &\ldots &a_{lk}b_{nm}\end{pmatrix}
        \end{equation*}
    \end{enumerate}
\end{ex}

\paragraph{}It is very convenient to use \emph{string diagrams} to represent objects and morphisms in a monoidal category. Generally speaking, each string diagram corresponds to an equivalence class of morphisms with respect to certain\footnote{In general, equality of string diagrams is defined in terms of strict monoidal categories or strict symmetric monoidal categories, see~\cite{CategoriesWorkingMathematician} and~\cite{PiedeleuZanasi} for respective definitions. We also note that this is reasonable as every (symmetric) monoidal category is monoidally equivalent to a strict (symmetric) monoidal category~\cite{CategoriesWorkingMathematician,WilsonGhicaZanasi}.} isomorphisms in $\mathcal{C}$. The objects of a monoidal category, as well as the identity morphisms, will be represented by wires:
\begin{equation*}
    \tikzfig{Background/tikzit/wire}
\end{equation*}
with the exception of the unit object $I$, which will generally not be represented (i.e., corresponds to the empty wire). Morphisms will be represented by boxes, e.g.:
\begin{equation*}
    \tikzfig{Background/tikzit/box} = f: A\to B
\end{equation*}
As we mentioned before, the identity morphisms will be represented by the wires themselves. Sequential composition of morphisms will be represented by ``stacking'' boxes, e.g.:
\begin{equation*}
    \tikzfig{Background/tikzit/sequentialComp} \quad=\quad A\xrightarrow{f} B \xrightarrow{g} C
\end{equation*}

\begin{rmk}
    Many conventions can be adopted regarding the direction of composition in string diagrams. Here, we will assume that sequential composition happens from top to bottom.
\end{rmk}

Let us now look at the diagrammatic view of the monoidal structure. We will represent the tensor product of two objects as stacking wires \emph{in parallel}:
\begin{equation*}
    \tikzfig{Background/tikzit/AotimesB}
\end{equation*}
Similarly, the action of the tensor product on morphisms will also be represented by concatenating the boxes in parallel, e.g.:
\begin{equation*}
    \tikzfig{Background/tikzit/fotimesg}
\end{equation*}
In particular, from the associator being a natural isomorphism coupled with the pentagon equation \eqref{eq:pentagonEq}, it is not necessary to specify in which order the objects are being tensored as they will all lead to isomorphic objects (and therefore the same string diagram); consequently, we also do not need to represent morphisms $\alpha_{A,B,C}$. Similarly, the representation of the unit object $I$ as the empty wire is motivated by the existence of the left and right unitors, as well as the triangle equation \eqref{eq:triangleEq}, i.e.:
\begin{equation*}
    \tikzfig{Background/tikzit/wire}
\end{equation*}
would represent objects $A$, $I\otimes A$ and $A\otimes I$ alike.

In addition, \emph{elements} $e: I \to A$ will be denoted as triangles in string diagrams, namely:
\begin{equation*}
    \tikzfig{Background/tikzit/element}= e: I\to A
\end{equation*}

\paragraph{}Monoidal categories can be seen as process theories. Indeed, taking objects of the category to be systems or entities and morphisms to be processes, we can see the sequential composition of morphisms as the evolution of the entities through the sequence of processes. Similarly, the tensor product of morphisms will correspond to the execution of independent processes in parallel. Then, the string diagrams are graphical representations of the interactions of different systems under the action of some processes.

\paragraph{Symmetric monoidal categories}So far, the definition of a monoidal category only describes a very general process theory. In order to describe process theories satisfying some extra properties, we define more and more specialised monoidal categories by adding additional structure or axioms. 

We start by defining a \emph{symmetric monoidal category}, which is a monoidal category $\mathcal{C}$ equipped with a natural isomorphism with elements $\sigma_{A, B}: A\otimes B \to B\otimes A$ such that :
\begin{equation}
    \sigma_{B,A}\circ \sigma_{A, B} = id_{A\otimes B} 
\end{equation}

In terms of string diagrams, we represent the isomorphisms $\sigma_{A,B}$ as follows:
\begin{equation*}
    \tikzfig{Background/tikzit/crossing}
\end{equation*}
In addition, by naturality of $\sigma$, this implies that ``morphisms slides through the crossings'', i.e.:
\begin{equation}
    \tikzfig{Background/tikzit/naturality_cross_LHS} = \tikzfig{Background/tikzit/naturality_cross_RHS} \quad \iff \quad \begin{tikzpicture}[baseline=-.2pt]
        \node (A) at (-1.25, 1){$A\otimes B$};
        \node (B) at (1.25, 1){$C\otimes D$};
        \node (C) at (-1.25, -1){$B\otimes A$};
        \node (D) at (1.25, -1){$C\otimes D$};
        \draw [->] (A) to node[above]{$f\otimes g$} (B);
        \draw [->] (B) to node[right]{$\sigma_{C,D}$} (D);
        \draw [->] (A) to node[left]{$\sigma_{A,B}$} (C);
        \draw [->] (C) to node[below]{$g\otimes f$} (D);
    \end{tikzpicture}
\end{equation}
As these morphisms are isomorphisms, two diagrams are equivalent in a symmetric monoidal category if they can be transformed into each other by crossing or uncrossing wires. 
% \blue{Add an example?}

\paragraph{Dual objects}We now introduce another property that monoidal categories can have which will become very useful in Section~\ref{sec:QProcess}, namely dual objects. 

\begin{defs}(Dual objects)
    An object $R$ in a monoidal category $\mathcal{C}$ has a \emph{left-dual} $L\in ob\left(\mathcal{C}\right)$, or equivalently $L$ has the \emph{right-dual} $R$ iff there exists morphisms $\eta: I \to R\otimes L$, known as the \emph{unit}, and $\epsilon: L\otimes R\to I$, known as the \emph{counit}, such that the following diagrams commute:
    \begin{align}
        \begin{tikzpicture}[baseline=-.2pt]
            \node(ini) at (-3, 1){$L$};
            \node (a) at (0, 1){$L\otimes I$};
            \node (b) at (3, 1){$L\otimes (R\otimes L)$};
            \node (c) at (3, -1){$(L\otimes R)\otimes L$};
            \node (d) at (0, -1){$I\otimes L$};
            \node (fin) at (-3, -1){$L$};
            \draw [->] (ini) to node[above]{$\rho^{-1}_R$} (a);
            \draw [->] (a) to node[above]{$id_L\otimes \eta$} (b);
            \draw [->] (b) to node[right]{$\alpha^{-1}_{L,R,L}$} (c);
            \draw [->] (c) to node[below]{$\epsilon\otimes id_L$} (d);
            \draw [->] (d) to node[below]{$\lambda_L$} (fin);
            \draw [->] (ini) to node[left]{$id_L$} (fin);
        \end{tikzpicture}\label{eq:snakeL}\\
        \begin{tikzpicture}[baseline=-.2pt]
            \node(ini) at (-3, 1){$R$};
            \node (a) at (0, 1){$I\otimes R$};
            \node (b) at (3, 1){$(R\otimes L) \otimes R$};
            \node (c) at (3, -1){$R\otimes (L\otimes R)$};
            \node (d) at (0, -1){$R\otimes I$};
            \node (fin) at (-3, -1){$R$};
            \draw [->] (ini) to node[above]{$\lambda^{-1}_L$} (a);
            \draw [->] (a) to node[above]{$\eta\otimes id_R$} (b);
            \draw [->] (b) to node[right]{$\alpha_{R,L,R}$} (c);
            \draw [->] (c) to node[below]{$id_R\otimes \epsilon$} (d);
            \draw [->] (d) to node[below]{$\rho_R$} (fin);
            \draw [->] (ini) to node[left]{$id_R$} (fin);
        \end{tikzpicture}\label{eq:snakeR}
    \end{align} 
    In terms of string diagrams, it is useful to represent dual objects by decorating the wires with arrows going in the opposite direction, e.g.:
    \begin{equation}
        L = \tikzfig{Background/tikzit/Ldual} \qquad R = \tikzfig{Background/tikzit/Rdual}
    \end{equation}
    In addition, by representing the morphisms $\eta$ and $\epsilon$ as:
    \begin{equation}
        \eta = \tikzfig{Background/tikzit/eta} \qquad \epsilon = \tikzfig{Background/tikzit/epsilon} 
    \end{equation}
    then equations \eqref{eq:snakeL} and \eqref{eq:snakeR} become respectively:
    \begin{align}
        \tikzfig{Background/tikzit/snakeL} = \tikzfig{Background/tikzit/L_long}\\
        \tikzfig{Background/tikzit/snakeR}  = \tikzfig{Background/tikzit/R_long}
    \end{align}
\end{defs}

\begin{ex} The dual of a finite-dimensional vector space $V\in ob\left(\mathbf{FdVect}\right)$ is the standard \emph{dual space} of functions to the field $V$ is defined over, i.e. $V^* = \left\{f: V\to I \right\}$ or equivalently, looking at $V = span\left(\left\{\mathbf{e}\in \mathcal{B}\right\}\right)$ as the span vectors $\mathbf{e}$ in an orthonormal basis $\mathcal{B}$, the space $V^* = span\left(\left\{\mathbf{e}^T\mid \mathbf{e}\in \mathcal{B}\right\}\right)$ is spanned by the transpose of all of the vectors in $\mathcal{B}$. This space is both the left- and right-dual of $V$ and is isomorphic to $V$. Having fixed the orthonormal basis $\mathcal{B} = \left\{\mathbf{e}_i| i\in \left\{1, \ldots, dim V\right\}\right\}$, we can define the unit and counit as:
    \begin{equation}
        \eta:: x \mapsto x \sum_i \mathbf{e}_i\otimes \mathbf{e}^T_i \qquad \epsilon:: \sum_{i,j} \alpha_{i,j} \mathbf{e}^T_i \otimes \mathbf{e}_j \mapsto \sum_i \alpha_{i,i}
    \end{equation}
\end{ex}

\begin{rmk}
If the left- or right-dual of an object $A$ in a monoidal category is isomorphic to $A$ itself, we will drop the arrow decoration on the wire corresponding to $A$.
\end{rmk}

If every object in a category has a right-dual, the category is said to be a \emph{rigid} or \emph{autonomous category}. A rigid category $\mathcal{C}$ moreover gives rise to a \emph{dualising functor} $\_^*: \mathcal{C}\to \mathcal{C}^{op}$, such that the action of objects $A\in ob\left(\mathcal{C}\right)$ gives the right-dual $A^*\in ob\left(\mathcal{C}\right)$ of $A$, and the action on morphisms $f:A\to B$ is defined as:
\begin{equation}
    \tikzfig{Background/tikzit/boxarrow} = \tikzfig{Background/tikzit/dualbox}
\end{equation}

If a monoidal category $\mathcal{C}$ is both symmetric and rigid, it is said to be a \emph{compact-closed category}.

\paragraph{Dagger structures} The dualising functor is not the only interesting example of a functor $\mathcal{C}\to \mathcal{C}^{op}$. Indeed, the notion of \emph{dagger functor} $\dagger: \mathcal{C}\to \mathcal{C}^{op}$ will become useful in the description of Hilbert spaces to encode the notion of inner products.

\begin{defs}
    For any category $\mathcal{C}$, we define a \emph{dagger functor} as a functor $\dagger: \mathcal{C}\to \mathcal{C}^{op}$ such that for any morphim $f$ in $\mathcal{C}$, $\left(f^\dagger\right)^\dagger = f$. A category with a dagger functor is a \emph{dagger category}. Respectively, a monoidal category endowed with a dagger functor is called a \emph{dagger monoidal category}.
\end{defs}

\begin{ex}
    We finally introduce the category of Hilbert spaces $\mathbf{Hilb}$ and its subcategory $\mathbf{FdHilb}$ of finite-dimensional Hilbert spaces. In these categories, the objects are Hilbert spaces $\mathcal{H}$, i.e. complex vector spaces equipped with an inner product $\left<\_\middle|\_\right>_{\mathcal{H}}: \mathcal{H}\times \mathcal{H} \to \mathbb{C}$ such that $d(x, y) = \sqrt{\left<x-y\middle|x-y\right>}_{\mathcal{H}}$ is a metric. The morphisms in $\mathbf{Hilb}$ and $\mathbf{FdHilb}$ will be taken to be \emph{bounded} (or equivalently continuous with respect to the topology induced by the metric) linear maps between Hilbert spaces. Then, we define the action of $\dagger$ on objects to return the same Hilbert space, i.e. $\mathcal{H}^\dagger = \mathcal{H}$ for any $\mathcal{H}\in ob\left(\mathbf{Hilb}\right)$ (resp. $\mathbf{FdHilb}$) and for any bounded linear map $f: \mathcal{H}\to \mathcal{K}$, we take $f^\dagger : \mathcal{K}\to \mathcal{H}$ to be the unique map such that:
    \begin{equation}
        \left<f(\mathbf{v})\middle|\mathbf{w}\right>_\mathcal{K} = \left<\mathbf{v}\middle|f^\dagger (\mathbf{w})\right>_\mathcal{H}
    \end{equation}
    for any $\mathbf{v}\in \mathcal{H}$ and $\mathbf{w}\in \mathcal{K}$. Treating morphisms in $\mathbf{FdHilb}$ as complex matrices, taking the dagger correspond to taking the Hermitian conjugate of the matrix, i.e. given $\mathbf{A}: \mathcal{H} \to \mathcal{K}$, $\mathbf{A}^\dagger = \left(\mathbf{A}^T\right)^* = \left(\mathbf{A}^*\right)^T$.

    A similar definition of a dagger functor can also be obtained for $\mathbf{FdVect}$, but will then depend on a choice of inner product.
\end{ex}

\paragraph{}Finally, in monoidal categories coming with both a dualising functor and a dagger functor, we may require some additional conditions on the interaction between the two structures.

\begin{defs}
    A compact closed category (i.e. symmetric and autonomous) with a dagger functor is a \emph{dagger-compact category} whenever the following hold for any pair of duals $(L,R)$:
    \begin{equation}
        \left(\tikzfig{Background/tikzit/epsilon}\right)^\dagger = \tikzfig{Background/tikzit/epsilondagger} \qquad \left(\tikzfig{Background/tikzit/eta}\right)^\dagger = \tikzfig{Background/tikzit/etadagger}
    \end{equation}
\end{defs}

\begin{ex}
    The categories $\mathbf{Hilb}$ and $\mathbf{FdHilb}$ can be seen to be dagger-compact by taking the duals to be the same as the ones defined earlier for vector spaces and the dagger functor defined above. 
\end{ex}

% \section{Categorical Quantum Mechanics}\label{sec:CQM}
% \red{Link between contextuality and process theories}
\section{Describing Quantum Correlations}\label{sec:QDescription}
\paragraph{}The aim of this section is to introduce the concept of quantum contexutality, and in particular the framework of the sheaf-theoretic contextuality (introduced in details in Section~\ref{subsec:sheafContextuality}), which will be a recurrent theme of the work described in Part~\ref{part:Lexical} and~\ref{part:Syntactic}. The framework of Contextuality-by-Default is also introduced in Section~\ref{subsec:CbD}, which will be widely used in Chapter~\ref{chap:lexicalFeatures}.

\paragraph{}The inherent probabilistic nature of quantum mechanics has been a longstanding source of debate. Namely, is nature non-deterministic, or is the apparent randomness due to our lack of knowledge about the observed system? This question has led to the development of theory-agnostic descriptions of observations from quantum systems. That is, only assuming classical probability theory, is it possible to describe the observed statistics? Or are the statistical correlations intrinsically non-classical (i.e. different from probabilistic classical physical systems)? These questions are answered by studying the \emph{contexutality} of quantum systems.

We first introduce the standard formalisms of contextuality (Section~\ref{subsec:BackgroundContext}), and then describe its categorical equivalent and its various extensions (Section~\ref{subsec:sheafContextuality}). In Section~\ref{subsec:CbD}, we introduce an alternative framework to the one of Section~\ref{subsec:sheafContextuality}.

In terms of notation, we will also use the standard \emph{Dirac notation} where vectors in a Hilbert space will be denoted as $\ket{\psi}\in \mathcal{H}$, their Hermitian conjugate will be denoted as $\bra{\psi}$, and the inner product of two vectors $\ket{\psi}, \ket{\phi} \in \mathcal{H}$ will be denoted as $\left<\phi\middle|\psi\right>$.

\subsection{Contextuality}\label{subsec:BackgroundContext}
\paragraph{}The initial criticism of Einstein, Podolsky, and Rosen~\cite{EPR} was that the probabilistic nature of quantum mechanics can only be due to the \emph{incompleteness} of quantum theory. This is known as the EPR paradox. Hence, by ``completing'' the description of the system with additional (unobserved) variables, one could obtain a deterministic system from which we can recover the observed statistics~\cite{EPR}. The main argument from~\cite{EPR} is that any physical theory should satisfy \emph{realism}, i.e. every physical quantity, such as the position or the momentum of a particle, should possess a definite value at any given time which should not depend on whether it is observed or not. 

\subsubsection{Non-locality}
\paragraph{}The first and most widely known counterargument of the EPR paradox is attributed to John Bell~\cite{Bell1966}. In reality, Bell's theorem uses an assumption not made explicit in~\cite{EPR}, namely that spatially separated systems cannot influence each other. This requirement is known as \emph{no-signalling}. 

\paragraph{}We here describe an operational view of Bell's theorem due to Fine~\cite{Fine1982}. Let's consider an experiment such that a party Charlie prepares a bipartite state $\ket{\Psi}$ where one subsystem is sent to Alice, the other to Bob, such that Alice and Bob are assumed to be so far away that they cannot influence each other in the time frame of the experiment (see Fig.~\ref{fig:causDiagBell}). Then, both Alice and Bob randomly (and independently) choose to measure a physical quantity on their subsystems, say in the respective sets $\{a,a'\}$ and $\{b,b'\}$. Finally, each physical quantity will take values in $\pm 1$. 

\begin{figure}[htb!]
    \centering
    \includegraphics[width=.75\linewidth]{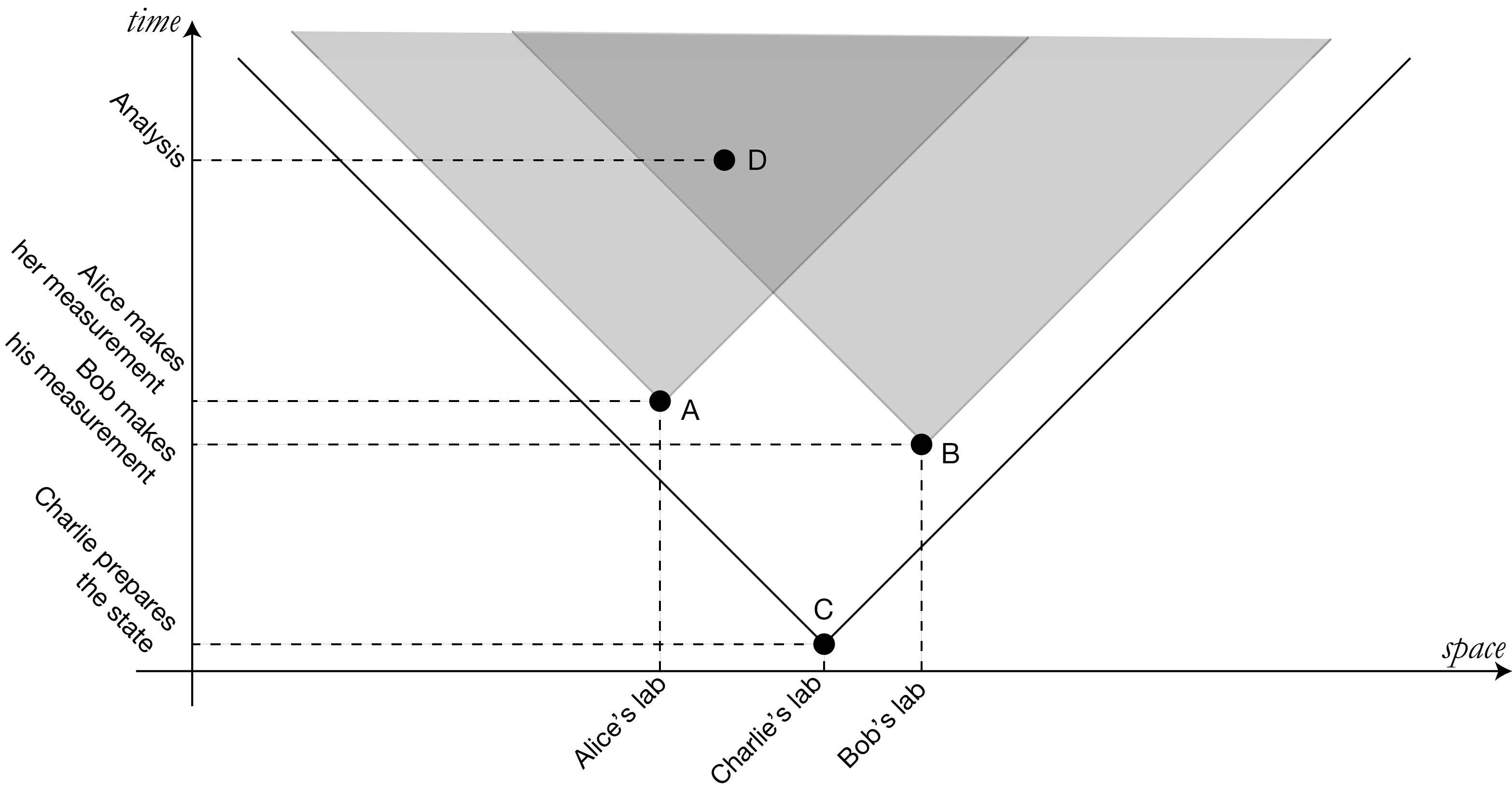}
    \caption{Causal diagram of a Bell experiment. Events are represented as dot and future light-cones are represented as triangles.\label{fig:causDiagBell}}
\end{figure}

\paragraph{}From the realism condition, we will require that, given a hidden-variable $\lambda\in\Lambda$, the values of physical quantities $a, a', b, b'$ are uniquely (and deterministically) determined. In addition, from the no-signalling condition, we will require that the outcomes of $a,a'$ will not depend on the choice of Bob, and respectively, the outcomes of $b$ or $b'$ are independent of Alice's choice. Therefore, using both of these requirements, we can define functions $A: \{a,a'\}\times \Lambda\to \left\{\pm 1\right\}$ and $B: \{b,b'\}\times \Lambda\to \left\{\pm 1\right\}$ which associate the value of the physical quantities accessible from Alice and Bob's measurements given the value of a hidden variable $\Lambda$\footnote{Note that we can assume without loss of generality that we only have a single hidden variable $\Lambda$. The case of multiple hidden variables $\Lambda_1, \Lambda_2,\ldots$ can be reduced to a single hidden-variable model by taking the joint distributions over $\prod_{i} \Lambda_i$.}. 

Using these conditions, it is possible to show that:
\begin{equation}\label{eq:CHSH}
    \left|\left<ab\right> + \left<ab'\right> + \left<a'b\right> - \left<a'b'\right>\right| \leq 2 
\end{equation}
The proof of this can be found in Appendix~\ref{app:CHSH}. This inequality is widely known as the \emph{CHSH inequality} (for Clauser-Horne-Shimony-Holt who first proved it)~\cite{Clauser1969}. Generally speaking, any inequality that provides a sufficient condition for the existence of a hidden variable model is known as a \emph{Bell inequality}\footnote{The original inequality proved by Bell in~\cite{Bell1966} corresponds to a different experiment in which it is harder to obtain a violation~\cite{Clauser1969}.}.

\paragraph{}However, we know that quantum theory predicts violations of such Bell inequalities, and this has now been verified experimentally~\cite{hensen2015loophole,giustina2015significant,shalm2015strong,rosenfeld2017event,Storz2023}. One example of such violation of \eqref{eq:CHSH} can be achieved by taking the state:
\begin{equation*}
\ket{\Psi} = \frac{1}{\sqrt{2}} \left(\ket{0}\otimes\ket{1} - \ket{0}\otimes\ket{1}\right)
\end{equation*}
as the state prepared by Charlie, and $a,a',b,b'$ being one-qubit measurements along the respective basis~\cite{nielsen2010quantum}:
\begin{center}
    \begin{tabular}{|c|c|c|}
        \hline Measurement & Outcome $-1$ & Outcome $+1$\\\hline
        $a$ & $\ket{0}$ & $\ket{1}$\\\hline
        $a'$ & $\frac{1}{\sqrt{2}}\left(\ket{0}+\ket{1}\right)$ & $\frac{1}{\sqrt{2}}\left(\ket{0}-\ket{1}\right)$\\\hline
        $b$ & $\cos\left(\frac{-\pi}{8}\right) \ket{0} + \sin\left(\frac{-\pi}{8}\right)\ket{1}$ & $\cos\left(\frac{3\pi}{8}\right)\ket{0} + \sin\left(\frac{3\pi}{8}\right)\ket{1}$\\\hline
        $b'$ & $\cos\left(\frac{\pi}{8}\right) \ket{0} + \sin\left(\frac{\pi}{8}\right)\ket{1}$ & $\cos\left(\frac{5\pi}{8}\right)\ket{0} + \sin\left(\frac{5\pi}{8}\right)$\\\hline
    \end{tabular}
\end{center}
From these sets of measurements, it can be shown that the observed probability distributions are given by:
\begin{center}
    \begin{tabular}{|c|c|c|c|c|}
        \hline & $(-1,-1)$ & $(-1, +1)$ & $(+1,-1)$ & $(+1, +1)$\\\hline
        $(a,b)$ & $\frac{1}{2}\sin^2\left(\frac{\pi}{8}\right)$ & $\frac{1}{2}\cos^2\left(\frac{\pi}{8}\right)$ & $\frac{1}{2}\cos^2\left(\frac{\pi}{8}\right)$ & $\frac{1}{2}\sin^2\left(\frac{\pi}{8}\right)$\\\hline
        $(a,b')$ & $\frac{1}{2}\sin^2\left(\frac{\pi}{8}\right)$ & $\frac{1}{2}\cos^2\left(\frac{\pi}{8}\right)$ & $\frac{1}{2}\cos^2\left(\frac{\pi}{8}\right)$ & $\frac{1}{2}\sin^2\left(\frac{\pi}{8}\right)$\\\hline
        $(a',b)$ & $\frac{1}{4}\left(1 + \frac{1}{\sqrt{2}}\right)$ & $\frac{1}{4}\left(1 - \frac{1}{\sqrt{2}}\right)$ & $\frac{1}{4}$ & $\frac{1}{4}$\\\hline
        $(a',b')$ & $\frac{1}{4}\left(1 - \frac{1}{\sqrt{2}}\right)$ & $\frac{1}{4}\left(1 + \frac{1}{\sqrt{2}}\right)$ & $\frac{1}{4}$ & $\frac{1}{4}$\\\hline
    \end{tabular}
\end{center}
Hence, by some simple calculations, it is possible to obtain the expectation values of the product variables $ab$, $ab'$, $a'b$, and $a'b'$ as:
\begin{center}
    \begin{tabular}{|c|c|c|c|}
        \hline$\left<ab\right>$ & $\left<ab'\right>$ & $\left<a'b\right>$ & $\left<a'b'\right>$ \\\hline 
        $\frac{1}{\sqrt{2}}$ & $\frac{1}{\sqrt{2}}$ & $\frac{1}{\sqrt{2}}$ & $\frac{-1}{\sqrt{2}}$\\\hline
    \end{tabular}
\end{center}
Which leads to:
\begin{equation}
    \left|\left<ab\right> + \left<ab'\right> + \left<a'b\right> - \left<a'b'\right>\right| = 2\sqrt{2} > 2 
\end{equation}

\paragraph{}The implications of the violation of \eqref{eq:CHSH} is that the statistics of quantum systems cannot be explained by a model consistent with realism (i.e. all physical quantities have values in a system at a given time) and locality (i.e. an event can only influence other events in its future light-cone). Which is the correct assumption to drop, i.e. realism or locality, is still highly debated in quantum foundations.

A theory without locality would imply that the physical property of a system (e.g. its position) can instantly and non-locally be altered. The leading example of such interpretation is known as \emph{Bohmian mechanics}~\cite{BohmI,BohmII}. We should also note that, even in non-local theories, \emph{information} cannot travel faster than light. For example, in the previously described experiment, Alice cannot infer which measurement Bob has done from her observed local statistics, and conversely.

On the other hand, non-realistic but local theories can also be formulated. Examples of such interpretations are the so-called \emph{Copenhagen interpretation}~\cite{Bohr}, which promotes the idea of the ``collapse of the wavefunction'' upon measurement, or \emph{QBism}~\cite{Caves2002} in which a quantum state is not the ``reality'' of the system, but correspond to a more subjective description of it.  

\subsubsection{Contextuality}
\paragraph{}The proof of the non-existence of hidden variables in quantum mechanics above highly depends on the locality or no-signalling condition, which restricts the situations it can describe. In~\cite{Kochen1967}, the so-called \emph{Kochen-Specker theorem} proposed a more general criterion for the non-existence of hidden variables. Indeed, instead of relying on spatially separated measurements, we only require that measurements be done ``at the same time'', meaning that we can know the values of the measurements simultaneously. For example, if the value of an observable $A$ is found to be $3$, then the value of the operator $A^2$ is automatically known to be $9$. In the standard quantum mechanics formalism, this condition is expressed as having commuting observables, i.e. $A$ and $B$ as compatible iff $AB = BA$. If two observables are compatible we should be able to observe their joint statistics, whereas it does not make sense to talk about the joint statistics of observables which are not compatible. A system will then be said to be non-contextual if we can extend the system to one where all of the observables are compatible, i.e. a system in which the values of all the observables can be known at the same time. This extended system can be seen as a hidden-variable model of the system. The Kochen-Specker theorem~\cite{Kochen1967} then states that this is not possible for the observables of quantum mechanics. We now describe their proof, as well as subsequent proofs, of quantum contextuality.

\paragraph{}The Kochen-Specker argument was originally abstractly formulated in terms of partial algebra as follows. We start with a set of observables $\mathcal{O}$ endowed with a \emph{co-measurability relation} $\mathfrak{R}\subseteq \mathcal{O}\times \mathcal{O}$; namely, for two operators $A,B\in \mathcal{O}$, we have $A~\mathfrak{R}~B$ iff $A$ and $B$ are comaptible. This co-measurability relation is required to be reflexive (i.e. every observable is co-measurable with itself) and symmetric (i.e. if $A$ is co-measurable with $B$, then $B$ is co-measurable with $A$). In addition, it is also desirable that, if $A$ and $B$ are both compatible with an observable $C$, then any function\footnote{Strictly speaking, we mean a Borel function here.} of $A$ and $B$ will remain compatible with $C$. Formally, the co-measurability relation corresponds to a \emph{partial algebra} defined as follows.

\begin{defs}[Partial algebra]
    A \emph{partial algebra} over a field $K$ is a tuple $(X, \mathfrak{R})$ where $X$ is a set endowed with addition, multiplication, and scalar multiplication over $K$, and $\mathfrak{R}$ is a binary relation $\mathfrak{R} \subseteq X\times X$ such that:
    \begin{enumerate}[label=\alph*.]
        \item $\mathfrak{R}$ is symmetric and reflexive;
        \item There exists an element $1\in X$ such that $A~\mathfrak{R} ~1$ for all $A\in X$
        \item For any $A_1, A_2, A_3\in X$ such that $A_i~\mathfrak{R}~ A_j$ for all $i,j \in \{1,2,3\}$ and $\alpha\in K$:
        \begin{align}
            \left(A_1 + A_2 \right) \quad\mathfrak{R}&\quad A_3\\
            A_1\cdot A_2 \quad\mathfrak{R}&\quad A_3\\
            \alpha A_1 \quad\mathfrak{R}&\quad A_2    
        \end{align}
        \item For any $A_1, A_2, A_3$ such that $A_i~\mathfrak{R}~A_j$ for all $i,j\in \{1,2,3\}$, the polynomials over $A_1, A_2, A_3$ form a commutative algebra.
    \end{enumerate}
    In particular, in the case where the field $K$ is the field $\mathbb{F}_2$, we talk of partial Boolean algebras.
\end{defs}

\paragraph{}In addition, we can define morphisms between partial algebras as follows.
\begin{defs}
    A \emph{partial algebra homomorphism} is a map $h:\left(X,\mathfrak{R}_X\right) \to \left(Y, \mathfrak{R}_Y\right)$ between two partial algebras over the same field $K$ such that:
    \begin{align}
        a~\mathfrak{R}_X~b \implies& h(a)~\mathfrak{R}_Y~ h(b)\\
        h(\alpha a + \beta b) =& \alpha h(a) + \beta h(b) \qquad \forall \alpha,\beta\in K\\
        h(a\cdot b) =& h(a)\cdot h(b)\\
        h(1) =& 1
    \end{align}
\end{defs}
Therefore, if there exists a homomorphism $h: (X, \mathfrak{R}_X)\to (Y, \mathfrak{R}_Y)$, then the theory over observables in $X$ can be simulated using observables in $Y$ such that the functional relations between the observables of $X$ are preserved.

\paragraph{}Then, a partial algebra $\mathfrak{R}$ is said to be \emph{non-contextual} iff there exists a partial algebra homomorphism $h: \mathfrak{R}\to \mathfrak{A}$ where $\mathfrak{A}$ is a \emph{total} (commutative) algebra. Accordingly, a partial Boolean algebra is non-contextual iff there is a partial algebra homomorphism into a Boolean algebra.

This is indeed the case for classical mechanics where observables are functions $f: \Omega \to \mathbb{R}$ with $\Omega$ being the set of possible states of the system. This indeed forms a commutative algebra over $\mathbb{R}$, where all of the observables are co-measurable, and we have the following:
\begin{align*}
    1: \Omega \to \mathbb{R} &::= \omega \mapsto 0\\
    f + g: \Omega \to \mathbb{R} & ::= \omega \mapsto f(\omega) + g(\omega)\\
    f \cdot g: \Omega \to \mathbb{R} & ::= \omega \mapsto f(\omega) \times g(\omega)\\
    \alpha f: \Omega \to \mathbb{R} & ::= \omega \mapsto \alpha f(\omega)
\end{align*}

On the other hand, the Kochen-Specker theorem states that quantum mechanical observables are \emph{contextual}, i.e. that there is no homomorphism to a total commutative algebra. Indeed, quantum observables are represented as self-adjoint operators on a Hilbert space $\mathcal{H}$. Addition, multiplication, and scalar multiplication of operators is defined as the standard matrix operations. As mentioned previously, the co-measurability $\mathfrak{R}$ is defined such that $A~\mathfrak{R}~B$ iff $AB = BA$. This structure forms a partial algebra. The obtained partial algebra is not total, as in general, not all observables will commute. Therefore, a set of quantum observables $\mathcal{O}$ admits a hidden variable model iff there exists a homomorphism $h: \left(O,\mathfrak{R}\right) \to (\mathcal{O}', \mathfrak{R}')$ where $\left(\mathcal{O},\mathfrak{R}'\right)$ is the partial algebra associated with a total (commutative) algebra. 

In the original article~\cite{Kochen1967}, the proof of contextuality for quantum theory was achieved by looking at a set of 117 observables in a 3-dimensional Hilbert space (this could represent the angular momentum of a single particle along 117 different directions). Later on, simpler proofs of quantum contextuality have been proposed, using smaller sets of observables, and provided less involved geometric arguments~\cite{Peres1991,Mermin1990,Cabello1996}. 

In addition, a major flaw in the original proof of~\cite{Kochen1967} is that it is not easily checked experimentally. In~\cite{KCBS}, Klyachko, Can, Binicio\u{o}lu and Shumovsky (KCBS) provided a contextuality proof on a 3-dimensional quantum system by deriving a non-contextual inequality, in the same vein as the CHSH inequality for non-locality, and showing its violation by fixing a set of 5 projection operators and the state which is being measured.

To see this, we start with the derivation of the classical bound. Suppose that we have 5 observables $\left\{A_k\right\}_{k=1,\ldots,4}$ such that $A_{k}$ and $A_{k\oplus_5 1}$ are co-measurable for all $k$ (where $\oplus_5$ denotes the addition modulo $5$). Now, suppose that all of the $A_k$'s take value in $\pm 1$, therefore, the products $A_kA_{k\oplus_5 1}$ test whether their values are correlated (i.e. $A_kA_{k\oplus_5 1}=1$) or anticorrelated (i.e. $A_kA_{k\oplus_5 1}=-1$). Then, if there exists a hidden-variable model, all of the $A_k$ gets assigned a value, regardless of which pair of observables is measured (see Table~\ref{tab:KCBSassign}). Moreover, since there is an odd number of pairs $(A_k, A_{k\oplus_5 1})$, then the number of anticorrelated pairs has to be even in a global assignment of values, and is at most $4$. Hence, this gives the KCBS inequality:

\begin{equation}\label{eq:KCBS}
    \sum_{k=1}^5 \left<A_kA_{k\oplus_5 1}\right> \geq -3
\end{equation}

It turns out that the assignment of Table~\ref{tab:KCBSassign} (seen as a deterministic hidden-variable model) saturates this inequality.

\begin{table}[htb]
    \centering
    \begin{tabular}{|c|c|c|c|c|c|}
        \hline & $A_1$ & $A_2$ & $A_3$ & $A_4$ & $A_5$\\\hline
        Value & $+1$ & $-1$ & $+1$ & $+1$ & $-1$\\\hline
    \end{tabular}
    \caption{Example of a total assignment of values to the observables $A_k$ in the KCBS experiment.\label{tab:KCBSassign}}
\end{table}

Now, we will describe a specific instance of such an experiment demonstrating contextuality in 3-dimensional quantum systems, which is taken from~\cite{cabello2010noncontextuality,ahrens2013fundamental}. Starting from the 5 states (up to normalisation factors):
\begin{align}
    \ket{v_1} \propto& \ket{0} + \sqrt{\cos\left(\frac{\pi}{5}\right)} \ket{2}\\
    \ket{v_2} \propto& \cos\left(\frac{4\pi}{5}\right)\ket{0} + \sin\left(\frac{4\pi}{5}\right)\ket{1} + \sqrt{\cos\left(\frac{\pi}{5}\right)} \ket{2}\\
    \ket{v_3} \propto& \cos\left(\frac{2\pi}{5}\right)\ket{0} - \sin\left(\frac{2\pi}{5}\right)\ket{1} + \sqrt{\cos\left(\frac{\pi}{5}\right)} \ket{2}\\
    \ket{v_4} \propto& \cos\left(\frac{2\pi}{5}\right)\ket{0} + \sin\left(\frac{2\pi}{5}\right)\ket{1} + \sqrt{\cos\left(\frac{\pi}{5}\right)} \ket{2}\\
    \ket{v_5} \propto& \cos\left(\frac{4\pi}{5}\right)\ket{0} - \sin\left(\frac{4\pi}{5}\right)\ket{1} + \sqrt{\cos\left(\frac{\pi}{5}\right)} \ket{2}
\end{align}
It can be checked that these states are pairwise orthogonal, i.e. they satisfy $\left<v_k\middle|v_{k\oplus_5 1}\right> = 0$ for each $k$. These states give a set of projections operators with eigenvalues (i.e. outcomes) $\pm 1$, namely:
\begin{equation}
    P_k = 2\ket{v_k}\bra{v_k}  - \mathbb{I}
\end{equation}
where $\mathbb{I}$ is the identity. We can then check that the pairs of projectors $P_k$ and $P_{k\oplus_5 1}$ commute since:
\begin{equation*}
    P_kP_{k\oplus_5 1} = P_{k\oplus_5 1}P_k =  - 2\ket{v_k}\bra{v_k} - 2\ket{v_{k\oplus_5 1}}\bra{v_{k\oplus_5 1}}  + \mathbb{I}
\end{equation*}
using the fact that the pairs $\ket{v_k}$ and $\ket{v_{k\oplus_5 1}}$ are orthogonal. We recall that this means that changing the order of the projections will not change the values of the individual observables. Now, taking the state to be measured to be the state $\ket{\psi} = \ket{2}$, it can be shown that the expectation value $\left<P_kP_{k\oplus_5 1}\right>$ is:
\begin{equation}
    \left<P_kP_{k\oplus_5 1}\right> = \frac{1 - 3\cos\left(\frac{\pi}{5}\right)}{2\cos^2\left(\frac{\pi}{10}\right)}
\end{equation}
for each $k=1,\ldots, 5$. Therefore leading to the violation of the KCBS inequality~\eqref{eq:KCBS}:
\begin{equation}
    \sum_{k=1}^5 \left<A_kA_{k\oplus_5 1}\right> = 5~\frac{1 - 3\cos\left(\frac{\pi}{5}\right)}{2\cos^2\left(\frac{\pi}{10}\right)} \simeq -3.944 < -3
\end{equation}

The advantages of this proof is that it provides clear experiments which needs to be performed for showing contextuality of quantum mechanics, and the inequality derived is minimal in terms of number of observables and dimension of the quantum system~\cite{KCBS}. In addition, we should emphasize that this proof of contextuality \emph{does not depend on locality assumptions}, as measurements are done on a single system. This then shows that contextuality is strictly more general than non-locality.

The KCBS inequality was generalised for $n$-dimensional quantum systems with $n\geq 3$, by considering $n$ observables $\left\{P_i\right\}_{i=1,\ldots,n}$, where the only compatible measurements are $P_i, P_{i\oplus_n 1}$, where $\oplus_n$ is the addition modulo $n$~\cite{Araujo2013}. Then, the KCBS inequality arises as the special case $n=3$, whilst the CHSH inequality corresponds to the case $n=4$.

\subsubsection{Contextuality and quantum computations}
\paragraph{}Contextuality provides a fundamental distinction between classical and quantum theories and has also been shown to be an essential resource in quantum computing. It has famously been demonstrated that quantum systems can solve computational problems exponentially faster than any known classical algorithm, such as factoring~\cite{shor1999polynomial} or simulation of physical systems~\cite{babbush2023exponential}. Where the advantage comes from has historically been unclear. Recent studies have shown that contextuality is a crucial ingredient for obtaining a quantum advantage, more so than superposition or entanglement, which can be efficiently simulated using classical computers~\cite{Steane2003,VandeNest2013}.

\paragraph{}One of the first demonstrations of the role of contextuality in computation relates to fault-tolerant stabiliser quantum computing. 

Fault tolerance is vital to achieve reliable computation on real quantum computers. One of the promising avenues to achieve quantum fault tolerance relies on \emph{stabiliser codes}, where a specific set of measurements (usually generalisations of the Pauli gates) is used to correct noise introduced in a quantum circuit. However, these gates or measurements are part of the Clifford group, which can only generate circuits that are simulable on classical computers~\cite{gottesman1998}. The full power of quantum computations can be achieved from \emph{magic state distillation}. 

For magic state distillation, we start from an input state $\rho$, which can be noisy, and aim at distilling it into a target ``magic state'' $\ket{m}$ using stabiliser measurements on some subsystem. This target state is defined so that non-Clifford gates can be performed using it. 

Now, not all initial states $\rho$ can be distilled into a magic state $\ket{m}$. In~\cite{Howard2014}, the authors showed that the set of states that can be distilled into magic states are precisely the ones that can exhibit contextuality. Since quantum circuits using only Clifford gates are efficiently simulable using classical resources, this result shows that contextuality is essential to obtain a quantum advantage.

\paragraph{}Contextuality has also been studied from a resource theoretical point of view. One result is that the amount of contextuality of a system cannot increase with classical operations such that classical pre- and post-processing, classical control over measurements or probabilistic mixing of experiments~\cite{Abramsky2019,Duarte2018,Amaral2019,Wagner2023}. This result implies that any computational advantage coming from contextuality, e.g. in magic state distillation, cannot be created from classical operations. Using quantum systems is, therefore, necessary to obtain a quantum advantage.

\subsection{The sheaf-theoretic view of contextuality}\label{subsec:sheafContextuality}
\paragraph{}In~\cite{AbramskyBrad}, the authors showed that contextuality corresponds to the impossibility of finding a global section given a consistent family of local sections of a presheaf. 

In this framework, the possible local measurements form a set $X$, which will become the base space of our presheaves (with suitable topology). We then impose a compatibility relation on $X$, which, in turn, gives us a cover of this space. This compatibility relation corresponds to the co-measurability relation described in Section~\ref{subsec:BackgroundContext}. 
\begin{ex}
    Let's consider the standard (2,2,2)-Bell scenario consisting of 2 parties, each choosing between 2 measurements, and each measurement having two possible outcomes. The set of possible measurements is $X = \left\{a_1, a_2, b_1, b_2\right\}$ and we will denote as $I_A = \{a_i\}_{i=1,2}$ the set of measurements available to Alice, and $I_B =\{b_i\}_{i=1,2}$ the set of measurements available to Bob. Alice's measurements in $I_A$ are compatible with all of Bob's in $I_B$. However, the measurements $a_1$ and $a_2$ are incompatible, as they cannot be performed simultaneously, and similarly for Bob's measurements. 
\end{ex}
Each of these measurements comes with a set of possible outcomes $O$\footnote{Without loss of generality, we can assume that $O$ is the same for any choice of measurement.}. Then, given a set of compatible measurements $U$, an event associates outcomes with the measurements selected in $U$. An event is, therefore, modelled as a function:
\begin{equation*}
    s: U\to O
\end{equation*}
\begin{ex}In the (2,2,2)-Bell scenario, if Alice chooses to perform the measurement $a_1$ and obtains outcome $x \in O$, and Bob the measurement $b_2$ and obtains the outcome $y \in O$, then the event could be represented as the function:
    \begin{equation*}
    s:U\to O :: a_1 \mapsto x; b_2\mapsto y 
    \end{equation*}
\end{ex}

\subsubsection{Presheaves and empirical models}
\paragraph{}Formally speaking, these functions are modelled as the \emph{presheaf of events}. The presheaf of events is defined as $\mathcal{E} \colon \mathcal{P}(X)^{op} \to \mathbf{Sets}$, where the morphisms in $\mathcal{P}(X)$ are inclusion relations. In other words, we are taking a presheaf over the set of measurements $X$ endowed with the \emph{discrete topology}.

The action of this \emph{presheaf} on objects $U$ gives the set of all possible \emph{assignments} or \emph{functions} $s: U\to O$. The action on morphisms $U\xrightarrow{\subseteq} V$ in $\mathcal{C}$, gives us the \emph{restrictions} of these assignments, namely:
\begin{equation}\label{eq:restrictE}
    \begin{matrix}
        \mathcal{E}(U\subseteq V): &&\mathcal{E}(V) &\to &&\mathcal{E}(U)\\
    &s:&V\to O &\mapsto &s|_U:& U\to O\\
    &&v\in V\mapsto o_v\in O& &&v\in U\mapsto o_{v} \in O
    \end{matrix}
\end{equation}

\paragraph{}In quantum mechanics, however, the outcomes of measurements are not generally deterministic, so instead of looking at events, it is more relevant to look at \emph{the probability distributions} over all of the possible events. Therefore, we post-compose the event presheaf $\mathcal{E}$ with the distribution monad $\mathcal{D}_{\mathbb{R}_+}:\mathbf{Sets}\to \mathbf{Sets}$ as defined in Section~\ref{sec:categories}. The obtained functor is once again a presheaf.

\paragraph{}In a given experiment, we will not observe \emph{all} of the possible probability distributions for each set of co-measurable measurements, but instead, we will see only a \emph{single} probability distribution per global measurement choice, which will correspond to the \emph{observed probability distribution}. Hence, in terms of the presheaf $\DR\mathcal{E}$, this means that, when selecting a set of measurements $U\subseteq X$ to perform, we will only observe a single section $e_U\in \DR\mathcal{E}(U)$. 

Similarly, we can only access the probability distributions of specific combinations of compatible measurements in a given quantum experiment. For example, suppose Alice can either measure $a_1$ or $a_2$. In that case, we cannot observe the joint statistics of $a_1$ and $a_2$ as these measurements cannot be performed simultaneously. So, instead of looking at sections of $\DR\mathcal{E}(U)$ for each of the subsets $U\subseteq X$, we will instead consider a collection $\mathcal{U}=\left\{U_i\right\}_{i\in I}$, such that for each of the collections of $U\in \mathcal{U}$, the elements of $U$ correspond to compatible measurements. Without loss of generality, we will moreover assume that the set $\mathcal{U}$ is a cover of the space $X$, i.e. $\bigcup_{U\in \mathcal{U}} = X$, so that all of the measurements are possible. This gives rise to the notion of \emph{measurement scenario}.
\begin{defs}[Measurement scenario]
    A \emph{measurement scenario} will consist on a tuple $(\mathcal{X}, \mathcal{U})$, where $\mathcal{X}$ is a topological space and $\mathcal{U}$ is an (open) cover of $X$.
\end{defs}

We then define the data of an experiment as follows.
\begin{defs}[Empirical model]
    Given measurement scenario $(\mathcal{X}, \mathcal{U})$, we define an \emph{empirical model} as a set of sections $e = \left\{e_U\in \DR\mathcal{E}(U)\middle| U\in \mathcal{U} \right\}$ of the presheaf $\DR\mathcal{E}(U): \mathcal{T}(\mathcal{X})^{op}\to \mathbf{Sets}$, where $\mathcal{E}$ can be any presheaf of events.
\end{defs}

\begin{ex} In a (2,2,2)-Bell scenario as described previously, we would have $X = \left\{a_1, a_2, b_1, b_2\right\}$, with associated cover $\mathcal{U} = \left\{\left\{a_1, b_1\right\}, \left\{a_1, b_2\right\}, \left\{a_2, b_1\right\}, \left\{a_2, b_2\right\}\right\}$. Then, an empirical model could be represented as in Table~\ref{fig:basechsh} where each of the rows is labelled by the choice of measurements $U\in \mathcal{U}$ and correspond to the selected section of $\DR\mathcal{E}(U)$. More specifically, the cell at the intersection of the row labelled by $\{a_i, b_j\}$ and column labelled by $(o_k, o_l)\in O^2$ corresponds to the probability $e_{\{a_i, b_j\}}(s::a_i\mapsto o_k; b_j\mapsto o_l)$. 
\begin{table}
    \begin{center}
 \begin{tabular}{r|ccccc}
 & $(0, 0)$ & $(0, 1)$ & $(1, 0)$ & $(1, 1)$ \\ \hline
 $(a_1, b_1)$ & $1 / 2$ & $0$ & $0$ & $1 / 2$ \\
 $(a_1, b_2)$ & $3 / 8$ & $1 / 8$ & $1 / 8$ & $3 / 8$ \\
 $(a_2, b_1)$ & $3 / 8$ & $1 / 8$ & $1 / 8$ & $3 / 8$ \\
 $(a_2, b_2)$ & $1 / 8$ & $3 / 8$ & $3 / 8$ & $1 / 8$ \\
 \end{tabular}
\end{center}
 \caption{An empirical model for the (2,2,2)-Bell scenario\label{fig:basechsh}}
\end{table}
\end{ex}

\begin{rmk}
    To simplify the notation, we will denote the probabilities:
    \begin{equation}
        e_{\{a_i, \ldots, a_k\}}(s:: a_j\mapsto o_j) \equiv e_{(a_i, \ldots a_k)}(o_i, \ldots, o_k)
    \end{equation}
\end{rmk}

% \subparagraph{Simplical representation of measurement contexts}\blue{simplicial complexes}

\subsubsection{Sheaf-theoretic contextuality}

\paragraph{}In the standard contextuality experiments, we are interested in studying the source of the correlations between contexts, i.e. choices of measurements and their observed statistics. In order to isolate the source of potential correlations between the contexts and the outcomes, the standard practice is to limit the overall number of possible sources of such correlations. One type of correlation which can be eliminated in quantum experiments is communication, i.e. the \emph{signalling} between Alice and Bob in the above example. In practice, we can achieve this by spatially isolating these parties. 

The consequence of such isolation, or lack of signalling, is that the marginal probability distributions do not depend on the choice of measurements of the other parties. In other words, for any set of inputs $U$, and any two sets of measurements $V, V'$ compatible with all elements of $U$, we should have:
\begin{equation}\label{eq:consistencyEM}
    e_{U\cup V}|_{U}(\underline{o}_U) = e_{U\cup V'}|_{U}(\underline{o}_U)
\end{equation}
for all joint outcomes $\underline{o}_U$ over the measurements of $U$, where $e_{W}$ corresponds to the joint probability distribution corresponding with the choices of inputs $W$ for any set $W$.

\begin{ex}The (2,2,2)-Bell scenario depicted in Table~\ref{fig:basechsh} indeed satisfies this so-called \emph{no-signalling} condition, since, for instance:
\begin{equation}
    e_{(a_1,b_1)}|_{a_1}(0) = e_{(a_1,b_2)}|_{a_1}(0) = \frac{1}{2}
\end{equation}
\end{ex}

\paragraph{}We then define the notion of (non-)contextuality as follows.
\begin{defs}
    A system is said to be \emph{non-contextual} iff there exists a joint probability distribution over $X$ which correctly restricts to all of the $e_U$'s, i.e., if there exists a global section $e\in P(X)$ such that $\left.e\right|_U = e$ for all $U\in \mathcal{P}(X)$.
\end{defs}  

We note that this condition is reminiscent of the definition of a sheaf described in Section~\ref{subsec:sheaves} (Definition~\ref{def:sheaf}). If an empirical model is non-contextual, the global section acts as a hidden-variable model for the observed statistics.

\begin{ex}The example of Fig.~\ref{fig:basechsh} is \emph{contextual}, i.e. a global probability distribution cannot be defined.\end{ex} 

\begin{rmk}
    This notion of contexutality can be shown to be equivalent to the notion of contextuality defined in terms of non-existence of a homomorphism from a partial Boolean algebras to the Boolean algebra $\mathbf{2}$~\cite{EMpBA}.
\end{rmk}

\subsubsection{On the no-signalling property}
\paragraph{}In realistic experiments, the no-signalling condition does not usually hold; this can be due to the unsharpness of the instruments~\cite{Emeriau2022} or simply the finiteness of the measurements~\cite{Emeriau2022,Dzhafarov2016contextcontent}. As a result, different frameworks have been developed to study contextuality in the presence of signalling. Examples of these are the Contextuality-by-Default framework~\cite{Dzhafarov2016contextcontent} and the corrected Bell inequalities of the sheaf-theoretic model~\cite{Emeriau2022}, both of which create a measure of the signalling property of the system. We will describe the Contextality-by-Default framework in the subsequent subsection, but first, let's look at way of dealing with signalling in the sheaf-theoretic framework~\cite{Emeriau2022}.

The intuition is that a signalling system is said to be contextual if the amount of signalling is not enough to make the system ``classically explainable''. In sheaf-theoretic terminology, the empirical model is said to be \emph{no-signalling} or \emph{consistent} if every pair of sections in an empirical model satisfies the compatibility condition of \eqref{eq:consistencyEM}. Given an empirical model $e$, which is not necessarily compatible, we define the \emph{no-signalling fraction} $\mathsf{NSF}\in [0, 1]$ as the maximal possible value of $\lambda$ across all of the decompositions of $e$:
\begin{equation}
    e = \lambda \cdot e_{NS} + (1-\lambda) \cdot e' 
\end{equation}
where $e_{NS}$ is a no-signalling empirical model (the multiplication is here point-wise multiplication), and $e'$ can be any empirical model. We then define the \emph{signalling fraction} as:
\begin{equation}
    \mathsf{SF} = 1 - \mathsf{NSF}
\end{equation}

The signalling fraction can be seen as the degree of incompatibility of an empirical model, as it measures the departure from a no-signalling, locally compatible model. 

\paragraph{}Similarly, for any arbitrary empirical model $e$, we can define the \emph{non-contextual fraction} $\mathsf{NCF}$~\cite{Emeriau2022,Abramsky2017,Amaral2019} as the maximal $\lambda\in [0,1]$ such that:
\begin{equation}
    e = \lambda \cdot e_{NC} + (1-\lambda) \cdot e'
\end{equation}
where, this time, $e_{NC}$ is a non-contextual (and no-signalling) empirical model. In addition, we will also define the \emph{contextual fraction} $\mathsf{CF}$ as:
\begin{equation}
    \mathsf{CF} = 1 - \mathsf{NCF}
\end{equation} 

Then, a possibly signalling empirical model is said 
to be contextual iff:
\begin{equation}
    \mathsf{CF} > \mathsf{SF}
\end{equation}
\begin{rmk}
The contextual fraction $\mathsf{CF}$ can also quantify a resource from which a quantum advantage can be obtained~\cite{Abramsky2019}.
\end{rmk}

\subsubsection{Extending to causality}
\paragraph{}In~\cite{MansfieldSequential,sheafcausality,sheafcausalityB,abramskyCausality}, this formulation of contextuality has been extended to scenarios where structured signalling is allowed, first by allowing sequential operations in~\cite{MansfieldSequential}, then by allowing \emph{definitite causal orders}~\cite{sheafcausality,abramskyCausality} and even \emph{indefinite causal structures}~\cite{sheafcausality,sheafcausalityB}.

Here, we will focus on the case of definite causal order and use the formulation of~\cite{sheafcausality}, although the one of~\cite{abramskyCausality} is equivalent on the situations of interest in Part~\ref{part:Lexical} and~\ref{part:Syntactic}\footnote{Although, the formulation of~\cite{abramskyCausality} is applicable to strictly more scenarios than the one of~\cite{sheafcausality}.}. We start by defining the notion of \emph{party} corresponding to a point in space and time. For example, it could represent a lab, as in the contextuality scenarios, or a sequence of operations done in the same lab. 

Each party $A$ will be associated with a set of possible inputs or measurements $I_A$. The measurements of $I_A$ are assumed to be pairwise incompatible. $ X = \amalg_A I_A$ will denote the set of all possible measurements. And as in contextuality scenarios, each of the inputs $x\in I_A$ will have an associated set of outcomes $O$, which we will take to be the same for all possible measurements. 
% \begin{rmk}
%     We will also note that the formulation of~\cite{abramskyCausality} is done without having to define parties (contrary to~\cite{sheafcausality}). However, parties could be created (without necessarily a physical interpretation) even when the causal relations between the measurements are taken to be the elementary relations. Indeed, we could create a party by grouping no-signalling measurements such that the causal relations are, on the whole, left unchanged. %\red{Either get rid of or check that that's true.}
% \end{rmk}

\paragraph{}Given a set of parties $\Omega$, we define a \emph{causal order} over $\Omega$ as a partial order $\Sigma = \left(\Omega, \preceq\right)$ over $\Omega$. This partial order should be interpreted as follows: for any two parties $A,B\in \Omega$, if $A\preceq B$, then the input of $A$ can influence outputs of any measurement chosen by $B$, but not the other way around. A \emph{causal scenario} is therefore taken to be $\left(\Sigma = \left(\Omega, \preceq\right), X = \amalg_{A\in \Omega} I_A, O\right)$. 

Given a set of parties $\omega\subseteq \Omega$, we define its \emph{causal past} as the downward-closed set (with respect to $\preceq$):
\begin{equation}
    \omega_\downarrow = \left\{B\in \Omega~\middle|~\exists A\in \omega. ~ B\preceq A\right\}
\end{equation}
Then, we define the set of all lowersets $\Lambda_\Sigma$ as:
\begin{equation}
    \Lambda_\Sigma = \left\{\omega_\downarrow~\middle|~\omega\in \mathcal{P}(\Omega)\right\}
\end{equation}
Roughly speaking, each set $\lambda\in \Lambda_\Sigma$ corresponds to a set of parties for which a complete history, i.e. sets of inputs and outcomes, can be defined. 
% We will also say that if $A\preceq B$, then, for any measurement $a\in I_A$ and $b\in I_B$, then $a\preceq b$, i.e. the output of $a$ can influence the output of $b$, but not the other way around. Given a measurement $x\in X$, we define its \emph{causal past} as the set:
% \begin{equation}
%     x\downarrow = \left\{y\in X\middle| y\preceq x\right\}
% \end{equation}
% This corresponds to all of the measurements upon which the outcome of $x$ might depend.

We now recall that in contextuality scenarios, for any set of measurements $U\subseteq X$, each measurement $x\in U$ is assumed to be made independently. However, in the case of causal scenarios, the measurements are allowed to depend on the inputs and outcomes of the preceding events. The approach of~\cite{sheafcausality} is to encode this causal structure within the topology of the base space of the presheaf. Given a causal order $\Sigma$, we will then define the topological space $\mathcal{L}_\Sigma$ as having open sets abstractly defined as:
\begin{equation}
    \underline{U}\in \mathcal{L}_\Sigma = \left(\lambda\in \Lambda_\Sigma, \left(U_A\subseteq I_A\right)_{A\in \lambda}\right)
\end{equation}
where we also require that $U_A\neq \emptyset$ for all $A\in \lambda$. 

The idea is that each of these sets $\underline{U}$ gives rise to a well-defined sub-scenario $\left((\lambda, \preceq), \amalg_{A\in \lambda}U_A, O\right)$ of the full causal scenario. The condition that $U_A\neq \emptyset$ for all $A$ states that every party in the sub-scenario can ``do something'' so that every party in its future can use their local experiment. 

In addition, we can order these sub-scenarios as follows:
\begin{equation}
    \begin{matrix}
        &\underline{U} = \left(\lambda_U\in \Lambda_\Sigma, \left(U_A\subseteq I_A\right)_{A\in \lambda_U}\right) \subseteq \underline{V} = \left(\lambda_V\in \Lambda_\Sigma, \left(V_A\subseteq I_A\right)_{A\in \lambda_V}\right)\\
        \iff&\lambda_U \subseteq \lambda_V \quad \wedge \quad \forall \omega\in \lambda_U.~ U_\omega \subseteq V_\omega 
    \end{matrix}
\end{equation}
This means that if $\underline{U}\subseteq \underline{V}$, then everything that can happen in $\underline{U}$ is also possible in $\underline{V}$. We can moreover define the \emph{union} and \emph{intersection} of sub-scenarios $\underline{U}$ and $\underline{V}$ as follows:
\begin{align}
    \underline{U}\cap \underline{V} =& \left(\lambda=\left\{A\in \lambda_U \cap \lambda_V~\middle|~ U_A\cap V_A \neq \emptyset\right\}, (U_A\cap V_A)_{A\in \lambda}\right)\label{eq:localeInter}\\
    \underline{U}\cup \underline{V} =& \left(\lambda_U\cup \lambda_ V, \left(U_A\cup V_A\right)_{A\in \lambda_U\cup \lambda_V}\right)
\end{align}

\begin{rmk}
    These definitions are directly taken from~\cite{sheafcausality}. However, although they are well-motivated from a physical point of view, the intersection defined in \eqref{eq:localeInter} is \emph{not always defined}. For instance, let us look at the causality scenario $(\Sigma, X, O)$ where $\Sigma = \left(\left\{A, B,C\right\}, \preceq\right)$ is the total order:
    \begin{equation}
        A\preceq B\preceq C
    \end{equation}
    and where:
    \begin{equation}
        I_A = I_B = I_C = \left\{0,1\right\}
    \end{equation}
    Then, taking:
    \begin{align}
        \underline{U} =& \left(\left\{A, B, C\right\}, \left(I_A, \left\{0\right\}, I_C\right)\right)\\
        \underline{V} =& \left(\left\{A, B, C\right\}, \left(I_A, \left\{1\right\}, I_C\right)\right)
    \end{align}
    Then:
    \begin{equation}
        \left\{A\in \lambda_U \cap \lambda_V~\middle|~ U_A\cap V_A \neq \emptyset\right\} = \left\{A, C\right\} \notin \Lambda_\Sigma
    \end{equation}
    which is not a lowerset, so $\underline{U}\cap\underline{V}\notin \mathcal{L}_\Sigma$. However, this issue is not easily fixable, and we will leave the task of formulating a better framework as future work.%\red{READ THE TOPOLOGY OF CAUSALITY PAPER}
\end{rmk}

It is then claimed that $\mathcal{L}_\Sigma$ forms a locale~\cite[Proposition 5]{sheafcausality} (and hence a topological space)\footnote{Note that this does not follow from the definitions of~\cite{sheafcausality} as, from the above remark, $\mathcal{L}_\Sigma$ does not define a meet. This could still lead to a locale, but with a different choice of meet.}.

\paragraph{}We say that a function $s:\prod_{A\in \lambda}U_A \to O^{\left|\lambda\right|}$ over a lower set $\lambda$ respects the causal order $\Sigma$ iff for all $\left(i_A\right)_{A\in \lambda}, \left(i'_A\right)_{A\in \lambda}\in \prod_{A\in \lambda} U_A$:
\begin{equation}
    \left.\left(i_A\right)_{A\in \lambda}\right|_{B_\downarrow} = \left.\left(i'_A\right)_{A\in \lambda}\right|_{B_\downarrow} \implies \left.s\left(\left(i_A\right)_{A\in \lambda}\right)\right|_{\{B\}} = \left.s\left(\left(i'_A\right)_{A\in \lambda}\right)\right|_{\{B\}}
\end{equation}
where we write the (strict) past of $B$ as $B_\downarrow \equiv \{B\}_\downarrow-\{B\}$. This condition states that the past of $B$ is unchanged by its future. 
\begin{ex}
    Let's consider the simple causal scenario $\Sigma = \left(\left\{A,B\right\}, \preceq\right)$ where the only non-trivial causal relation is $A\preceq B$, and the causal scenario:
    \begin{equation*}
        \left(\Sigma, \left\{\left(A, I_A=\left\{a_1, a_2\right\}\right), \left(B, I_B=\left\{b_1, b_2\right\}\right)\right\}, O=\left\{0,1\right\}\right)
    \end{equation*}
    In addition, let's consider the open subset $\underline{U} = \left(\left\{A, B\right\} \left(I_A, I_B\right) \right)$. Then, the function $s: I_A\times I_B \to O^2 \in \mathcal{E}_\Sigma$ defined as:
    \begin{center}
        \begin{tabular}{c|c}
            $s$ & Outcomes\\\hline
            $(a_1, b_1)$ & $(0,0)$\\
            $(a_1, b_2)$ & $(0,1)$\\
            $(a_2, b_1)$ & $(1,1)$\\
            $(a_2, b_2)$ & $(1,0)$
        \end{tabular}
    \end{center}
    is a causal function with respect to $\Sigma$, since:
    \begin{align}
        \left.s\left(a_1, b_1\right)\right|_{\{A\}} = \left.s\left(a_1, b_2\right)\right|_{\{A\}} = 0\\
        \left.s\left(a_2, b_1\right)\right|_{\{A\}} = \left.s\left(a_2, b_2\right)\right|_{\{A\}} = 1
    \end{align}
\end{ex}

We then define the (pre-)sheaf of causal events $\mathcal{E}_\Sigma: \mathcal{L}_\Sigma^{op} \to \mathbf{Sets}$ as:
\begin{equation}
    \begin{matrix}
        \mathcal{E}_\Sigma:& \mathcal{L}_\Sigma^{op} &\to&\mathbf{Sets}\\
        & \left(\lambda, \left(U_A\right)_{A\in \lambda}\right)& \mapsto& \left\{s ~\middle|~s \text{ respects the causal order }\Sigma\right\}\\
        & \underline{U}\subseteq \underline{V} & \mapsto & \left(s::(i_A)_{A}\mapsto (o_A)_{A}\right)\mapsto \left(\left.s\right|_{\underline{U}}:: (i_A)_{A}\mapsto (o_A)_{A}\right)
    \end{matrix}
\end{equation}
Each section $s$ of $\mathcal{E}_\Sigma(\underline{U})$ therefore corresponds to a set of consistent histories over the sub-scenario associated with $\underline{U}$. In this case, the consistency condition expresses the consistency with respect to the causal order $\Sigma$.

\paragraph{}As in the contextuality case, we then want to consider a probabilistic mixture of possible histories, and hence consider sections of the presheaf $\DR\mathcal{E}_\Sigma: \mathcal{L}_\Sigma^{op}\to \mathbf{Sets}$. Similarly, we will define a \emph{causal empirical model} as a family of sections $e = \left\{e_{\underline{U}}~\middle|~\underline{U}\in \mathcal{M}\right\}$, where $\mathcal{M}$ is a cover of $\mathcal{L}_\Sigma$, i.e. $\bigcup_{\underline{U}\in \mathcal{M}} \underline{U} = \left(\Omega, \left(I_A\right)_{A\in \Omega}\right)$. 

A standard choice of cover is the following:
\begin{equation}
    \mathcal{M}_{local} = \left\{(\lambda, \left(\left\{i_A\right\}\right)_{A\in \lambda})~\middle|~ \lambda\in \Lambda_\Sigma, i_A\in I_A\right\}
\end{equation}
This cover will record the statistics of observing the outputs at each stage for any choice of inputs. We will, for instance, use this cover in Chapter~\ref{chap:SyntacticModel} when looking at the grammatical parsing process. For this cover, an empirical model is said to be causal or consistent with $\Sigma$ if the restriction of a section to an earlier stage correspond to the choice of section at this earlier stage, i.e.:
\begin{equation}
    \underline{U} \subseteq \underline{V} \quad \wedge\quad \underline{U},\underline{V}\in \mathcal{M}_{local} \quad \implies \quad \left.e_{\underline{V}}\right|_{\underline{U}} = e_{\underline{U}}
\end{equation}

Another choice of cover, which we will adopt in Section~\ref{sec:lexicalCausal}, is the following:
\begin{equation}
    \mathcal{M}_{global} = \left\{\left(\Omega, \left(\left\{i_A\right\}\right)_{A\in \Omega}\right)~\middle|~i_A\in I_A\right\}
\end{equation}
In these empirical models, we can only access the final probability distributions given a global choice of inputs. 
% The consistency condition will be the standard consistency condition:
% \begin{equation}
%     \left.e_{\underline{U}}\right|_{\underline{U}\cap \underline{V}} = \left.e_{\underline{V}}\right|_{\underline{U}\cap \underline{V}}
% \end{equation}
% (whenever the intersection $\underline{U}\cap\underline{V}$ is defined).

\begin{ex}
    Let's consider once again a causal scenario defined over the causal order $\left(\left\{A,B\right\}, \preceq\right)$, where the only non-trivial causal relation is $A\preceq B$, and where $I_A = \left\{a_1, a_2\right\}$, $I_B = \left\{b_1, b_2\right\}$ and $O=\left\{0,1\right\}$. Let's moreover consider the global cover:
    \begin{equation}
        \mathcal{M} = \left\{\left(\{A,B\}, \left(\left\{a_i\right\}, \left\{b_j\right\}\right)\right)~\middle|~i, j= 1,2\right\}
    \end{equation}
    Then, Table~\ref{fig:exCausal} depicts an example of an empirical model, where each of the rows corresponds to a probability distribution associated with the global choice of input $\left(a_i,b_j\right)$, and the columns are labelled with respect to the observed outcome. This model can moreover be found to be causal with respect to $\Sigma$ as (removing the curly brackets around singletons and the index $\Omega$ for the sake of clarity):
    \begin{align*}
        \left.e_{(a_1,b_1)}\right|_{a_1}(0) = \left.e_{(a_1,b_2)}\right|_{a_1}(0) = 6/13\\
        \left.e_{(a_2,b_1)}\right|_{a_2}(0) = \left.e_{(a_2,b_2)}\right|_{a_2}(0) = 23/65
    \end{align*} 
    \begin{table}[htb!]
    \begin{center}
        \begin{tabular}{c|c|c|c|c|}
            & $(0,0)$ & $(0,1)$ &$(1,0)$ & $(1,1)$ \\\hline
             $(a_1,b_1)$ & $0$ & $6/13$ & $0$ & $7/13$\\\hline
             $(a_1,b_2)$ & $24/65$ & $6/65$ & $7/13$ & $0$\\\hline
             $(a_2,b_1)$ & $23/65$ & $0$ & $14/65$ & $28/65$\\\hline
             $(a_2,b_2)$ & $23/260$ & $69/260$ & $42/65$ & $0$\\\hline
       \end{tabular}
\end{center}
 \caption{Example of an empirical model causal with respect to to the causal order $A\preceq B$.\label{fig:exCausal}}
\end{table}
\end{ex}

\begin{rmk}
    The notation can quickly become very complex in causal empirical models. Hence, as done in the previous example, any redundant information will be removed in the subsequent chapters whenever it is clear from the context what each of the quantities refers to.
\end{rmk}

\paragraph{The causal fraction}As for the no-signalling property in contextuality scenarios, a generic empirical model will not necessarily be consistent with a given causal order, notably when the probability distributions are obtained empirically. Hence, we will define the notion of the \emph{causal fraction} $\mathsf{CausF}_\Sigma$ with respect to to a causal order $\Sigma$ which will quantify how much of the observed statistics is compatible with the causal order $\Sigma$. This fraction will be defined as the maximal $\lambda\in [0,1]$ such that:
\begin{equation}
    e = \lambda \cdot e_{\Sigma} + (1-\lambda)\cdot e'
\end{equation}
where $e_\Sigma$ is consistent with the causal order $\Sigma$.

\subsection{The Contextuality-by-Default framework}\label{subsec:CbD}

\paragraph{}We have previously seen that the no-signalling condition imposed on the probability distributions is often too restrictive in practice. Solutions on the sheaf-theoretic side included allowing a small enough amount of signalling into the system or studying systems with a well-defined causal structure. Here, we will describe an alternative way of doing the former, i.e. taking signalling into account, using the framework of Contextuality-by-Default (CbD).

\paragraph{}One of the ideas behind the Contextuality-by-Default approach is to extend the notion of contextuality by allowing \emph{direct influence} of the context on the results of measurements. However, for every system in which changing the context results in a change of probability distribution, there is some \emph{contextual influence}. Therefore, one question is to distinguish what counts as ``direct influence'', and what is ``truly contextual influence''.

In CbD, non-contextual systems are the ones for which one can find a ``global explanation'' of the system which maximises the probability that distributions corresponding to the same contents coincide. We refer to this minimal amount of contextual influence allowed by the observed probability distributions as \emph{direct influence}, while \emph{contextual influences} will designate any influence due to the context. A system will be contextual if the direct influences are \emph{not enough} to describe the observed system.

% \begin{figure}[ht]
%     \centering
%     \red{Influences\_details.eps}
% \caption{Details of the different types of influences that the content variables are under. The labels $C$, $Fq_1,Fq_2$ and $\Lambda$ corresponds to the random variables associated with the context, contents and hidden variables respectively in M-Contextuality (see Fig.~\ref{fig:baysianNet} as well). Hidden influences from the latent variable $\Lambda$ are depicted in dotted lines.\label{fig:contextualinfluence}}
% \end{figure}

\paragraph{}We now introduce the standard formalism of Contextuality-by-Default (CbD) (see also~\cite{Dzhafarov2016contextcontent} for a more general introduction). In this setting, a \emph{content} is a measurement, or more generally, a question with a known set of answers. The \emph{context} gathers all the conditions under which one or several of these questions are asked.

Formally, we start with the concept of a \emph{probability space} $(\Omega, \Sigma, \mu)$, where $\Omega$ is called the sample space, and will correspond to the set of possible outcomes (e.g. set of possible answers to a question), $\Sigma$ is a $\sigma$-algebra over $\Omega$ (i.e. set of subsets closed under complementation, countable unions and countable intersections), which we will usually take to be $\Sigma = \mathcal{P}\left(\Omega\right)$, and $\mu: \Sigma\to \mathbb{R}_+$ is a probability distribution. Now, the sample space $\Omega$ consists of an abstract collection of objects from which we cannot, for example, calculate expecation value. We then define a \emph{random variable}\footnote{Here we only consider real-valued random variables} over a probability space $(\Omega, \Sigma, \mu)$ as a (measurable) function $X: \Omega\to \mathbb{R}$, where $X(\omega)$ can be seen as the (real-)value of the outcome $\omega$. Then, for any $v\in \mathbb{R}$, we define the probability:
\begin{equation}
    P\left[X=v\right] = \mu\left(\left\{\omega\in\Omega~\middle|~X(\omega)=v\right\}\right)
\end{equation}
Similarly, for any $I\subseteq \mathbb{R}$, we define:
\begin{equation}
    P\left[X\in I\right] = \mu\left(\left\{\omega\in\Omega~\middle|~X(\omega)\in I\right\}\right)
\end{equation}

Every content $q_i$ in a context $c^j$ gives rise to a random variable $R^j_i$ that takes values from the possible answers to $q_i$ and gives the probability of each answer in the context $c^j$. So, to make the parallel with the sheaf-theoretic frameowrk introduced in the previous section, for a given measurement scenario we would have:
\begin{equation}
    P\left[R^j_i = v\right] = \left.e_{c^j}\right|_{q_i} (v)
\end{equation}

All random variables in a given context are jointly distributed, i.e. they are defined over the same probability space. However, random variables from different contexts are not: they are \emph{stochasitically unrelated}. This is the main difference with the sheaf-theoretic framework of contextuality, since here, it does not make sense to question the equality or inequality of marginal probability distributions arising from different contexts, since they are not defined over the same probability space. To talk about random variables that are not jointly distributed, we introduce the concept of \emph{probabilistic coupling}.

\begin{defs}
    A \emph{probabilistic coupling} of random variables $X_1,\ldots, X_n$ is a set of random variables $Y_1, \ldots, Y_n$ which are jointly distributed, and for which the probability distribution of each $Y_i$ agrees with the probability distributions of $X_i$.
\end{defs}

\begin{ex} Here is an example taken from~\cite{Dzhafarov2016contextcontent}. Consider two unrelated random variables $X_1$ and $X_2$ taking values in $\left\{1,2,3\right\}$ and $\left\{1,2\right\}$ respectively with the probability distributions are given by:
\begin{center}
    \begin{tabular}{c|ccc}
        & $X_1 = 1$ & $X_1 = 2$ & $X_1 = 3$\\\hline
        $P$& $0.3$ & $0.3$ & $0.4$
    \end{tabular}
\end{center}
and :
\begin{center}
    \begin{tabular}{c|cc}
        & $X_2 = 1$ & $X_2 = 2$\\\hline
        $P$& $0.7$ & $0.3$
    \end{tabular}
\end{center}
Then, we could create a probabilistic coupling $Y_1, Y_2$ such that the joint probability distribution of $Y_1$ and $Y_2$ is given by:
\begin{center}
    \begin{tabular}{c|ccc}
        $P$ & $Y_1 = 1$ & $Y_1 = 2$ & $Y_1 = 3$\\\hline
        $Y_2 = 1$ & $0.3$ & $0.2$ & $0.2$\\
        $Y_2 = 2$ & $0$ & $0.1$ & $0.2$\\
    \end{tabular}
\end{center}
It can be checked that the marginals of the above joint probability distribution do indeed reduce to the probability distributions of $X_1$ and $X_2$.
\end{ex}

Then, given a set of random variables $R^j_i$, a probabilistic coupling over them will correspond to a hidden variable model of the observed statistics. Note, however, that it is not an analogue of a global section in the sheaf-theoretic framework of contextuality, since the probabilities do not only depend on the content (observable) but also the context it is measured in.

Now, given any set of random variables $R^j_i$, it is always possible to define (infinitely many) couplings $S^j_i$~\cite{Dzhafarov2016contextcontent}. Hence, instead of requiring the existence of a coupling compatible with the observed distributions, we require a ``classical-like system'' to be a coupling that satisfies certain properties. 

In particular, let's consider a probabilistic coupling $S^j_i$ associated with the observed statistics of a system recorded in $R^j_i$. Then, since the $S^j_i$ are jointly distributed, the following probability is well-defined for any fixed content $q_i$ and pairs of contexts $c^j, c^{j'}$:
\begin{equation}
    P\left[S^j_i = S^{j'}_i\right] = \sum_{v\in V} P\left[S^j_i = v,  S^{j'}_i=v\right]
\end{equation}
where $V$ is the set of values the content $q_i$ can take. It can be shown that the above probability is bounded above for any choice of coupling, as:
\begin{equation}
    P\left[S^j_i = S^{j'}_i\right] \leq \sum_{v\in V} \min \left(P\left[R^j_i = v\right], P\left[R^{j'}_i = v\right]\right)
\end{equation}
This inequality is, in fact, saturated, i.e. for any pair of random variables $R^j_i$ and $R^{j'}_i$, there exists a coupling $\left\{S^j_i\right\}_{c^j, q_i}$ such that:
\begin{equation}\label{eq:minDICDcoupling}
    P\left[S^j_i = S^{j'}_i\right] = \sum_{v\in V} \min \left(P\left[R^j_i = v\right], P\left[R^{j'}_i = v\right]\right)
\end{equation}

A system $\left\{R^j_i\right\}_{c^j, q_i}$ is then said to be \emph{contextual} in the CbD framework iff there exits a coupling $\left\{S^i_j\right\}_{c^j,q_i}$ such that for any pair of random variables $R^j_i, R^{j'}_i$, \eqref{eq:minDICDcoupling} is satisfied. If a system is said to be \emph{consistently connected}, i.e. if:
\begin{equation}
    P\left[R^j_i =v\right] =  P\left[R^{j'}_i =v\right]
\end{equation}
for any pair of variables $R^j_i, R^{j'}_i$. In most widely studied scenarios\footnote{See Remark~\ref{rmk:signalling} for a discussion about the scenarios in which consistent connectedness and no-signalling are the same notion.}, this notion of contextuality collapses to the standard definition of contextuality in no-signalling systems~\cite{Dzhafarov2016contextcontent,Dzhafarov2015}.

\paragraph{}For a generic system, it is computationally hard to prove the existence or the non-existence of such a coupling, as it requires solving many linear inequalities. We will now focus on a specific type of context-content system, namely \emph{cyclic systems}, for which contextuality can be checked more easily. 

In a cyclic system, each context has exactly two contents, and every content is exactly in 2 contexts. The number of contents (or equivalently, the number of contexts) is the rank $n$ of the system. Moreover, again following normal practice in CbD, we will assume that all random variables take values in $\{\pm 1\}$\footnote{Every general system can be rewritten as a system with binary variables only~\cite{Dzhafarov2016contextcontent}; however, in the general case, by making such transformation on a system, it will cease to be cyclic, and the following inequality will no longer apply. There are, however, ways to study the contextuality of such a system~\cite{Dzhafarov2016contextcontent}.}. 

A cyclic system is known to be contextual in CbD iff~\cite{Kujala2016, Dzhafarov2016contextcontent}:
\begin{equation}\label{eq:BellInequality}
    s_{odd} \left(\left\{\left<R^{j}_{i_j}  R^{j}_{i'_j}\right>\right\}_{j=1,\ldots,n}\right) > n-2 + \Delta
\end{equation}
where $i_j\neq i'_j$ for all $j$ and when $R^{j}_{i_j}, R^{j}_{i'_j}$ are well-defined for all $j$. The $s_{odd}$ function and the quantity $\Delta$ are defined below. 
\begin{itemize}
    \item $s_{odd}: \mathbb{R}^n \to \mathbb{R}$ is defined as:
    \begin{equation}
        s_{odd}\left(\underline{x}\right) = \max_{\substack{\underline{\sigma}\in \{\pm1\}^n \\ \mathfrak{p}(\underline{\sigma})=-1}}\underline{\sigma}\cdot \underline{x}
    \end{equation}
    where both $\underline{\sigma}$ and $\underline{x}$ are $n$-dimensional (real) vectors and where $\mathfrak{p}(\underline{\sigma}) = \prod_{i=1}^n \sigma_i$ ($\mathfrak{p}$ can be seen as the parity function of $\underline{\sigma}$). In other words, $s_{odd}$ returns the maximal sum of all its arguments weighted with $\pm1$ coefficients under the condition that an odd number of negative coefficients are attributed.
    \item  $\Delta \in \mathbb{R}$ is defined as:
    \begin{equation}\label{eq:truecontext}
        \Delta = \sum_{i=1}^n \left|\left<R^{j_i}_{i}\right> - \left<R^{j'_i}_{i}\right>\right|
    \end{equation}
    where once again $j_i\neq j'_i$ $\forall i$ and $R^{j_i}_{i}, R^{j'_i}_{i}$ should be well-defined. The quantity $\Delta$ measures a system's ``degree of signalling'', and a system is consistently connected iff $\Delta=0$.
\end{itemize}

We note that equation~\eqref{eq:BellInequality} is a generalisation of the inequalities derived in~\cite{Araujo2013} for no-signalling cyclic systems (although they were both proven independently).

\subsubsection{Quantifying contextuality}
\paragraph{}Similarly to the contextual fraction $\mathsf{CF}$ defined in the previous subsection, we can define a quantification of the contextuality from the CbD framework. In fact, several measures have been proposed~\cite{Kujala2019}, including the non-contextual measure denoted as $\mathsf{NCNT2}$ for a given set of probability distributions $\left\{R^j_i\right\}$, defined as: 
\begin{equation}
    \mathsf{NCNT2} = \min \left(\Delta - s_{odd} \left(\left\{\left<R^{j}_{i_j}  R^{j}_{i'_j}\right>\right\}_{j=1,\ldots,n}\right), m\right)
\end{equation}
In the above equation, the quantity $m$ is defined as:
% \begin{align}
%     s(e)=& \max_{\lambda_{i} = \pm 1 \wedge \prod \lambda_i = -1} \sum_i \lambda_i \left( 1 - 2\lVert\left.e_{C_i}\right|_{C_i\cap C'_i}(1) - \left.e_{C'_i}\right|_{C_i\cap C'_i}(1)\rVert\right)\\
%     m(e)=& 2 \min_i 1 - \lVert \left.e_{C_i}\right|_{C_i\cap C'_i}(1) - \left.e_{C_i}\right|_{C'_i\cap C'_i}(1)\rVert - \lVert 1 - \left.e_{C_i}\right|_{C_i\cap C'_i}(1) - \left.e_{C'_i}\right|_{C_i\cap C'_i}(1)\rVert
% \end{align}
\begin{equation}
    m= \min_j \min \left(\left<R^{j}_{i_j}  R^{j}_{i'_j}\right> - 2 \left|p^j_1 + p^j_2 - 1\right| +1, 1 - \left|p^j_1 - p^j_2\right| - \left<R^{j}_{i_j}  R^{j}_{i'_j}\right>\right)
\end{equation}
where $p^j_1$ and $p^j_2$ are shorthands for respectively:
\begin{align}
    p^j_1 &= P\left[R^{j}_{i_j} = + 1\right]\\
    p^j_2 &= P\left[R^{j}_{i'_j} = + 1\right]\\
\end{align}

The advantage of this measure is that we can compare the contextuality of empirical models which are not contextual as the measure can be positive or negative. A negative $\mathsf{NCNT2}$ implies that the model is CbD-contextual, whereas a positive value implies non-contextuality. This measure will be used in Chapter~\ref{chap:lexicalFeatures}.

\subsubsection{Quatifying direct influences}
\paragraph{}As interesting as it is to have criteria for contextuality, we will see in the following Chapters that the amount of signalling in empirical models will be of interest for studying natural language data. We have introduced the signalling fraction from~\cite{Emeriau2022} in Section~\ref{subsec:sheaves}, as well as the ``degree of signalling'' $\Delta$ defined above, but only for cyclic systems.

\paragraph{}To obtain a more generic quantification of direct influence within the CbD framework, we introduce another related framework known as M-contextuality (model-contextuality), first introduced in~\cite{Jones2019}. This framework was inspired by the causal analysis of contextuality of Cavalcanti~\cite{Cavalcanti2018} and the (classical) theory of causality of Pearl~\cite{Pearl2011}. 

In~\cite{Jones2019}, the author showed that every system of random variables observed in the different contexts can be expressed as a Bayesian network in the form of Fig.~\ref{fig:baysianNet}. Here, we treat the contexts as a single random $C$, and the contents are each modelled by a random variable $F_q$ which is \emph{deterministically} determined by the context variable $C$ and some other latent variable $\Lambda$, which corresponds to background knowledge of the system. Such a Bayesian network is called a \emph{canonical model} in~\cite{Jones2019}. Note that it is important that the latent variable $\Lambda$ is independent of the context variable $C$ (otherwise, any part of the variable $\Lambda$ correlated with $C$ can be without loss of generality encompassed by $C$). 

Now, given a canonical model successfully describing a set of observed probabilities, we quantify the \emph{direct influence} of the context variable $C$ on a given content $q$ as:
    \begin{equation}\label{eq:M-directInfluence}
        \Delta_{c,c'}\left(F_q\right) = P\left[\Lambda \in \left\{\lambda| F_q(\lambda,c)\neq F_q(\lambda, c')\right\}\right]
    \end{equation}
    
In turn, a system is said to be \emph{M-contextual} if these direct influences cannot attain their respective minima in a single canonical model compatible with the empirical model. The main result of~\cite{Jones2019} was to show that this notion of contextuality is, in fact, equivalent to the CbD definition of contextuality.
    \begin{figure}[ht]
        \centering
            \includegraphics[width=.4\linewidth]{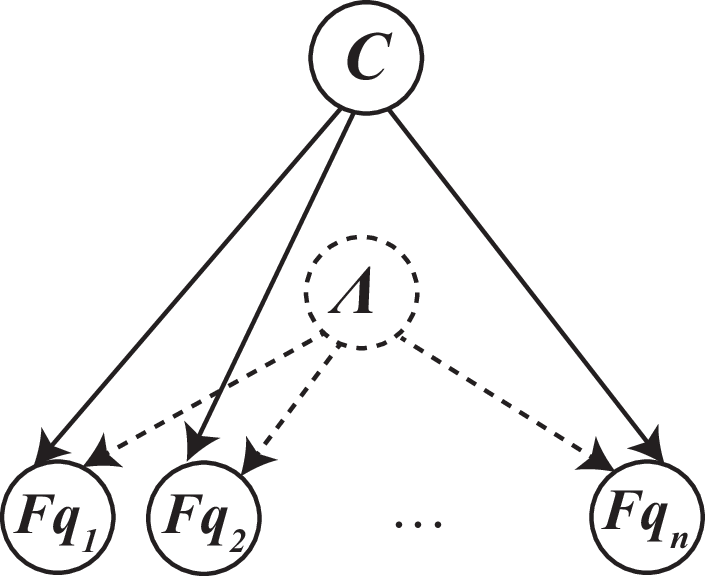}
        \caption{Bayesian Network representation of a canonical causal model.\label{fig:baysianNet}}
    \end{figure}    
% \paragraph{}Here, we will mainly focus on models with two content variables, corresponding to ambiguous words and the probability distribution of their different meanings in a given context. The contexts of the variables will include \emph{everything} relating the way that probabilities are collected, e.g. phrases considered, order of the words in the sentence, corpus in which the frequencies were observed etc. On the other hand, the latent variable $\Lambda$ will cover any background information about the contents, e.g. overall frequencies of each interpretation of individual words, typical modifiers of each words given activated interpretation, etc (note that these are indeed independent of the contexts considered, but can indeed have some influence on the observed frequencies). Therefore, contextuality in our framework means that there is always non-minimal contextual influence needed to explain the observed distributions.%What contextuality means    
    
\paragraph{M-contextuality and degree of signalling}We now have two ways of quantifying the ``direct influence'' of a system, namely using the ``degree of signalling'' $\Delta$ from CbD or by using the minimum amount of contextual direct influence $\Delta_{c,c'}(F_q)$ allowed for each content $q$. As it turns out, these quantities are intrinsically related, and the following is true:
    
\begin{prop}\label{prop:deltas}
    For a cyclic system with binary random variables taking values in $\{\pm1\}$, we have:
    \begin{equation}\label{eq:directInfluence}
        \Delta = 2 \sum_{q} \Delta^*_{c_q,c_q'} \left(F_q\right)
    \end{equation}
    where $\Delta^*_{c_q,c'_q}$ is the minimum direct influence of the contexts $c_q,c'_q$ associated with content $q$ across all canonical models compatible with the observed distributions.
\end{prop}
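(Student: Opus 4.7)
The plan is to reduce both sides of the identity to expressions in terms of the single-content marginal probabilities $p^j_q := P[R^j_q = +1]$ and show they agree. The key observation is that since each $R^j_q$ takes values in $\{\pm 1\}$, we have $\langle R^j_q\rangle = 2p^j_q - 1$, so each summand of $\Delta$ can be rewritten as
\begin{equation*}
    \bigl|\langle R^{j_i}_i\rangle - \langle R^{j'_i}_i\rangle\bigr| = 2\bigl|p^{j_i}_i - p^{j'_i}_i\bigr|.
\end{equation*}
The goal then reduces to showing that $\Delta^*_{c_q,c'_q}(F_q) = \bigl|p^{j_q}_q - p^{j'_q}_q\bigr|$ for each content $q$, since summing over $q$ and multiplying by $2$ gives precisely $\Delta$.

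First, I would unpack the definition of direct influence \eqref{eq:M-directInfluence} in a canonical model. Given a canonical model $(C,\Lambda,\{F_q\})$ compatible with the observed distributions, the pair $(F_q(\Lambda,c_q), F_q(\Lambda,c'_q))$, with $\Lambda$ drawn from its marginal distribution, defines a joint law on $\{\pm 1\}^2$ whose marginals coincide with the observed laws of $R^{j_q}_q$ and $R^{j'_q}_q$. Thus $\Delta_{c_q,c'_q}(F_q) = P[F_q(\Lambda,c_q)\neq F_q(\Lambda,c'_q)]$ is precisely $P[X\neq Y]$ for some coupling $(X,Y)$ of the two binary marginals. Conversely, any coupling of these marginals can be realised inside some canonical model, for instance by enlarging $\Lambda$ to include a component $(X_q,Y_q)$ sampled from the chosen coupling and setting $F_q(\lambda,c_q) = X_q$, $F_q(\lambda,c'_q) = Y_q$, independently for each $q$. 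Hence minimising the direct influence over canonical models is the same as minimising $P[X\neq Y]$ over couplings of two Bernoulli-type $\{\pm 1\}$ marginals.

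Second, I would invoke the classical fact that for binary random variables with $P[X=+1]=p$ and $P[Y=+1]=q$, the minimum of $P[X\neq Y]$ over all couplings equals the total variation distance $|p-q|$, attained by the maximal coupling (place mass $\min(p,q)$ on $(+1,+1)$, mass $\min(1-p,1-q)$ on $(-1,-1)$, and distribute the remaining $|p-q|$ on the off-diagonal). Applied to each content $q$, this gives
\begin{equation*}
    \Delta^*_{c_q,c'_q}(F_q) = \bigl|p^{j_q}_q - p^{j'_q}_q\bigr|.
\end{equation*}

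Finally, combining with the reformulation of $\Delta$ above:
\begin{equation*}
    \Delta = \sum_q \bigl|\langle R^{j_q}_q\rangle - \langle R^{j'_q}_q\rangle\bigr| = \sum_q 2\bigl|p^{j_q}_q - p^{j'_q}_q\bigr| = 2\sum_q \Delta^*_{c_q,c'_q}(F_q).
\end{equation*}
The main subtlety, and the only point where care is needed, is justifying that the infimum of $\Delta_{c_q,c'_q}(F_q)$ over canonical models really coincides with the coupling-theoretic infimum, rather than being constrained by the fact that a single $\Lambda$ must simultaneously support all $F_q$. This is resolved by the freedom to enlarge $\Lambda$ with independent components for each content: since the right-hand side of the identity is a sum of \emph{independently} minimised quantities, each minimum can be attained in a canonical model, and no global compatibility obstruction arises. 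The cyclic structure of the system is not used beyond ensuring that every content $q$ appears in exactly two contexts $c_q,c'_q$, which is what makes the per-content sum well-defined.
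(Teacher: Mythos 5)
Your overall route is the same as the paper's: rewrite each term of $\Delta$ as $2\bigl|p^{j_q}_q - p^{j'_q}_q\bigr|$ using $\langle R\rangle = 2P[R=+1]-1$, identify the minimal direct influence with the minimal discordance probability over couplings of the two binary marginals, and evaluate the latter as the total variation distance via the maximal coupling. The paper reaches the same quantity in the form $1-(m_{q+}+m_{q-})$ with $m_{q\pm}$ the minima of the $\pm1$ probabilities across the two contexts, which equals $\bigl|p^{j_q}_q-p^{j'_q}_q\bigr|$, so the computational core of your argument matches.

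There is, however, a genuine gap at exactly the point you flag as ``the main subtlety.'' A canonical model compatible with the observed distributions must reproduce the \emph{joint} distribution of the two contents within each context, not merely the single-content marginals. Your proposed construction --- enlarging $\Lambda$ with a component $(X_q,Y_q)$ drawn from the chosen coupling, independently for each content $q$, and letting $F_q$ read off that component --- makes the responses to the two contents of any given context independent, so it fails to reproduce the observed within-context correlations except in degenerate cases. Hence the model you build is generally \emph{not} compatible with the empirical system, and the claim that ``no global compatibility obstruction arises'' is unjustified as stated. (Note also that the obstruction you should worry about is not simultaneity across contents --- each $\Delta^*_{c_q,c'_q}(F_q)$ is minimised separately, so simultaneity is irrelevant --- but the within-context joints for the single content under consideration.) The paper closes this gap by citation: for any one designated content $q$ there exists a coupling of the \emph{entire} system (hence respecting all within-context joints) that attains the maximal coincidence probability $\sum_o \min\bigl(P[R^{c_q}_q=o],P[R^{c'_q}_q=o]\bigr)$ for that content (Theorem 3.3 of the CbD reference), and the correspondence between couplings and canonical models (Proposition 8.4 of Jones) then transports this to a compatible canonical model achieving $\Delta_{c_q,c'_q}(F_q)=\bigl|p^{j_q}_q-p^{j'_q}_q\bigr|$. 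With that substitution for your construction, the rest of your argument goes through.
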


The proof of this proposition can be found in Appendix~\ref{app:deltas}.
    
We note that $\Delta$ is only defined for a small class of systems, while the RHS of \eqref{eq:directInfluence} applies to more general systems.

\paragraph{Direct influences and the signalling fraction}The notion of direct influence stems from similar motivations as the signalling fraction $\mathsf{SF}$ defined in Section~\ref{subsec:sheaves}, namely, what is the minimal ``part'' of the observed statistics which can be explained by a no-signalling system. We here show that the signalling fraction gives an upper bound of the degree of signalling $\Delta$ in some particular circumstances (including in cyclic systems) and that in the general case, the signalling fraction gives us an upper bound for all of the degree of direct influences in a system\footnote{The results represented here are original at the time of submission of the thesis.}. First, we start with an important remark.
\begin{rmk}[No-signalling and consistent-connectedness]\label{rmk:signalling}
    Somewhat surprisingly, although the notion of consistent-connectedness, as defined in~\cite{Dzhafarov2016contextcontent}, is claimed to be equivalent to no-signalling, this is not generally the case. Indeed, a system is said to be consistently-connected iff, for any content $X$ in contexts $C, C'$, the marginals over $X$ of the probability distributions associated with $C$ and $C'$ are the same. However, in general, the notion of no-signalling is stated as, for any subset $\left\{X_i\right\}_{i\in I}$ of contents such that $X_i\in C$ and $X_i\in C'$ for all $i\in I$, then the marginal distribution restricted to all of the $X_i$ coincides. Hence, this distinction only applies if there exist contexts $C, C'$ such that $\left|C\cap C'\right|>1$.

    Here is an example of an empirical model which is consistently-connected but signalling.
    \begin{center}
        \begin{tabular}{cc|cccc}
             & & (0,0) & (0,1) & (1,0) & (1,1) \\\hline
            $a$ & $b$ & 1/2 & 0 & 0 & 1/2\\ 
            $a$ & $b$ & 0 & 1/2 & 1/2 & 0\\
        \end{tabular}    
        \end{center}
\end{rmk}

Having cleared up this distinction, we then state the following results.
\begin{prop}[Signalling fraction and degrees of direct influences]\label{prop:SFDelta}
    Given statistics of a system for the contexts $\left\{C_i\subseteq X\right\}_{i\in I}$ for individual measurements (i.e. contents) $X$, we have:
    \begin{enumerate}[label=\alph*.]
        \item For any system, we have:
        \begin{equation}
            \max_{x\in X} \Delta^*_{C,C'}(x) \leq  \mathsf{SF}
        \end{equation}
        \item If the choices of contexts satisfies $\left|C_i\cap C_j\right|\leq 1$ for all $i,j\in I$ and $i\neq j$, then:
        \begin{equation}
            \max_{x\in X} \Delta^*_{C,C'}(x) =  \mathsf{SF}
        \end{equation}
    \end{enumerate}
\end{prop}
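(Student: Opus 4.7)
The plan is to first identify $\Delta^*_{C,C'}(x)$ with a total variation distance. Given any probability coupling $\pi$ of the marginals $P_e[x|C]$ and $P_e[x|C']$, I can realise a compatible canonical model attaining $\Delta_{C,C'}(x) = \pi[F_x(\cdot,C) \neq F_x(\cdot,C')]$: sample $(u,u')\sim\pi$ to set $F_x(\Lambda,C) := u$ and $F_x(\Lambda,C') := u'$, then draw the remaining coordinates of each context from the conditionals $P_e[\cdot \mid x=u, C]$ and $P_e[\cdot \mid x=u', C']$ using an auxiliary latent variable. Minimising over couplings yields $\Delta^*_{C,C'}(x) = \mathrm{TV}(P_e[x|C], P_e[x|C'])$, so both parts of the statement reduce to bounds on pairwise total variation distances.

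For part (a), I would fix an optimal decomposition $e = \mathsf{NSF}\cdot e_{NS} + \mathsf{SF}\cdot e'$. Since $e_{NS}$ is no-signalling, $\mathrm{TV}(P_{e_{NS}}[x|C], P_{e_{NS}}[x|C']) = 0$; convexity of total variation then gives
\[
\mathrm{TV}(P_e[x|C], P_e[x|C']) \leq \mathsf{NSF}\cdot 0 + \mathsf{SF}\cdot \mathrm{TV}(P_{e'}[x|C], P_{e'}[x|C']) \leq \mathsf{SF},
\]
and taking the maximum over $x, C, C'$ produces the claimed inequality.

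For part (b), (a) already supplies $\max\Delta^*\leq \mathsf{SF}$, so I would exhibit a decomposition with weight $\lambda := 1 - \max_{x, C, C'} \mathrm{TV}(P_e[x|C], P_e[x|C'])$ to obtain the reverse direction. The construction is to pick, for each content $x$, a common marginal $m_x$ pointwise majorised by $P_e[x|C_i]/\lambda$ in every context $C_i$ containing $x$, and for each context $C$ to set $e_{NS}[\underline{o}|C] := m_x(v)\cdot P_e[\underline{o}_{\setminus x}\mid x=v, C]$ for a chosen anchor content $x \in C$. The hypothesis $|C_i\cap C_j|\leq 1$ enters decisively here: because no two contexts share more than one content, no-signalling of $e_{NS}$ collapses to equality of single-content marginals, so the per-context anchor construction automatically respects no-signalling with no higher-order compatibility obligation, and pointwise domination $\lambda e_{NS}[\underline{o}|C]\leq e[\underline{o}|C]$ follows from $\lambda m_x(v) \leq P_e[x=v|C]$. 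The main obstacle will be choosing $\lambda$ and $m_x$ consistently, i.e.\ ensuring $\lambda\leq \sum_v \min_i P_e[x=v|C_i]$ for every content $x$; this quantity automatically matches $1-\max_{i,j}\mathrm{TV}(P_e[x|C_i],P_e[x|C_j])$ for binary-valued contents, and more generally whenever each content appears in at most two contexts (as in cyclic systems), where the proof goes through directly.
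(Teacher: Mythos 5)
Your reduction of $\Delta^*_{C,C'}(x)$ to the total variation distance $1-\sum_v\min\left(P_e[x=v\,|\,C],P_e[x=v\,|\,C']\right)$ is exactly the starting point of the paper's argument (it is the content of the CbD/M-contextuality facts the appendix quotes), and your part (b) construction --- a common per-content marginal $m_x$ pointwise dominated by $P_e[x\,|\,C_i]/\lambda$ in every context containing $x$, extended to each full context by anchoring on one content and multiplying by the conditional distribution of the remaining outcomes --- is the same construction the paper uses, including the observation that $\left|C_i\cap C_j\right|\leq 1$ is precisely what makes the anchored extension no-signalling with no higher-order compatibility obligation. For part (a) you take a genuinely different and cleaner route: the paper argues by contradiction, expanding the empirical marginals over a hidden-variable model, splitting each $h^\lambda$ into no-signalling and signalling parts and deriving the absurdity $\sum_\lambda p(\lambda)c^\lambda_{NS}<\min_\lambda c^\lambda_{NS}$, whereas your one-line argument via joint convexity of total variation applied to an optimal decomposition $e=\mathsf{NSF}\cdot e_{NS}+\mathsf{SF}\cdot e'$ gives the same bound with far less bookkeeping.

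The obstacle you flag in part (b) is genuine, and it is present (unaddressed) in the paper's own proof as well: if a content $x$ occurs in three or more contexts and has more than two outcomes, the quantity $\sum_v\min_i P_e[x=v\,|\,C_i]$, which always bounds $\mathsf{NSF}$ from above, can be strictly smaller than $1-\max_{i,j}\mathrm{TV}\left(P_e[x\,|\,C_i],P_e[x\,|\,C_j]\right)$, so the claimed equality can fail. The paper's parenthetical note for this case silently relies on the identity $\Pr\left[\text{all equal}\right]=\min_{i,j}\Pr\left[U_i=U_j\right]$ for couplings, which holds for binary outcomes or when each content lies in at most two contexts but not in general. Your hedged statement is therefore the more accurate account of what is actually proved, and it covers every system (in particular the cyclic ones) to which the proposition is applied in the thesis.
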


The proof of these claims can be found in Appendix~\ref{app:SFDelta}. Moreover, since we already had a relationship between the degrees of direct influence $\Delta^*_{C, C'}(x)$ and the overall degree of signalling $\Delta$ (Proposition~\ref{prop:deltas}), the next results follows.
\begin{cor}
    In a cyclic system of rank $n$, we have:
    \begin{equation}
        \Delta \leq 2~\mathsf{SF}
    \end{equation}
\end{cor}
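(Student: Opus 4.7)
The plan is to derive the inequality by directly chaining the two preceding propositions. First I would verify that any cyclic system of rank $n$ satisfies the hypothesis of Proposition~\ref{prop:SFDelta}(b): by definition each context contains exactly two contents and each content lies in exactly two contexts, so any two distinct contexts intersect in at most one content. Hence $|C_i\cap C_j|\leq 1$ for $i\neq j$, and Proposition~\ref{prop:SFDelta}(b) applies, giving
$$\max_{x\in X}\Delta^*_{C,C'}(x) \;=\; \mathsf{SF}.$$
In particular every individual direct influence is controlled uniformly: $\Delta^*_{c_q,c'_q}(F_q) \leq \mathsf{SF}$ for each content $q$.

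Next I would invoke Proposition~\ref{prop:deltas}, which states that for a cyclic system with binary $\{\pm 1\}$ random variables the degree of signalling decomposes as
$$\Delta \;=\; 2\sum_{q} \Delta^*_{c_q,c'_q}(F_q),$$
a sum of $n$ nonnegative terms, one per content. Substituting the uniform bound from the previous step yields an estimate of $\Delta$ directly in terms of $\mathsf{SF}$, with the factor in front depending on how tightly the sum can be controlled.

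The main obstacle is exactly this factor. A naive term-by-term estimate only gives $\Delta \leq 2n\,\mathsf{SF}$, whereas the corollary asserts the sharper $\Delta \leq 2\,\mathsf{SF}$. To close this gap I would revisit the constructive proof of Proposition~\ref{prop:SFDelta}(b) in Appendix~\ref{app:SFDelta} and observe that a \emph{single} optimal decomposition $e = \mathsf{NSF}\cdot e_{NS} + \mathsf{SF}\cdot e'$ witnesses every direct influence simultaneously, since on the no-signalling component $e_{NS}$ the marginal of each $F_q$ is independent of the context (using that consistent-connectedness coincides with no-signalling when $|C_i\cap C_j|\leq 1$, as discussed in Remark~\ref{rmk:signalling}). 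The hard part is arguing that this shared decomposition forces the sum $\sum_q \Delta^*_{c_q,c'_q}(F_q)$ to inherit a single $\mathsf{SF}$ rather than to accumulate one copy per content; i.e.\ that the minima are all realised on the same $(1-\mathsf{NSF})$-sized portion of $e$ rather than on disjoint portions. Once this simultaneous realisation is established, the sum collapses and the bound $\Delta \leq 2\,\mathsf{SF}$ follows immediately from Proposition~\ref{prop:deltas}.
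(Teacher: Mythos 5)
Your final step does not work, and in fact cannot work: the sum $\sum_q\Delta^*_{c_q,c'_q}(F_q)$ genuinely cannot be collapsed to a single $\mathsf{SF}$. Even with one shared optimal decomposition $e=\mathsf{NSF}\cdot e_{NS}+\mathsf{SF}\cdot e'$, all one obtains is the per-content bound $\Delta^*_{c_q,c'_q}(F_q)\leq\mathsf{SF}$ for each $q$ separately, and nothing prevents every content from saturating it simultaneously. Concretely, take the rank-$3$ cyclic system with contexts $c^1=\{q_1,q_2\}$, $c^2=\{q_2,q_3\}$, $c^3=\{q_3,q_1\}$ and deterministic outcomes $(+1,+1)$ in $c^1$, $(-1,+1)$ in $c^2$ and $(-1,-1)$ in $c^3$: each content flips from a point mass on $+1$ to a point mass on $-1$ between its two contexts, so $\Delta^*_{c_q,c'_q}(F_q)=1$ for every $q$ and $\mathsf{SF}=1$, while Proposition~\ref{prop:deltas} gives $\Delta=6>2=2\,\mathsf{SF}$. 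So the factor-of-$n$ loss you flagged is real, and no refinement of the shared-decomposition idea can remove it; the inequality in the direction you are trying to prove is simply not a consequence of the two propositions.

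What the two propositions do give immediately is the \emph{reverse} inequality: since the terms $\Delta^*_{c_q,c'_q}(F_q)$ are nonnegative, Proposition~\ref{prop:deltas} yields $\Delta=2\sum_q\Delta^*_{c_q,c'_q}(F_q)\geq 2\max_q\Delta^*_{c_q,c'_q}(F_q)=2\,\mathsf{SF}$ by Proposition~\ref{prop:SFDelta}(b), and this one-line chaining is all the paper itself offers as justification, so the bound is best read with the inequality reversed. A further caveat on your verification of the hypothesis of part~(b): in a cyclic system of rank $2$ the two contexts contain the \emph{same} pair of contents, so $\left|C_1\cap C_2\right|=2$ and part~(b) does not apply (cf.\ Remark~\ref{rmk:signalling}); the argument only goes through for rank $n\geq 3$, where distinct contexts overlap in at most one content.
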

% \blue{Philosophical remarks about CbD and generalised sheaf-theoretic framework?}

\subsubsection{From Contextuality-by-Default to sheaf-theoretic contextuality}
\paragraph{}In~\cite{Szhafarov2023sheaf}, the author proposed a way of describing signalling empirical models in terms of no-signalling ones within the sheaf-theoretic framework, such that the notion of contextuality in these generated empirical models is equivalent to the notion of contextuality within the Contextality-by-Default framework. This mechanism for creating no-signalling models was coined as \emph{consistentification}. We here briefly describe this procedure.
    
Recall that in CbD, a cyclic system is contextual is non-contextual whenever it is possible to impose a global probability distribution on the system such that the probabilities $P\left[S^i_q = S^{i'}_q\right]$ are simultaneously maximised. This condition can expressed as the possibility of imposing a joint probability distribution on pairs of variables of different contexts that share a content, such that$P\left[S^i_q = v, S^{i'}_q = v\right]$ are minimal for every outcome $v$, and for which marginals coincides with the marginals of the observed variables. 

Hence, the process of consistentification consists of creating a new system for which both the contexts and contents of the original system are measurement contexts. The set of observable $X$ is therefore defined as $X = \left\{\left(q_i,c^j\right)\middle|q_i\in c^j\right\}$, and CbD-contexts and CbD-content correspond to the following set of measurement contexts:
\begin{align}
    \mathcal{M}_c =& \left\{\left\{\left(q_i,c^j\right)\in X\right\}\middle|c^j \text{ is a CbD-context}\right\}\\
    \mathcal{M}_q =& \left\{\left\{\left(q_i,c^j\right)\in X\right\}\middle|q_i \text{ is a CbD-content}\right\}
\end{align}
The probability distributions over the measurement contexts of $\mathcal{M}_c$ are defined as before, i.e. correspond to observed probability distributions. On the other hand, the probability distributions over the measurement contexts of $\mathcal{M}_q$ will be obtained by imposing minimal direct influences on each of the individual contents. This correspondance is illustrated in Fig~\ref{fig:consistentification}. By definition of the $S^i_q$ from above, this system is no-signalling, i.e., consistently connected. 

\begin{figure}[htb]
    \centering
    \begin{tikzpicture}
        \node (old) at (-4, 0) {\includegraphics[scale=.5]{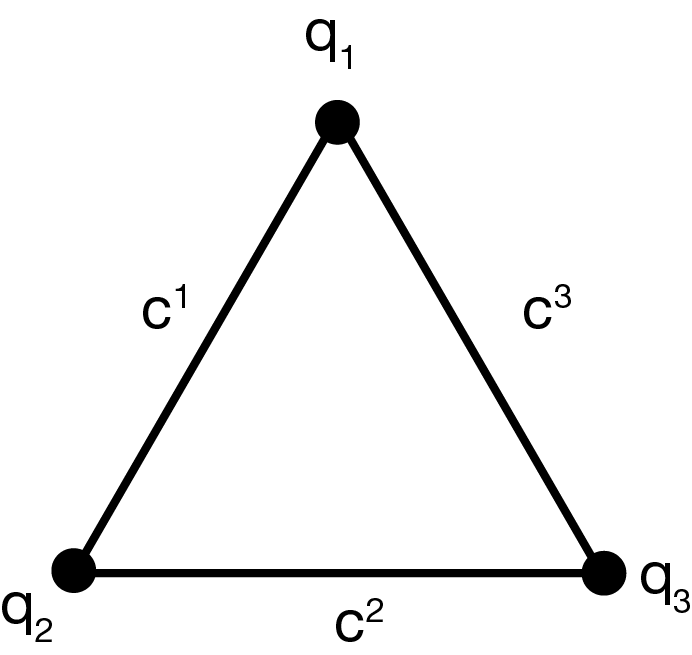}};
        \node (new) at (4, 0) {\includegraphics[scale=.5]{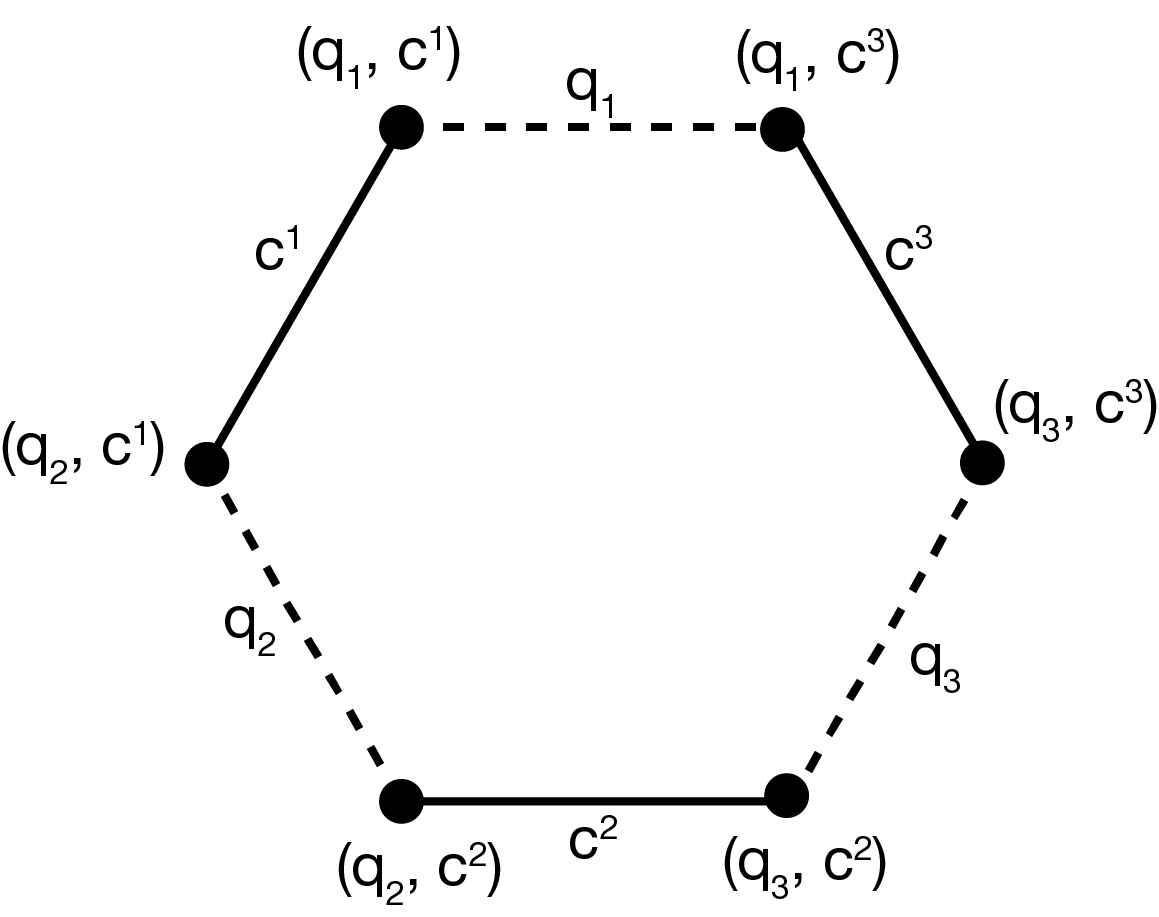}};

        \draw[->] (old) edge node[above]{\textit{consistentification}} (new);
    \end{tikzpicture}
    \caption{Correspondance between the original measurement scenario (left), and the consistentified one (right). On the latter, the solid measurement contexts are the ones inherited from the left-hand measurement scenario, whilst the dashed ones are the ones created from the minimal direct influence condition.\label{fig:consistentification}}
\end{figure}

Moreover, the criterion of CbD-contextuality in the original system is, by design, the same as the sheaf-theoretic criterion of contextuality in the generated one.
    
% What does this means in terms of canonical models and direct influences on the system? To answer this question, we split the context variable of this new system (after consistentification) into two contexts variables $\Tilde{C}$ and $Q$ where $\Tilde{C}$ covers the ``old contexts'' contexts, and $Q$ consists on the ``old content'' contexts. We note that, firstly, these two variables are independent, and that the variable $Q$ in fact represents the direct influences of the contexts on individual probability distributions on contents. Now, we can regroup the (new) content variables by (old) contents, and direct influences $Q$ with the background knowledge $\Lambda$ into a random variable $\Lambda'$. Then, deciding of the contextuality or non-contextuality of a system reduces to the problem seen in the standard formalism of contextuality, namely, whether the influences from $\Tilde{C}$ can be eliminated. We however note that the variable $\Lambda'$ is not per sei context-independent (as the variable $Q$ is an apparent part of the context in the consistentified system); however, the only dependence of the context on the variable $\Lambda'$ is only via direct influences.
\section{Quantum Mechanics as a Process Theory}\label{sec:QProcess}
\paragraph{}The goal of this section is to motivate and introduce the notation of Chapter~\ref{chap:lexicalCircuits}. Unlike the previous section, we will now assume the standard Hilbert space formalism of quantum mechanics and provide a categorical description of quantum states and operations. 

% In terms of notation, we will also use the standard \emph{Dirac notation} where vectors in a Hilbert space will be denoted as $\ket{\psi}\in \mathcal{H}$, their Hermitian conjugate will be denoted as $\bra{\psi}$, and the inner product of two vectors $\ket{\psi}, \ket{\phi} \in \mathcal{H}$ will be denoted as $\left<\phi\middle|\psi\right>$.

\subsection{Features of quantum processes}
\paragraph{} In Section~\ref{subsec:monoidal}, we have seen that monoidal categories are very useful in describing processes that can be composed sequentially (modelled using the sequential composition of morphisms) and in parallel (using the monoidal product $\otimes$). In particular, in the case of quantum processes, we will mainly focus on the category of Hilbert spaces $\mathbf{Hilb}$. Moreover, in the case of quantum computing, we can restrict ourselves to the category of finite-dimensional Hilbert spaces $\mathbf{FdHilb}$, which also has some additional ``nice'' properties, such as the existence of orthonormal bases for all objects of $\mathbf{FdHilb}$.

\paragraph{}We start by describing the intepretation of objects of $\mathbf{FdHilb}$ in terms of quantum mechanics. A Hilbert space $\mathcal{H}\in ob\left(\mathbf{FdHilb}\right)$ will correspond to a quantum system, and the dimension of $\mathcal{H}$ will correspond to the dimension of the quantum system in question. For example, a qubit will live in the 2-dimensional Hilbert space $\mathbb{C}^2$. 

A (pure) quantum state living in the Hilbert space $\mathcal{H}$ will then be represented as a vector $\ket{\psi}: I\to \mathcal{H}$ (where we recall that the monoidal unit for Hilbert spaces is $I = \mathbb{C}$), while their duals will be represented as $\bra{\psi}: \mathcal{H}\to I$. The latter corresponds to the outcome of some measurement. 

These states are not necessarily normalised, as they should in standard quantum mechanics. Therefore, the set of \emph{physical states} will consists of states $\ket{\psi}: I\to \mathcal{H}$ such that:
\begin{equation}\label{eq:QuantumNorm}
    \left|\tikzfig{Background/tikzit/innerprodPsi}\right|^2 = 1
\end{equation}

\paragraph{}Intuitively, we would then want morphisms in $\mathbf{FdHilb}$ to represent quantum operations. This is not quite the case as, physically, the valid operations of (pure) quantum states will only be unitary operations, whereas the morphisms of $\mathbf{FdHilb}$ are in general bounded linear maps. The unitarity condition ensures that the operations preserve the normalisation of states \eqref{eq:QuantumNorm}. Hence, the set of \emph{physical operations} on a system $\mathcal{H}\in ob\left(\mathbf{FdHilb}\right)$ will consists on the morphisms $U: \mathcal{H}\to \mathcal{H}$ such that:
\begin{equation}
    \tikzfig{Background/tikzit/unitarityA} = \tikzfig{Background/tikzit/unitarityB} = \tikzfig{Background/tikzit/idH} 
\end{equation}
\begin{ex}\label{ex:stdGates}
    \begin{enumerate}[label=\alph*]
        \item The following are standard examples of one-qubit gates:
        \begin{gather*}
            \tikzfig{Background/tikzit/X} = \begin{pmatrix}
                0 & 1 \\ 1 & 0
            \end{pmatrix} \qquad \tikzfig{Background/tikzit/Y} = \begin{pmatrix}
                0 & -i\\i& 0
            \end{pmatrix}\\
            \tikzfig{Background/tikzit/Z} = \begin{pmatrix}
                1 & 0 \\ 0 & -1
            \end{pmatrix} \qquad \tikzfig{Background/tikzit/Y} = \frac{1}{\sqrt{2}}\begin{pmatrix}
                1 & 1\\1& -1
            \end{pmatrix}
        \end{gather*}
        \item For any unitary $U$ over $\mathcal{H}$, we can also define the controlled unitary $cU$ acting on $\mathbb{C}^2\otimes \mathcal{H}$ as follows:
        \begin{equation}
            \tikzfig{Background/tikzit/cU} = \begin{pmatrix}
                1 & 0 & \underline{0}\\
                0 & 1 & \underline{0}\\
                \underline{0} & \underline{0} & \mathbf{U}
            \end{pmatrix}
        \end{equation}
        The intuition is that if the control qubit is in the state $\ket{0}$, the identity on $\mathcal{H}$ is applied, whereas if the control qubit is in the state $\ket{1}$, the unitary $U$ is applied to the target space $\mathcal{H}$.
    \end{enumerate}
\end{ex}
Using the monoidal structure, we can also compose morphisms in parallel using the monoidal product $\otimes$. If two operations $U$ and $V$ are composed in parallel as $U\otimes V$, then it is understood that they are done independently. Now, we have already seen some special (non-unitary) morphisms $\ket{\psi}: I \to \mathcal{H}$ which represent quantum states. Two states $\ket{\psi}: I\to \mathcal{H}_1$ and $\ket{\phi}: I\to \mathcal{H}_2$ can also be composed in parallel as $\ket{\psi}\otimes \ket{\phi}: I\otimes I \to \mathcal{H}_1\otimes \mathcal{H}_2$. As with processes, these states are considered independent and known as \emph{product states}. However, given the compound system $\mathcal{H} = \mathcal{H}_1\otimes \mathcal{H}_2$, not all of the states in $\mathcal{H}$ will be product states. For instance, taking $\mathcal{H}_1 = \mathcal{H}_2 = \mathbb{C}^2$, the Bell state $\ket{\Psi} = \frac{1}{\sqrt{2}}\left(\ket{0}\otimes \ket{0} + \ket{1}\otimes \ket{1}\right)$ cannot be decomposed as $\ket{\Psi} = \ket{\psi}\otimes\ket{\phi}$. States that cannot be decomposed as a product state are called \emph{entangled states}.

\paragraph{}In addition, we have seen in Section~\ref{subsec:monoidal} that the category $\mathbf{FdHilb}$ also satisfies some extra properties. For instance $\mathbf{FdHilb}$ is a symmetric monoidal cateogry, i.e. for any two $\mathcal{H}_A, \mathcal{H}_B\in ob\left(\mathbf{FdHilb}\right)$, we have $\mathcal{H}_A\otimes \mathcal{H}_B \cong \mathcal{H}_B \otimes \mathcal{H}_A$. The interpretation of this property in terms of quantum systems is quite natural, namely that swapping quantum systems leads to an ``essentially equivalent'' system in the sense that both swapped and unswapped compound systems share the same physical properties.

Furthermore, the category $\mathbf{FdHilb}$ is compact-closed, and the duality structure proved very useful. In particular, it is widely known in the quantum mechanics literature that the set of operations $U: \mathcal{H}_A\to \mathcal{H}_B$ is isomorphic to a set of states $\ket{\Psi}: I \to \mathcal{H}_A \otimes \mathcal{H}_B$; this correspondence is known as the \emph{Choi-Jamio\l{}kowsky isomorphism}\footnote{Here, the isomorphism is understood at the level of sets, i.e. there exists a bijection between the set of quantum states, and the set of quantum operations.}. This equivalence can easily be seen in terms of string diagrams, using the units of the duality as:
\begin{equation}
    \tikzfig{Background/tikzit/CJLHS} \quad \xrightarrow{\simeq} \quad \tikzfig{Background/tikzit/CJRHS}
\end{equation}
If $U$ is a unitary map, the obtained state under this isomorphism is a \emph{maximally entangled state}, as they can maximally violate the Bell inequalities introduced in Section~\ref{sec:QDescription}.

\begin{ex}[Bell states]
    In two-qubit systems, certain maximally entangled states are important in quantum protocols such as \emph{quantum teleportation}. These are known as \emph{Bell states} and are defined as:
    \begin{equation*}
        \begin{matrix}
            \ket{\Phi_+}=& \tikzfig{Background/tikzit/PhiP}&\qquad&\ket{\Phi_-}=& \tikzfig{Background/tikzit/PhiM}\\
        \ket{\Psi_+}=& \tikzfig{Background/tikzit/PsiP}&\qquad&\ket{\Psi_-}=& \tikzfig{Background/tikzit/PsiM}
        \end{matrix}
    \end{equation*}
    (up to normalisation and global phase factors).
\end{ex}

\subsection{Mixed states and density matrices}
\paragraph{}So far, we have only considered pure quantum states subject to unitary transformation. In realistic systems, however, the quantum states will sometimes interact with their environment in a manner that is not always known or controlled. To deal with this situation, we use \emph{density matrices} and \emph{quantum channels} instead of pure states and unitaries. These are obtained by ``forgetting'' about the subsystem corresponding to the environment. This construction leads to quantum states which are a probabilistic mixture of pure states. We will here describe the categorical way of defining these states and operations.

\paragraph{}Before looking at density matrices, we introduce the categorical notion of \emph{trace} of a morphism.
\begin{defs}
    In a compact closed category $\mathcal{C}$, the \emph{trace} of a morphism $f: A\to A$, with $A\in ob\left(\mathcal{C}\right)$, is given by:
    \begin{equation}
        Tr(f) = \tikzfig{Background/tikzit/Trf}
    \end{equation}
\end{defs}
Using the unit and counit definitions from Section~\ref{subsec:monoidal} for vector spaces, it can easily be verified that the trace corresponds to the standard trace of square matrices. Given a morphism on a compound space $f: A\otimes B \to A\otimes B$, we can also take the \emph{partial trace} of $f$ w.r.t to a subsystem, i.e. $A$ or $B$. This is defined categorically as:
\begin{align}
    Tr_A(f) =& \tikzfig{Background/tikzit/TrAf}\\
    Tr_B(f) =& \tikzfig{Background/tikzit/TrBf}
\end{align}
The intuition is that taking the partial trace of a subsystem corresponds to averaging out the behaviour of the traced-out subsystem.

\paragraph{}Now, for a pure state $\ket{\psi}: I \to \mathcal{H}$, we define its associated density matrix as $\rho = \ket{\psi}\bra{\psi}$. In string diagrammatic notation, this gives us:
\begin{equation}
    \tikzfig{Background/tikzit/rhoLHS} \quad\xrightarrow{\simeq}\quad \tikzfig{Background/tikzit/rhoRHS}
\end{equation} 
(where the isomorphism is the Choi-Jamio\l{}kowsky isomorphism). Then, if a state in $\mathcal{H}$ is entangled (i.e. correlated) to an environment modelled as a system $\mathcal{H}_E$, then the global quantum state is represented by a (possible entangled) state $\ket{\Psi}: I \to \mathcal{H}_E\otimes \mathcal{H}$. If the details of the interaction with the environment are not known, then the state will be represented as a density matrix where the environment subsystem is traced out, i.e.:
\begin{equation}
    \rho = \tikzfig{Background/tikzit/densityMatrix} \quad\xrightarrow{\simeq}\quad \tikzfig{Background/tikzit/densityMatrixIso}
\end{equation}
Any state of this form will be known as a \emph{density matrix}. In particular, the analogue of the normalisation condition of \eqref{eq:QuantumNorm} becomes:
\begin{equation}\label{eq:DensityNorm}
    Tr(\rho) = \tikzfig{Background/tikzit/trRho} = 1
\end{equation}
As for operations on pure states, we will also restrict permissible operations on density matrices. In particular, we would want these operations to send density matrices to density matrices. Moreover, if an operation is only applied to a subsystem of a larger quantum state, we would also like the resulting state to remain a density matrix. The operators satisfying these conditions are known as \emph{completely positive operators}. From a well-known theorem known as \emph{Stinespring dilation theorem}~\cite{Stinespring1955}, every completely positive map $V: \mathcal{H}_A\otimes \mathcal{H}_A \to \mathcal{H}_B\otimes \mathcal{H}_B$ can be decomposed as:
\begin{equation}
    \tikzfig{Background/tikzit/Stinespring}
\end{equation}
where $U$ is a unitary transformation, and $\mathcal{K}$ is a Hilbert space. In addition, in order to preserve the normalisation condition of \eqref{eq:DensityNorm}, we will say that a \emph{quantum channel} is a complete positive map which is \emph{trace preserving}.

\paragraph{}The above construction of density matrices and completely positive operators can be generalised to any dagger compact closed categories, using the so-called $\mathbf{CPM}$ construction~\cite{Selinger2007}. This construction works as follows. Given a compact closed category $\mathcal{C}$, we create a new (also dagger compact-closed) category $\mathbf{CPM}(\mathcal{C})$ such that:
\begin{itemize}
    \item Objects of $\mathbf{CPM}(\mathcal{C})$ are the same as the ones of $\mathcal{C}$
    \item Morphisms $f:A\to B$ in $\mathbf{CPM}(\mathcal{C})$ are morphisms $\tilde{f}:A^*\otimes A \to B^*\to B$ in $\mathcal{C}$ such that there exists $C\in ob\left(\mathcal{C}\right)$ and $g: A\otimes C \to B$ such that:
    \begin{equation}
        \tikzfig{Background/tikzit/CPMmorph} = \tikzfig{Background/tikzit/CPMmorphdecomp}
    \end{equation}
\end{itemize}
The categorical, monoidal, dagger, and compact closed structure of $\mathbf{CPM}(\mathcal{C})$ will be inherited from $\mathcal{C}$.

To emphasize the distinction between the string diagrams in the two categories $\mathbf{FdHilb}$ and $\mathbf{CPM}\left(\mathbf{FdHilb}\right)$, we will denote the wires in $\mathbf{CPM}\left(\mathbf{FdHilb}\right)$ as thicker than the ones of $\mathbf{FdHilb}$, e.g.:
\begin{equation*}
    \tikzfig{Background/tikzit/wire} \in ob\left(\mathbf{FdHilb}\right) \qquad\qquad \tikzfig{Background/tikzit/wireThick}\in ob\left(\mathbf{CPM}\left(\mathbf{FdHilb}\right)\right)
\end{equation*}
Therefore, the following will represent density matrices and quantum channels respectively:
\begin{equation*}
    \tikzfig{Background/tikzit/rhoCPM} \qquad \qquad \tikzfig{Background/tikzit/boxCPM}
\end{equation*}
Finally, the unit and counit from duality in $\mathbf{FdHilb}$ will be denoted in $\mathbf{CPM}\left(\mathbf{FdHilb}\right)$ as:
\begin{equation}
    \tikzfig{Background/tikzit/codiscard} = \tikzfig{Background/tikzit/etaHilb} \qquad \tikzfig{Background/tikzit/discard} = \tikzfig{Background/tikzit/epsilonHilb}
\end{equation}
Notably, the map \tikzfig{Background/tikzit/discard} will be referred to as the \emph{discard map}, as we have seen that taking the (partial) trace corresponds to forgetting about the exact behaviour of the traced-out subsystem. We will investigate this property in more detail in the following subsection.

\subsection{Causality in quantum processes}\label{subsec:Qcausal}
\paragraph{}So far, we have not described any notion of temporal order, as the roles of inputs and outputs in any diagram can be reversed employing the duality and dagger structures. We here want to further restrict the set of physical diagrams by imposing a principle of causality. It turns out that this can be done using the discarding process described above.

\paragraph{}We start with a relatively simple intuition. If discarding the entire output of a process would correspond to physically ignoring the outcome of a process, this should be equivalent to ignoring the process in question~\cite{KissingerHobanCoecke,Kissinger2019}. In terms of string diagrams, this translates as:
\begin{equation}
    \tikzfig{Background/tikzit/boxdiscard} = \tikzfig{Background/tikzit/discardA}
\end{equation}
Processes that satisfy this property will be called \emph{causal processes}.
% \begin{ex}
%     \blue{examples? Unitaries and CP maps}
% \end{ex}

\paragraph{}It has been shown that restricting to causal processes is enough to be able to encode the notion of causal relations as described in Section~\ref{subsec:BackgroundContext}~\cite{KissingerHobanCoecke}. First, we will describe the analogues of a party as a pair of input and output systems $A_{in}, A_{out}$. Then, choosing an input will correspond to selecting an input state in $\ket{\psi_{in}}: I \to A_{in}$, and similarly, observing an outcome will correspond to $\bra{\psi_{out}}: A_{out}\to I$. Now, given two parties $A = \left(A_{in}, A_{out}\right)$ and $B = \left(B_{in}, B_{out}\right)$, the interaction between $A$ and $B$ (regardless of a potential causal order) will be modelled as a causal morphism $f: A_{in}\otimes B_{in}\to A_{out}\otimes B_{out}$.

Then, we will say that a process $f$ is compatible with $A\preceq B$ iff:
\begin{equation}\label{eq:terminalityA}
    \tikzfig{Background/tikzit/ApreceqB} = \tikzfig{Background/tikzit/ApreceqBRHS}
\end{equation}
Intuitively, the above equation means that ignoring the subsystem $B$ does not change what happens in the party $A$. The generic form of such processes is given by~\cite{KissingerHobanCoecke,lorenz2023causal,jacobs2019causal}:
\begin{equation}
    \tikzfig{Background/tikzit/ApreceqBGeneric}
\end{equation}
where $f_A$ and $f_B$ are causal processes. When considering the statistics of the measurements of such processes, this notion of causality is indeed equivalent to the notion of compatibility with the causal order $A\preceq B$ as defined in Section~\ref{subsec:BackgroundContext} (see Appendix~\ref{app:causality} for more details). 

Similarly, we can also say that a process is compatible with $B\preceq A$ iff:
\begin{equation}\label{eq:terminalityB}
    \tikzfig{Background/tikzit/BpreceqA} = \tikzfig{Background/tikzit/BpreceqARHS}
\end{equation}
And the generic structure of such processes will be given by:
\begin{equation}
    \tikzfig{Background/tikzit/BpreceqAGeneric}
\end{equation}
for causal processes $f_A$ and $f_B$. These measurement statistics of these processes are also  compatible with $B\preceq A$ with respect to the causal relation defintion from Section~\ref{subsec:BackgroundContext} (see Appendix~\ref{app:causality}).

Finally, a process will be said to be no-signalling iff it is compatible with both $A\preceq B$ and $B\preceq A$. The generic structure of no-signalling processes consists of the (monoidal) product of two processes:
\begin{equation}
    \tikzfig{Background/tikzit/NSGeneric}
\end{equation}
We will use these process structures in Chapter~\ref{chap:lexicalCircuits}.

% \section{Quantum information theory}\label{sec:QIT}
% \subsection{Quantum Computations}\label{subsec:QC}
% \input{Background/QIT/QC.tex}
% \subsection{Applications of quantum information theory outside of physics}\label{subsec:outside}
% \input{Background/QIT/outside.tex}

\chapter{Ambiguities in Natural Languages}\label{chap:ambiguities}
\paragraph{}The English language is ambiguous in several different ways. Examples of the different types of ambiguities are:
\begin{itemize}
    \item \textbf{Lexical ambiguity}\\
    A word is lexically ambiguous whenever we can interpret it in at least two ways. For example, the word \textit{bank} can either mean a financial institution or the side of a body of water in~\ref{item:lexicalAmb}.
    \begin{enumerate}[label=(\arabic*), series=ambiguities]
        \item \label{item:lexicalAmb} The \underline{bank} is far away.
    \end{enumerate}
    \item \textbf{Syntactic ambiguity}\\
    A phrase or a sentence is syntactically ambiguous iff it has at least two possible grammatical structures. An example of a syntactically ambiguous sentence is as follows:
    \begin{enumerate}[label=(\arabic*), resume=ambiguities]
        \item \label{item:syntacticAmb} She saw a man \underline{with binoculars}.
    \end{enumerate}
    In~\ref{item:syntacticAmb}, all of the words have definite meanings, but the phrase \textit{with binoculars} can be attached to either \textit{She}, i.e. there is a woman who used binoculars to see a man, or \textit{a man}, i.e. what the woman saw is a man who was using binoculars.
    % \item \textbf{Scope ambiguity}\\
    % \begin{enumerate}[label=(\arabic*), resume=ambiguities]
    %     \item \label{item:scopeAmb} I bought \underline{blue socks and shoes}.
    % \end{enumerate}
    % In \ref{item:scopeAmb}, what is ambiguous is whether only the socks are blue or if it is both the socks and the shoes that are blue. This is an instance of a \emph{scope ambiguity}.
    \item \textbf{Coreference ambiguity}\\
    Texts usually include references to previously mentioned elements. For example, every pronoun such as \textit{he, she, it} or \textit{they} refers to entities defined from the context. These references can also lead to ambiguous utterances, such as the following sentence:
    \begin{enumerate}[label=(\arabic*), resume=ambiguities]
        \item \label{item:corefAmb} I put the CD in the computer before \underline{it} broke.
    \end{enumerate}
    In~\ref{item:corefAmb}, the pronoun \textit{it} can equally refer to either \textit{the CD} or \textit{the computer}. 
\end{itemize}

The existence of these ambiguities poses some challenges in NLP, as many of them require knowledge of the world to be disambiguated. For instance, co-reference ambiguities have led to the Winograd Schema challenge, which consists of sentences such as:
\begin{enumerate}[label=(4\alph*)]
    \item The trophy didn't fit in the suitcase because \underline{it} was too big.
    \item The trophy didn't fit in the suitcase because \underline{it} was too small.
\end{enumerate}
The challenge is then to identify which of the trophy or the suitcase is referred to by the pronoun \textit{it} in each sentence.

In this work, we will focus on studying lexical and syntactic ambiguity. However, similar work has been done regarding co-reference ambiguity (see~\cite{Lo2022,Lo2023}).

\paragraph{}In Section~\ref{sec:lexicalBack}, we describe how computers and humans process lexically ambiguous words (Sections~\ref{subsec:WSD} and~\ref{subsec:lexicalPsycho} respectively). We focus on syntactic ambiguities in Section~\ref{sec:SyntacticBackground}. In particular, we will introduce the theories of human parsing and the significance of garden-path sentences in Section~\ref{subsec:syntacticPsycho}, and in Section~\ref{subsec:surprisal}, we look at computational approaches to model human behaviour regarding the parsing of garden-path sentences.

\section{Lexical ambiguity in linguistics}\label{sec:lexicalBack}
\paragraph{}It is common for words to have several interpretations or multiple entries in a dictionary. When this is the case, the word is \emph{lexically ambiguous}. For example, the word \textit{charge} can be a verb or a noun. As a noun, it has, according to the Oxford Dictionary, the following main possible meanings:
\begin{enumerate}
    \item A material load; that which can be borne, taken, or received.
    \item A load of trouble, expense, responsibility, blame, etc.
    \item An impetuous attack
\end{enumerate}
Similarly, as a verb, the word \textit{charge} has the following possible meanings:
\begin{enumerate}
    \item To load; to cause to bear, hold, or receive.
    \item To load heavily; to burden, put anything onerous, troublesome, hateful upon.
    \item To attach weight to.
    \item To attack impetuously: and senses leading up to it.
\end{enumerate}
Each of these meanings could be further fine-grained, e.g. \textit{charge} in \textit{electrical charge} and in \textit{heavy charge} would both be included in the first definition of the noun \textit{charge} as above, but refer to different things.

Most words commonly used in English are ambiguous, and 99.6\% of the words in the British National Corpus~\cite{BNC} are ambiguous. However, this does not create a considerable obstacle for \emph{humans} to understand English. On the other hand, this problem constitutes a significant obstacle for machines in Natural Language Processing.

\paragraph{}In this section, we start by reviewing approaches to automatically disambiguate lexically ambiguous words in NLP (Section~\ref{subsec:WSD}) and then compare it with the process of human disambiguation as theorised in psycholinguistics (Section~\ref{subsec:lexicalPsycho}).
\subsection{The challenge of word-sense disambiguation}\label{subsec:WSD}
\paragraph{}Word Sense Disambiguation (WSD) is an NLP task that identifies which meaning of an ambiguous word is activated in a given context. WSD was one of the first challenges of NLP as it was crucial in Machine Translation~\cite {weaver1952translation}. To see this, consider a word that is ambiguous in the source language but not the target language, then it needs to be disambiguated before one can translate it, e.g. \textit{spring} the season is \textit{printemps} in French, but \textit{spring} the coil is translated as \textit{ressort}. Furthermore, it was shown that lexical disambiguation improves the accuracies of other NLP tasks such as Information Retrieval~\cite{Zhong2010,Bovi2015} and Question Answering~\cite{Ramakrishnan2003}.

WSD can be defined as follows. Given a text $T$ containing the target word $w$, the aim is to associate $w$ with its intended interpretation, usually taken from a set of definitions or labels of the different possible meanings of $w$. This task is particularly hard due to the apparent amount of knowledge required and the difficulty of obtaining annotated data.

We will now describe the main approaches in NLP aimed at the WSD task.

\subsubsection{Historic approaches}
\paragraph{} In~\cite{Schutze1998}, Sch\"utze divides the task of WSD into two subtasks: 
\begin{enumerate}[label=\arabic*.]
    \item \textbf{Word-sense discrimination} which aims to classify the contexts in which the intended interpretations are the same;
    \item \textbf{Word-sense labelling} which aims to label the different classes with a definition.
\end{enumerate}
In~\cite{Schutze1998}, the author proposes a completely unsupervised algorithm for solving the problem of word-sense discrimination. The idea behind the approach is similar to the one behind \emph{distributional vectors}. 

Indeed, the distributional hypothesis~\cite{Firth1957,Joos1950,Harris1954} dictates that similar words are found in similar contexts. From there, we can obtain vectorial representations of words, known as distributional vectors, by recording how often a target word $w$ co-occurs with other words in a corpus. These figures correspond to \emph{first-order co-occurences}~\cite{Schutze1998}. Similarly, we can obtain a representation of a context $c$ by collecting the vector representations of the words in the context $c$. These vectors correspond to a \emph{second-order co-occurences}. For first-order occurrences, it has been widely verified that semantically close words are associated with distributional vectors that are close in the vector space~\cite{jurafskyspeech}. Similarly, we will expect second-order co-occurrence vectors to be close whenever the contexts they represent are semantically close. 

The idea of~\cite{Schutze1998} is to identify clusters of context-vectors containing $w$ with different senses. Each sense will then be represented abstractly as the centroid of the associated cluster, i.e. by the average vector of all of the points in the cluster. Then, given a test context $c'$, we start by creating its context embedding and then select the sense associated with the cluster it is closest to. We, therefore, select the appropriate sense by calculating the distance with all of the centroids. This method resulted in fairly high accuracies ($77.9\%$)~\cite{Schutze1998}, but the obtained clusters do not align with the classifications made by humans, e.g. in dictionaries, and the results are therefore hard to interpret.

\begin{figure}[htb!]
    \centering
    \includegraphics[width=.5\linewidth]{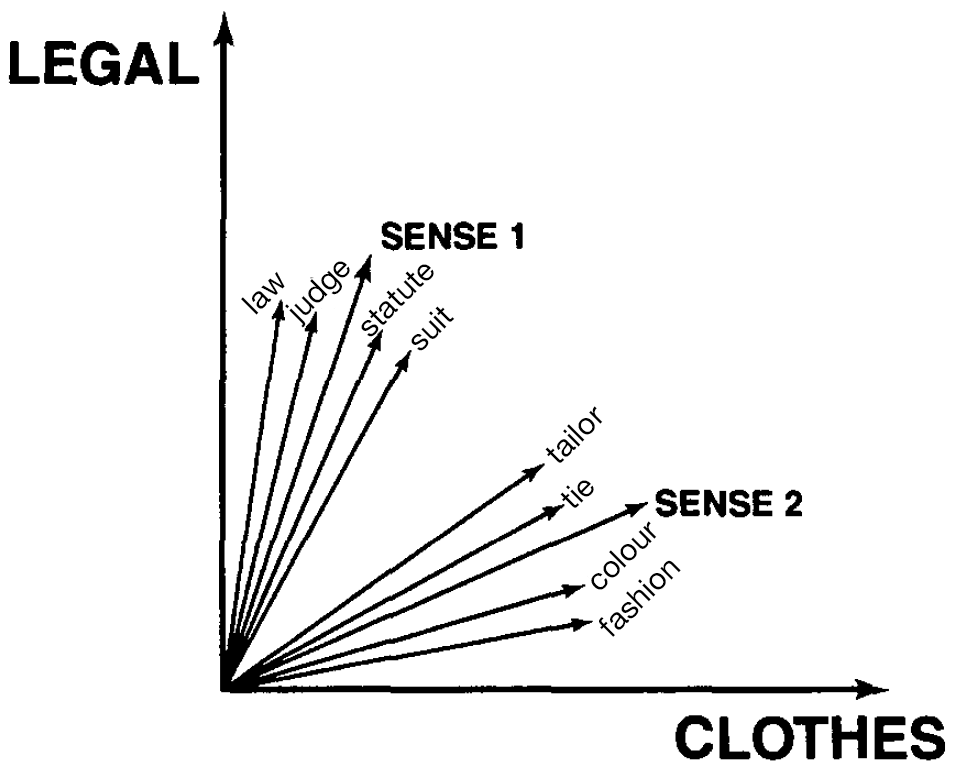}
    \caption{Clustering of the contexts for two senses of the word \textit{suit} (adapted from~\cite{Schutze1998}).\label{fig:Schutze}}
\end{figure}

\paragraph{}Most of the word-sense labelling approaches rely on supervised methods, i.e. dependent on human input or human annotations. 

Amongst the most successful supervised settings are \emph{Support Vector Machines} (SVMs)~\cite{Lee2002,Zhong2010,Iacobacci2016}, which were first introduced in~\cite{Boser1992}. SVMs are trained to discriminate positive and negative data points on a vector space by learning the linear hyperplane equation separating positively-labeled and negatively-labelled points. 
\begin{figure}[htb!]
    \centering
    \includegraphics[width=.5\linewidth]{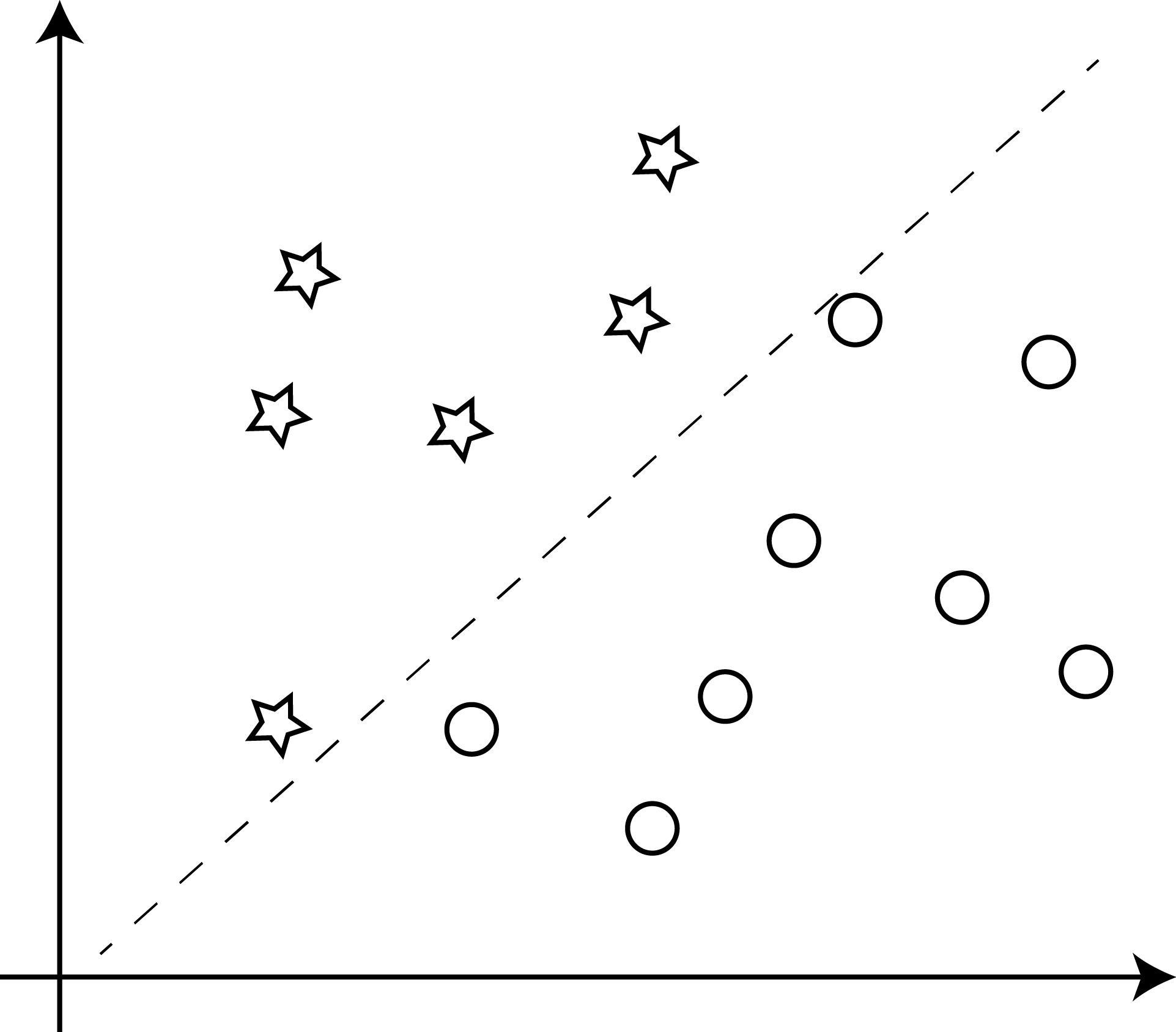}
    \caption{Illustration of Support Vector Machines. The two classes of data points (e.g. positive and negative) are depicted in different shapes (e.g. stars and circles). The aim of the SVM is to learn the equation of the dashed line.\label{fig:SVM}}
\end{figure}

In the case of the WSD task, we generally want to classify the data points between more than two classes (as a word may have more than two possible meanings). Hence, if a word $w$ has $k$ different senses, the disambiguating task is separated into $k$ binary classification tasks, where the $i$th task aims to identify whether the intended meaning of $w$ is its $i$th sense or not. The algorithm gives a confidence score for each sense of $w$, and the sense with the highest confidence score is then selected. 

The dimensions of the vector space correspond to features, e.g. the $n$-nearest neighbours of $w$ can be encoded in a 5-dimensional vector space. In particular, it has been shown that using features of different nature (e.g. neighbours $w$, part-of-speech of the neighbours, etc.) is beneficial in WSD tasks~\cite{Lee2002,Zhong2010}, see Fig.~\ref{fig:SVMexVector} for an example. 

The SVMs are then trained using annotated corpora, such as the SemCor corpus~\cite{Miller1993}, a subset of the Brown corpus where each word is annotated by its \texttt{WordNet} sense. \texttt{WordNet}~\cite{WordNet} is a database that contains definitions and examples of the different senses of a word, as well as semantic relations such as hypernymy/hyponymy or word similarity (see Fig.~\ref{fig:WordNet}). 

Many WSD systems were evaluated over the SensEval benchmark. Four different versions of the SensEval tasks have been published, namely SensEval~\cite{Edmonds2002}, SensEval-2~\cite{Edmonds2001}, SensEval-3~\cite{senseval3} and SensEval-2007~\cite{Agirre2009}, each consisting of three different tasks:
\begin{enumerate}
    \item \textbf{All-words} in which (almost) all words of a text have to be disambiguated;
    \item \textbf{Lexical sample} in which only select words have to be disambiguated;
    \item \textbf{Translation} which is similar to the lexical sample task, but for which, instead of selecting a sense, the WSD system needs to select its translation into a different language, e.g. Japanese. 
\end{enumerate}
Each SensEval version also provided a target lexicon (i.e. a list of words and their senses), a sense-annotated corpus, and a coarse-graining or fine-graining of senses. Several SVM-based algorithms have been evaluated on the SensEval tasks and have achieved up to 75.2\% in SensEval-3 and up to 89.4\% in the coarse-grained version of the SensEval-2007~\cite{Iacobacci2016}.
\begin{figure}[htb!]
    \centering
    % \red{Table of the SVM vector.}
    \scalebox{.85}{\begin{tabular}{|c|cccc|cccc|}
        \hline&\multicolumn{4}{c|}{Part-of-Speech of neighbours} & \multicolumn{4}{c|}{Surrounding words}\\
        &\multicolumn{4}{c|}{values: $\left\{\text{PRON}=0, \text{V}=1, \text{PREP}=2, \text{ADV}=3\right\}$} & \multicolumn{4}{c|}{basis: $\left\{account, economy, rate, take\right\}$}\\
        & $w_{-2}$ & $w_{-1}$ & $w_{+1}$ & $w_{+2}$ & $account$ & $economy$ & $rate$ & $take$\\\hline
        Vector & 3 & 1 & 2 & 0 & 0 & 0 &0 &1\\\hline
    \end{tabular}}
    \caption{Example of the vector corresponding to the word \textit{interest} in the context \textit{My brother has always taken interest in my work.} (simplified from~\cite{Zhong2010}).\label{fig:SVMexVector}}
\end{figure}

\paragraph{}Although they achieve very high accuracies, the supervised approaches are not easily extended to large-scale applications as they require a large amount of manually annotated data. Alternative methods make use of knowledge bases such as (computer-readable) dictionaries, thesauri, or the \texttt{WordNet} databse~\cite{WordNet}. 

One of the prominent approaches within this category is the class of \emph{Lesk algorithms}~\cite{Lesk1986,Kilgarriff2000,Banerjee2002,Basile2014}. The idea is that given a target word $w$ in context $c$, the overlap of the context $c$ with the \emph{gloss} of a sense (i.e. definition and possibly examples) is higher if the sense does correspond to the intended one. Here, the overlap corresponds to the intersections of the set of words in the context and the glosses. For example, given the different glosses of the word \textit{bank}\footnote{Example taken from~\cite{jurafskyspeech}}:
\begin{enumerate}
    \item a financial institution that accepts deposits and channels the money into lending activities\\
    \underline{Examples:} he cashed a cheque at the bank, that bank holds the mortgage on my home
    \item sloping land (especially the slope beside a body of water)\\
    \underline{Examples:} they pulled the canoe up on the bank, he sat on the bank of the river and watched the currents
\end{enumerate}
And given the following target context (taken from the BNC):
\begin{quote}
    \underline{Cash} includes \underline{cheque} payments, bank transfers and credit card payments .
\end{quote}
Then, the algorithm will return the correct intended sense, namely its financial institution meaning; the overlap between the gloss and the target context is underlined above. This description is known as the \emph{simplified Lesk algorithm}~\cite{Kilgarriff2000}. In contrast, the original Lesk algorithm of~\cite{Lesk1986} intends to compare the glosses of \emph{all} of the words in the phrase, which increases the algorithm complexity. 

As one may expect, only using the overlap of the glosses with the context is a bit crude. Many extensions of this approach have been proposed, for example by also considering the words in glosses of related words (obtained from \texttt{WordNet})~\cite{Banerjee2002} or by using distributional or neural representations of contexts and glosses~\cite{Basile2014} and taking the cosine of vectors as the measure of overlap.

\begin{figure}[htb!]
    \centering
    \includegraphics[width=.75\linewidth]{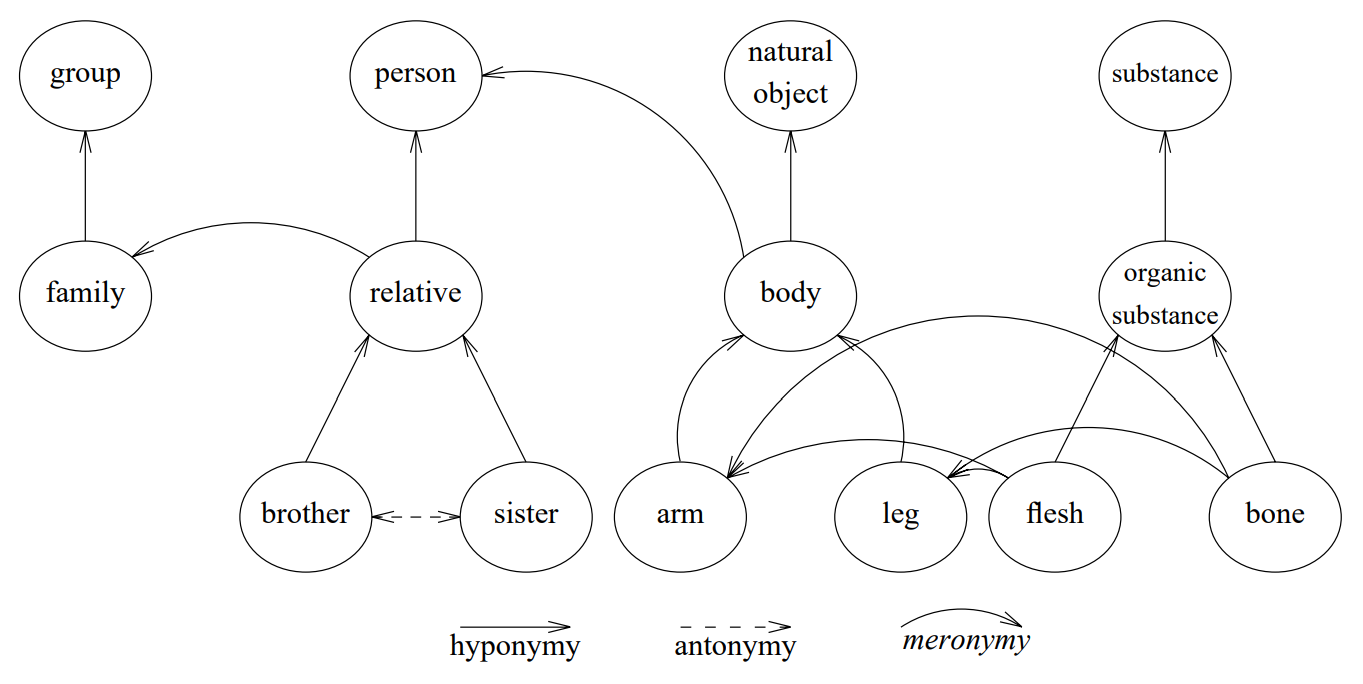}
    \caption{Illustration of the \texttt{WordNet} knowledge base. (taken from~\cite{WordNetB})\label{fig:WordNet}}
\end{figure}

\subsubsection{Neural approaches}
\paragraph{}The state-of-the-art approaches in NLP differ from the previously described methods as they vastly rely on artificial \emph{neural networks}. As for distributional approaches and SVMs, the meanings of words and sentences are stored as vectors known as embeddings. The entries of these vectors are learned from input/output pairs from a training set, where the input goes through a network of nodes (or artificial neurons) with tunable activation strength -- the details of this process depend on the architecture of the neural network. %\blue{better description??}

\paragraph{}One of the first instances of successful neural network architecture is the \emph{Recurrent Neural Network} (RNN), in which a sequence of tokens is input and processed from left to right (see Fig.~\ref{subfig:RNN}). Hence, at any stage, the network can retain information from all previous words. RNNs were a way to escape the sparsity of $n$-grams co-occurrences in corpora~\cite{Bengio2003,Mikolov2013} and were soon found to give meaningful representations of words. For instance, the \texttt{word2vec} vectors were found to predict semantic relations between words accurately~\cite{Mikolov2013}. 

However, in ``standard'' RNNs, the contribution of a given token to the gradient can either tend to $0$ or $\infty$ as training time increases, which makes them impractical for dealing with large texts. The \emph{Long-Short Term Memory} (LSTM) architecture aims to solve this problem by learning to ``forget'' information that is no longer relevant~\cite{Hochreiter1997}. 

In addition, by restricting the processing of words strictly from left to right, some of the long-distance dependencies may be lost from this forgetting mechanism. For this reason, \emph{bidirectional} LSTMs (biLSTMs) were proposed as an alternative, where two LSTMs, one going from left to right and the other from right to left, are combined.

\paragraph{}In~\cite{Melamud2016}, the authors proposed an extension of the \texttt{word2vec} word-embeddings from~\cite{Mikolov2013} by creating embeddings of contexts, referred to as \texttt{context2vec} context-embeddings. These context-vectors were then used in WSD as follows. Given a target word $w$, we can collect all of the context-embeddings associated with each of the occurrences of each sense $s$ of $w$ from a sense-annotated corpus such as SemCor~\cite{Miller1993}. Similarly,  we can create a context-vector for the target context. We then compare this target context-vector to all of the relevant context-vectors obtained from the annotated corpus, and we select the sense associated with the context-vector closest to the target context-vector. 

In~\cite{Peters2018}, the authors used a similar algorithm, but using the \texttt{ELMo} (Embeddings from Language Models) \emph{contextualised embeddings} (which represents words in contexts)  instead of the context-embeddings (which represent the context assuming all words are context-independent). The approach of~\cite{Peters2018} also differed from the one of~\cite{Melamud2016} as sense-embeddings of a target word $w$ were obtained by averaging the contextualised word-embeddings of occurrences of $w$ in SemCor which corresponded to the same sense. 

The above approach has the major flaw that only a $16.11\%$ of \texttt{WordNet} senses are found in SemCor~\cite{Loureiro2020}. The solution of~\cite{Peters2018} was always to select the most common sense for each unseen word. To obtain a representation of an unseen \texttt{WordNet} sense, the authors of~\cite{Loureiro2020} make use of the structure of the \texttt{WordNet} lexical database. Indeed, each of the senses (corresponding to a lemma, its part of speech, and its gloss) is organised in synsets, which include synonymous senses. Each synset has a set of \emph{hypernyms}, e.g. $dog^1_n$ is a hypernym of $pug^1_n$, which is, in turn, part of a larger \emph{lexname}, e.g. $dog^1_n$ is part of $noun.animal$. We can obtain the representation of each of these abstraction levels by averaging the context-embeddings associated with all of the senses included in them and are present in the annotated corpus. The representations of the missing sense would then be abstracted by the representation of its first hypernym or lexname for which a representation exists.

Furthermore, motivated by the intuition behind the Lesk algorithms, some neural approaches have also used glosses as a knowledge source. It was shown that including the gloss embeddings on top of the context-embeddings using co-attention mechanisms~\cite{Luo2018a,Luo2018b} or by simple concatenation~\cite{Loureiro2020} improves the performance of WSD algorithms.

% \begin{itemize}
    % The ELMo word-embeddings\red{Peters et al 2018} obtained from biLSTMs were the first instances of word-embedding which were explicitly context-dependent.      
    % \item  (LSTM): Creates a vector representation of a full sentencial context using an unsupervised biLSTM.
    % For WSD, creates a context-embedding of all of the sentences containing the target word with sense-tag. Then for each target context, find the context in the training set which is the closest to the test context, and return the sense of the nearest neighbour in the context-vector space.
    % ELMo(): train a two-layer biLSTM as a biLM and obtain an embedding of a target word as a function of the other words in the context. For a specific task, take a linear combination of the outputs of the two layers of the network, and learn the coefficients (general trend is first layer learn syntax, the sentence learns meaning). WSD method is based on Melamud et al 2016. 

    % \item obtain a sense embedding from contextual embedding of annotated corpora. Use this sense embedding to compare with a target in test context. If the sense is not encountered in the corpus, calculate the embedding of from the other sense embedding of the synset + other relations from WordNet \red{read Loueriro and Jorge 2019}. %Kumar et al 2019: creates a sense embedding from contextualised embeddings of both glosses (using ELMo and BERT?) and annotated corpora (using biLSTM).
    
% \end{itemize}

\paragraph{}In 2017, Vaswani et al. introduced a novel neural network architecture known as the \emph{transformer}~\cite{attention}. In particular, this new architecture allowed parallelisation of the training process by allowing all-to-all connectivity of the artificial neurons (see Fig.~\ref{subfig:transformers}). The parallelisation of the training process opened the opportunity to train the neural network using a substantial amount of data. For example, the Google language model \texttt{BERT} (Bidirectional Encoder Representations from Transformers) was trained over 3.3 billion tokens, while the largest version of \texttt{GPT} to date was trained over 449 billion tokens. 
\begin{figure}[ht!]
    \centering
    \begin{subfigure}{.45\linewidth}
        \centering
        \includegraphics[width=\linewidth]{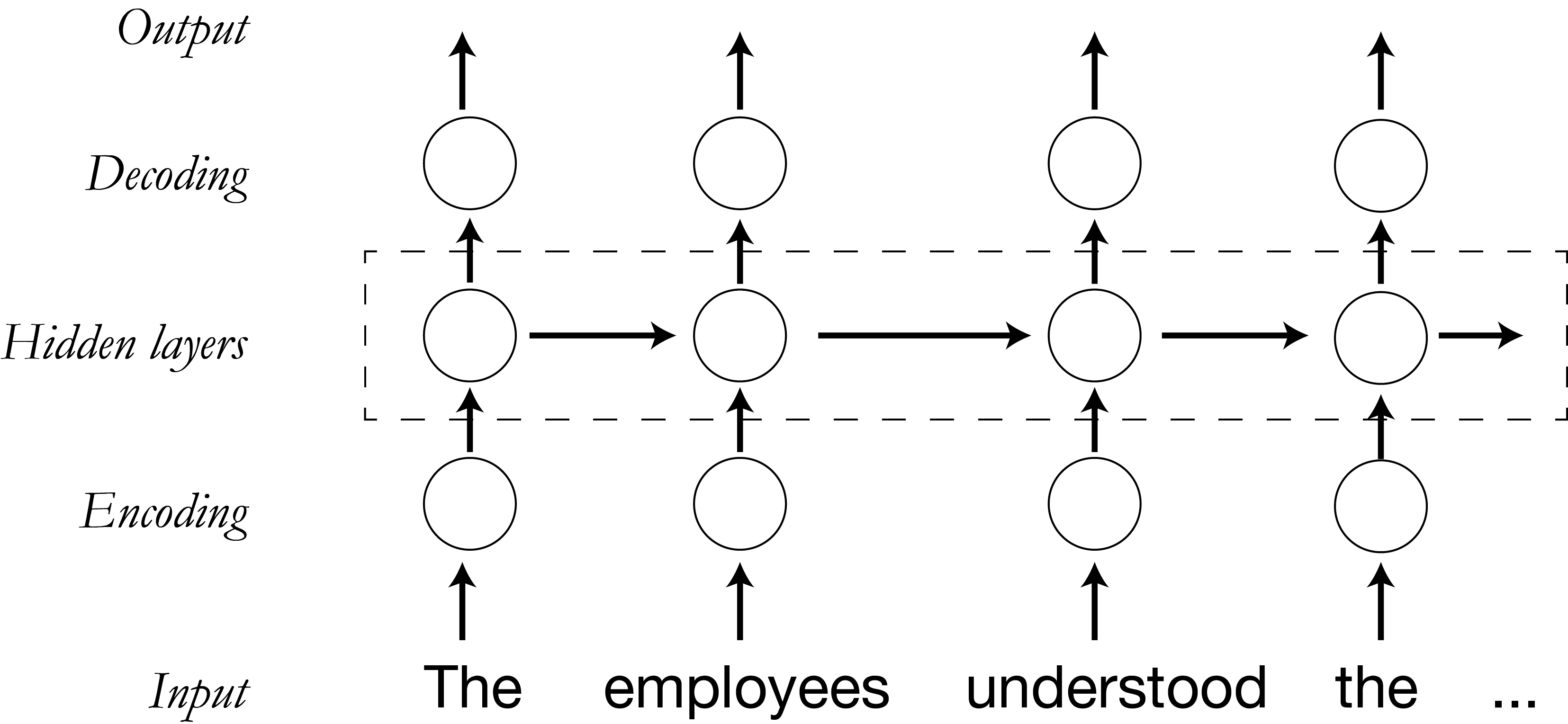}
        \caption{RNN\label{subfig:RNN}}
    \end{subfigure}\qquad%
    \begin{subfigure}{.45\linewidth}
        \centering
        \includegraphics[width=\linewidth]{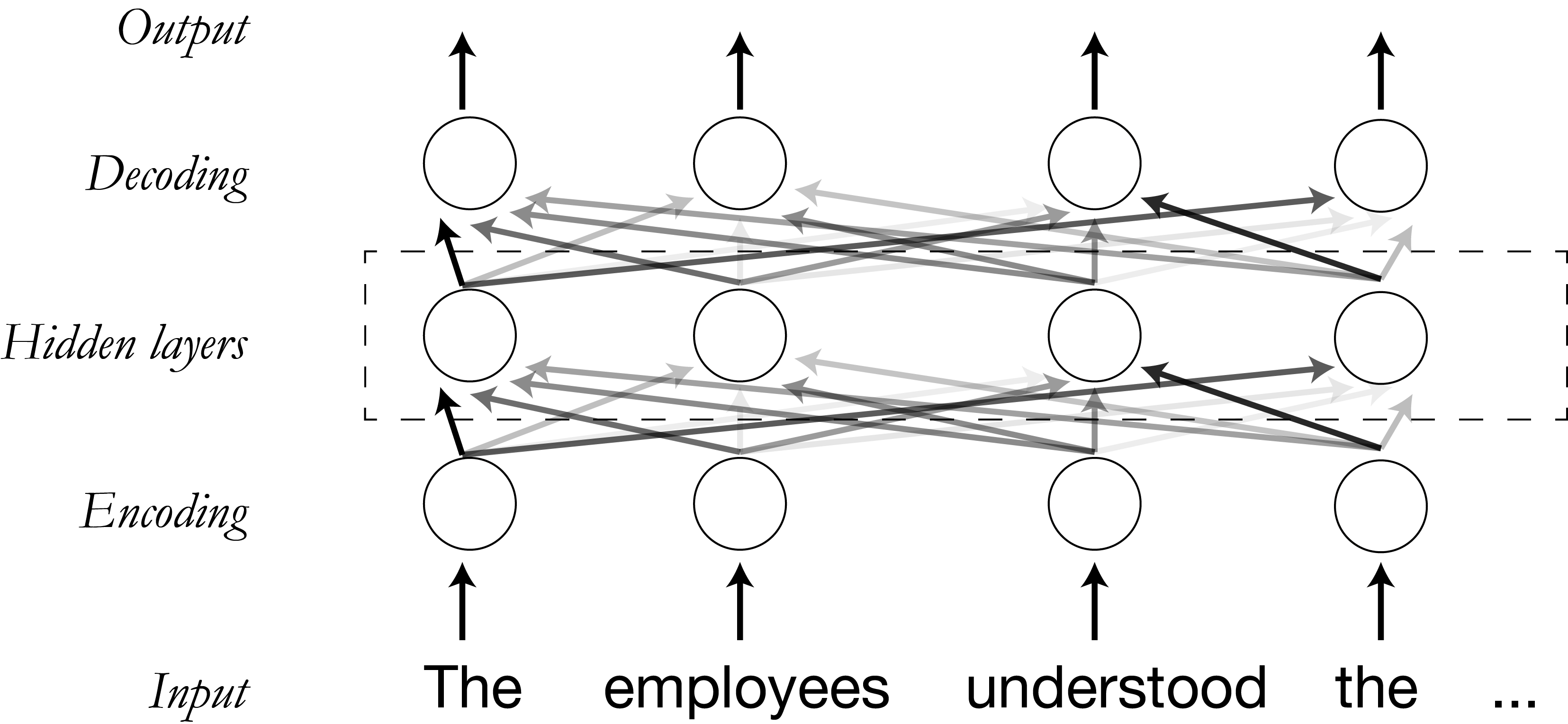}
        \caption{Transformers\label{subfig:transformers}}
    \end{subfigure}
    \caption{Schematics of the differences between recurrent and transformers neural networks architectures.}
\end{figure}
The \emph{pre-training} process conducted by Google or OpenAI is done by tuning the neural network's attention weights (i.e. the strength of the connection between two nodes) to solve a generic task. %such as mask prediction~\cite{BERT} or next word predictions~\cite{Radford2018}.

For example, \texttt{BERT} was trained simultaneously on mask prediction and next-sentence prediction tasks. The mask prediction task is as follows: the neural network is presented with a sentence where one or several words\footnote{To be more accurate, the masks are tokens and not words; the distinction is not important for the rest of this work.} are \emph{masked}, and its goal is to predict the values of these masks. For example, the language model could be presented the input:
\begin{equation*}
    \texttt{Paris is the [MASK] of France.}
\end{equation*}
and will attempt to predict the word \textit{capital}. In the next sentence prediction task, the language model is shown two sentences $\mathtt{S}_1$ and $\mathtt{S}_1$ and needs to decide whether $\mathtt{S}_2$ follows $\mathtt{S}_1$. For example, if:
\begin{align*}
    \mathtt{S_1} =& \texttt{The man went to the store.}\\
    \mathtt{S_2} =& \texttt{He bought a gallon of milk.}\\
\end{align*}
the neural network should output \texttt{True}, i.e. $\mathtt{S}_2$ indeed follows $\mathtt{S}_1$; but in the case of:
\begin{align*}
    \mathtt{S_1} =& \texttt{The man went to the store.}\\
    \mathtt{S_2} =& \texttt{Penguins are flightless birds.}\\
\end{align*} 
the output should be \texttt{False}. %\blue{Pre-Training process of GPT~\cite{Radford2018}}

\paragraph{}From the pre-trained language models, two main ways exist to solve the WSD problem. First, as for the LSTMs described earlier, given a text input, the transformer neural network will return a contextualised word-embedding. Hence, we can use these embeddings as in the previously described algorithms. For example, the same Nearest Neighbour algorithm described above for LSTMs performed better using \texttt{BERT} embeddings than using \texttt{context2vec}~\cite{Melamud2016} or \texttt{ELMo}~\cite{Peters2018} embeddings~\cite{Loureiro2020}. 

Another possibility is to \emph{fine-tune} the pre-trained models by training the language model for a more specific task, such as WSD, using a much smaller training dataset than the one required for pre-training. In~\cite{Vandenbussche2021}, this approach was taken on the language model \texttt{BERT}, where the fine-tuning process was as follows. The training set consisted of tuples $(c,d,l)$ where $c$ is a context containing a target word $w$, $d$ is a definition of a sense of $w$, and $l$ is the label in $\left\{yes,no\right\}$ corresponding to whether the intended meaning of $w$ in $c$ corresponds to the definition $d$. The target context $c$ and definition $d$ are then fed into the neural network, which will then be trained to predict the correct labels $l$. This approach achieved an accuracy of 72.3\%, which is comparable to the performance of SVMs, but does not use annotated corpora.
\subsection{The human disambiguation process}\label{subsec:lexicalPsycho}
\paragraph{}In this section, we review the psycholinguistic theories on the lexical disambiguation process of humans. In particular, we will focus on the differences in the processing of ambiguous nouns and verbs and words with different ``levels of ambiguity''. Indeed, two interpretations of an ambiguous word could be completely unrelated, such as \textit{bank} in \textit{bank account} and in \textit{river bank}, or somewhat related, for example, \textit{book} in \textit{interesting book} and in \textit{hardback book}. In the case of \textit{book}, both expressions refer to the same entity, but the former relates to information content, whereas the latter interpretation relates to the properties of the physical object. 

When the interpretations of an ambiguous word are unrelated, the word is said to be \emph{homonymous}. In contrast, if its interpretations are related, it is said to be \emph{polysemous}. We will, for the rest of the thesis, adopt the terminology of the literature where \emph{meanings} will refer to unrelated interpretations, \emph{senses} will refer to related interpretations, and \emph{interpretations} can refer to both meanings or senses.

\paragraph{}Psycholinguists study the disambiguation process of lexically ambiguous words using eye-tracking data. In such settings, the participants are presented with a text on a screen and asked to read it. The eye-tracker will then record the movements of the participant's eyes and the lengths (and order) of the eye fixations on different zones of the screen (which usually coincide with each of the words of the text). 

The prominent figures of interest will be a target word's first-pass and second-pass fixation times. The former corresponds to the time spent on a word reading from left to right, and the latter corresponds to any additional fixation on the target region (i.e. when the reader has to go ``back'' to the target word).  

\subsubsection{Homonymous nouns}
\paragraph{} Most studies on lexical ambiguity in psycholinguistics focus on the processing of homonymous nouns, i.e., nouns that have unrelated meanings.

\paragraph{}The main effect detected from eye-tracking experiments over homonymous nouns relates to the frequencies at which each meaning occurs. In particular, we observe a slowdown whenever the interpretation that has to be selected is uncommon~\cite{FrazierRayner1990,Dopkins1992}. For example, in the following (taken from~\cite{FrazierRayner1990}), the sentence in~\ref{item:bandPreferred} was read faster than it~\ref{item:bandUnpreferred}.
\begin{enumerate}[label=(1\alph*)]
    \item\label{item:bandPreferred} Playing so loudly, the wedding \underline{band} upset the groom.
    \item\label{item:bandUnpreferred} Looking so tarnished, the wedding \underline{band} upset the groom.
\end{enumerate}
This suggests that a homonymous noun's most common meaning is activated more easily than uncommon ones. We will emphasise here that this bias in meaning selection is \emph{independent of the context} and only depends on the reader's experience, i.e. how often they have seen each of the different meanings in their lives. 

Moreover, a slowdown also occurs whenever the meanings of the target word were \emph{balanced}, i.e. when the meanings are roughly as common as each other, for example \textit{palm} the type of tree or \textit{palm} the part of the hand~\cite{RaynerDuffy1986}. In fact, in~\cite{RaynerDuffy1986}, the authors showed that if the meanings were balanced, the slowdown occurs when the reader encounters the ambiguous word. In contrast, in the case of non-balanced words, a slowdown only occurs in the \emph{disambiguating region} whenever the subordinate (i.e. the less common) meaning has to be activated. This result has also been replicated in~\cite{Dopkins1992}. 

This finding suggests multiple interpretations of a homonymous noun can be activated simultaneously. However, the activation levels will not be the same for all meanings, and a reader will give the most common meaning a higher ``rating''. If multiple meanings are equally probable, they will compete, and this competition will create difficulty when reading the ambiguous word. This describes a \emph{parallel ranked} process of disambiguation. Moreover, when the context suggests that a subordinate meaning has to be selected, the reader encounters difficulty when seeing the disambiguating context, as they will have to readjust the activation levels. On the other hand, if the intended meaning is the dominant one, i.e. the most common one, no difficulty should occur.  

\paragraph{}Finally, the position of the disambiguating context also plays a role in the disambiguation process. In particular, if the disambiguating context is \emph{before} the target word, the fixation times are higher on the target word (but low on the disambiguating region) than when the disambiguating region is \emph{after} the target word~\cite{FrazierRayner1990,Dopkins1992}. However, the reading time of the whole sentence was higher whenever the disambiguating context is \emph{after} the target word~\cite{FrazierRayner1990,Dopkins1992}. 

This would suggest that the disambiguation process is incremental, i.e. the reader will readjust the activation weights of each possible meaning as more information is known.

\subsubsection{Polysemous nouns and underspecification}
\paragraph{} We now look at the disambiguation process of polysemous nouns, which, somewhat surprisingly, carries some stark differences from the disambiguation process of homonymous nouns.

In particular, no difference in reading times has been observed between common and uncommon senses~\cite{FrazierRayner1990}, nor any difference between concrete and abstract senses~\cite{FrazierRayner1990}, well-known senses and senses created through rules~\cite{FrissonPickering1999}. In all of these cases, the reading times observed are comparable with the reading times of unambiguous words~\cite{FrazierRayner1990,FrissonPickering1999}. 

This suggests that, instead of having a parallel-ranked representation of the different senses of a polysemous word, the reader will always start by selecting an \emph{underspecified} interpretation of the word, which includes all of its possible senses. In the theory of underspecification, the context then shapes the salient interpretation of a polysemous word.

\paragraph{}Although the interpretation is underspecified within a sentence, if the sentential context of a polysemous word is not enough to select a sense, the possible senses behave like meanings in the following sentence, i.e. relative frequencies start to have an impact on the reading times~\cite{underspecification}. This phenomenon is referred to as a \emph{homing-in} stage. 

This observation suggests that, even though polysemous words are processed faster, commitment to an interpretation, i.e. selecting the most appropriate sense, is only done at the end of the sentence, whereas we recall that this is meaning selection occurs almost instantly in homonymous nouns.%\red{What happens if the disambiguating context is before or after?}

\begin{rmk}
    Parallel unranked models of the disambiguation of polysemous words have also been proposed. However, they suffer from several drawbacks. In particular, there is no obvious way to select the most appropriate meaning from them. In addition, parallel unranked models would predict that the more possible senses a word has, the more difficult it should be to read. However, this prediction was not in line with eye-tracking data~\cite{underspecification}.
\end{rmk}

\subsubsection{Disambiguating verbs}
\paragraph{}Most of the psycholinguistics literature on lexical ambiguity focuses on ambiguous nouns, and seldom research looked at the disambiguation process of more complex ambiguous types such as verbs and adjectives. In~\cite{Tanenhaus1979}, the authors investigated words with multiple possible grammatical types (e.g. \textit{wacth} could be a noun or a verb), and the authors of~\cite{Mullaly} looked at the process of disambiguating adjectives. Here, we will mostly focus on the disambiguation of ambiguous verbs, which was studied in~\cite{PickeringFrisson2001}.

\paragraph{} In particular, in~\cite{PickeringFrisson2001}, the authors showed that the processing of ambiguous verbs (both polysemous and homonymous) differed from the processing of ambiguous nouns. Indeed, we observe a general slowdown in reading time of ambiguous verbs, compared to ambiguous nouns, which shows that ambiguous verbs are more complex to disambiguate.

The effect of frequency was much smaller for homonymous verbs than it was for homonymous nouns. Indeed, no significant difference in first-pass reading times of the target verb between common and uncommon meanings was observed. In second-pass reading, only a mild frequency effect was detected, where less backtracking is observed when the intended meaning is the dominant one. In addition, the main difficulty did not occur in the verb region but was slightly delayed until the object of the verb was encountered. This finding suggests that the disambiguation of homonymous verbs is not immediate, contrary to homonymous nouns, but the reader waits until the arguments of the verbs are known. In particular, since meaning selection is delayed, both the dominant and subordinate meanings ``have the time'' to be activated, which diminishes the frequency effects.

For polysemous verbs, the disambiguation was delayed even further, and the slowdown in first-pass mainly occurred at the end of the phrase or the sentence. In addition, similarly to polysemous nouns, no significant frequency effect could be observed during the initial analysis. This delay suggests once again that polysemous verbs will initially activate an underspecified meaning, which is then made more and more specific as information from the context emerges. Some minor frequency effects occurred on second-pass, which suggests that the reader will home-in on a chosen sense at the end of the sentence, which consequently behaves more like a homonymous verb on reanalysis. 

\paragraph{}Many hypotheses can be advanced to explain the relative difficulty of processing verbs compared to nouns. First, this general increase in difficulty is not restricted to ambiguous cases. When reading a sentence, readers will generally spend longer on the main verb of the sentence than any other words~\cite{Rayner1977}. In addition, many studies suggest that the meaning of verbs is highly dependent on their arguments~\cite{GentnerFrance2013}, making its interpretation much more variable~\cite{GentnerFrance2013,Gentner1981,Gentner1982}. For instance, in the case of a mismatch between the arguments of a verb and its standard interpretation, e.g. in~\ref{item:carUnmatch} as opposed to~\ref{item:muleMatch}, it is the verb that tends to acquire a metaphorical interpretation.
\begin{enumerate}[label=(2\alph*)]
    \item\label{item:muleMatch} The mule shivered
    \item\label{item:carUnmatch} The car shivered
\end{enumerate}
This apparent complexity might be deeply rooted in language acquisition. Indeed, it is well-established that children will learn nouns before verbs~\cite{Gentner1982,Bohannon1986} and have more difficulty using verbs~\cite{Tomasello1997}. This factor could explain the overall difficulty of disambiguating verbs.

\section{Human parsing \& garden-path sentences}\label{sec:SyntacticBackground}
\paragraph{}In this section, we will describe another type of ambiguity, namely \emph{syntactic ambiguity}. This ambiguity arises whenever several \emph{gramamtical structures} are simultaneously possible. An example of a syntactically ambiguous sentence is:
\begin{enumerate}[label=(\arabic*)]
    \item She saw a man with binoculars
\end{enumerate}
This sentence either means:
\begin{enumerate}[label=(1\alph*)]
    \item She used binoculars and saw a man 
    \item She saw a man, and he was using binoculars
\end{enumerate}
In addition, syntactic ambiguity does not always occur at the sentence level. Sometimes, a sentence can also be \emph{locally} ambiguous. For example, consider the sentence fragment:
\begin{enumerate}[label=(2)]
    \item The artist painted [...]
\end{enumerate}
The verb \textit{painted} can either be transitive, i.e. take an object such as \textit{a portrait}, intransitive, i.e. does not take any object, or even be in the passive voice, e.g. the artist is the thing that is painted. 

\paragraph{}Certain types of sentences, known as \emph{garden-path sentences}, have been used in psychological research to unmask the processes behind grammatical parsing. These are sentences for which humans initially parsed a locally ambiguous fragment incorrectly. 

For example, in the following sentence, the phrase \textit{the contract} is initially parsed as the object of the verb \textit{understood} and is eventually found to be the subject of the verb phrase \textit{would change}.
\begin{enumerate}[label=(3)]
    \item The employees \underline{understood the contract} would change
\end{enumerate}

\paragraph{}In this section, we review the existing literature on garden-path sentences. In Section~\ref{subsec:syntacticPsycho}, we introduce the different human parsing theories and the evidence supporting them. These theories will motivate our models described in Part~\ref{part:Syntactic}. In Section~\ref{subsec:surprisal}, we describe a popular framework of computational linguistics, namely surprisal theory, which has been applied to predict reading times of garden-path sentences. We give some history and motivation for this framework and its drawbacks.

\subsection{Psycholinguistics theories of parsing}\label{subsec:syntacticPsycho}
\paragraph{}In~\cite{Bever}, Thomas Bever exposed an overview of his theories of \emph{perceptual strategies}, which are mechanisms that allow humans to convert (external) linguistic structures to (internal) perceptual representations of meaning. He believed these perceptual mechanisms are more fundamental than linguistic structures like grammar and are the first to be acquired by children. Therefore, language acquisition would correspond to learning labels and rules corresponding to these perceptual strategies. Among these strategies, he explains that one of the first steps of language understanding is parsing a sequence of words and sounds (external structures) into groups associated with a fundamental role, such as actor, action, object, or modifier (internal structures). A subsequent rule is to associate a ``N...V...(N)'' sequence with the ``actor-action-object'' roles as soon as possible unless markers suggest that the passive voice is used or that the first phrase modifies the main clause. To illustrate his claim, Bever describes a series of linguistic behaviour and experiments testing them. Among these behaviours, he describes the difficulty of parsing sentences such as:
\begin{enumerate}[label=(\arabic*), series=garden]
    \item\label{item:horse} The horse raced past the barn fell
\end{enumerate}
The difficulty of parsing would be due to non-conformity with respect to those perceptual strategies. These sentences were later referred to as \emph{garden-path sentences}. 

\paragraph{}Generally speaking, a garden-path sentence is a syntactically unambiguous sentence for which the reader is  ``led down a garden-path'', i.e. they are forced to adopt the wrong syntactic structure at some initial stage. After Bever's original work, these sentences have been widely used in psycholinguistics to uncover mechanisms at the heart of human parsing by studying what induces errors for readers~\cite{Frazier87,pritchett1992,Holmes87,Mitchell87,trueswell1993,pickering1998,GARNSEY1997,SturtPick}. 

Some specific types of garden-path sentences are widely studied in psycholinguistics. These are exemplified as follows.
\begin{itemize}
    \item \textbf{NP/S sentences}\\
    \begin{enumerate}[label=(\arabic*), resume=garden]
        \item\label{item:NPS} The employees \underline{understood} the contract would change.
    \end{enumerate} 
    Here, the main verb \textit{understood} either takes a noun-phrase (NP), such as \textit{the contract}, as an object, or a sentential (S) object, such as \textit{the contract would change}.
    \item \textbf{NP/Z sentence}\\
    \begin{enumerate}[label=(\arabic*), resume=garden]
        \item\label{item:NPZ} As the woman \underline{read} the magazine amused the editors.
    \end{enumerate} 
    In this case, the main verb \textit{read}, either takes an NP as an object, e.g. \textit{the magazine}, or no object at all -- this is denoted by (Z) for ``zero''.
    \item \textbf{MV/RR sentences}\\
    \begin{enumerate}[label=(\arabic*), resume=garden]
        \item\label{item:MVRR} The soldiers \underline{warned} about the dangers conducted the raid. 
    \end{enumerate} 
    In this case, the underlined verb \textit{warned} is either the main verb (MV) or part of a relative clause (RR). The example~\ref{item:horse} is also an MV/RR sentence.
\end{itemize}
In this thesis, we will mostly focus on sentences of type NP/S and NP/Z.

\paragraph{}What has been shown unambiguously in several studies is that these different types of garden-path sentences are read with different levels of difficulty. In particular, NP/S sentences were read faster than NP/Z sentences, so NP/S sentences are more straightforward to parse than NP/Z ones~\cite{SturtPick,grodner}. What is more debated in the literature is why this is the case. One hypothesis is that it is due to the nature of the changes needed to obtain the correct parse~\cite{SturtPick}.

\paragraph{}An underlying problem is to identify why a garden-path effect occurs in the first place. 

The first thing to mention is that local ambiguity is not the main cause of difficulty. In fact, the presence of syntactic ambiguity can make a sentence \emph{faster to read} than its unambiguous variants~\cite{Traxeler1998}. This feature distinguishes syntactic disambiguation from lexical ambiguity resolution (see Section~\ref{subsec:lexicalPsycho}). The intuition is as follows. Let's assume that a given fragment has two equally likely possible syntactic structures. If the sentence is only locally ambiguous but globally unambiguous, then half of the time, the reader will initially select the ``wrong parse''. Hence, reanalysis is triggered half of the time. On the other hand, if the sentence remains globally ambiguous, then no analysis is ever required. The sentence is then read faster on average. 

However, several contributing factors to the existence of garden-path effects have been identified, among which lexical~\cite{Frazier87,Holmes87,Mitchell87,trueswell1993,GARNSEY1997}, plausibility~\cite{pickering1998,GARNSEY1997} and discourse biases~\cite{Frazier87,TrueswellTanenhaus2007}. The main quoted factor is the frequency bias of its main verb~\cite{Frazier87,Holmes87,Mitchell87,trueswell1993,GARNSEY1997}. For instance, the verb \textit{hear} would have a higher bias towards taking an NP as an object than the verb \textit{claim}. Hence, sentences with the main verb \textit{hear} would be more likely to create an NP/S or NP/Z garden-path sentence than sentences with the main verb \textit{claim}. This phenomenon is exemplified in the following, where~\ref{item:alarmNoGP} does not create considerable difficulty, whereas~\ref{item:alarmGP} does.
\begin{enumerate}[label=(5\alph*)]
    \item\label{item:alarmNoGP} The officer claimed the alarm was a surprise.
    \item\label{item:alarmGP} The officer heard the alarm was a surprise.
\end{enumerate}
Similarly, the difficulty of selecting the correct parse in garden-path sentences increases if the object of the NP reading is deemed plausible~\cite{pickering1998,GARNSEY1997}. For example,~\ref{item:magazinePlaus} is harder to read than~\ref{item:magazineImplaus}.
\begin{enumerate}[label=(6\alph*)]
    \item\label{item:magazinePlaus} As the woman read the magazine amused the editors.
    \item \label{item:magazineImplaus} As the woman sailed the magazine amused the editors.
\end{enumerate}

\paragraph{}With regards to \emph{how} do humans resolve the garden-path effects, two main categories of procedures have been proposed, namely \emph{serial processing}~\cite{Frazier87,pritchett1992} and \emph{parallel processing}~\cite{Jurafsky1996,VanGompel2005}. Advocates of the serial strategy argue that, at any given stage, the reader will create a \emph{single parse} which can be completed~\cite{Frazier87,pritchett1992}, and that certain conditions can be imposed on the partial structure (e.g. minimal attachment states readers never add unnecessary nodes to the structure under construction). 

In the case of parallel processing, when syntactic ambiguity occurs, at least two or more parses are constructed in parallel. However, since we do observe a garden-path effect in sentences such as~\ref{item:horse}-~\ref{item:MVRR}, this implies that these structures have to be \emph{ranked}. Otherwise, the less likely parse having been constructed already, it should not be hard to extend the ``correct'' parse. The difficulty then comes from some ``reranking mechanism'', weights have to be transferred from the previously likely but incorrect parse to the correct parse. 

It is quite difficult to distinguish between probabilistic serial sentence processing (i.e. if two or more parses are possible, they are chosen with their respective degree of likelihood) or ranked-parallel processing (i.e. all of the possible parses are created at the same time, but with different accessibility levels)~\cite{gibson2000}\footnote{This is similar to the fact that for a single quantum state, a superposition of basis state is indistinguishable from a probabilistic mixture of the same basis states if only basis measurements are available.}. In our model described in Chapter~\ref{chap:SyntacticModel}, we take an approach compatible with both interpretations. 

\paragraph{}Finally, there is also the question of how the reader obtains the correct parse. In~\cite{grodner}, the authors compare two different hypotheses, namely \emph{repair}, where the reader will amend a given parse to obtain the correct one, or \emph{reanalysis}, where the reader will reparse a sentence fragment in the view of incoming information. The authors of~\cite{grodner} presented evidence supporting the reanalysis hypothesis. The study uses a \emph{locality bias} (i.e. it is easier to attach clauses that are close together), such that, by adding some extra words between the start of clause marker \textit{As} in~\ref{item:NPZ} and the ambiguous NP \textit{the magazine}, the garden-path effect decreased in NP/Z sentences, whereas no such effect occurs in NP/S sentences.
\subsection{Surprisal predictions for garden-path sentences}\label{subsec:surprisal}
% \paragraph{Surprisal intro} \begin{color}{red}Read Levy's papers.
% Broad coverage = reading times for a lot of naturalistic texts
% \begin{itemize}
%     \item Linzen and Jager 2016, Uncertainty and expectation in sentence processing: Evidence from subcategorization distributions.
%     \item Demberg and Keller 2008, Data from eye-tracking corpora as evidence for theories of syntactic processing complexity
% \end{itemize}
% \end{color}
\paragraph{}Psycholinguistic studies have shown that one of the main factors influencing reading time is \emph{predictability} of a word in a context~\cite{EhrlichRayner}. Indeed, words are read faster if found in a context that makes them predictable than in contexts where they are not~\cite{EhrlichRayner}. For example, the word \textit{shark} is read faster in~\ref{item:shark1} than in~\ref{item:shark2}.
\begin{enumerate}[label=(7\alph*)]
    \item\label{item:shark1} The coast guard had warned that someone had seen a \ldots
    \item\label{item:shark2} The zoo keeper explained that the lifespan of a \ldots
\end{enumerate}

In~\cite{smith2013effect}, the relation between predictability and reading time was \emph{logarithmic}. This result was obtained by looking at eye-tracking times of a subset of the Dundee dataset (containing newspapers) and self-paced reading times for subsets of the Brown corpus (containing texts of various genres). The reading times correlated with the conditional probabilities of encountering a word $w_{n+1}$ in the context $c = w_1\ldots w_{n}$. This then motivates the use of the \emph{suprisal}, defined as:
\begin{equation*}
    S(w_{n+1}|w_1\ldots w_n) = -\log_2 P\left[w_{n+1}~\middle|~w_1\ldots w_n\right]
\end{equation*}
as a predictor for reading time. The authors from~\cite{smith2013effect} demonstrated that:
\begin{equation*}
    S(w_{n+1}|w_1\ldots w_n) \propto RT(w_{n+1}|w_1\ldots w_n)
\end{equation*}

Surprisal, also known as self-information, originates in Shannon's theory of information~\cite{Shannon}. In this theory, surprisal is defined as the \emph{quantity of information} entailed by knowing the value $X=w$, where $X$ is defined as a random variable selecting the following word in the context $w_1\ldots w_n$. The intuition is that a very predictable word is not surprising and, therefore, does not carry out a lot of information. On the other hand, if a word is not predictable, it should significantly increase the amount of information available to the reader. 

\paragraph{}Predictability has historically been estimated from \emph{cloze tasks}~\cite{taylorCloze}, where human participants are asked to complete a sentence or a piece of text. However, cloze tasks fail to estimate the predictability of highly improbable (probability $<5\%-10\%$) words and constructions~\cite{smith2013effect}. Therefore, data from such tasks are not reliable for studying garden-path sentences. 

In~\cite{smith2013effect}, the authors instead decided to collect statistics (trigram probabilities with a bigram cache) from text corpora (e.g. BNC) to obtain word predictability. These probabilities, however, only take local context into account and do not necessarily mirror the way humans assign probabilities~\cite{smith2013effect}. 

With the advent of neural language modelling, computational linguistics soon realised that they could use predictions from language models instead of collecting predictability from cloze tasks. 

\paragraph{}Hale~\cite{Hale2001} was the first to suggest using surprisal for predicting the slowdown in garden-path sentences. To do so, the author used the probabilities coming from a probabilistic Earley parser. However, the correlation between the magnitude of surprisal and reading time was not investigated~\cite{Hale2001}. 

In~\cite{vanSchijndelA,vanSchijndelB,huang2023,linzenCCG}, the authors studied the empirical correlations between surprisal from language models and self-paced reading times of garden-path sentences. Although surprisal calculated from language models can predict the existence of a garden-path effect (i.e. a higher reading time in the critical region in garden-path sentences compared to their unambiguous version), it consistently underestimates its magnitude~\cite{vanSchijndelA,vanSchijndelB,huang2023,linzenCCG}. In addition, although predictions are mostly lower for NP/S than for NP/Z~\cite{vanSchijndelA,huang2023}, no statistical difference has been found between the predicted garden-path effects of NP/S and NP/Z sentences. In fact, in the study presented in~\cite{vanSchijndelB}, the average garden-path effect for NP/S sentences was lower than that for NP/Z sentences.

Many possible reasons for the discrepancies between surprisal and reading times have been advanced: 
\begin{enumerate}
    \item Surprisal is not the main predictor for reading times~\cite{vanSchijndelB};
    \item The human parsing process is not strictly incremental and reanalysis or backtracking is necessary~\cite{vanSchijndelA,vanSchijndelB,huang2023,linzenCCG};
    \item The statistics used to calculate surprisal are not adequate representations of how humans assign predictions~\cite{linzenCCG}.
\end{enumerate}

In~\cite{vanSchijndelB}, authors have compared the predictions from surprisal with predictions from alternative incremental measures of dissonance that have been proposed in the past, such as entropy and entropy reduction~\cite{Hale2003,Hale2006,Frank2013}. The idea is that reading time is also modulated by how much uncertainty is removed by adding an extra word to a sentence fragment~\cite{Hale2003,Frank2013}; the higher this differential of uncertainty, the higher the reading time. What they found is that surprisal outperformed the entropy-based models by quite a distance, as the entropy-based measures did not predict a garden-path effect at all~\cite{vanSchijndelB}. This shows that if the disambiguation process is truly incremental, surprisal would be the best-known prediction factor. 

One other factor that authors of~\cite{vanSchijndelB} have explored is whether the predictions from large language models suffer from the same drawbacks as the cloze tasks, namely that low probabilities predictions are effectively not predicted -- and therefore, LLM predictions are not reliable for rare or complex constructions. Indeed, no ceiling effect was observed~\cite{vanSchijndelB}, therefore confirming that this was not the main source of error in the surprisal calculations. 

Finally, in~\cite{linzenCCG}, the influence of the \emph{syntactic surprisal}, as opposed to the lexical surprisal defined above, on the garden-path effect predictions was investigated. To do so, the authors defined a surprisal measure based on the probability of obtaining a given Combinatory Categorial Grammar (CCG) supertag for the last word of a sentence fragment, conditioned on having the usual (lexical) context consisting of the previous words. In other words, the syntactic surprisal is defined from a new probability distribution:
\begin{equation}
    P\left[c_{n+1}~\middle|~w_{1}\ldots w_{n}\right] = \sum_{w_{n+1}} P\left[t(w_{n+1})=c_{n+1}~\middle|~w_1\ldots w_{n+1}\right] \times P\left[w_{n+1}~\middle|~w_1\ldots w_{n}\right]
\end{equation}
where, as before, $w$'s runs over the vocabulary $V$, $c$'s runs over the set of CCG supertags $C$, and the map $t:V\to C$ associates a word with a CCG type. In addition, since the CCG supertag of a word is not necessarily unique, all possible supertags were considered in the syntactic surprisal calculations. Therefore, the syntactic surprisal is overall defined as:
\begin{align}
    S_{synt}\left(w_{n+1}\middle|w_{1}\ldots w_n\right) = -\log_2 \sum_{c_{n+1}} &P\left[c_{n+1}~\middle|~w_{1}\ldots w_{n}\right] \nonumber\\
    &\times P\left[t(w_{n+1})=c_{n+1}~\middle|~w_1\ldots w_{n+1}\right]
\end{align}
The syntactic surprisal provided more accurate predictions than the lexical surprisal alone, and combining both lexical and syntactic surprisal improved the predictions~\cite{linzenCCG}. However, even with these improvements, the garden-path effect was still widely underestimated~\cite{linzenCCG}, and thus, including syntactic surprisal did not fully resolve the previous issues. In Chapter~\ref{part:Syntactic}, we will give an alternative model based on sheaf-theory which achieves better predictions.

\part{Lexical Ambiguity}\label{part:Lexical}
% \chapter{Disambiguating lexically ambiguous phrases}\label{chap:lexical}
% \red{New intro}

% The aim of Section \ref{sec:lexicalBack} is to give an overview of the algorithmic and the human processes of lexical disambiguation, while Section \ref{sec:lexicalResults} presents original results loosely based on the publications \cite{Wang2021a,Wang2021b,wang2023causal}.

% \section{Lexical ambiguity in linguistics}\label{sec:lexicalBack}
% \paragraph{}In this section, we start by reviewing approaches to automatically disambiguate lexically ambiguous words in NLP (Section \ref{subsec:WSD}), and then compare it with the process of human disambiguation as theorised in psycholinguistics (Section \ref{subsec:lexicalPsycho}).
% \subsection{The challenge of word-sense disambiguation}\label{subsec:WSD}
% \input{Lexical/Background/wsd.tex}
% \subsection{The human disambiguation process}\label{subsec:lexicalPsycho}
% \input{Lexical/Background/psycho.tex}

% \section{Quantum processes and disambiguation}\label{sec:lexicalResults}

\chapter{Aspects of the lexical disambiguation process}\label{chap:lexicalFeatures}
% \paragraph{}Our aim in this part of the thesis is to create a framework in which the lexical disambiguation process would be closer to the disambiguation process in humans. 
% One would notice that, in Section \ref{subsec:lexicalPsycho}, it was shown that humans do not disambiguate words of different grammatical types, or different levels of ambiguity in the same way. However, the mainstream NLP approaches to word-sense disambiguation disambiguate all words in the same way, regardless of their part-of-speech, and whether the interpretations are related or unrelated. A secondary aim is to make any algorithmic output quantum native. Hence, we will generally proceed as follows. 

\paragraph{}In this chapter, we want to create a model of the human lexical disambiguation process. Moreover, our goal is to make this model quantum native, so that we can explore the potential of quantum simulations in the next chapter. Here, we study natural language data using mathematical frameworks arising from quantum mechanics (Sections~\ref{sec:lexicalContext} to~\ref{sec:lexicalCausal}). By doing so, we create a parallel description of linguistic features in terms of quantum ones. 

Moreover, we recall from Section~\ref{subsec:lexicalPsycho} that humans do not disambiguate words of different grammatical types, or different levels of ambiguity in the same way. However, the mainstream NLP approaches to word-sense disambiguation introduced in Section~\ref{subsec:WSD}, disambiguate all words in the same way, regardless of their part-of-speech, and whether the interpretations are related or unrelated. With our approach, we study the linguistic data for words of different parts-of-speech (i.e. nouns and verbs) and different levels fo ambiguity (i.e. polysemous and homonymous words). We then compare our results with the theories of psycholinguistics presented in Section~\ref{subsec:lexicalPsycho}.

More specifically, we start by looking at the potential \emph{contextuality} of lexical ambiguity data (Section~\ref{sec:lexicalContext}). In Section~\ref{sec:lexicalSignal}, we observe that the \emph{signalling} property of the collected data is a quantity of interest and study it further. In Section~\ref{sec:lexicalCausal}, this analysis is extended by studying the structure of the observed signalling, via its \emph{causal structure}. 
% Then, using the findings of Sections \ref{subsec:lexicalContext} to \ref{subsec:lexicalCausal}, we create a model of the disambiguation process which can be simulated using variational quantum circuits (Section \ref{subsec:lexicalCircuits}).
\section{Methodology}\label{sec:lexicalMethods}
\paragraph{}In all of the different studies carried out in this chapter, we will use a common interpretation of the analogue of quantum measurements. The idea is to take parties to represent \emph{grammatical roles} (e.g. subject, object, main verb, etc.) or \emph{grammatical types} (e.g. noun, verb, adjective, etc.) of words. In most of the following studies, we focused on subject-verb (SV) and verb-object (VO) scenarios, i.e. when we have two parties, either $S$ and $V$ or $V$ and $O$ corresponding respectively to subject and verb or verb and object. 

We then give each party a choice of inputs, corresponding to a choice of \emph{word} to fill in their corresponding part-of-speech. In analogy to the Bell scenario described in Section~\ref{sec:QDescription}, these choices of inputs can also be seen as local measurements. Let us look at an example and consider the lexicon:
\begin{equation}
    \mathcal{L} = \left\{tap, pitcher, box, cabinet\right\}
\end{equation}
and two parties $S$ and $V$ such that $S$ can choose a subject for a verb chosen by $V$. Then, $V$ is allowed to choose between the words of $\mathcal{L}$ which are verbs, in this case, \textit{tap} or \textit{box}. Similarly, in the general case, $S$ can choose between the set of words in $\mathcal{L}$ which can be nouns, i.e. the whole of $\mathcal{L}$. However, for the empirical models described in the following sections, we will decide to manually restrict each party's input choices, e.g., by letting $S$ choose between \textit{pitcher} or \textit{cabinet}. 

Finally, each measurement outcome will correspond to possible interpretations of the input words. For instance, the word \textit{pitcher} can have two possible interpretations:
\begin{enumerate}[label=\alph*.]
    \item A large jug
    \item The position in batting sports (mostly baseball) in which the player delivers the ball to the batter 
\end{enumerate}
\begin{figure}[ht!]
    \centering
    \begin{subfigure}{.45\linewidth}
        \centering
        \includegraphics[width=.5\linewidth]{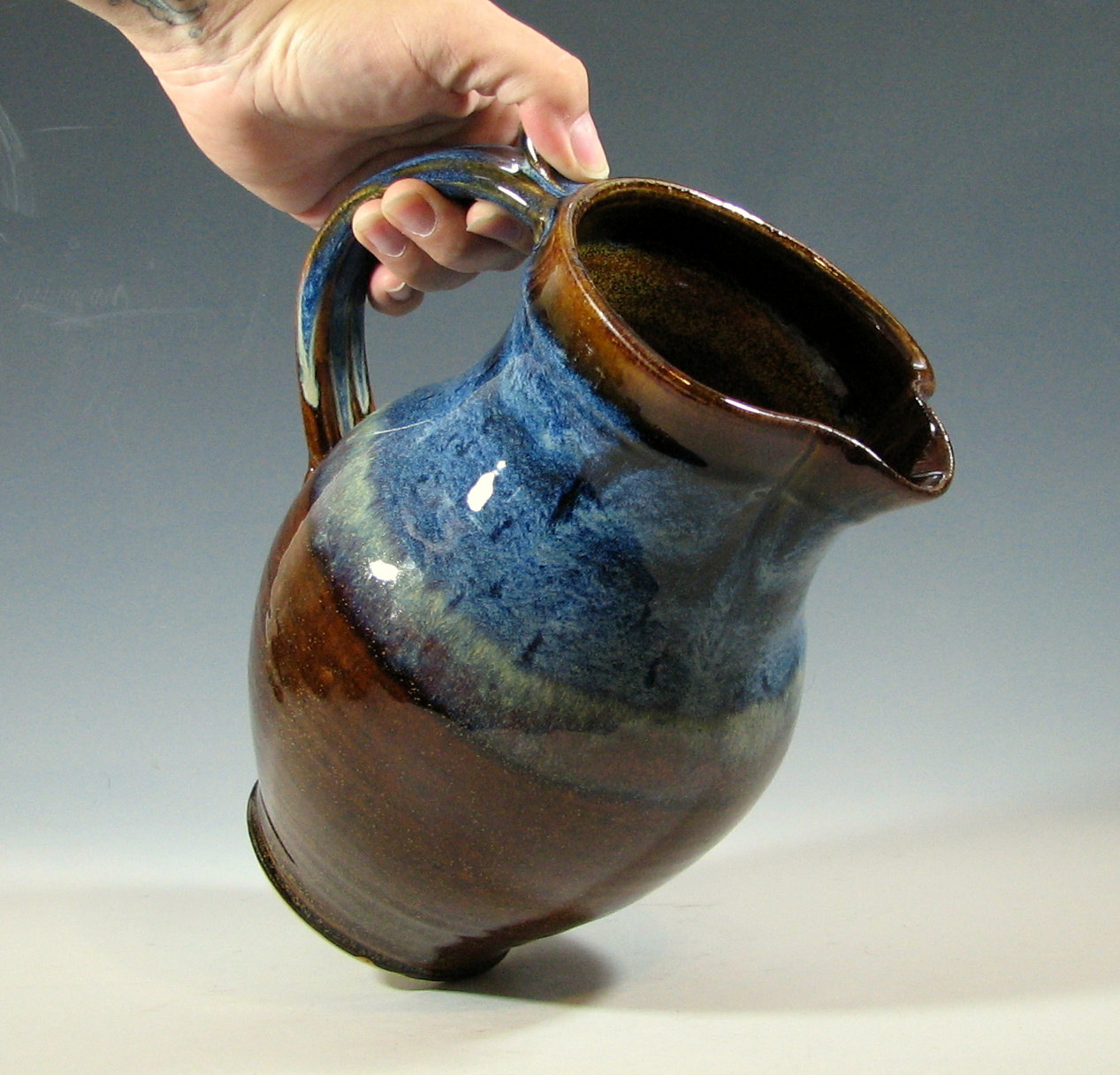}
        \caption{A jug}
    \end{subfigure}\qquad%
    \begin{subfigure}{.45\linewidth}
        \centering
        \includegraphics[width=.7\linewidth]{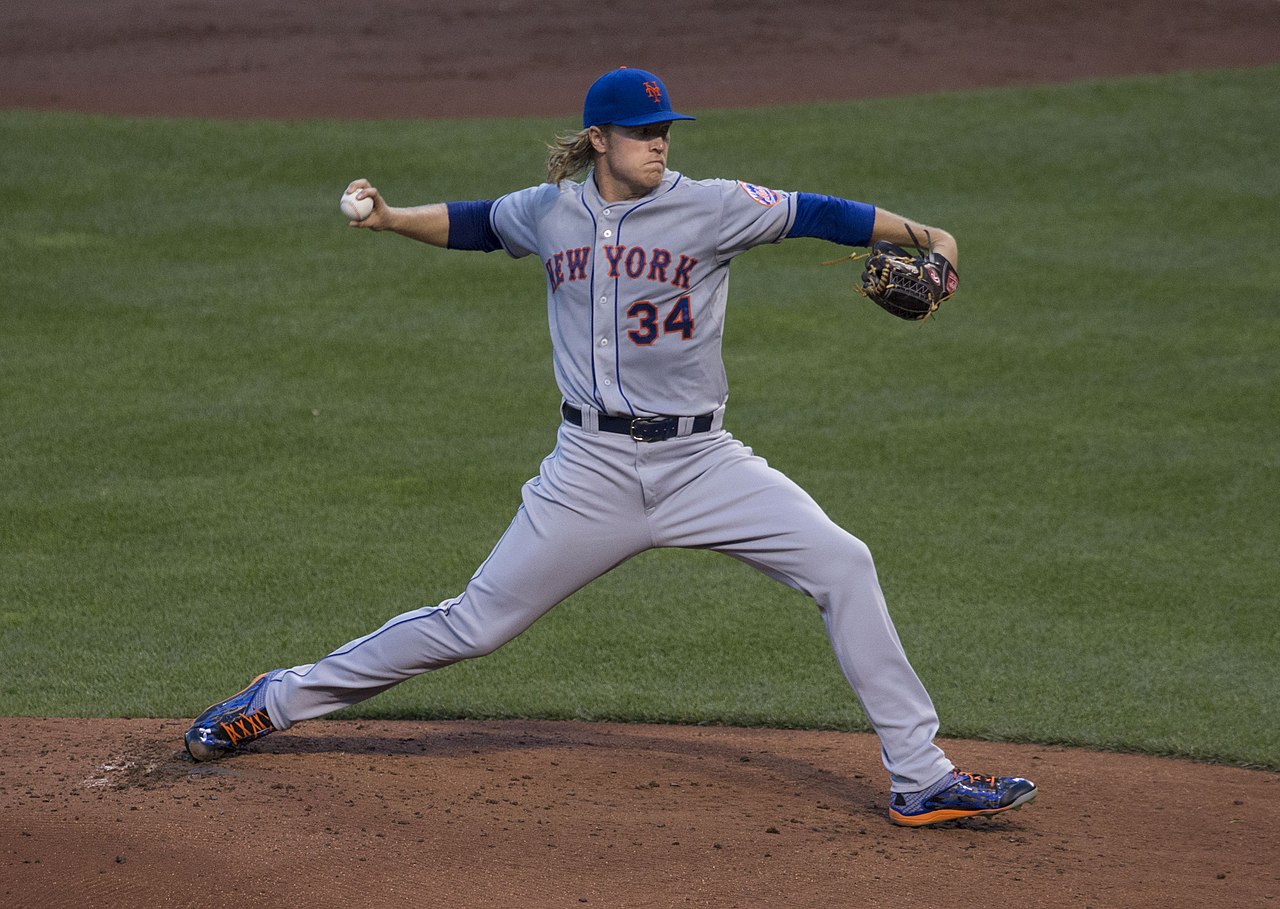}
        \caption{A baseball pitcher}
    \end{subfigure}
    \caption{The two interpretations of the word \textit{pitcher}}.
\end{figure}

If all of the words in the lexicon are lexically ambiguous, the set of outcomes for each measurement is not the singleton, and each interpretation will come with probabilities. Furthermore, these probability distributions will be \emph{dependent on their context}, meaning that the probability distributions are defined when all parties have made their choices of inputs (i.e. words). These probability distributions combined in different ways form empirical models that we will study in the next sections. 

\begin{table}[ht!]
    \centering
    \begin{tabular}{|c|c|}
        \hline\textbf{Quantum mechanics} & \textbf{Linguistics}\\\hline
        Parties & Grammatical roles/types\\\hline
        Inputs/Measurements & Words\\\hline
        Outputs/Outcomes & Interpretations\\\hline 
    \end{tabular}
    \caption{Analogy between quantum and linguistics scenarios.\label{tab:analogyQuantumLinguistics}}
\end{table}

\paragraph{} To estimate those probability distributions, we created two datasets that we will refer to as the \emph{corpus dataset} and the \emph{human judgment dataset}. For both datasets, we started with a list of homonymous and polysemous nouns from~\cite{Tanenhaus1979,RaynerDuffy1986}, and a list of homonymous and polysemous verbs from~\cite{PickeringFrisson2001,Shutova2010}. The list of these ambiguous words can be found in Appendix~\ref{app:listWords}. To simplify calculations, we also restricted the choice of meanings (resp. senses) to two distinct meanings (resp. senses) per word. For example, even though the verb \textit{tap} has multiple interpretations as a verb (e.g. touching something, secretly recording conversations, tap dancing, using up a resource, etc.), we decided to restrict to the following two meanings:
\begin{enumerate}[label=\alph*.]
    \item Hit something gently, e.g. \textit{tapping someone on the shoulder}
    \item Secretly listen or record what someone is saying using a device, e.g. \textit{tapping phones}
\end{enumerate}

\subsection{The corpus dataset}\label{subsec:corpus}

\paragraph{}Our first approach was to approximate these probability distributions using frequencies obtained from large corpora. To do so, we made use of two corpora, namely the British National Corpus (BNC)~\cite{BNC} containing 100 million words from a variety of sources and the \texttt{ukWaC} corpus~\cite{ukWaC} which contains more than 2 billion words obtained by crawling \texttt{.uk} web domains. Both corpora are part-of-speech annotated, but the semantic annotations had to be done by hand. Obtaining probabilities was then done as follows:
\begin{enumerate}
    \item As we are interested in SV and VO phrases, we recorded every occurrence in the corpus where one of the ambiguous nouns (from our list in Appendix~\ref{app:listWords}) was the subject or object of one of the ambiguous verb (also in the list in Appendix~\ref{app:listWords}).
    \item For each of these occurrences, we annotated the intended interpretation $x_v, x_n$ of the verb $v\in \mathcal{L}$ and the noun $n\in\mathcal{L}$. For instance, if we found the SV phrase \textit{the pitcher tapped} (i.e. $n=pitcher$ and $v=tap$) in the full sentence \textit{The pitcher tapped his glove and glanced over at the runner on first base}, then we would have annotated it as:
    \begin{align*}
        x_{pitcher}=& \text{ baseball player}\\
        x_{tap} =& \text{ hit something gently}
    \end{align*}
    \item For SV phrases, we then estimated the probability of the joint occurrence of the interpretations $x_v, x_n$ in the context ``$n$ is the subject of $v$'' as:
    \begin{equation}
        P\left[x_v,x_n~\middle|~n \text{ subject of }v\right] = \frac{N\left((n,v)\mapsto \left(x_n,x_v\right)\wedge n\text{ subject of }v\right)}{N\left(n\text{ subject of }v\right)}
    \end{equation}
    where $N$ records the number of occurrences of each of the events, and $(n,v)\mapsto \left(x_n,x_v\right)$ correspond to the event where the noun $n$ and verb $v$ are interpreted as $x_n$ and $x_v$ respectively. Similarly, in VO phrases, we calculated the probability distributions of measuring the joint occurrence of the interpretations $x_v$ and $x_n$ in the context ``$n$ is the object of $v$'' as:
    \begin{equation}
        P\left[x_v,x_n~\middle|~n \text{ object of }v\right] = \frac{N\left((n,v)\mapsto \left(x_n,x_v\right)\wedge n\text{ object of }v\right)}{N\left(n\text{ object of }v\right)}
    \end{equation}
\end{enumerate}
The obtained dataset is available at~\cite{corpusDataset}.

\subsubsection{Limitations of the dataset}
\paragraph{}This approach for collecting probabilities is intuitive, and the obtained probabilities are easily interpretable. In addition, large corpora are widely used in NLP and are easily (and freely) accessible. However, this approach also comes with some non-negligible drawbacks. 

\paragraph{}The most important one is the number of \emph{joint occurences} of two ambiguous words in a sentence, regardless of the grammatical relations imposed. These numbers were scarce, implying that the frequency obtained was not approaching the large number approximation of actual probabilities. 

\paragraph{}In addition, due to this small number of occurrences, not all possible combinations appeared in the corpora, and if they occurred, implausible interpretations in practice never appeared. For instance, we could easily imagine circumstances under which the phone conversations of a baseball pitcher would be recorded (e.g. if they were involved in a police inquiry). However, this meaning combination did not occur in either corpus. 

An explanation for this phenomenon is that written texts and conversations are meant to be understood as efficiently as possible. Combining ambiguous words in a single sentence may increase the sentence's overall ambiguity, making it unreadable. 

A probabilistic argument could also explain this. Namely, the probability of occurrence of a di-gram is smaller than the probability of occurrence of either word in the di-gram. Given that some of the words on the list in Appendix~\ref{app:listWords} did not occur very often to start with, e.g. the noun \textit{pitcher} and the verb \textit{to pen} only occurred 108 and 215 times respectively in the BNC, it would be unreasonable to expect the number SV or VO containing them to be high. Indeed, the phrase \textit{the pitcher pens}, although meaningful, never occurred.

\subsection{The human judgment dataset}\label{subsec:Amazon}

\paragraph{} To bypass the corpus dataset issues, we decided to ask \emph{humans} to rate the plausibility of the ambiguous phrases. In particular, this allowed us to \emph{choose} which phrases we wanted to study, and even highly unlikely meanings of phrases would be able to obtain a non-zero probability. In addition, fewer data points are necessary to get a reasonable estimate of the ``real'' probability distribution. Indeed, the judgment ratings of a single person are already approximations of the probability, whereas frequencies from corpora are dependent on the law of large numbers.

\paragraph{} The data collection proceeded as follows:
\begin{enumerate}[series=Amazon]
    \item We started with the same list of ambiguous nouns and verbs as per the corpus dataset (see Appendix~\ref{app:listWords}) and manually selected:
    \begin{itemize}
        \item 50 (noun, verb) pairs consisting of both homonymous nouns and verbs;
        \item 50 (noun, verb) pairs consisting of a homonymous noun and a polysemous verb;
        \item 50 (noun, verb) pairs consisting of a polysemous noun is polysemous and a homonymous verb;
        \item 50 (noun, verb) pairs consisting of both polysemous nouns and verbs.
    \end{itemize}
    The pairs selected were also checked to give a well-defined probability distribution for both SV and VO phrases, i.e. at least one of the meaning combinations will come with non-zero probability. These pairs can be found in the Appendix~\ref{app:human}.
    \item We then split the 400 phrases into batches of 8 randomly chosen phrases containing only SV or VO phrases. We submitted the batches on the Amazon Mechanical Turk platform, where they were sent to workers to annotate.
    \item 25 independent workers annotated each batch, and each worker was only allowed to annotate either an SV or a VO batch.
\end{enumerate}
We then presented the workers with the following task:
\begin{enumerate}[resume=Amazon]
    \item A phrase (e.g. \textit{the pitcher taps}) was shown to the annotator, who rated the plausibility of each of the meaning combinations as either:
    \begin{itemize}
        \item \textit{Impossible} (score: 0)
        \item \textit{Extremely unlikely} (score: 1)
        \item \textit{Very unlikely} (score: 2)
        \item \textit{Somewhat unlikely} (score: 3)
        \item \textit{Neutral} (score: 4)
        \item \textit{Somewhat likely} (score: 5)
        \item \textit{Very likely} (score: 6)
        \item \textit{Extremely likely} (score: 7)
    \end{itemize}
    % \red{This is similar to a Likert scale???}
    \item For each phrase, we obtain a probability distribution from an annotator by normalising their score as follows:
    \begin{equation}
        P\left[(n,v)\mapsto \left(x_n, x_v\right)\right] = \frac{S(x_n, x_v)}{\sum_{\tilde{x}_n, \tilde{x}_v} S\left(\tilde
        {x}_n, \tilde{x}_v\right)}
    \end{equation}
    Here, $S: \mathcal{I}_n\times\mathcal{I}_v\to \{0, \ldots, 7\}$ is the function associating a score of an interpretation $(x_n,x_v)\in \mathcal{I}_n\times\mathcal{I}_v$. For instance, the set of scores corresponding to the SV phrase \textit{the pitcher taps}:
    \begin{center}
        \scalebox{.8}{\begin{tabular}{|c|c|c|c|c|}
            \hline \multirow{2}{*}{\textbf{Interpretation}} &$pitcher\mapsto jug$& $pitcher\mapsto jug$ &$pitcher\mapsto player$ &$pitcher\mapsto player$\\
            & $tap\mapsto hit$& $tap\mapsto record$&$tap\mapsto hit$& $tap\mapsto record$\\\hline
            \textbf{Score}& 5 & 1 & 7 & 3\\\hline
        \end{tabular}}
    \end{center}
    would have led to the probability distribution:
    \begin{center}
        \scalebox{.8}{\begin{tabular}{|c|c|c|c|c|}
            \hline \multirow{2}{*}{\textbf{Interpretation}} &$pitcher\mapsto jug$& $pitcher\mapsto jug$ &$pitcher\mapsto player$ &$pitcher\mapsto player$\\
            & $tap\mapsto hit$& $tap\mapsto record$&$tap\mapsto hit$& $tap\mapsto record$\\\hline
            \textbf{Probability}& $\frac{5}{16}$ & $\frac{1}{16}$ & $\frac{7}{16}$ & $\frac{3}{16}$\\\hline
        \end{tabular}}
    \end{center}
    \item The probability distributions of all of the workers who annotated the same phrase were then averaged\footnote{Note that that is the same as averaging the score and then normalising them}.
\end{enumerate}

An example of the task, as presented to the annotators, is illustrated in Fig.~\ref{fig:AmazonExample}.

\begin{figure}[p!]
    \centering
    \includegraphics[width=.9\linewidth]{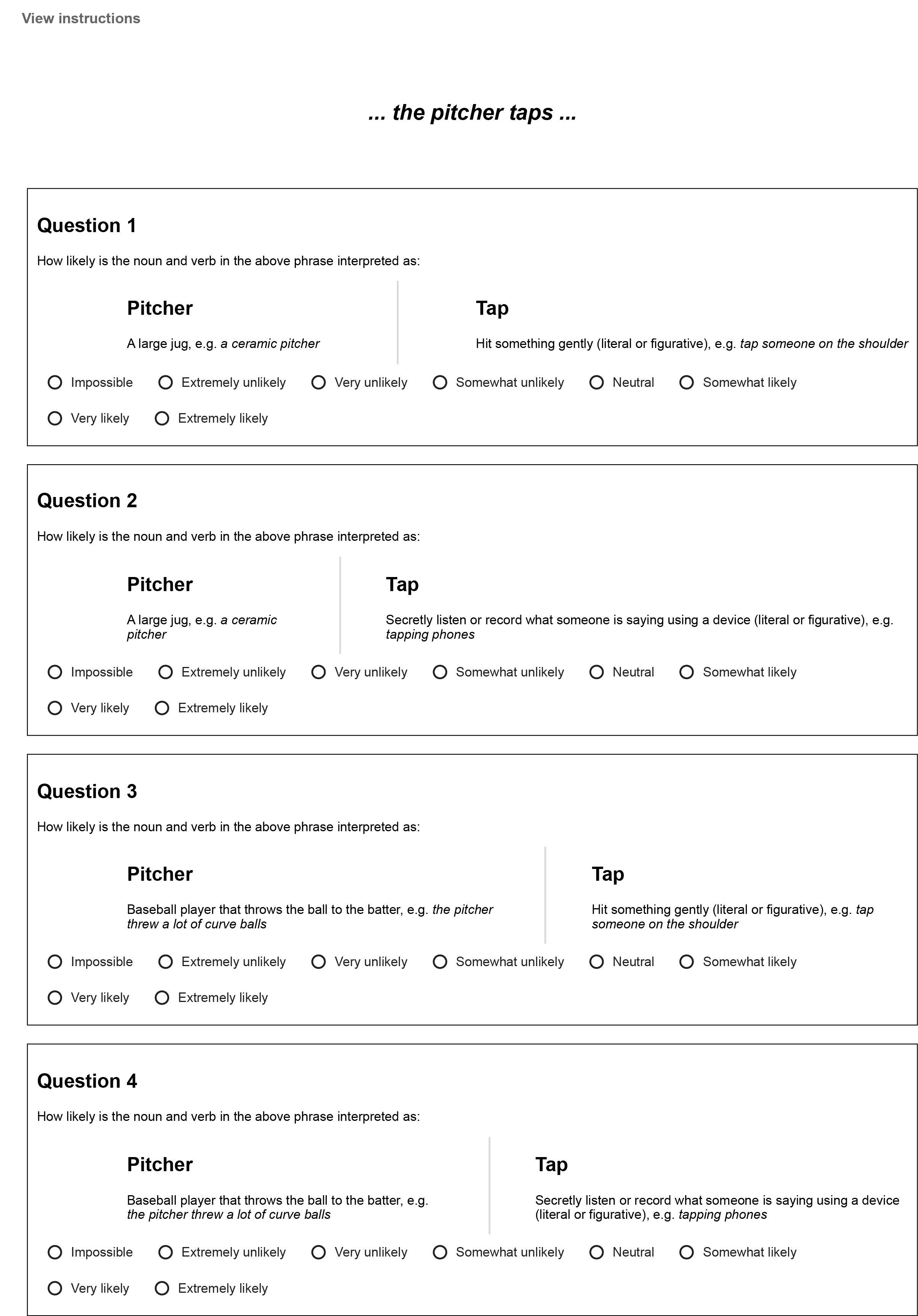}
    \caption{Example of a task seen by the Amazon Mechanical Turk workers.\label{fig:AmazonExample}}
\end{figure}
\section{On the quantum-like contextuality of ambiguous phrases}\label{sec:lexicalContext}
\paragraph{} It is often said that natural language is ``contextual'', notably in the context of lexical ambiguity. What is meant by that is that even though a single word, such as \textit{pitcher}, can have multiple interpretations (see Section~\ref{sec:lexicalMethods}), it may have a single accurate interpretation given a context. For example, consider the following sentences (taken from~\cite{FrazierRayner1990}):
\begin{enumerate}[label=(1\alph*)]
    \item\label{item:pithcerJug} Being so elegantly designed, the \textit{pitcher} pleased Mary.
    \item\label{item:pithcerBaseball} Throwing so many curve balls, the \textit{pitcher} pleased Mary.
\end{enumerate}
In the context~\ref{item:pithcerJug}, the only appropriate meaning of \textit{pitcher} is \textit{large jug}, whereas in~\ref{item:pithcerBaseball}, the only appropriate meaning is \textit{baseball player}.

However, the meaning of contextuality in quantum mechanics is different. Although, as in the linguistic sense, statistics of a system depend on their measurement contexts, the notion of contextuality is more complex than that. In quantum terminology, a system is said to be \emph{contextual} iff the dependence of the statistics on the contexts is ``essential'' in the sense that it cannot be attributed to other factors (see Section~\ref{sec:QDescription}). For instance, in the special case of non-locality, contextuality is observed if the statistics for the global measurement contexts cannot be explained entirely by its local behaviour. We are therefore interested in seeing whether the dependence of interpretation on (linguistic) context is also ``essential'' and whether we can witness quantum-like correlations between word interpretations.

\paragraph{}Historically, contextuality in quantum mechanics has been proven via inequalities, usually referred to as \emph{Bell inequalities}. These inequalities, however, depend on a crucial assumption of the system, namely that it is \emph{no-signalling}.

We recall from Section~\ref{sec:QDescription}, that a system is no-signalling iff the probability distributions agree on the intersections of their contexts, i.e. if all local sections are compatible. This condition is well motivated in the case of Bell-type experiments in Quantum Mechanics as the no-signalling property states that \emph{information} cannot be transmitted faster than light (i.e. non-locally). However, obtaining perfectly no-signalling empirical models is, in practice, unfeasible due to the finite nature of experimental results and the imperfections of the experimental apparatus. 

In addition, there is no fundamental reason why no-signalling should even apply in the case of ambiguities in natural language. In fact, the psycholinguistics literature would suggest that probability distributions arising from lexically ambiguous phrases \emph{should be signalling}. For example, the probabilities associated with the different meanings of \textit{pitcher} should be different in the phrase \textit{ceramic pitcher} to the ones in the phrase \textit{baseball pitcher}.

Several extensions of the notion of quantum contextuality have been proposed to account for signalling systems, including the Contextuality-by-Default (CbD) framework~\cite{Dzhafarov2016contextcontent} (see Section~\ref{subsec:CbD}), and the corrected Bell-inequalities approach~\cite{Emeriau2022} based on the sheaf-theoretic framework of contextuality~\cite{AbramskyBrad} (see Section~\ref{subsec:sheafContextuality}). In this section, we propose to apply these generalised frameworks to investigate the contextuality of lexically ambiguous phrases.

\subsection{Cyclic models of rank 2}\label{subsec:contextr2}

\paragraph{}We start by discussing the simplest possible models, which only contain two words, a noun $n$ and a verb $v$, and two different ways to combine them, in our case, as a subject-verb or verb-object phrase. In our previous analogy (see Table~\ref{tab:analogyQuantumLinguistics}), this means that we have two parties, corresponding to grammatical types noun ($N$) and verb ($V$), each of them having a unique choice of input, but for which we can obtain two different probability distributions (one corresponding to SV and one corresponding to VO). These models are called \emph{cyclic systems of rank 2} in the CbD literature~\cite{Dzhafarov2016contextcontent,Kujala2015,Dzhafarov2017}.

\begin{ex}
    Consider the case where $n=pitcher$ and $v=tap$, taking the interpretations from Section~\ref{sec:lexicalMethods}. This choice of words leads to a valid SV phrase, namely \textit{the pitcher taps ...}, and a VO phrase, namely \textit{... taps the pitcher}. We can then associate the probability distributions with these two phrases\footnote{This is not an empirical model obtained from the corpus or human judgment dataset. The probability distribution for the SV phrase is taken from an example in Section~\ref{sec:lexicalMethods}, and the probability distribution for the VO phrase is taken from the corpus dataset.}:
    \begin{center}
        \begin{tabular}{|c|c|c|c|c|}
            \hline$(N,V)$ & (a., a.) & (a., b.) & (b., a.) & (b., b.)\\\hline
            $(pitcher, tap)_{SV}$ & 5/16 & 1/16 & 7/16 & 3/16\\\hline
            $(pitcher, tap)_{VO}$ & 17/22 & 0 & 15/22 & 0\\\hline
        \end{tabular}
    \end{center}
    This pair of probability distributions will be our empirical model for these contexts.
\end{ex}
\begin{rmk}
    This situation is similar to the question-order effect investigated by Wang et al.~\cite{Wang2013}. The work of~\cite{Wang2013} consisted of a series of behavioural experiments where participants were asked the same set of questions in different orders, and it was shown that the answers appeared to depend on the order of the questions that were asked. In~\cite{Wang2013}, the authors argued that such experiments exhibit quantum-like contextuality. However, the authors of~\cite{Dzhafarov2016} demonstrated that the apparent contextuality was due to signalling.
\end{rmk}

\paragraph{}We note here that these models are trivially non-contextual within the generalisation of the sheaf-theoretic framework of contextuality. This can be seen as follows. Let us start with a decomposition of a given (signalling) empirical model $e$:
\begin{equation}\label{eq:exCyclic2SF}
    e = \lambda \cdot e_{NS} + (1-\lambda) e'
\end{equation}
In the case of the models described above, in any no-signalling model $e_{NS}$ satisfying \eqref{eq:exCyclic2SF}, the probability distributions of both the SV phrase and the VO phrase collapse to a single probability distribution. Therefore, any of the no-signalling empirical models $e_{NS}$ are trivially \emph{non-contextual} (i.e. a global probability distribution exists and corresponds to the unique probability distribution in the empirical model). 

However, these models can exhibit contextuality within the CbD framework. We will therefore focus on the CbD analysis of such cyclic systems of rank 2.

\paragraph{}As first demonstrated in~\cite{Wang2021a}, it is indeed possible to find instances of linguistic empirical models of this form that exhibit CbD-contextuality. First, we found two contextual examples in the corpus dataset: empirical models where $n=boxer$ and $v=adopt$, and $n=pitcher$ and $v=throw$. These empirical models are depicted in Fig.~\ref{fig:corpusContextual}. The degree of non-contextuality of both of these models can also be calculated to be respectively:
\begin{align}
    \mathsf{NCNT2}(e_{(boxer,adopt)}) =& -\frac{1}{30}\\
    \mathsf{NCNT2}(e_{(pitcher,throw)}) =& -\frac{7}{30}
\end{align}
We recall from Section~\ref{subsec:CbD} that the fact that these numbers are negative shows that these models are contextual. 
\begin{figure}[ht!]
    \centering
    \begin{minipage}[c]{\linewidth}
        \centering
        \begin{tabular}{|c|c|c|c|c|}
            \hline$(N,V)$ & $(0,0)$ & $(0,0)$ & $(0,0)$ & $(0,0)$\\\hline
            $(boxer, adopt)_{SV}$ & $1/4$ & $0$ & $0$ & $3/4$\\\hline
            $(boxer, adopt)_{VO}$ & $0$ & $29/30$ & $1/30$ & $0$\\\hline
        \end{tabular}
    \end{minipage}

    \vspace{20pt}

    \begin{minipage}[c]{\linewidth}
        \centering
        \begin{tabular}{|c|c|c|c|c|}
            \hline$(N,V)$ & $(0,0)$ & $(0,0)$ & $(0,0)$ & $(0,0)$\\\hline
            $(pitcher, throw)_{SV}$ & $0$ & $2/3$ & $1/3$ & $0$\\\hline
            $(pitcher, throw)_{VO}$ & $2/5$ & $0$ & $1/10$ & $1/2$\\\hline
        \end{tabular}
    \end{minipage}
    \caption{Empirical models of cyclic systems of rank 2 which were found to be CbD-contextual within the corpus dataset.\label{fig:corpusContextual}}
\end{figure}
\begin{rmk}
    The study described in~\cite{Wang2021a} provided the first instances of quantum-like contextuality in linguistic scenarios, which takes signalling into account. Previous work existing in the literature~\cite{Bruza2015} have claimed to have violated Bell inequalities in natural language data. However, these did not assume the no-signalling condition and were later found to be non-contextual within the CbD framework~\cite{Dzhafarov2016}.  
\end{rmk}

Other contextual cyclic systems of rank 2 emerged from the human judgment dataset. These are found in Fig.~\ref{subfig:HJContextual} with their degree of contextuality. 
\begin{figure}[ht!]
    \centering
    \begin{subfigure}[c]{.4\linewidth}
        \centering
        \begin{tabular}{|c|c|}
            \hline$(n,v)$ & $-\mathsf{NCTN}_2$\\\hline
            (pitcher, throw) & $0.233$\\\hline
            (boxer, adopt) & $0.033$\\\hline
        \end{tabular}
        \caption{Corpus dataset}
    \end{subfigure}\qquad%
    \begin{subfigure}[c]{.4\linewidth}
        \centering
        \begin{tabular}{|c|c|}
            \hline$(n,v)$ & $-\mathsf{NCTN}_2$\\\hline
            (file, admit) & $0.232$\\\hline
            (cabinet, reflect) & $0.199$\\\hline
            (volume, conduct) & $0.111$\\\hline
            (perch, file) & $0.073$\\\hline
            (plant, trap) & $0.052$\\\hline
            (press, file) & $0.042$\\\hline
            (swallow, admit) & $0.021$\\\hline
            (press, conduct) & $0.011$\\\hline
            (port, bill) & $0.008$\\\hline
            (organ, bill) & $0.001$\\\hline
        \end{tabular}
        \caption{Human judgment dataset\label{subfig:HJContextual}}
    \end{subfigure}
    \caption{All of the found examples of CbD-contextual cyclic systems of rank 2 (sorted by degree of contextuality).}
\end{figure}

\paragraph{}The presence of contextuality in quantum systems has some important consequences. From a foundational point of view, contextuality distinguishes between classical and quantum behaviours~\cite{Bell1966,Kochen1967}. From a computational point of view, it is a resource that allows quantum advantages to arise~\cite{Howard2014,Abramsky2019,Duarte2018}. The reader should refer back to Section~\ref{subsec:BackgroundContext} for a more detailed discussion. 

In the case of the CbD-contextuality, this interpretation could be clearer. The witnesses of CbD-contextual in linguistics imply that the influence of the context over meaning selection is highly non-trivial and consists of a ``truly contextual influence''. This finding concurs with the intuition that the context is the main factor in the interpretation of lexically ambiguous words. 

\subsubsection{Degrees of contextuality}
\paragraph{}In addition to finding out whether a given empirical model is contextual, it is also possible to calculate \emph{how contextual} (or equivalently how non-contextual) a given empirical model is within the CbD framework\footnote{It is also possible to calculate some measures of contextuality in the sheaf-theoretic framework, the contextual fraction $\mathsf{CF}$ being the most prominently used~\cite{Abramsky2019,Abramsky2017}. We are, however, not making use of them in this thesis.}. In this work, we will make use of $\mathsf{NCNT2}$ defined in Section~\ref{subsec:CbD} to see how the levels of ambiguity of words can influence the \emph{degree of contextuality}. 

\paragraph{Results}We calculated the degree of non-contextuality for the empirical models and classified them in terms of the ambiguity of their nouns and verbs (see Fig.~\ref{fig:NCNT2r2}). 

We found that the contextuality mostly depends on the ambiguity of the verb. Specifically, the degree of non-contextuality is higher if the verb is polysemous. Equivalently, a rank 2 system will be more contextual whenever the verb is homonymous. Although we observed this trend in both the corpus and human judgment datasets, it was only statistically significant\footnote{The definition of statistical significance is standardly defined as follows. A feature is said to be statistically significant iff the associated $p$-value is less than $0.05$.} in the latter. The $t$-test comparing the data concerning homonymous and polysemous had $p$-value $p=0.006$ in the human judgment dataset, as opposed to $p=0.314$ in the corpus dataset. 

In addition, we found no dependence on the levels of ambiguity of the nouns and the degree of (non-)contextuality in either dataset. The $p$-values were $p=0.38$ and $p=0.15$ for the corpus and the human judgment dataset, respectively.

\begin{figure}[ht!]
    \centering
    \begin{subfigure}[c]{.45\linewidth}
        \centering
        \includegraphics[width=\linewidth]{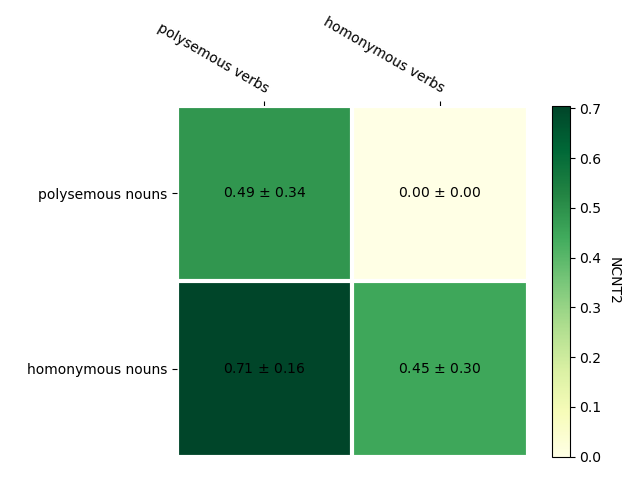}
        \caption{Corpus dataset}
    \end{subfigure}\qquad%
    \begin{subfigure}[c]{.45\linewidth}
        \centering
        \includegraphics[width=\linewidth]{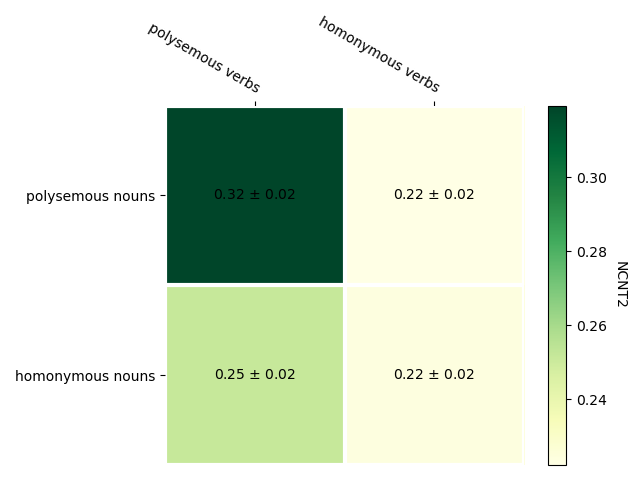}
        \caption{Human judgement dataset}
    \end{subfigure}
    \caption{Mean $\mathsf{NCNT2}$ depending on whether the noun and verb were polysemous or homonymous. The errors quoted are the standard errors of the means.\label{fig:NCNT2r2}}
\end{figure}

\paragraph{Discussion}The conclusions one can draw from the degree of (non-)contextuality are hindered by the need for a more operational interpretation of CbD-contextuality. Therefore, the interpretation of the result is not completely clear. 

A possible interpretation of finding contextual witnesses is the following. Recall that, in the case of cyclic systems of rank 2, the choices of contexts are whether the pair of words is found in an SV or VO phrase. This result suggests that a verb would use its context more whenever its possible interpretations are unrelated. 

This conclusion is consistent with the fact that homonymous verbs use the knowledge of both their subject and object in their first disambiguation stage. In contrast, polysemous verbs will use information from a broader context (if available) to be disambiguated.

\subsection{Cyclic models of rank 4}\label{subsec:contextr4}

\paragraph{}We have also studied a larger type of empirical model: cyclic systems of rank 4. These models are analogues of the (2,2,2)-Bell scenarios, i.e. scenarios that consist of 2 parties, each of these parties having 2 choices of inputs, and each local measurement having at 2 possible outcomes. In these models, we take the parties to represent specific dependency relations, e.g. main verb, subject, or object. In particular, we will focus on SV models (with parties $S$ and $V$) and VO models (with parties $V$ and $O$). In each model, the parties will choose between two words of the correct grammatical type (i.e. nouns for parties $S$ and $O$ and verbs for $V$ parties). Each word will then have two different possible interpretations.

\begin{ex}
    Let's consider an example of a VO model where $V$ is allowed to choose between the verbs in $\{tap, box\}$ and $O$ is allowed to choose the object of these verbs in the set $\{pitcher, cabinet\}$. The interpretations of \textit{tap} and \textit{pitcher} are taken to be the ones from Section~\ref{sec:lexicalMethods}. In addition, the interpretations of \textit{box} and \textit{cabinet} are respectively:
    \begin{enumerate}[label=\alph*.]
        \item To put something in a box, e.g. \textit{boxing up clothes and books}
        \item To practice the sport of boxing, e.g. \textit{He boxed professionally for years}
    \end{enumerate}
    and:
    \begin{enumerate}[label=\alph*.]
        \item A small group of the most important people in government, e.g. \textit{a cabinet minister}
        \item A piece of furniture with shelves, cupboards, or drawers, e.g. \textit{a glass-fronted cabinet}
    \end{enumerate}
    Then, from the corpus dataset, we obtained the following empirical model:
    \begin{center}
        \begin{tabular}{|c|c|c|c|c|}
            \hline$(V,O)$ & (a., a.) & (a., b.) & (b., a.) & (b., b.)\\\hline
            (tap, pitcher) & 17/22 & 15/22 & 0 & 0 \\\hline 
            (tap, cabinet) & 1/21 & 3/7 & 11/21 & 0 \\\hline
            (box, pitcher) & 3/4 & 1/4 & 0 & 0 \\\hline 
            (box, cabinet) & 3/7 & 10/21 & 2/21 & 0 \\\hline
        \end{tabular}
    \end{center}
\end{ex}

\paragraph{}These types of models have the potential to exhibit contextuality in both the Contextuality-by-Default framework and the extension of the sheaf-theoretic framework for corrected Bell inequalities. However, none of the empirical models obtained in the corpus or human judgment dataset were contextual (in either framework). 

Regarding the CbD framework, this could be explained as the probability of obtaining a contextual model decreases as the systems get bigger. We can estimate these probabilities by random sampling from a uniform distribution, and the likelihood of obtaining a CbD-contextual cyclic system of rank 2 is about $17\%$ and drops to about $0.01\%$ for cyclic systems of rank 4. In addition, violations of the corrected Bell inequalities of~\cite{Emeriau2022} is a stronger condition than the CbD notion of contextuality (see Section~\ref{subsec:CbD}). Therefore, obtaining a contextual model within this framework is also quite unlikely. 

Even though we haven't found any contextual cyclic system of rank 4, this does not mean that \emph{no such system} is contextual. A larger scale experiment will be needed to obtain witnesses of contextuality in these types of models -- we leave this to future work.

\subsubsection{Degrees of contextuality}
\paragraph{}Although we have not been able to find contextual witnesses in cyclic systems of rank 4, we can study the degrees of (non-)contextuality of the obtained empirical models, as we did for cyclic systems of rank 2. 

\paragraph{Methods}We want to know how the levels of ambiguity of words of different syntactic roles (i.e. subject, verb, or object) influence the degree of contextuality of the respective empirical models. In addition, each empirical model can have:
\begin{itemize}
    \item 0 polysemous verbs \& 2 homonymous verbs;
    \item 1 polysemous verb \& 1 homonymous verb;
    \item 2 polysemous verb \& 0 homonymous verbs;
\end{itemize}
and similarly for subjects and objects. Therefore, we will classify the SV and VO models in terms of their numbers of homonymous verbs, subjects, and objects\footnote{We could have equivalently chosen to classify them in terms of their number of polysemous verbs, subjects, and objects. However, the adopted convention fits our intuition that homonymous words are, to some extent, ``more ambiguous'' than polysemous ones.}. We will mostly focus this analysis on the human judgment dataset since the corpus dataset did not have empirical in all categories. For example, the instances recorded did not lead to any SV cyclic system of rank 4 with two polysemous subjects and two homonymous verbs.

In addition, we are interested in the monotonic relations between the number of homonymous words (of each type) and the degree of contextuality, i.e. whether contextuality increases or decreases as the number of homonymous verbs, subjects, or objects increases. On the other hand, the existence of non-monotonic relations between these quantities does not lead to easily interpretable results. For example, it is unclear what it would mean for the contextuality to be higher whenever we can choose between a polysemous and a homonymous verb. Hence, we will make use of Spearman's rank correlation coefficient $\rho$, which will assess whether a monotonic relation exists between two random variables, one being the number of homonymous verbs, subjects, or objects and the other being the degree of non-contextuality $\mathsf{NCNT2}$.

\paragraph{Results}The values of the degrees of non-contextuality for each class of empirical models can be found in Fig.~\ref{fig:NCNT2r4}. An ANalysis Of Variance (ANOVA) first shows that the degrees of non-contextuality are statistically different across the different categories ($p<10^{-4}$ for SV models and $p=0.005$ for VO models). 

In addition, we observe that in SV models, the degree of non-contextuality increases as the number of polysemous verbs and subjects increases. This can be verified using Spearman's correlation coefficient $\rho$, which was $\rho=-0.27$ with associated $p$-value $p<10^{-6}$ for the correlation with respect to the number of homonymous verbs, and $\rho=-0.20$ with $p=0.0002$ for the correlation with respect to the number of homonymous subjects. In both cases, the negativity of the $\rho$'s shows that $\mathsf{NCNT2}$ decreases as the number of homonymous verbs and subjects increases. In addition, the fact that $p$-values are both $<0.05$ shows that we are more than $95\%$ confident that these coefficients are different from $0$ (i.e. a correlation exists with a $95\%$ confidence). 

In VO models, these trends are much milder and, in fact, not statistically significant. We can see that the degree of direct influence decreases as the number of homonymous verbs decreases ($\rho=-0.11$, $p=0.053$). However, no monotonic correlation is found with respect to the number of homonymous objects ($\rho=0.03$, $p=0.52$).   
\begin{figure}[ht!]
    \centering
    \begin{subfigure}{.45\linewidth}
        \centering
        \includegraphics[width=\linewidth]{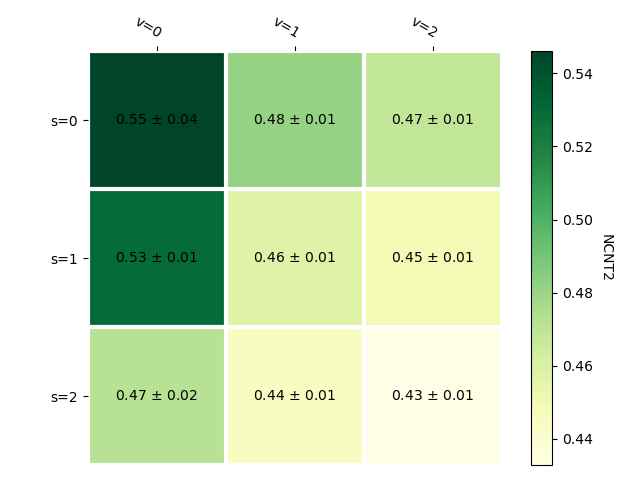}
        \caption{SV models}
    \end{subfigure}\qquad %
    \begin{subfigure}{.45\linewidth}
        \centering
        \includegraphics[width=\linewidth]{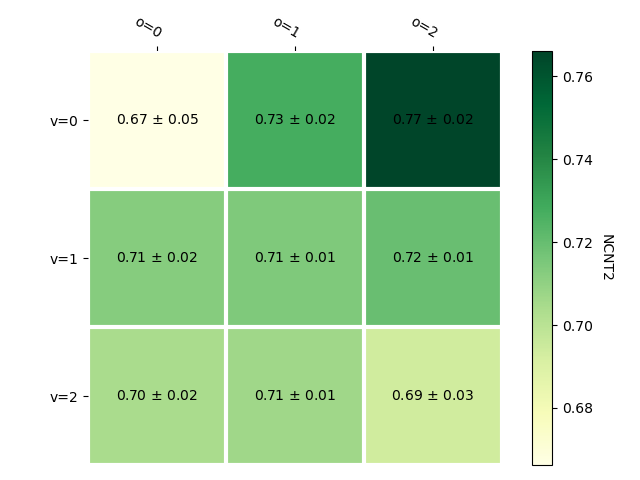}
        \caption{VO models}
    \end{subfigure}
    \caption{Averages of the $\mathsf{NCNT2}$ as a function of the number of homonymous verbs, subjects, and objects. \label{fig:NCNT2r4}}
\end{figure}

\paragraph{Discussion}As in cyclic systems of rank 2, the main factor influencing the (non-)contextuality of the systems is the levels of ambiguity of the verbs. Indeed, we have already shown that the degree of contextuality increases as the number of homonymous verbs increases. This reinforces the intuition that homonymous verbs use their arguments (here, the context) more intrinsically than their polysemous analogues. 

In addition, the same applies to homonymous nouns in SV phrases. Namely, homonymous nouns would lead to a higher amount of ``true'' contextuality in the obtained probability distributions. 

The fact that this occurs in SV models only (and not in VO models) suggests that this finding relates to the position of the disambiguating context of homonymous nouns. In particular, a slowdown in the reading time of homonymous nouns has been observed when the disambiguating context is found after the target noun~\cite{FrazierRayner1990}, but this slowdown was lesser for polysemous nouns. Hence, in the case of SV phrases, where the disambiguating context can only be the verb (situated after the noun), the observed high degree of contextuality suggests that homonymous nouns use their context in a less trivial way when the disambiguating context is found after it.

% \blue{Add summary / summary table}
\section{Degrees of signalling and the levels of ambiguity}\label{sec:lexicalSignal}
\paragraph{}In the previous section, we saw that signalling is the main obstacle to studying contextuality in natural language data. However, the presence of signalling is not in itself a weakness of natural language data. In fact, in most linguistic studies of lexical ambiguity, the fact that the interpretation of ambiguous words changes with the context, i.e. signalling, is the focus point. Here, we propose to study the amount of signalling present in the different empirical models and see what conclusions can be drawn.

Two ways of quantifying signalling are available to us, namely $\mathsf{SF}$ coming from the sheaf-theoretic framework of contextuality, and $\Delta$ from the CbD framework. In addition, the latter can be split with respect to the individual input choices. Therefore, we can study the amount of signalling coming from a specific choice of word or, equivalently, how different the probability distributions of a given word are in the different contexts it is found in. We will then study the correlations between these different quantities and the levels of ambiguity of words in cyclic systems of rank 2 and 4.

\begin{rmk}
    Although the sheaf-theoretic and Context-by-Default frameworks are essential in the definition of the signalling fraction and the degrees of direct influence respectively, the mathematical machinery they employ are only used implicitly in this Section.
\end{rmk}

\begin{rmk}
    The results of this section may appear challenging to interpret and reason about, most of all because some results were verified in one dataset but not the other. In addition, it is not common in the linguistic literature to study phrases where more than one word is clearly (lexically) ambiguous. In particular, not much is known about what happens if the context of an ambiguous word is itself ambiguous.
\end{rmk}

\subsection{Cyclic systems of rank 2}\label{subsec:signalr2}

\paragraph{}We start with the cyclic systems of rank 2, which we recall contains a noun/verb pair and two different contexts, SV and VO. 

\subsubsection{Overall degrees of signalling}
\paragraph{}We start by looking at the total degree of direct influence $\Delta$, as well as the signalling fraction $\mathsf{SF}$, both of which measure how signalling the whole system is (we also recall from Section~\ref{subsec:CbD} that these two quantities are not unrelated). Then, as we did in the analysis of the degree of contextuality in the previous section, we partition both our datasets in terms of the levels of ambiguity of the nouns and the verbs. 

\paragraph{Results}The overall degrees of signalling for all of the different classes of empirical models are shown in Fig.~\ref{fig:DeltaSFr2}. We observe the same trend in both of these datasets, namely that the overall signalling of the system increases as the number of polysemous words increases. An ANOVA reveals a statistical difference between $\mathsf{SF}$ or $\Delta$ and the different classes of models in \emph{the human judgment dataset only}. The $p$-values were $p=0.015$ for $\Delta$ and $p<10^{-9}$ for $\mathsf{SF}$ in the human judgment dataset, as opposed to $p=0.98$ and $p=0.70$ for $\Delta$ and $\mathsf{SF}$ respectively in the corpus dataset. 

In the human judgment dataset, we can verify using a $t$-test that $\Delta$ and $\mathsf{SF}$ are higher for nouns with multiple senses ($p=0.04$ and $p=0.02$ respectively). Similarly, $\Delta$ and $\mathsf{SF}$ are also higher for verbs with multiple senses ($p=0.02$ and $p=0.03$ resp.). No statistically significant trend could be found in the corpus dataset.

\begin{figure}[ht!]
    \centering
    \begin{subfigure}{\linewidth}
        \centering
        \includegraphics[width=.45\linewidth]{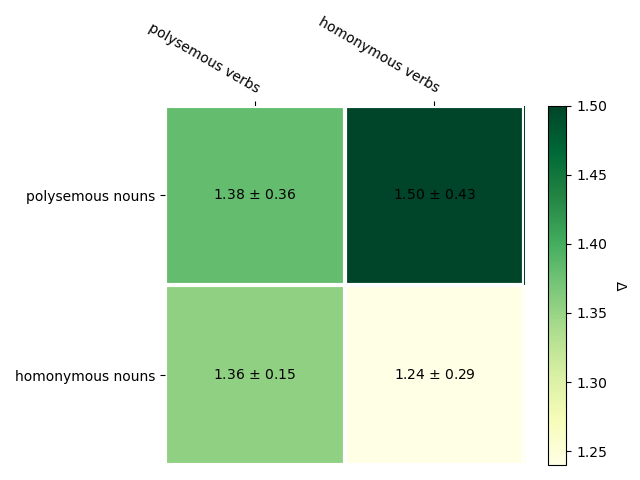}\qquad\includegraphics[width=.45\linewidth]{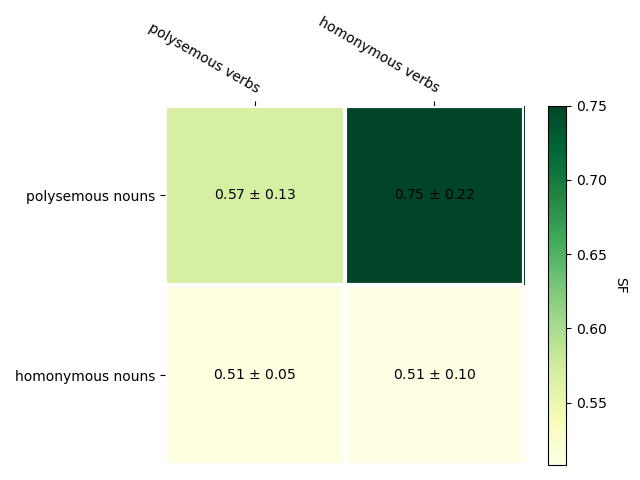}
        \caption{Corpus dataset}
    \end{subfigure}
    \begin{subfigure}{\linewidth}
        \centering
        \includegraphics[width=.45\linewidth]{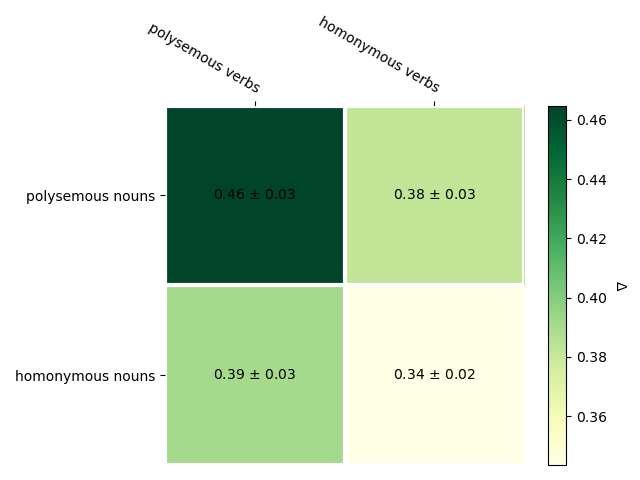}\qquad \includegraphics[width=.45\linewidth]{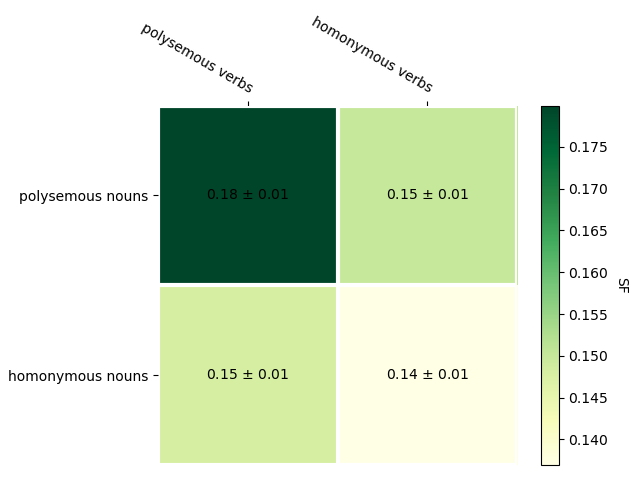}
        \caption{Human judgement dataset}
    \end{subfigure}
    \caption{Averaged $\Delta$ and $\mathsf{SF}$ of the cyclic systems of rank 2, depending on whether the noun and verb were polysemous or homonymous. The errors quoted are the standard errors of the means.\label{fig:DeltaSFr2}}
\end{figure}

\paragraph{Discussion}This phenomenon could be explained using the theory of \emph{underspecification}, in which the interpretation of polysemous words is essentially \emph{created} from its context, whereas meanings of homonymous words are \emph{selected} using contextual information~\cite{underspecification} (see also Section~\ref{subsec:lexicalPsycho}). Hence, assuming that the SV and VO contexts are unrelated, this would imply that there is potential for having different interpretations of the same polysemous word. On the other hand, one may argue that its context window is too small for homonymous words to swing widely from one meaning to the other. %\blue{One other interpretation would be that polysemous words take the grammatical context into account, whereas homonymous words may mostly consider co-occurrence statistics.}

\subsubsection{Individual degrees of signalling}
\paragraph{}Given the above interpretation of the results, we also expect the individual degrees of direct influence to follow the same tendency. 

\paragraph{Results}This is, however, not quite verified at the level of individual degrees of direct influence. In particular, in the human judgment dataset, the degree of direct influence from the verb was higher whenever the verb was polysemous, which is consistent with our hypothesis. On the other hand, the degree of direct influence from the noun was higher whenever the noun was \emph{homonymous}. In both cases, the observed effect was relatively small, and the difference in individual direct influence was $0.06$ for verbs of different levels of ambiguity and $0.08$ for nouns of different levels of ambiguity. In both cases, these differences, however small, were still found to be statistically significant (with respective $p$-values $p=0.006$ and $p<10^{-4}$). 

In the corpus dataset, the reverse trend is observed (i.e. homonymous verbs and polysemous nouns had, on average, higher degrees of direct influence). However, none of the differences were statistically significant ($p=0.27$ for verbs and $p=0.78$ for nouns). 

\paragraph{Discussion}Due to the size of the effect and the fact that datasets did not agree on the findings, we could conclude that such a difference may be due to statistical fluctuations. However, if these effects were in fact ``true'', this would imply that some more complex mechanism occurs in the disambiguation process of both nouns and verbs.

\subsubsection{On the disambiguation windows of ambiguous verbs}

\paragraph{The corpus dataset}In~\cite{Wang2021b}, we showed that the \emph{proportion} of the direct influence coming from the verb was statistically significantly higher for homonymous verbs than for polysemous verbs (see Fig.~\ref{fig:JCogSciHist}) in the corpus dataset. This fact was attributed to the difference in disambiguation windows in homonymous and polysemous verbs. 

Indeed, we recall that the first disambiguation stage for homonymous verbs happens as soon as all of its arguments are known. In contrast, the reader only selects the senses of polysemous verbs at the end of the phrase or sentence. 

Now, as in cyclic systems of rank 2, we are studying the difference in distributions between subject-verb and verb-object contexts. We can expect differences in the distributions of the interpretations of homonymous verbs, as we are precisely within this first disambiguation stage. 

The fact that we did not observe any such effect for nouns was explained by the fact that the changes of contexts studied remain within the same disambiguation window for both polysemous and homonymous nouns.

\begin{figure}[ht!]
    \centering
    \includegraphics[width=\linewidth]{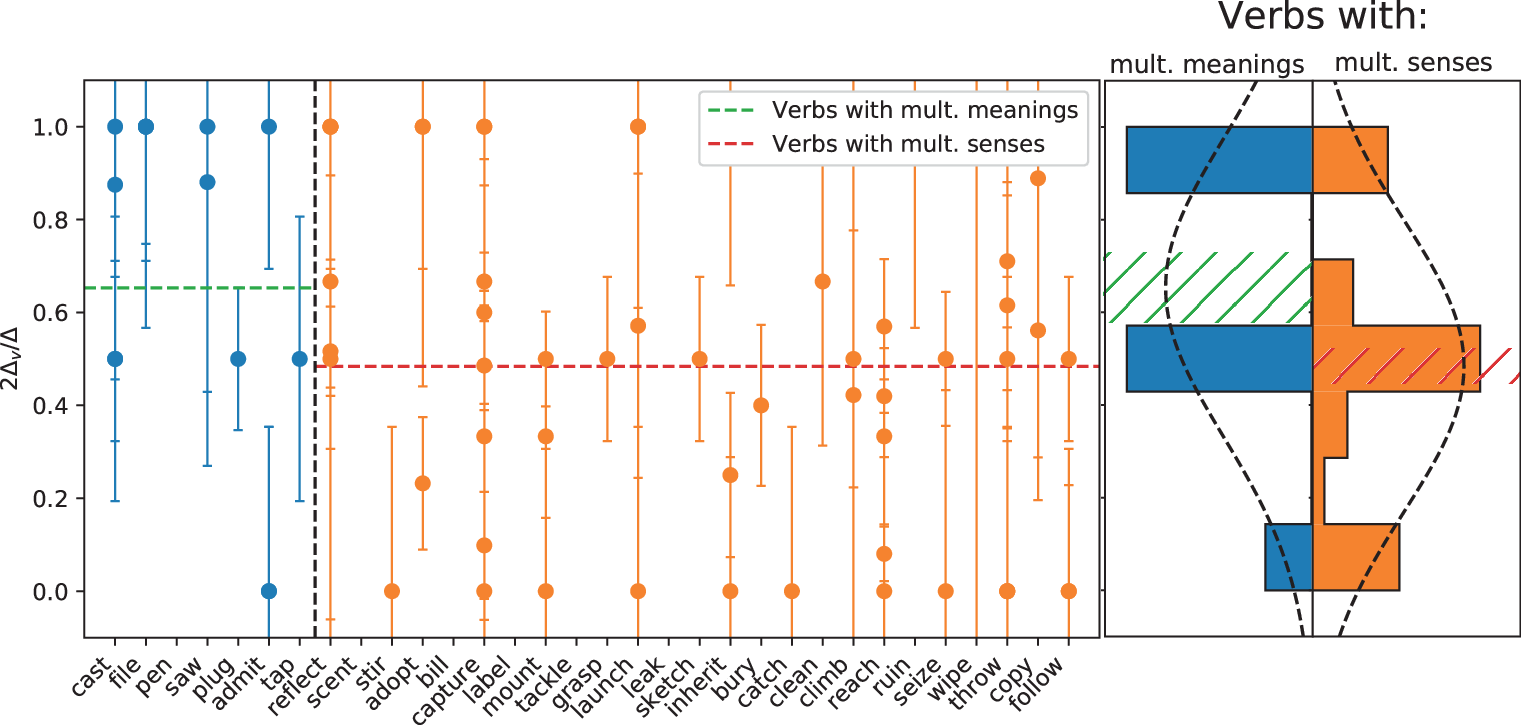}
    \caption{Relative contributions of the verb content to the overall direct influence given different levels of ambiguity for the verb or the noun. The left-hand figures correspond to the contributions of the verb; averages for each level of ambiguity are shown with dotted lines. The right-hand figures correspond to the distributions of these data points. The hatched area depicts the 66\%-confidence intervals for the means. The fitted normal distributions are also plotted.\label{fig:JCogSciHist}}
\end{figure}

\paragraph{The human judgment dataset}In the human judgment dataset, this relation was not verified. In fact, we observed the opposite, namely that the proportion of direct influence from polysemous verbs was greater than the one for homonymous verbs, but this was not statistically significant ($p=0.22$). 

On the other hand, the degree of direct influence from the verb was, on average, higher than the degree of direct influence from the noun ($p=0.003$ for a related $t$-test). This observation is consistent with the fact that verbs need their arguments to be disambiguated, whereas nouns do not. Hence, the variations in the interpretations are greater between the two contexts (SV or VO) compared to the variations in the interpretations of the nouns. However, we would have expected from the previous discussion that homonymous verbs would have higher degrees of direct influence, which is not the case in the human judgment dataset (see above). 

In addition, we should also note that this is \emph{not} at all observed in the corpus dataset ($p=0.51$). This lessens the findings of~\cite{Wang2021b}, but, on the other hand, offers alternative evidence that readers do not disambiguate verbs and nouns in the same way and that the presence of the arguments of the verb is essential in their disambiguation process.

\subsubsection{On the ambiguity of the context}

\paragraph{}Lastly, some cross-effect between the ambiguity of one word and the degree of direct influence of the other has been observed in both datasets. 

\paragraph{Results}In the human judgment dataset, $\Delta_v$ was higher whenever the verb was combined with a polysemous noun (with an average difference of $0.13$ and $p<10^{-9}$). We also observed a similar effect in the corpus dataset, but the effect size is much smaller (average difference of $0.014$) and not statistically significant ($p=0.95$). 

In addition, $\Delta_n$ was slightly higher whenever the noun in the empirical model was combined with a polysemous verb (average difference of $0.004$ in the human judgment dataset and $0.27$ in the corpus dataset). However, in this case, none of the observed differences were statistically significant ($p=0.82$ in the human judgment dataset and $p=0.21$ in the corpus dataset, respectively). 

\paragraph{Discussion}The interpretation of this fact would be related to something that has yet to be studied in the psycholinguistic literature, namely, what happens if the context itself is ambiguous? This result suggests that if the context is polysemous or underspecified, the interpretation of a target word becomes more variable. In contrast, if the context can have several unrelated interpretations, then the interpretation of the target word is also more defined. This will be made more transparent and intuitive in Section~\ref{sec:lexicalCausal} when studying the \emph{causality} of the systems instead of its signalling.

\subsection{Cyclic systems of rank 4}\label{subsec:signalr4}

\paragraph{}We now look at the degrees of signalling of the cyclic systems of rank 4. As in Section~\ref{subsec:contextr4}, we do not have enough data in the corpus dataset to cover all possible combinations of the number of polysemous and homonymous subjects, verbs, and objects. Hence, we will shift our focus to the human judgment dataset only. 

\begin{rmk}
    In terms of notation, we will denote as $\Delta_A$, $A\in \{S, V, O\}$ for the total amount of direct influence coming from the two subjects, verbs, or objects (i.e. the different parties), and $\Delta_a$ for $a\in \{s, v, o\}$ for the individual direct influence coming from a particular choice of subject, verb or object (i.e. the individual choices of inputs). For instance, in the SV empirical model with inputs $\left\{s_1, s_2, v_1, v_2\right\}$, we would have:
    \begin{align*}
        \Delta_S =& \Delta_{s_1} + \Delta_{s_2}\\
        \Delta_V =& \Delta_{v_1} + \Delta_{v_2}
    \end{align*}
    and:
    \begin{equation*}
        \Delta = \Delta_S + \Delta_V = \sum_{w\in \left\{s_1, s_2, v_1, v_2\right\}} \Delta_w
    \end{equation*}
\end{rmk}

\subsubsection{Subject-Verb vs. Verb-Object}

\paragraph{} We start by looking at the difference in signalling between SV and VO models. 

\paragraph{Results}The degree of signalling, measured by both the signalling fraction $\mathsf{SF}$ and CbD measure $\Delta$ are (statistically) significantly higher for SV models compared to the VO models (the average difference was $0.38$ for $\Delta$ and $0.08$ for $\mathsf{SF}$, with respective $p$-values $p<10^{-52}$ and $p<10^{-40}$). 

\paragraph{Discussion}This suggests that the interpretations of VO phrases are easier to obtain, and their interpretations are more well-defined than those of SV phrases. This concurs with the usual grouping of VO compounds as VPs, whereas subject-verb does not generally correspond to anything special, for example, in context-free grammars~\cite{GentnerFrance2013}. %\red{+ other refs}

\begin{table}[ht!]
    \centering
    \begin{subtable}{\linewidth}
        \centering
        \begin{tabular}{ccc}
            SV & \includegraphics[width=.4\linewidth, align=c]{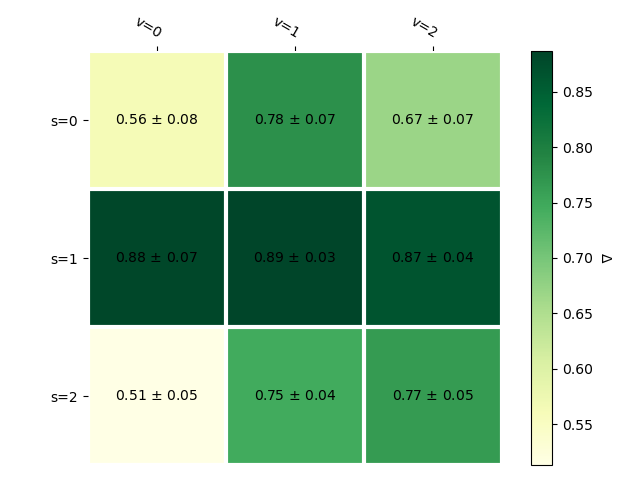} & \includegraphics[width=.4\linewidth, align=c]{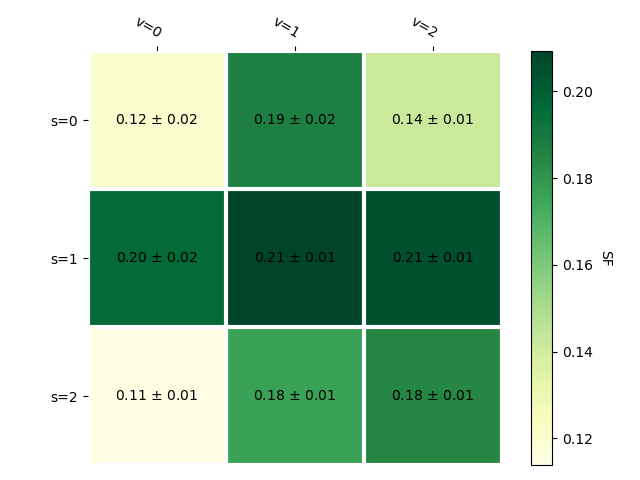}\\
            VO & \includegraphics[width=.4\linewidth, align=c]{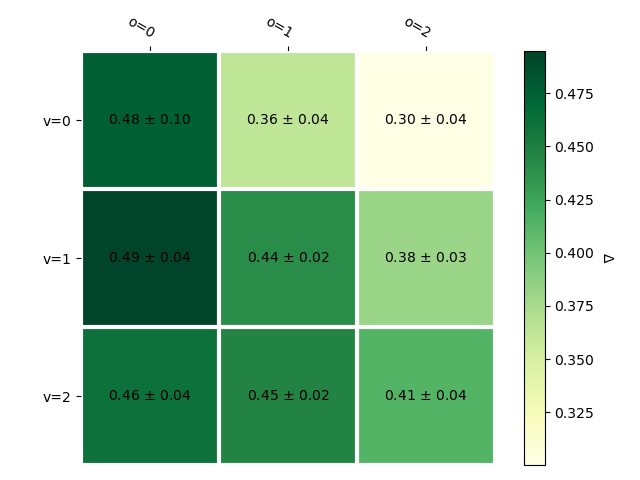} & \includegraphics[width=.4\linewidth, align=c]{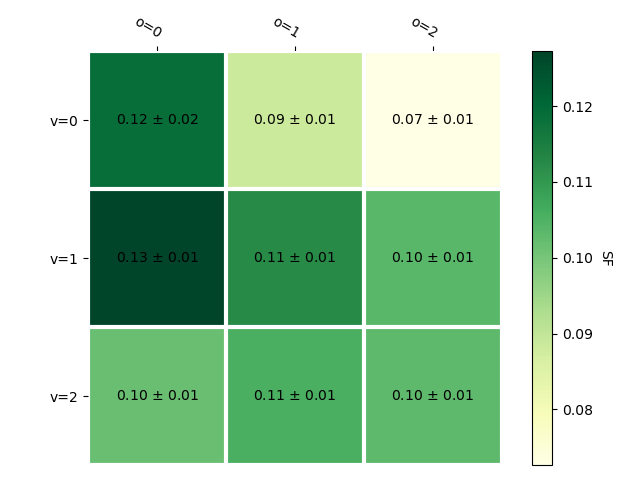}
        \end{tabular}
        \caption{$\mathsf{SF}$ and $\Delta$ as a function of the number of homonymous subjects, verbs, and objects. The errors quoted are the standard errors of the mean.}
    \end{subtable}

    \begin{subtable}{.45\linewidth}
        \centering
        \begin{tabular}{|c|c|c|c|}
            \hline&& $R$ & $p$\\\hline
            \multirow{2}{*}{$\Delta$} & Subject & 0.0170 & 0.7607\\\cline{2-4}
            & Verb & 0.0329 & 0.5568\\\hline 
            \multirow{2}{*}{$\mathsf{SF}$} & Subject & 0.0466 & 0.4045\\\cline{2-4}
            & Verb & 0.05353 & 0.3383\\\hline 
        \end{tabular}
        \caption{SV}
    \end{subtable}\qquad%
    \begin{subtable}{.45\linewidth}
        \centering
        \begin{tabular}{|c|c|c|c|}
            \hline&& $R$ & $p$\\\hline
            \multirow{2}{*}{$\Delta$} & Verb & 0.13755 & 0.01349\\\cline{2-4}
            & Object & -0.1560 & 0.005 \\\hline 
            \multirow{2}{*}{$\mathsf{SF}$} & Verb & 0.0650 & 0.2444\\\cline{2-4}
            & Object & -0.1289 & 0.0207\\\hline 
        \end{tabular}
        \caption{VO}
    \end{subtable}
    \caption{Analysis of the monotonicity of the amount of signalling and the number of homonymous words in SV and VO empirical models.\label{tab:DeltaSFr4}}
\end{table}

\subsubsection{Verbs vs. Nouns}

\paragraph{} As in the cyclic system of rank 2, the degree of direct influence from the verb, $\Delta_V$, was found to be higher than the direct influence from both the subject ($\Delta_S$) and the object ($\Delta_O$); in SV models, the average difference was $0.1961$ with associated $p$-value $p<10^{-20}$, and in VO, the average difference was $0.08$ with associated $p$-value $p<10^{-13}$. As before, we justify this finding as the verbs are, in general, not fully disambiguated at these stages, and therefore, their interpretation is more variable.

\subsubsection{Degrees of signalling for homonymous and polysemous words}
\paragraph{Ambiguity of the objects}We observe that in VO models, the degrees of signalling (both $\mathsf{SF}$ and $\Delta$) increase as the number of polysemous objects increases. The obtained Spearman's correlation coefficients are $\rho=0.16$ for $\Delta$ and $\rho = 0.13$ for $\mathsf{SF}$, and the respective $p$-values are $p=0.005$ and $p=0.02$.

This finding is consistent through all of the fine-grained measures of direct influence. Indeed, $\Delta_O$ was increasing the more polysemous objects were in the empirical models ($\rho = 0.14$, $p=0.01$), and the individual degrees of direct influence $\Delta_o$ were also higher for polysemous than homonymous objects (average difference of $0.02$, and associated $p$-value $p=0.03$). 

This resonates with the theory of underspecification which happens in polysemous words. In other words, since noun senses are more dependent on their context than noun meanings, it makes sense that the interpretation of a polysemous word would vary more than the interpretation of a homonymous word.

% \subsubsection{}

\paragraph{Ambiguity of other dependencies}The effect described above is only observed in objects in VO models. In the other cases, the different degrees of signalling did not appear to be (monotonically) related to the numbers of polysemous/homonymous subjects or verbs (see Table~\ref{tab:DeltaSFr4}), or if they ``exist'', their size is very small. 

We will attribute this lack of relations between the ambiguity of the different words and their degrees of direct influence to other deciding factors in the process of their disambiguation. For instance, the disambiguation of the verbs in both SV and VO models will be affected not only by their own levels of ambiguity but also by the ambiguity of their arguments (i.e., their subject or objects). Similarly, the subject could also be affected by their context, either by the position or ambiguity of the context.

\subsubsection{On the ambiguity of the context}

\paragraph{} As for cyclic systems of rank 2, it is possible to find some relations between the degree of ambiguity of the context and the degree of direct influence of a given target word, although these correlations were very moderate. 

\paragraph{Results}In both SV and VO models, the degree of direct influence of the verbs $\Delta_V$ increased as the number of polysemous arguments (subject or object) increased. Spearman's correlation was $\rho = 0.08$ in VO systems and $\rho=0.005$ in SV systems, and neither were statistically significant ($p=0.17$ and $p=90.3$ respectively). On the other hand, $\Delta_S$ and $\Delta_O$ increased as the number of homonymous verbs increased ($\rho = 0.15$ and $p=0.006$ in the case of $\Delta_S$ and $\rho=0.17$ and $p=0.002$ in the case of $\Delta_O$). 

\paragraph{Discussion}At first sight, this appears contradictory to the findings in rank 2 systems, where we recall that $\Delta_n$ increased as the number of polysemous verbs increased. However, one crucial difference here is that context is more ``fixed'' in SV and VO models, whereas it is not in cyclic systems of rank 2 previously described. For instance, consider an SV model. Taking the subject to be the target word, the context $\_~v$ is fixed in the two contexts $s_1~v$ and $s_2~v$. In contrast, in cyclic systems of rank 2, the two contexts of a noun would be $\_~v$ and $v~\_$, which are fundamentally different. 

Moreover, the influence of the ambiguity of the verb of their subject and object could be understood as follows. The disambiguation of homonymous verbs first starts when the arguments are established. Hence, the reader will begin disambiguating both the verb and the noun, which creates some interaction between the choice of meaning of the verb and the subject/object. This makes the interpretation of the subject/object more uncertain. On the other hand, if the verb is polysemous, the meaning of the nouns will be more well-defined, as the disambiguation of the verb itself will be delayed to the end of the sentence (and hence beyond the scope of our experiments). 

% \blue{Add summary table}

\subsection{Discussion of the results}\label{subsec:signalDiscussion}

\paragraph{}The signalling property of empirical models provides insight into the mechanisms at the heart of the disambiguation process. However, our analysis is made difficult and not easily interpretable from how empirical models are formed. Indeed, to have non-trivial empirical models, all of the inputs must be ambiguous. Yet, this adds complexity to the study as the distinction between (linguistic) context and target word is symmetric, i.e. a word can be both a target-word and a context-word. In addition, the notion of signalling is also bidirectional, i.e. we can only see whether a dependence exists between two variables $A$ and $B$, not whether $A$ influences $B$ or the other way around. In the next section, we will remedy these problems by examining the causal influences between the different parts-of-speech.
\section{The causality of the disambiguation process}\label{sec:lexicalCausal}
\paragraph{}In the previous section, we have seen that the signalling property of natural language systems is not necessarily a hindrance but can give us some insight into human behaviour when disambiguating lexically ambiguous words and phrases. In this section, we go one step further and study the \emph{structure} of the signalling present in natural language data. To do so, we make use of the extensions of the sheaf-theoretic framework of contextuality to account for \emph{causality}~\cite{sheafcausality,sheafcausalityB,abramskyCausality} (see Section~\ref{subsec:sheafContextuality}). In this line of research, we are interested in the \emph{direction} of signalling. For instance, does the disambiguation order follow the reading order, i.e. is disambiguation purely incremental, or is backtracking necessary?
\begin{rmk}
    In this perspective, we can see the event of choosing the input as reading a new word, associating an outcome to a choice of measurement will then correspond to the disambiguation step. Hence, having a causal order that does not follow the linear ordering of the words in a given sentence does \emph{not} mean that the reader reads the words in a different order, but rather that the \emph{understanding} process is not instant (i.e. does not follow the reading order). 
\end{rmk}

\paragraph{}As for the no-signalling property, obtaining statistics that are fully compatible with a given causal order is not feasible in practice. Therefore, instead of calculating the no-signalling fraction $\mathsf{NSF} = 1-\mathsf{SF}$ (i.e. the amount of the empirical model which is compatible with a perfectly no-signalling system), we can calculate the \emph{causal fraction} introduced in Section~\ref{subsec:sheafContextuality} which measure the amount of the empirical model which is consistent with a given causal order~\cite{sheafcausality}. In addition, similar to the previous sections, we will focus on the simplest non-trivial scenarios to minimise the calculations. Here, the smallest non-trivial system we can consider is similar to the (2,2,2)-Bell scenarios (or cyclic systems of rank 4). Indeed, in the case where only 1 party is present, there is a unique compatible causal order: the party $A$ can influence itself, and every such system would be trivially consistent with it. Similarly, if we have 2 parties, but at least one party has a unique choice of input, then this scenario reduces, without loss of generality, to the one-party case. Finally, if only 1 outcome is possible for all of the measurements, then all of the measurements are deterministic, and the analysis becomes once again trivial. 

\begin{rmk}
    This work does not assume that parties are spacelike separated entities. Instead, the notion of party corresponds to entities isolated in time or space. For instance, we could consider a single person in a lab performing a sequence of 3 sequential measurements to count as 3 different parties.    
\end{rmk}

The systems that we will consider are, as in Sections~\ref{sec:lexicalSignal} and~\ref{sec:lexicalContext}, SV and VO systems, where the two parties are either $S$ and $V$ or $V$ and $O$ and the interpretations of input and outcomes are left unchanged. In particular, we here focus on \emph{definite causal orders}, i.e. causal orders represented by direct acyclic graphs~\cite{Pearl2011}. In such graphs, the nodes correspond to random variables, which can have inputs and outputs. The directions of the arrows represent the causal relations, e.g. if $A\to B$, then this means that the input of $A$ can influence the output of $B$, and the absence of arrows shows the independence of random variables. Finally, the acyclicity condition ensures that a given event cannot influence its past. 

In a two-party system like ours, say with parties $A$ and $B$, there are only three possible definite causal orders, namely:
\begin{enumerate}[]
    \item\label{item:causalNS} \begin{tikzpicture}[baseline=-.2em]
        \node[shape=circle, draw=black, fill=white] (A) at (-1, 0){$A$};
        \node[shape=circle, draw=black, fill=white] (B) at (1, 0){$B$};
    \end{tikzpicture}
    \item\label{item:causalAtoB} \begin{tikzpicture}[baseline=-.2em]
        \node[shape=circle, draw=black, fill=white] (A) at (-1, 0){$A$};
        \node[shape=circle, draw=black, fill=white] (B) at (1, 0){$B$};
        \draw [->] (A) to (B);
    \end{tikzpicture}
    \item\label{item:causalBtoA} \begin{tikzpicture}[baseline=-.2em]
        \node[shape=circle, draw=black, fill=white] (A) at (-1, 0){$A$};
        \node[shape=circle, draw=black, fill=white] (B) at (1, 0){$B$};
        \draw [->] (B) to (A);
    \end{tikzpicture}
\end{enumerate}
The situation of~\ref{item:causalNS} corresponds to our familiar no-signalling scenarios. The cases~\ref{item:causalAtoB} and~\ref{item:causalBtoA} respectively represent situations where the party $A$ can influence $B$ and where the party $B$ can influence $A$. We will denote the causal fractions associated with causal orders~\ref{item:causalNS},~\ref{item:causalAtoB} and~\ref{item:causalBtoA} as respectively $\mathsf{CausF}_{NS}=\mathsf{NSF}$, $\mathsf{CausF}_{A\to B}$ and $\mathsf{CausF}_{B\to A}$.

\begin{rmk}
    The causal orders $A \to B$ and $B\to A$ are not mutually exclusive. Therefore, we could have a model which is highly consistent with both causal orders separately.
\end{rmk}

\begin{rmk}[On the no-signalling causal order]
    A system with parties $A,B$ is said to be no-signalling iff it is compatible with both causal models of the form $A\to B$ and $B\to A$. Therefore, it is stronger than causal orders $A\to B$ and $B\to A$. Therefore, in terms of the causal fractions, this means that:
    \begin{equation}
        \mathsf{NSF} \leq \min \left(\mathsf{CausF}_{A\to B}, \mathsf{CausF}_{B\to A}\right)
    \end{equation}
\end{rmk}

We are investigating which of these causal orders is the most relevant, i.e., explains most of the system's statistics. We do so by comparing the different causal fractions obtained for the different causal order, notably~\ref{item:causalAtoB}-~\ref{item:causalBtoA}. 

\paragraph{}Calculating the causal fraction of an arbitrary model is not straightforward, as it requires solving an optimisation problem. However, given the specific form of the models we are considering, the causal fractions for each causal order can be calculated efficiently.

\begin{prop}\label{prop:causalFracForm}
    In a (2,2,2)-Bell-type scenario with parties $A$ and $B$, the causal fraction is given by:
    \begin{equation}\label{eq:causalF}
        \mathsf{CausF}_{A\to B} = \min_{a\in \{a_1,a_2\}, o\in \{0,1\}} 1 - \left|\left.e_{(a, b_1)}\right|_{A}\left(o\right) - \left.e_{(a, b_2)}\right|_{A}\left(o\right)\right|
    \end{equation}
    The causal fraction for the $B\to A$ causal order can be obtained by applying the formula to a relabelled empirical model.
\end{prop}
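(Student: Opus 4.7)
The plan is to unpack the definition of $\mathsf{CausF}_{A\to B}$ as a linear programme and to reduce it, in this specific two-party two-input two-outcome scenario, to an optimisation over $A$-marginals. Recall from Section~\ref{subsec:sheafContextuality} that $\mathsf{CausF}_{A\to B}$ is the maximum $\lambda \in [0,1]$ for which $e = \lambda \cdot e^\Sigma + (1-\lambda)\cdot e'$, with $e^\Sigma$ a causal empirical model (in the global cover $\mathcal{M}_{global}$) and $e'$ an arbitrary empirical model. Consistency of $e^\Sigma$ with $A \to B$ is precisely the condition that the $A$-marginal of $e^\Sigma_{(a,b)}$ depends only on $a$, not on $b$.

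Setting $\mu_{(a,b)} := \lambda \cdot e^\Sigma_{(a,b)}$, the defining optimisation becomes: find the largest $\lambda$ for which there exist nonnegative measures $\mu_{(a,b)}$ satisfying (i) $\mu_{(a,b)}(o_A,o_B) \leq e_{(a,b)}(o_A, o_B)$ pointwise (so that $(1-\lambda) e' = e - \mu$ is a valid sub-distribution), (ii) $\sum_{o_A,o_B} \mu_{(a,b)}(o_A,o_B) = \lambda$ for every context $(a,b)$ (so that $e^\Sigma$ is normalised), and (iii) $\mu_{(a,b_1)}|_A = \mu_{(a,b_2)}|_A$ (causality). Writing $m_a(o_A)$ for this common $A$-marginal, conditions (i) and (iii) force $m_a(o_A) \leq \min_{b} \left.e_{(a,b)}\right|_A(o_A)$, and condition (ii) gives $\sum_{o_A} m_a(o_A) = \lambda$ for every $a$.

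The crux is the reverse implication: given any $m_a$ meeting these bounds, one can lift it to a feasible $\mu$ by conditioning, namely $\mu_{(a,b)}(o_A, o_B) := m_a(o_A)\,e_{(a,b)}(o_A, o_B)/\left.e_{(a,b)}\right|_A(o_A)$ whenever the denominator is nonzero, and $0$ otherwise (harmless since the upper bound then already forces $m_a(o_A)=0$). This automatically respects (i) because the scaling factor lies in $[0,1]$, has the prescribed $A$-marginal, and has total mass $\sum_{o_A} m_a(o_A)$. Thus the optimum reduces to $\lambda^{*} = \min_a \sum_{o_A} \min_b \left.e_{(a,b)}\right|_A(o_A)$, attained by saturating all pointwise bounds on a minimising $a$ and scaling down on the others.

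Finally, for binary outcomes $O = \{0,1\}$, the elementary identity $\min(x,y) + \min(1-x, 1-y) = 1 - |x-y|$, applied with $x = \left.e_{(a,b_1)}\right|_A(0)$ and $y = \left.e_{(a,b_2)}\right|_A(0)$, rewrites $\sum_{o_A \in \{0,1\}} \min_b \left.e_{(a,b)}\right|_A(o_A)$ as $1 - |\left.e_{(a,b_1)}\right|_A(o) - \left.e_{(a,b_2)}\right|_A(o)|$ for either choice of $o \in \{0,1\}$. Taking the minimum over $a$ then gives equation~\eqref{eq:causalF}; the extra minimum over $o$ is vacuous but still correct. The main obstacle in executing this plan is the feasibility construction of the third paragraph, in particular handling the degenerate marginals $\left.e_{(a,b)}\right|_A(o_A) = 0$ with a clean convention, so that the linear-programming reduction is a genuine equivalence rather than just a one-sided inequality.
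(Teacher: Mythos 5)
Your proposal is correct and follows essentially the same route as the paper's proof in Appendix~\ref{app:causalFracForm}: the upper bound comes from the same marginal-compatibility constraints (the paper phrases $\min(x,y)+\min(1-x,1-y)=1-|x-y|$ as $\gamma \leq 1 - m_+ + m_-$), and the extremal causal model is built by exactly the same conditional rescaling $e_{(a,b)}(o_A,o_B)/\left.e_{(a,b)}\right|_A(o_A)$ applied to a marginal saturating the pointwise bounds. Your explicit treatment of the zero-denominator case is a small point the paper leaves as "routine to check", but it does not change the argument.
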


This proposition's proof can be found in Appendix~\ref{app:causalFracForm}.

\paragraph{}Through initial calculations, we have found that the empirical models obtained in the corpus dataset were too sparse, and therefore did not lead to any conclusive result. In the rest of this work, we will focus on the causal analysis of the human judgment dataset, as their probability distributions are, on the whole, of better quality.

\subsection{The direction of signalling in SV and VO models}\label{subsec:causalDirection}

\paragraph{}We start by looking at the compatibility of our data with the different causal orders described above. 

\paragraph{Results}The causal fractions obtained for the SV and VO models are shown in Fig.~\ref{subfig:scatterCausFSV} and~\ref{subfig:scatterCausFVO}, respectively. 

The data reveals that SV phrases are predominantly compatible with the $S\to V$ causal order. Indeed, all of the models have a causal fraction $\mathsf{CausF}_{S\to V}>0.7$ (the causal fractions $\mathsf{CausF}_{S\to V}$ is on average $0.89$), and both the $V\to S$ and the no-signalling fractions achieve lower causal fraction values where the causal fractions are on average  $0.83$ and $0.19$ respectively, see also Fig.~\ref{subfig:histCausFSV}. 

Similarly, the VO models achieve a causal fraction with the $O\to V$ order higher than $76\%$ (and $0.93$ on average), and the other causal fractions reach significantly lower scores ($\mathsf{CausF}_{V\to O}\sim 0.91$ and $\mathsf{SF}\sim 0.11$), see Fig.~\ref{subfig:histCausFVO}. We note that in the case of the VO models, this suggests that the disambiguation order in these phrases is \emph{opposite to the reading order} (assuming the standard active voice SVO structure of English).

\begin{figure}[htbp]
    \centering
    \begin{subfigure}{.45\linewidth}
        \centering
        \includegraphics[width=\linewidth]{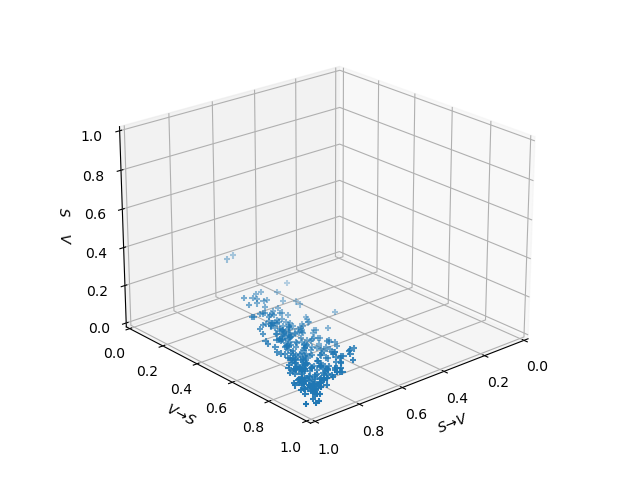}
        \caption{Scatter plot of the different causal fractions for SV empirical models\label{subfig:scatterCausFSV}}
    \end{subfigure}\qquad%
    \begin{subfigure}{.45\linewidth}
        \centering
        \includegraphics[width=\linewidth]{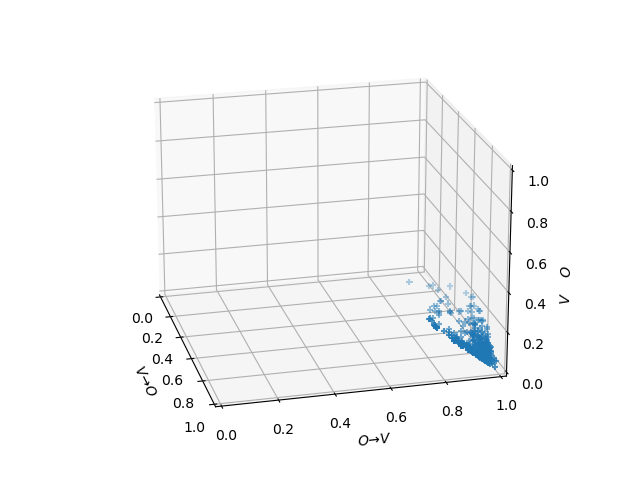}
        \caption{Scatter plot of the different causal fractions for VO empirical models\label{subfig:scatterCausFVO}}
    \end{subfigure}

    \begin{subfigure}{.45\linewidth}
        \centering
        % \red{Histogram SV}
        \includegraphics[width=\linewidth]{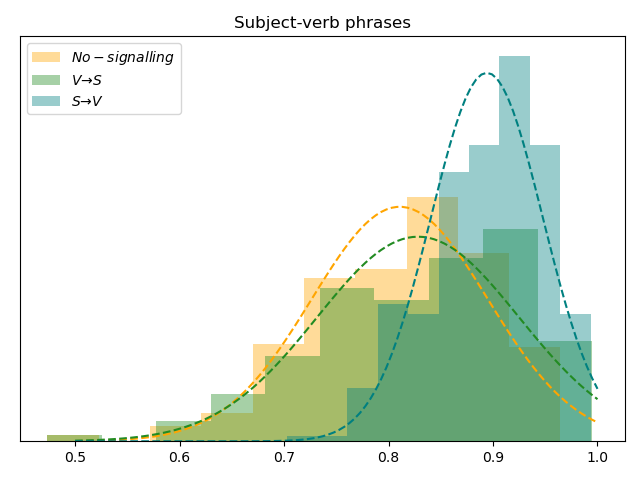}
        \caption{Histogram of the recorded causal fractions for SV empirical models\label{subfig:histCausFSV}}
    \end{subfigure}\qquad%
    \begin{subfigure}{.45\linewidth}
        \centering
        \includegraphics[width=\linewidth]{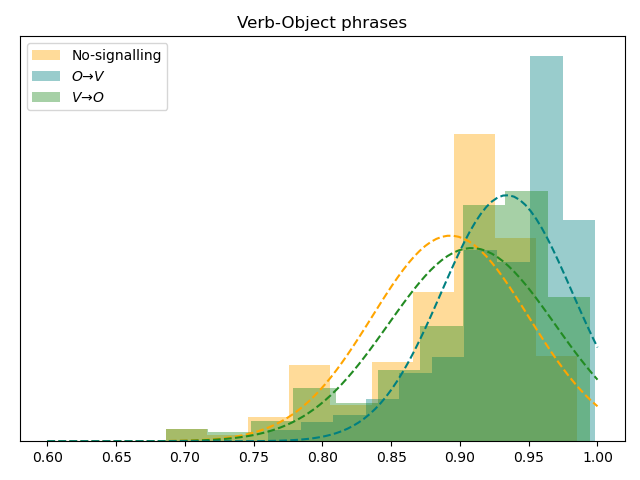}
        \caption{Histogram of the recorded causal fractions for VO empirical models\label{subfig:histCausFVO}}
    \end{subfigure}
    \caption{Distributions of the causal fractions}
\end{figure}

\paragraph{Discussion} These results show that the interpretation of ambiguous verbs is more affected by the choices of arguments (i.e. subject or object) rather than the other way around. This result is also validated by the research in psycholinguistics, which has indeed shown that the reader delays the disambiguation process until the arguments of the verb are known in the case of a homonymous verb and until the end of the phrase or sentence in the case of polysemous verbs~\cite{PickeringFrisson2001}. %\blue{Add a timeline in the psycho section and refer to it here.} 

In addition, the causal fractions of VO models are generally higher and less variable (i.e. smaller standard deviation) than those obtained in SV models, see Fig.~\ref{fig:histSVvVO}. This may suggest that the disambiguation process for VO ambiguous phrases is more straightforward than for SV ones. %\red{Either find a reference saying that that's true or explain why it is hard to find such a result in the literature}.

\begin{figure}[ht!]
    \centering
    \includegraphics[width=.5\linewidth]{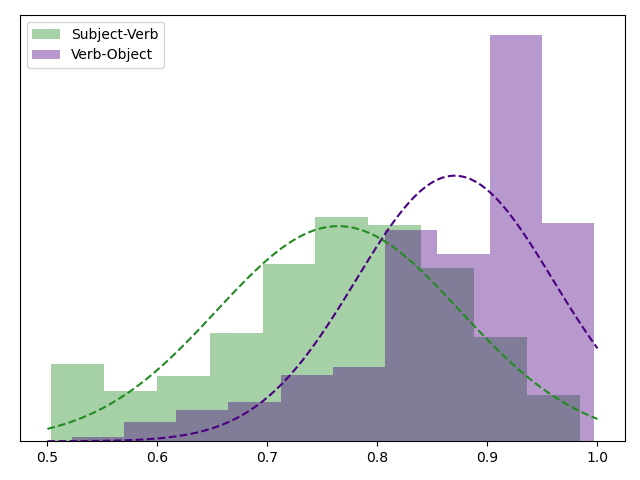}
    \caption{Comparison of the distributions of the $S\to V$ abd $O\to V$ causal fractions.\label{fig:histSVvVO}}
\end{figure}

\subsection{Models with different levels of ambiguity}\label{subsec:causalAmb}

\paragraph{}We now investigate whether we can observe a difference in behaviour between homonymous and polysemous words. We first recall that eye-tracking data suggests that readers tend to disambiguate polysemous words much later than their homonymous counterparts by selecting an underspecified interpretation instead of committing to a single sense. In SV models, this should result in having a higher causal fraction associated with $S\to V$ whenever the verbs are polysemous and the nouns are homonymous, as this case would delay the disambiguation of the verbs (compared to average), and make the disambiguation of the homonymous noun faster than the average scenario. Similarly, in VO models, we expect the causal fraction associated with $O\to V$ to be larger whenever the verbs are polysemous and the nouns are homonymous. To verify this hypothesis, we study the correlation between the causal fractions and the number of homonymous/polysemous verbs and nouns in the different models.

\paragraph{Results \& Discussion}We started by looking at the effect of the ratio of homonymous/polysemous words in the empirical models to the causal fractions. From the above description, this ratio shouldn't have a significant effect. To check this, we calculated the Spearman $\rho$ coefficients of the causal fractions and the number of homonymous words. We did not observe any apparent correlation in VO models ($\rho=0.009$, $p>87\%$). In SV models, we only observed a mild (but statistically significant) effect. In that case, we observed that the more polysemous the words in a model, the higher the $S\to V$ causal fraction ($\rho=0.15$, $p<0.7\%$). These results are depicted in Fig.~\ref{fig:boxplotGlobalAmb} and are overall consistent with our hypothesis.

\begin{figure}[ht!]
    \centering
    \begin{subfigure}{.45\linewidth}
        \centering
        \includegraphics[width=\linewidth]{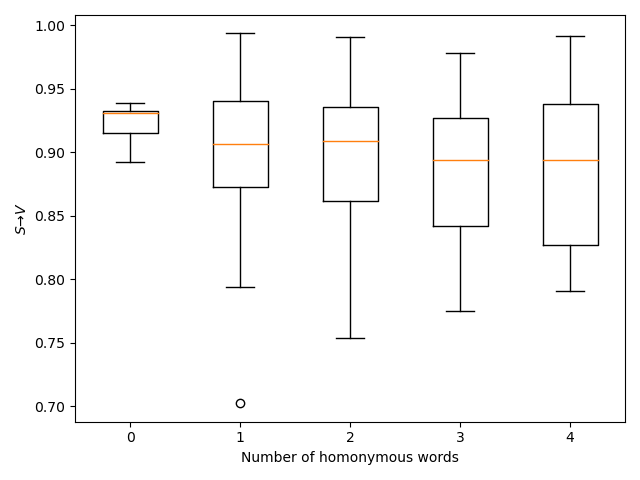}
        \caption{$\mathsf{CausF}_{S\to V}$}
    \end{subfigure}\qquad %
    \begin{subfigure}{.45\linewidth}
        \centering
        \includegraphics[width=\linewidth]{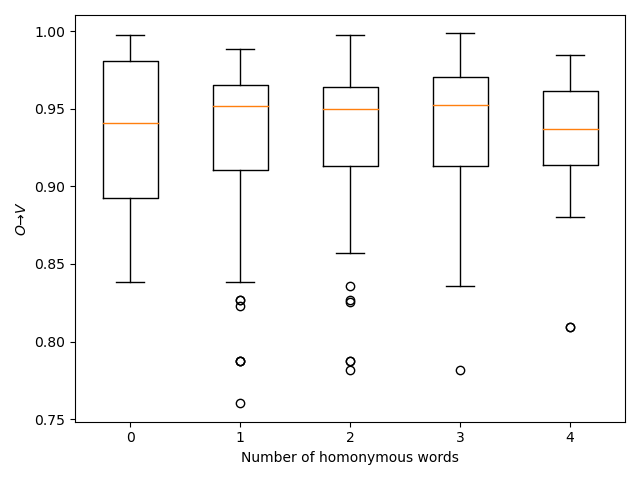}
        \caption{$\mathsf{CausF}_{O\to V}$}
    \end{subfigure}
    \caption{Boxplots of the distributions of causal fractions $\mathsf{CausF}_{S\to V}$ and $\mathsf{CausF}_{O\to V}$ as a function of the number of homonymous words of the empirical models.\label{fig:boxplotGlobalAmb}}
\end{figure}

We then sort our empirical models in terms of their number of homonymous and polysemous \emph{nouns}, and subsequently in terms of their number of homonymous and polysemous \emph{verbs}. The observed distributions of the causal fractions are depicted in Fig.~\ref{fig:localAmb}. In both SV and VO phrases, the $S\to V$ and $O\to V$ causal fractions were higher whenever the verb was polysemous. The Spearman coefficients and $p$-values were $\rho=0.17$, $p<0.2\%$ for SV phrases and $\rho=0.16$, $p<0.4\%$ for VO phrases. In addition, the causal fraction associated with $O\to V$ was higher whenever objects were homonymous  ($\rho=0.14$, $p<2\%$). These observations confirm our initial hypothesis that homonymous nouns are disambiguated faster than polysemous words.

The exception is the case of the ambiguity of the subjects in SV phrases ($\rho=0.04$, $p>50\%$). One way to interpret this difference would be to consider the relative position of the disambiguating context. It was shown in~\cite{FrazierRayner1990} that homonymous nouns were disambiguated much faster than polysemous nouns when the disambiguating context occurred before the target words. However, a significant slowdown has been observed if the disambiguating context is found after the target word. This slowdown was even exacerbated when the target word was homonymous. This nicely explains the difference between VO and SV phrases. Indeed, the only possible disambiguation context for nouns in VO phrases is the verb, which is positioned \emph{before} the object (once again assuming active voice). In the case of the disambiguating context for subjects, the only disambiguating context, which is once again the verb, is found after the subject. Therefore, the lack of correlation between the subjects' ambiguity and the causal fraction is likely caused by the balancing of the effect of the ambiguity and the added difficulty induced by a following disambiguating context. %\blue{Add something about the heatmaps.}

\begin{figure}[htp!]
    \centering
    \begin{subfigure}{.45\linewidth}
        \centering
        \includegraphics[width=\linewidth]{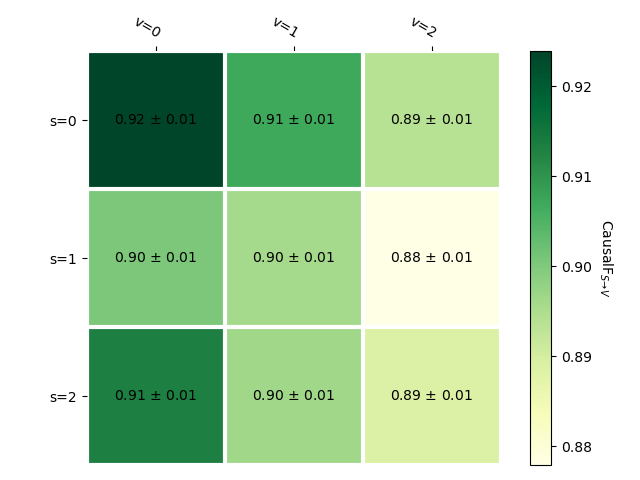}
        \caption{Averaged $\mathsf{CausF}_{S\to V}$ as the number of homonymous subjects and verbs are varied.}
    \end{subfigure}\qquad %
    \begin{subfigure}{.45\linewidth}
        \centering
        \includegraphics[width=\linewidth]{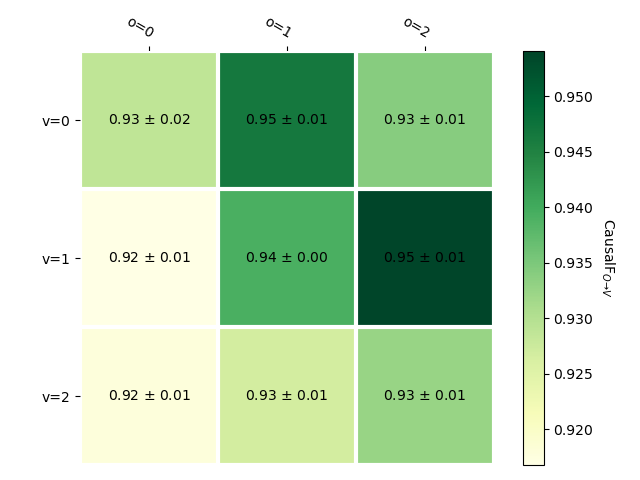}
        \caption{Averaged $\mathsf{CausF}_{O\to V}$ as the number of homonymous verbs and objects are varied.}
    \end{subfigure}

    \begin{subfigure}{.45\linewidth}
        \centering
        \includegraphics[width=\linewidth]{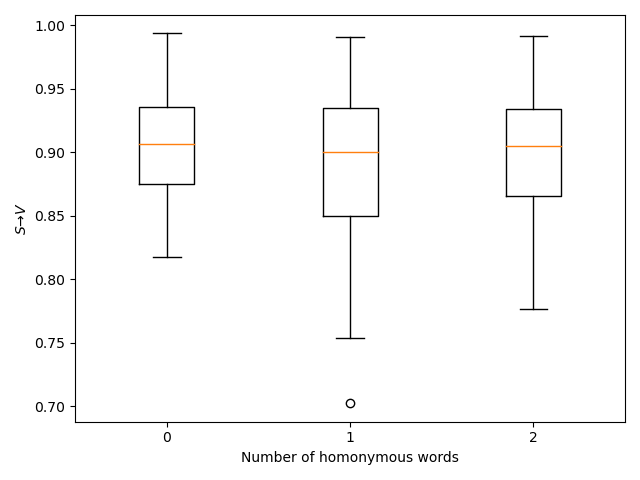}
        \caption{Distributions of $\mathsf{CausF}_{S\to V}$ as the number of homonymous subjects is varied.}
    \end{subfigure}\qquad%
    \begin{subfigure}{.45\linewidth}
        \centering
        \includegraphics[width=\linewidth]{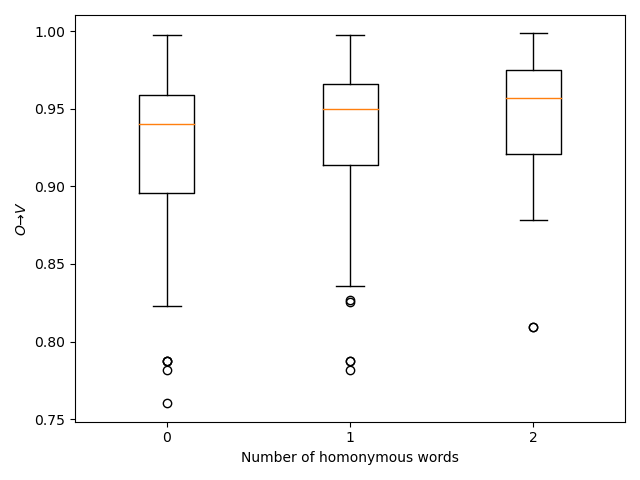}
        \caption{Distributions of $\mathsf{CausF}_{O\to V}$ as the number of homonymous object is varied.}
    \end{subfigure}

    \begin{subfigure}{.45\linewidth}
        \centering
        \includegraphics[width=\linewidth]{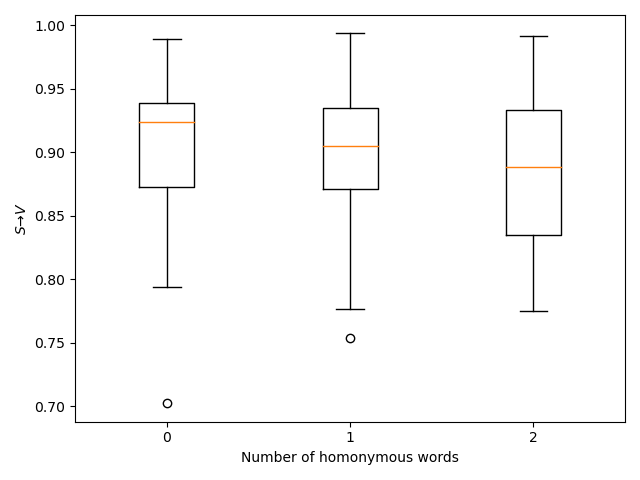}
        \caption{Distributions of $\mathsf{CausF}_{S\to V}$ as the number of homonymous verb is varied.}
    \end{subfigure}\qquad%
    \begin{subfigure}{.45\linewidth}
        \centering
        \includegraphics[width=\linewidth]{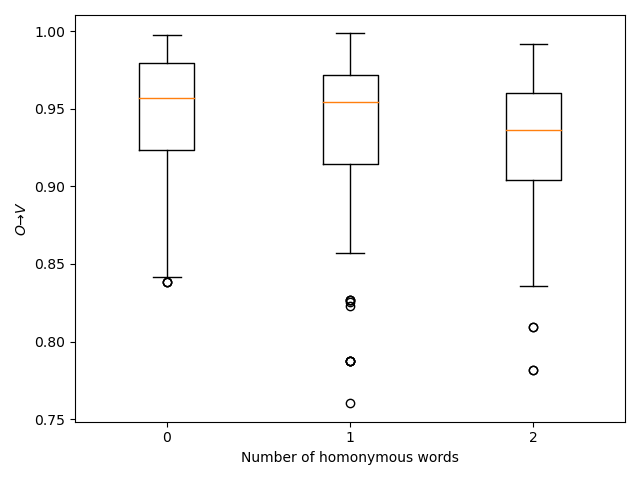}
        \caption{Distributions of $\mathsf{CausF}_{O\to V}$ as the number of homonymous verb is varied.}
    \end{subfigure}
    \caption{Causal fractions as the number of homonymous/polysemous nouns and verbs are varied\label{fig:localAmb}}
\end{figure}

The Spearman coefficients found above are relatively low ($\rho < 0.2 $), which suggests that the correlations observed are quite mild. However, the $p$-values showed that the correlations claimed in the above paragraph are statistically significant, i.e. it is highly unlikely that no correlation exists between the causal fraction and levels of ambiguity.

% \blue{Summary table?}
\section*{Summary of the Chapter}
\markboth{Summary of the chapter}{Aspects of the lexical disambiguation process}
\paragraph{}In this Chapter, we have investigated the properties of lexical ambiguity data, using the mathematics arising from the causality and contextuality quantum mechanics. We have observed:
    \begin{itemize}
        \item Contextuality-by-Default witnesses can be observed in cyclic systems of rank 2 (see Section~\ref{sec:lexicalContext} and~\cite{Wang2021a}). However, the operational interpretation of CbD is not very clear;
        \item Causal analysis of the data confirms that verbs are mostly disambiguated after their subject and object (see Section~\ref{sec:lexicalCausal})
    \end{itemize}
    
\chapter{Quantum simulations of the disambiguation process}\label{chap:lexicalCircuits}
\paragraph{}In the previous chapter, we found that about $90\%$ of the probability distributions obtained from human judgments are compatible with the $S\to V$ and $O\to V$ causal orders. Since this dataset is noisy and uses finite approximations of actual probabilities, we can be confident that these causal orders represent a good approximation of the process that occurs in humans. Here, we start from this observation and investigate whether quantum computers can simulate the disambiguation process of SV and VO phrases.

\paragraph{} In Section~\ref{sec:simulMethods} we describe how we obtained quantum circuits associated with the lexical disambiguation process of SV and VO phrases. In Section~\ref{sec:simulPred}, we investigate whether the obtained circuits can predict probability distribution instead of reproducing them. In Section~\ref{sec:simulEmb}, we describe a method for obtaining quantum word-embeddings of ambiguous words. This section also aims to examine whether we could use the obtained embeddings in NLP. Finally, in Section~\ref{sec:simulEntang} we investigate the entanglement generated by the obtained circuits.

\section{Methodology}\label{sec:simulMethods}

\paragraph{}We start from the causal orders obtained in the previous section, i.e. $S\to V$ and $O\to V$ for SV and VO phrases. Then, these causal orders naturally lead to a basic structure of the process which we need to approximate (see Section~\ref{subsec:Qcausal}), namely:
\begin{equation}\label{eq:Caus2Proc}
    S\to V \rightsquigarrow \tikzfig{Lexical/tikzit/StoV_generic} \qquad\qquad O\to V \rightsquigarrow \tikzfig{Lexical/tikzit/OtoV_generic} 
\end{equation}
These processes should be interpreted as before, i.e. the choice of inputs corresponds to a choice of word, and the outcomes will represent interpretations of these words. In addition, the parties are also defined as in quantum scenarios as ``labs'' which are allowed to do some local operations (e.g. $S$, $V$ or $O$) on the system, and causality, for example, $S\to V$, is achieved by having a subsystem of the party $S$ being used by the party $V$. 

\begin{rmk}
    The diagrams of \eqref{eq:Caus2Proc} are agnostic to which process theory they live on. In particular, we choose quantum circuits in this work, but these would also be applicable in a classical or probabilistic setting as well. Experiments involving classical systems, an hence investigation any potential quantum advantage, is left to future work.
\end{rmk}

Using this basic structure, we propose a parametric quantum circuit where we can train parameters to approximate the probability distributions obtained from human data. The details of the ansatz of the parametric circuits will be described shortly. We then optimise the parameters of our ansatz using a fairly standard hybrid quantum-classical method~\cite{QNLPinpractice,VQCClassification,VQCcircuitlearning,VQCQclassifier,VQCgenerativemodelling1}.%\cite{QNLPinpractice,VQCstatelearning,VQCClassification,VQCcircuitlearning,VQCQclassifier,VQCimages,VQCgenerativemodelling1,VQCgenerativemodelling2,VQCdifflearning,VQCctsdistrib}.

\begin{figure}[htbp!]
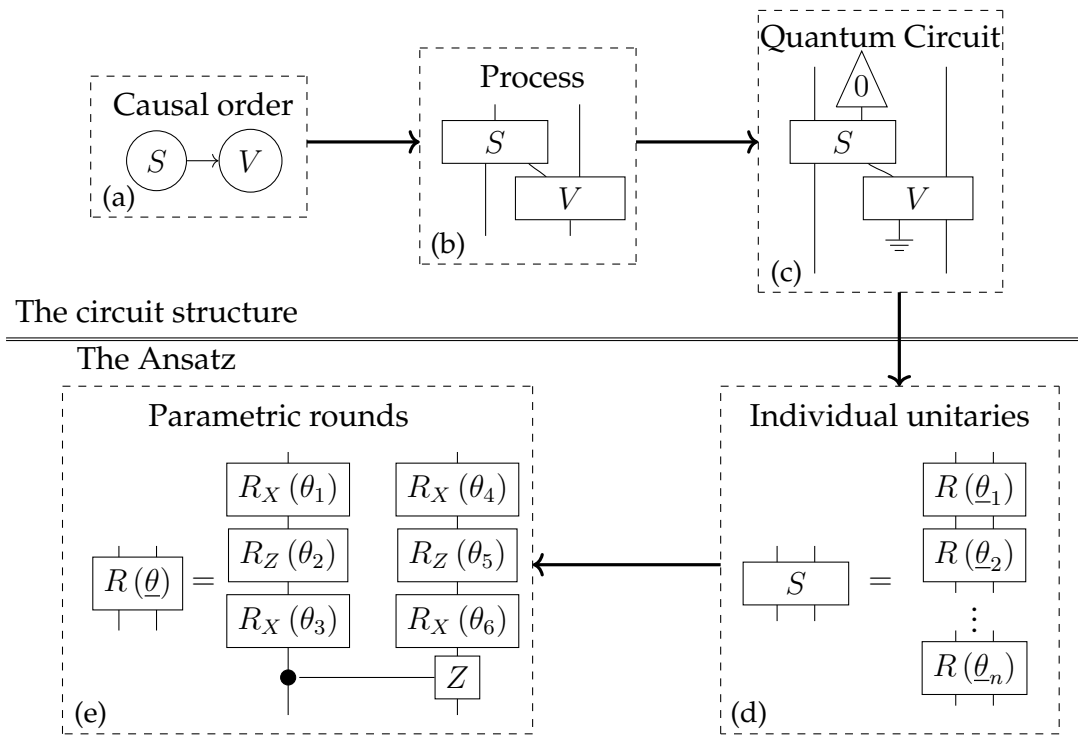

    \centering
    \tikzfig{Lexical/tikzit/BigPicture}
    % \rotatebox{90}{\tikzfig{Lexical/tikzit/BigPicture}}
    \caption{Summary of the approach\label{fig:simulationSummary}}
\end{figure}

\subsection{The ansatz}

\paragraph{} We start by describing the choice of ansatz. For simplicity, we take the input and output systems to be qubits. This will be enough as we only require two choices of inputs, which will be taken to be $\ket{0}$ and $\ket{1}$, and two choices of outcomes, which we will once again choose to be $\ket{0}$ and $\ket{1}$. 

In addition, we require a subsystem of the output of the noun (subject or object) operations to be fed into the input of the verb operation. We will also take this subsystem to live on $\mathbb{C}^2$. Then, to satisfy the unitary condition on the different operations, we will also require that the input system of the noun circuit is a 2-qubit system, where we choose to initialise the ancilla qubit as $\ket{0}$. Similarly, there will be an extra qubit in the output of the verb-circuit, which we will discard. The form of these circuits is illustrated in Fig.~\ref{fig:simulationSummary}(c).

\paragraph{}Then, each of the individual operations will be encoded as a parametric quantum circuit itself. These circuits will be divided into rounds of single-qubit unitaries followed by entangling gates; see Fig.~\ref{fig:simulationSummary}(d-e). Increasing the number of rounds (and therefore the number of parameters) is expected to increase the accuracy of the circuit but is also expected to take longer to be trained. We will also choose to have the same number of rounds for both parties.

\paragraph{}We then choose to define each of the rounds to be as in Fig.~\ref{fig:simulationSummary}(e). Each qubit is subject to a (parametrised) $X$-rotation, a (parametrised) $Z$-rotation, and then another (parametrised) $X$-rotation, where $X$- and $Z$-rotations are defined as follows:
\begin{equation*}
    R_X(\theta) = \begin{pmatrix}
        \cos\left(\frac{\theta}{2}\right) & -i\sin\left(\frac{\theta}{2}\right)\\ i\sin\left(\frac{\theta}{2}\right) & \cos\left(\frac{\theta}{2}\right)
    \end{pmatrix} \qquad R_Z(\theta) = \begin{pmatrix}
        1 & 0\\ 0 & e^{i\theta}
    \end{pmatrix}
\end{equation*}
This general form allows us to encode any single-qubit unitary, as this corresponds to Euler's decomposition of a generic unitary. We then apply a controlled-$Z$ gate between the two qubits to generate entanglement, defined as\footnote{Also see Example~\ref{ex:stdGates} for the defintion of general controlled gates.}:
\begin{equation*}
    \tikzfig{Lexical/tikzit/CZ} = \begin{pmatrix}
        1 & 0 & 0 & 0 \\ 0 & 1 & 0 & 0 \\ 0 & 0 & 1 & 0 \\ 0 & 0 & 0 & -1
    \end{pmatrix}
\end{equation*}

Each round then needs $2\times 3=6$ parameters to be trained. Hence, for $n$ rounds, each gate ($S, V$ or $O$) needs $6n$ parameters, giving a total of $12n$ parameters to be trained for the full circuit.
\begin{rmk}
    In most work using variational quantum circuits, the ansatz is less generic and minimises the number of parameters for each round (for example, the IQP ansatz only has 1 parameter per round). However, we decided on this ansatz as it allowed us to access a wide range of \emph{probability distributions}, and because it did converge with respect to our choice of cost function (more details below).
\end{rmk}

\subsection{The training process}

\paragraph{}To train the parameters, we apply a classical gradient descent algorithm. At each iteration of the algorithm, we update parameters $\theta$ as follows:
\begin{equation}
    \theta_n \to \theta_{n+1} = \theta_n - \gamma \nabla L\left(\theta_n\right)  
\end{equation}
for a given cost function $L$, and descent parameter $\gamma$, which we take to be fixed at $10^{-2}$. In addition, as the expression of $L$ may not be known, we employ the following finite approximation: 
\begin{equation}
    \left(\nabla L\left(\theta\right)\right)_i = \frac{L\left(\theta + \delta\mathbf{e}_i\right) - L\left(\theta - \delta\mathbf{e}_i\right)}{2\delta}
\end{equation}
where $\delta$ is chosen to be $10^{-2}$. We obtain the cost expression by simulating the quantum circuits using the Qiskit Aer platform~\cite{Qiskit}.

In this task, we are interested in reproducing the probability distributions obtained from human judgments. We, therefore, adopt as the cost function a distance between the obtained probability distribution (estimated from the counts obtained from Qiskit) and the probability distribution obtained from the human judgment dataset. In this work, we adopted the \emph{total variation} between the human and simulated probability distributions as our cost function. It is defined as:
\begin{equation}\label{eq:costFct}
    L\left(\theta\right) = \frac{1}{2} \max_{C} \sum_o \left|e\left(\theta\right)_C(o) - e_C(o)\right|
\end{equation}
\begin{rmk}
    Another choice of cost function would be the Kullback–Leibler (KL) divergence, denoted $D_{KL}(\mu||\nu)$, which measures the excepted surprisal induced from using a probability distribution $\nu$ to approximate another distribution $\mu$. It is formally defined as:
    \begin{equation}
        D_{KL}(\mu||\nu) = \sum_x \mu(x) \log\left(\frac{\mu(x)}{\nu(x)}\right)
    \end{equation}
    However, this measure is a directional measure, i.e. $D_{KL}(\mu||\nu)\neq D_{KL}(\nu||\mu)$, and therefore not a metric. More importantly, it is not defined whenever there is an outcome $x$ such that $\nu(x) = 0$ but $\mu(x)\neq 0$. For these reasons, we chose to use the total variation instead. We will leave the investigation of the circuits obtained using the KL-divergence to future work.
\end{rmk}

\subsection{Convergence}\label{subsec:simulConverg}

\paragraph{}We now look at the performance of the described ansatz subject to the proposed training process. We first note that there are some limits to how close the probability distributions obtained from the described circuits can be to the human ones. We will first specify these constraints before introducing the obtained results.

\subsubsection{Limitations of the approach}
\paragraph{}We recall from Section~\ref{subsec:causalDirection} that, although very high, the causal fractions associated with $S\to V$ and $O\to V$ causal order were not exactly $1$. As argued before, this is not necessarily because indefiniteness is necessary, in particular when the causal fraction approaches 1, but can also be due to the finiteness of the probability distribution. Hence, the probability distributions obtained by the variational circuits \emph{cannot} exactly match the probability distributions from the human judgment dataset. In other words, the minimal cost possible will be strictly greater than $0$. On the other hand, we can fix a bound on the achievable cost from the causal fraction of a given model.

\begin{prop}\label{prop:minCost}
    Given an empirical model $e$ with parties $A$ and $B$, with associated causal fraction $\mathsf{CausF}$ (with respect to causal order $A\to B$), for any empirical model $e_{\mathsf{Caus}}$ compatible with the causal order $A\to B$, we have:
    \begin{equation}\label{eq:minCost}
        \frac{1 - \mathsf{CausF}}{\mathsf{CausF}} m(e)  \leq \frac{1}{2}\max_C \sum_o \left|e_C(o) - e_{\mathsf{Caus},C} (o)\right|
    \end{equation}
    where:
    \begin{equation}
        m(e) = \min \left\{\left.e_{(a,b_1)}\right|_{A}(1), \left.e_{(a,b_2)}\right|_{A}(0)\right\}
    \end{equation}
    and $a\in \left\{a_1, a_2\right\}$ such that:
    \begin{equation*}
        \mathsf{CausF} = 1 - \left|\left.e_{(a, b_1)}\right|_{A}\left(0\right) - \left.e_{(a, b_2)}\right|_{A}\left(0\right)\right|
    \end{equation*}
    The quantity $\frac{1 - \mathsf{CausF}}{\mathsf{CausF}} m(e)$ will be refered to as the \emph{mininal cost} of an empirical model $e$.
\end{prop}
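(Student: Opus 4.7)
The plan is to exploit the closed-form expression for $\mathsf{CausF}$ from Proposition~\ref{prop:causalFracForm}, which writes the causal fraction in terms of the marginals of $e$ over party $A$. Since any empirical model $e_{\mathsf{Caus}}$ compatible with the order $A\to B$ has marginals over $A$ that do not depend on the input at $B$ (see Section~\ref{subsec:Qcausal}), the two marginals $e_{\mathsf{Caus},(a,b_1)}|_A$ and $e_{\mathsf{Caus},(a,b_2)}|_A$ coincide. By contrast, the corresponding marginals of $e$ differ by exactly $1-\mathsf{CausF}$ at the distinguished input $a$, and this mismatch forces a lower bound on the total variation distance between $e$ and $e_{\mathsf{Caus}}$ at one of the two contexts $(a,b_1)$ or $(a,b_2)$.

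More concretely, I would write $p_i := e_{(a,b_i)}|_A(0)$ for $i=1,2$ and $q := e_{\mathsf{Caus},(a,b_1)}|_A(0) = e_{\mathsf{Caus},(a,b_2)}|_A(0)$. Because marginalisation cannot increase total variation, one has
\[
\tfrac{1}{2}\sum_o |e_{(a,b_i)}(o) - e_{\mathsf{Caus},(a,b_i)}(o)| \;\geq\; |p_i - q|
\]
for each $i\in\{1,2\}$. The triangle inequality then gives $|p_1 - q| + |p_2 - q| \geq |p_1 - p_2| = 1-\mathsf{CausF}$, hence $\max_i |p_i - q| \geq (1-\mathsf{CausF})/2$. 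Taking the maximum over all contexts yields
\[
\max_C \tfrac{1}{2}\sum_o |e_C(o) - e_{\mathsf{Caus},C}(o)| \;\geq\; \tfrac{1-\mathsf{CausF}}{2},
\]
which is the core estimate and holds for every causal $e_{\mathsf{Caus}}$.

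To recover the stated form $\tfrac{1-\mathsf{CausF}}{\mathsf{CausF}}\,m(e)$ of the bound, I would show that $m(e) \leq \mathsf{CausF}/2$ by a short case split on which of $1-p_1$ or $p_2$ attains the minimum in the definition of $m(e)$. Under the natural labelling in which $e_{(a,b_1)}|_A(1)$ and $e_{(a,b_2)}|_A(0)$ are the minority marginals on party $A$ (equivalently $p_1 \geq p_2$), both candidates collapse to the same elementary inequality $\min(1-p_1,\,p_2) \leq (1-p_1+p_2)/2 = \mathsf{CausF}/2$, which is immediate in each case. Dividing by $\mathsf{CausF}$ and multiplying by $1-\mathsf{CausF}$ then gives $\tfrac{1-\mathsf{CausF}}{\mathsf{CausF}}\,m(e) \leq (1-\mathsf{CausF})/2$, so the proposition follows from the marginal estimate above.

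The principal subtlety, and the step I would flag in a formal write-up, is the implicit labelling convention in $m(e)$. The asymmetric form $\min\{e_{(a,b_1)}|_A(1),\,e_{(a,b_2)}|_A(0)\}$ only selects the two minority marginals under one of the two orderings of $b_1,b_2$; under the opposite ordering, $m(e)$ may exceed $\mathsf{CausF}/2$ and the stated bound can even become vacuous (its value can exceed $1$ while the left-hand side is at most $1$). The cleanest remedy is either to symmetrise the definition as $\min\{\min(p_1,1-p_1),\,\min(p_2,1-p_2)\}$, or to fix the ordering so that $p_1 \geq p_2$; with this convention in place, the argument above is essentially complete.
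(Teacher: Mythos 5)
Your proof is correct and takes a genuinely different, and considerably more economical, route than the paper's. The paper's argument (Appendix~\ref{app:minCost}) explicitly parametrises the causal models $e_{\mathsf{Caus}}$ satisfying $\mathsf{CausF}\cdot e_{\mathsf{Caus}}\leq e$, pins down the forced values of the $A$-marginal and several conditional parameters, and then minimises the per-context total variation over the remaining free parameters by a case analysis on piecewise-linear functions; this is aimed at showing the bound is (approximately) attained as well as valid. You instead use only two facts: that a model compatible with $A\to B$ has a single $B$-independent marginal $q$ on party $A$, and that marginalisation is a contraction for total variation, so the triangle inequality $|p_1-q|+|p_2-q|\geq|p_1-p_2|=1-\mathsf{CausF}$ forces $\max_C \frac12\sum_o|e_C(o)-e_{\mathsf{Caus},C}(o)|\geq\frac{1-\mathsf{CausF}}{2}$; the stated bound then follows from the elementary inequality $m(e)=\min(1-p_1,p_2)\leq\frac{1-p_1+p_2}{2}=\frac{\mathsf{CausF}}{2}$. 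Two remarks. First, your labelling caveat is genuine and well spotted: the inequality $m(e)\leq\mathsf{CausF}/2$ (and indeed the truth of the proposition as literally stated) requires $p_1\geq p_2$, which is exactly the ``without loss of generality'' relabelling the paper performs silently at the start of its own proof, so your fix is the right one. Second, your intermediate bound $\frac{1-\mathsf{CausF}}{2}$ is in general \emph{strictly stronger} than the stated minimal cost whenever $1-p_1\neq p_2$; this is worth noting because it implies the paper's quantity is not tight in those cases, so what your approach loses relative to the paper's explicit construction (a candidate optimiser) it more than recovers in the sharpness of the lower bound itself.
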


The proof of this proposition can be found in Appendix~\ref{app:minCost}.

\subsubsection{Outcome of the training process}
\paragraph{}We trained variational circuits using the above ansatz for numbers of rounds varying from 1 to 5 and set the initial set of parameters randomly. We found that all of the models converged with respect to the cost function \eqref{eq:costFct} (see Fig.~\ref{subfig:convSingModel}). In addition, the converged cost appears to get closer and closer to the minimal possible cost, dictated by \eqref{eq:minCost}, as the number of rounds increases (see Fig.~\ref{subfig:convMinCost}). This shows that the accuracy of the quantum circuits does indeed increase as the number of parameters increases. 

For the rest of this chapter, we assume that models with more parameters are more accurate and, therefore, more representative of the process under investigation.

\begin{figure}[htb!]
    \centering
    \begin{subfigure}{.45\linewidth}
        \centering
        \includegraphics[width=\linewidth]{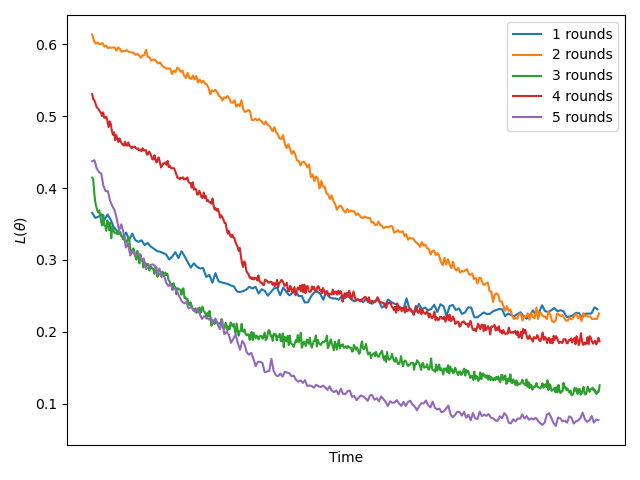}
        \caption{Cost function as a function of number of steps for the SV model with subjects taken from $\left\{press, volume\right\}$ and verbs taken from $\left\{conduct, file\right\}$, and number of rounds ranging from 1 to 5.\label{subfig:convSingModel}}
    \end{subfigure}\qquad%
    \begin{subfigure}{.45\linewidth}
        \centering
        \includegraphics[width=\linewidth]{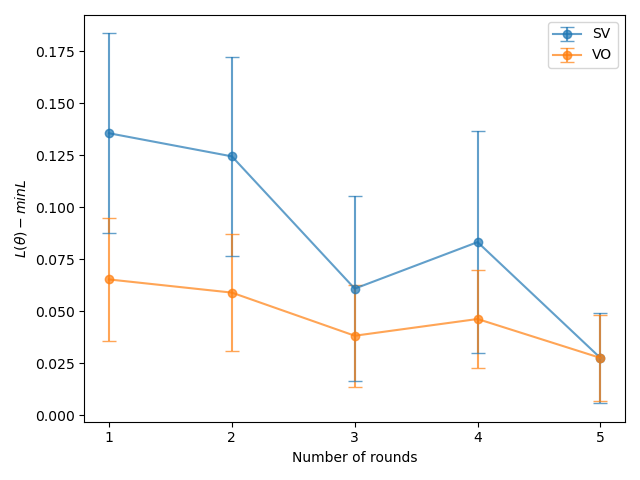}
        \caption{Average optimised cost (offset by their respective minimal costs).\label{subfig:convMinCost}}
    \end{subfigure}
    \caption{Convergence of the variational circuits}
\end{figure}

\paragraph{On the variability of the optimised parameters}We note that, even though the variational circuits converge to similar costs for different choices of initial parameters, the values of the optimised parameters are quite variable for different choices of initial parameters (see Fig.~\ref{fig:varParams}). The average distance between the parameters for different randomly sampled initial parameters was $1.57 \pm 0.91$, which is precisely the expected distance between randomly chosen parameters (which can be calculated to be $\frac{\pi}{2} \pm \frac{\pi}{2\sqrt{3}}$). 

This is to some extent expected as, although we know that there exists a minimal possible cost we can achieve, namely \eqref{eq:minCost}, there is an infinite number of (causal) empirical models $e_{\mathsf{Caus}}$ which achieves this minimal cost. This is not necessarily a problem if one is only interested in obtaining (independent) models of the disambiguations of the phrases in a given empirical model. However, in the following discussions, we will be interested in using these circuits for predictions of probability distributions, as well as investigating the use of the obtained quantum states as (quantum) word embeddings. 

Hence, from now on, we will fix the choice of initial parameters (for each number of rounds) to reduce the final parameters' variability.

\begin{figure}[htb!]
    \begin{subfigure}[c]{.45\linewidth}
        \centering
        \includegraphics[width=\linewidth]{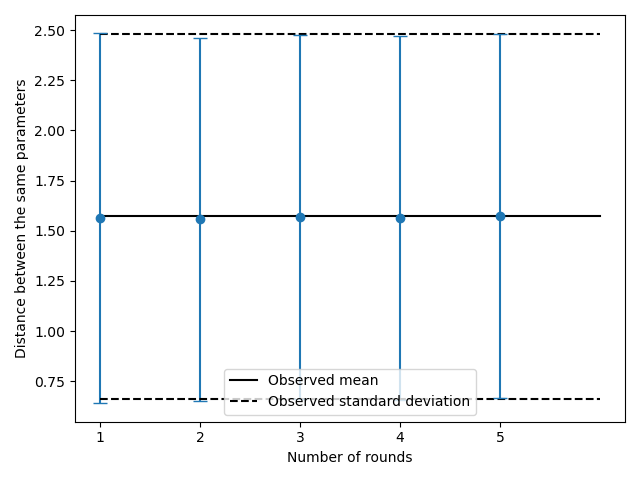}
        \caption{Distances between optimised parameters for different choices of initial parameters.}
    \end{subfigure}\quad%
    \begin{subfigure}[c]{.45\linewidth}
        \centering
        \includegraphics[width=\linewidth]{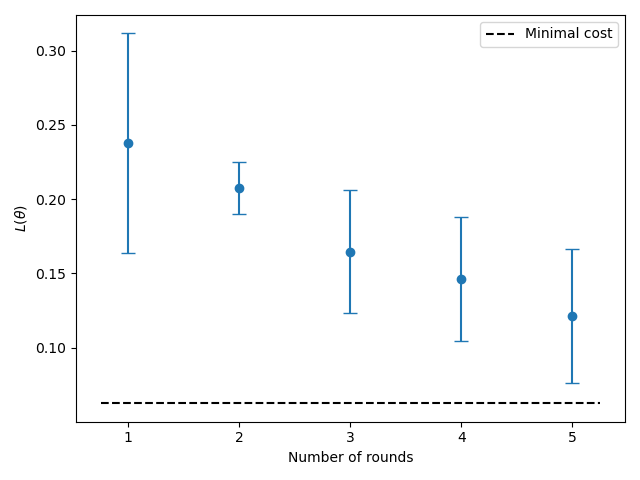}
        \caption{Optimised cost functions for different choices of initial parameters.}
    \end{subfigure}
    \caption{Comparison of the accuracies and obtained parameters for the SV model $\left\{press, volume\right\}\times \left\{conduct, file\right\}$ where the training was done using different initial parameters.\label{fig:varParams}}
\end{figure}

\section{The prediction power of the variational circuits}\label{sec:simulPred}

\paragraph{}We now investigate whether the obtained circuits can predict the activation pattern of the meanings of \emph{unseen phrases}. We then suggest the task of predicting the probability distribution of the different activation patterns for phrases that have not been explicitly trained. 

\subsection{Methods}
\paragraph{}Here, we propose to obtain predictions by splitting individual operations (i.e. subject, verb, and object) from trained circuits and combining them with operations from different models (see Fig.~\ref{fig:CircuitPredProc}). For example, given two optimised SV circuits corresponding to the empirical models with measurements $\mathcal{M}_1 = \left\{paper, plant\right\}\times \left\{bore, tap\right\}$ and $\mathcal{M}_2 = \left\{press, volume\right\}\times \left\{conduct, file\right\}$, we can create a new circuit which would correspond to an empirical model with the subjects taken from $\mathcal{M}_1$, and the verbs taken from $\mathcal{M}_2$, i.e. $\mathcal{M}' = \left\{paper, plant\right\}\times \left\{conduct, file\right\}$.

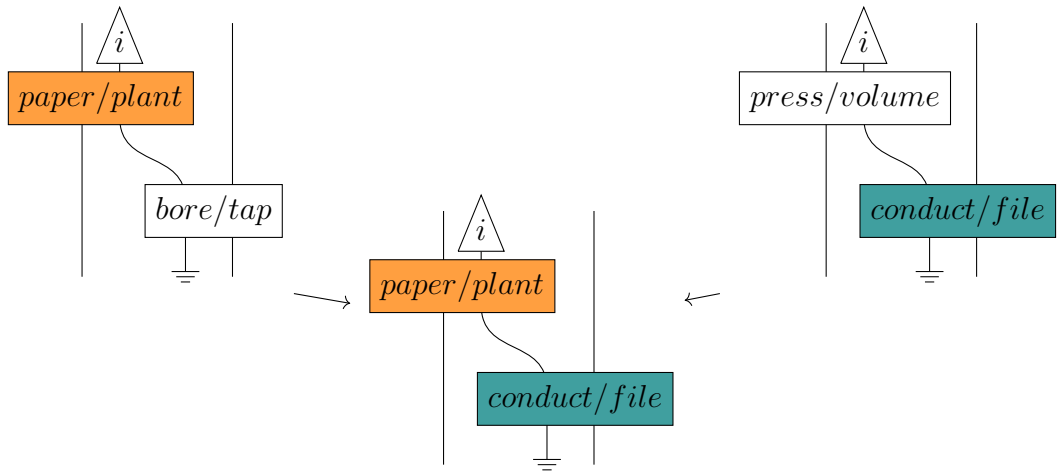
\begin{figure}[htb!]
    \centering
    \begin{tikzpicture}
        \node(B) at (5,2.5){\tikzfig{Lexical/tikzit/press\_volume\_conduct\_file}};
        \node(A) at (-5,2.5){\tikzfig{Lexical/tikzit/paper\_plant\_bore\_tap}};
        \node(combined) at (0,0){\tikzfig{Lexical/tikzit/paper\_plant\_conduct\_file}};
        \draw[->] (A.south east) to (combined);
        \draw[->] (B.south west) to (combined);
    \end{tikzpicture}
    \caption{Procedure for obtaining new (untrained) circuits from trained ones.\label{fig:CircuitPredProc}}
\end{figure}

\paragraph{}We then constitute a set of 81 SV and 81 VO empirical models, which we will train using the procedure described in Section~\ref{sec:simulMethods}. These empirical models will constitute our training set. We then test the predictions obtained from recombining the subject, object, and verb circuits (as in Fig.~\ref{fig:CircuitPredProc}) to predict the probability distributions of 84 new SV and 84 VO empirical models for which we have the corresponding human data; these new empirical models will constitute our testing set. The empirical in the training and testing sets can be found in Appendix~\ref{app:predDataset}.

\begin{rmk}
    Regarding the consistitution of the traning and testing set, we were restraint by the set of randomly chosen phrases for which we collected the human plausibility judgments. Hence, not all of the possible empirical models arising from the process described above and on Fig.~\ref{fig:CircuitPredProc} could have been evaluated against a ground truth distribution (as we may not have collected it), which in turns restricts our choice of testing set. The training set on the other hand is dictated by which empirical models were needed to obtain predictions for all of the empirical models in the testing set.
\end{rmk}

\subsection{Results}

\paragraph{}We observe that the predicted circuits achieve a reasonably low cost (see Fig.~\ref{fig:CircuitPreds}). The average cost of the unseen models was $\left<L(\theta)\right> = 0.24$ (i.e. accuracy of $0.76$)  for the SV models and $\left<L(\theta)\right> = 0.14$ (i.e. accuracy of $0.86$) for the VO models. This procedure resulted in an average cost of $0.19$ (i.e. accuracy of $0.81$) for both types of models. These cost values are unsurprisingly still higher than the cost that can be achieved by training the models themselves, which were respectively $0.07\pm 0.04$ for SV and $0.06\pm 0.04$ for VO models.

\begin{figure}[htb!]
    \centering
    \includegraphics[width=.5\linewidth]{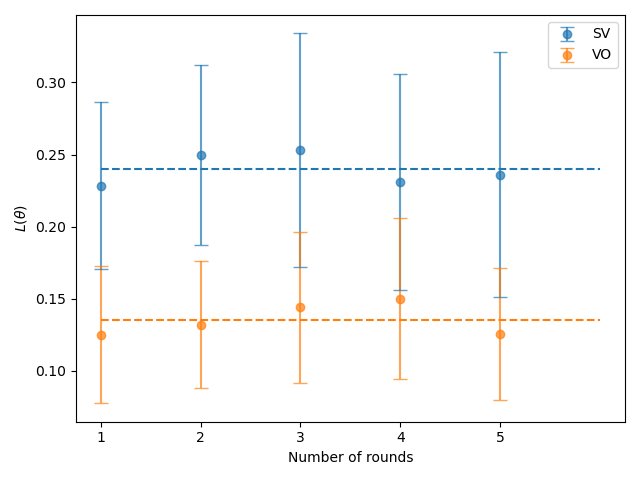}
    \caption{Cost values obtained for the predicted probability distributions.\label{fig:CircuitPreds}}
\end{figure} 

However, these predictions do not improve as the number of rounds increases. Indeed, the Pearson's $\rho$ coefficients between the number rounds and the prediction accuracies are of $\left|\rho\right|<0.01$ (with $p$-value $p=0.84$) for the SV models and $\left|\rho\right| = 0.06$ and $p=0.22$ for the VO models; such low values for $\rho$ and higher $p$-values shows that there likely \emph{no correlation} between the number of rounds and the accuracy of the predictions. On the other hand, the obtained accuracies are significantly better than the ones obtained by taking the uniform probability distributions empirical model (see Table~\ref{tab:uniform}), which corresponds to the probability distributions obtained by simply guessing outcomes without any knowledge of the system. The differences between the uniform distribution baseline and predicted accuracies were statistically different from $0$ with $p$-values $p<10^{-10}$ for each number of rounds. 

\begin{table}[htb!]
    \centering
    \begin{tabular}{|c|c|c|c|c|}
        \hline$(A,B)$ & (0, 0) & (0, 1) & (1, 0) & (1, 1)\\\hline
        $(a_1, b_1)$ & 1/4 & 1/4 & 1/4 & 1/4 \\\hline 
        $(a_1, b_2)$ & 1/4 & 1/4 & 1/4 & 1/4 \\\hline
        $(a_2, b_1)$ & 1/4 & 1/4 & 1/4 & 1/4 \\\hline 
        $(a_2, b_2)$ & 1/4 & 1/4 & 1/4 & 1/4 \\\hline
    \end{tabular}
    \caption{Empirical model containing only uniform probability distributions\label{tab:uniform}}
\end{table}

\section{Obtaining quantum word embeddings}\label{sec:simulEmb}
\paragraph{}Our next step is to see whether the circuits we have trained can be used to give a meaningful representation of words. If this is the case, this would give us a way of obtaining quantum word embedding, which could be used in NLP tasks such as word-sense disambiguation. Testing the performance of such word-state in NLP tasks is beyond the scope of this work and is left to future work. 

\subsection{Methods}
\paragraph{}Each word-state is trained for a fairly specific empirical model and is not compositional by default. Hence, we first want to check that the states which should correspond to the same word are indeed similar. If this were not the case, these word-states would not be useful anyway as a single word would have multiple representations. Here, we will assume that each word's dependency constitutes an inherent part of the word. For example, the noun \textit{pitcher} used as a subject will be considered distinct from the same word used as an object. Similarly, the verb \textit{tap} taking a subject will be considered distinct from the same verb taking an object instead. 

As we did for the prediction task, we first fixed the initial parameters for all of the variational circuits (otherwise, we would expect the different word-states to share little features). Then, using the optimised circuits, we can obtain a quantum state representation of a subject or object by fixing the input of the $S$ or $O$ individual circuits. For example, given the circuit representation of $\left\{press, volume\right\}$:
\begin{equation}\label{eq:exScircuit}
    \tikzfig{Lexical/tikzit/S_generic} = \tikzfig{Lexical/tikzit/press_volume}
\end{equation}
(obtained from, say, training for the empirical model corresponding to the SV model $\left\{press, volume \right\}\times\left\{conduct, file\right\}$),we can obtain a quantum state representation of the word \textit{volume} (as a subject) as:
\begin{equation}\label{eq:exVolumeScircuit}
    \tikzfig{Lexical/tikzit/volume_S} = \tikzfig{Lexical/tikzit/press_volume10_annotated}
\end{equation}
(recall that, in the case of SV circuits, the right-hand ancilla state is always set to $\ket{0}$ for SV circuits). Similarly, given the verb-object model associate with $\left\{conduct, file\right\}\times \left\{press, volume\right\}$, given an optimised object circuit:
\begin{equation}\label{eq:exOcircuit}
    \tikzfig{Lexical/tikzit/O_generic} = \tikzfig{Lexical/tikzit/press_volume}
\end{equation}
the quantum state representation of the word \textit{volume} (as an object) is:
\begin{equation}\label{eq:exVolumeOcircuit}
    \tikzfig{Lexical/tikzit/volume_O} = \tikzfig{Lexical/tikzit/press_volume01_annotated}
\end{equation}
(recall that, in the case of VO circuits, the right-hand ancilla state is always set to $\ket{0}$). Also note that the RHS of \eqref{eq:exScircuit} and \eqref{eq:exOcircuit} do not have to be related, so in general, \eqref{eq:exVolumeScircuit} will be different from \eqref{eq:exVolumeOcircuit}. 

For verbs, the process is a little more complicated as the representation of the verb not only depends on the choice of the verb but is also taking some information from the $S$ or $O$ circuit as well (see Fig.~\ref{fig:simulationSummary}). Hence, to obtain the verb representation, we first fix its input and then take the \emph{partial trace} over the subsystem dependent on the $S$ or $O$ output. This procedure will give us a density matrix instead of a pure quantum state. For example, given the verb circuit form the SV circuits obtained for the empirical model $\left\{press, volume\right\}\times \left\{conduct, file\right\}$:
\begin{equation}
    \tikzfig{Lexical/tikzit/V_generic} = \tikzfig{Lexical/tikzit/conduct_file}
\end{equation}
we can obtain the representation of the verb \textit{conduct} as:
\begin{equation}
    \tikzfig{Lexical/tikzit/conduct} = \tikzfig{Lexical/tikzit/conduct_details} = \tikzfig{Lexical/tikzit/conduct_thick_annotated}
\end{equation}

\paragraph{}In order to quantify ``how similar'' two word-vectors are, we decide to calculate the inner products between them. Here, we will be interested in the inner-product between states representing the same word (and dependency) but have been trained to approximate different empirical models. For example, given two SV circuits for the empirical models associated with $\mathcal{M}_1 = \left\{press, volume\right\}\times \left\{conduct, file\right\}$ and $\mathcal{M}_2 = \left\{line, volume\right\}\times \left\{box, reflect\right\}$, we would like to compare the word-states associated with \textit{volume} in the two optimised circuits. 

In the case of nouns (i.e. subjects or objects), we recall that these are pure quantum states, so we calculate their inner product using the Born rule:
\begin{equation}
    \left|\langle\psi_{n}|\phi_{n}\rangle\right|^2 = \left|\tikzfig{Lexical/tikzit/phin_psiN}\right|^2
\end{equation}
where $\ket{\psi_n}$ and $\ket{\phi_n}$ are both representation of the noun $n$. For example, given the circuits optimised for the empirical models associated with $\mathcal{M}_1$ and $\mathcal{M}_2$ as defined above in the paragraph, we define the inner products of the two word-states associated with the word \textit{volume} as:
\begin{equation}
    \left|\tikzfig{Lexical/tikzit/volume_innerprod}\right|^2
\end{equation}

For verbs, which we recall are density matrices, we apply the generalised Born rule instead to calculate the inner product:
\begin{equation}
    Tr(\rho_v \varrho_v) = \tikzfig{Lexical/tikzit/rhov_rhov}
\end{equation}
where $\rho_v$ and $\varrho_v$ are both representation of a verb $v$. For example, given two SV circuits optimised to approximate the empirical models associated with :
\begin{align*}
    \mathcal{M}_1 =& \left\{press, volume\right\}\times \left\{conduct, file\right\}\\
    \mathcal{M}_3 =& \left\{letter, paper\right\}\times\left\{conduct,grasp\right\}
\end{align*}
we can define the inner-product between the two word-states of the verb \textit{conduct} as:
\begin{equation}
    \tikzfig{Lexical/tikzit/conduct_innerprod} = \tikzfig{Lexical/tikzit/conduct_innerprod_details}
\end{equation}

In both cases, an inner-product of $1$ will mean that the two states are equivalent, and an inner-product of $0$ will mean that the two states share no feature.

\subsection{Results}

\paragraph{}We found out that the pure states corresponding to nouns have a larger overlap between different models, where the average inner product was of $0.64$ for noun-states, compared to mixed verb-states where the average inner product was $0.37$. 

\paragraph{}In addition, we observed stark differences between these inner products depending on whether the word states were associated within the same or different input state, where we only consider the input state which identifies the word of interest (as opposed to the ancilla input, which is always the same). For example, the word \textit{press} in the model $(press, volume)\times (conduct, file)$ is associated with the input state $\ket{0}$, whereas the same word in the model $(line, press)\times (admit, wipe)$ will correspond to the input state $\ket{1}$.

In particular, we observed that the word states associated with words corresponding to the same input states (e.g. \textit{press} in $(press, volume)\times (conduct, file)$ and $(press, television)\times (box, label)$) have an inner product close to $1$. The average of these inner-product was $0.94$ for pure noun-states and $0.48$ for mixed verb-states.

By contrast, the word states corresponding to different input states (e.g. \textit{press} in $(press, volume)\times (conduct, file)$ and $(line, press)\times (admit, wipe)$) had a small overlap (on average the inner-product between them was $0.03$ for noun states and $0.02$ for verb states). 

This is expected as we have previously seen that the output of the training process is highly dependent on the choice of initial parameters. Hence, when words are associated with different input states, they will not necessarily correspond to the same optimised circuits. 

\paragraph{}Overall, this means that the quantum word embeddings that we obtain from these variational circuits have the potential to be useful in NLP tasks as long as one fixes the input state it corresponds to.

% \red{DID I HAVE ANY GRAPHS FOR THIS??}

\section{Entanglement of phrases and words}\label{sec:simulEntang}
\paragraph{}We now investigate how much entanglement has been created using the optimised variational circuits. Entanglement is often considered the primary source of quantum correlation~\cite{Entanglement,entanglement1}. A quantum state is said to be \emph{entangled} (or \emph{non-separable}) iff it cannot be prepared using Local Operations and Classical Correlations (LOCC) alone~\cite{entanglementMixed}. If this amount of entanglement is high, in particular for the more accurate models, this suggests that training using quantum resources may be beneficial (as opposed to simply using classical probabilistic methods).

\subsection{Entanglement measures}
\paragraph{}Due to its importance in quantum information theory, entanglement needs to be quantified. Many measures of entanglement have been proposed for bipartite and multipartite states, for pure and mixed states. Here, we will focus on the bipartite measures and introduce an entanglement measure pure state and one for mixed states.

\paragraph{}For pure state, the standard measure of entanglement is the \emph{entanglement entropy}, defined as:
\begin{equation}\label{eq:Eentropy}
    E(\ket{\psi}\in \mathcal{H}_A\otimes \mathcal{H}_B) = -Tr \left(\rho_A \log_2\rho_A\right) = -Tr \left(\rho_B \log_2\rho_B\right) 
\end{equation} 
where $\rho_{A,B} = Tr_{A,B} \ket{\psi}\bra{\psi}$.

Many (non-equivalent) measures of entanglement have been proposed for mixed states. Among which is the \emph{entanglement of formation}, formally defined as:
\begin{equation}\label{eq:EFfull}
    E_F(\rho) = \inf \left\{\sum_k p_k E\left(\ket{\psi_k}\right)~\middle|~\rho = \sum_k p_k \ket{\psi_k}\bra{\psi_k}\right\} 
\end{equation}
Using the above definition alone, it is very hard to calculate the entanglement of formation for an arbitrary density matrix, as it involves finding all of the possible decompositions of the matrix $\rho$ in terms of density matrices of pure states. Fortunately, in the case of qubit systems, a closed formula has been found to calculate the entanglement of formation~\cite{EntanglementConcurrence}, namely:
\begin{equation}
    E_F(\rho) = s \left(\frac{1+\sqrt{1-C^2(\rho)}}{2}\right)
\end{equation}
where $C(\rho)$ is defined as:
\begin{equation}
    C(\rho) = \max \left\{0, \lambda_1 - \lambda_2 - \lambda_3 - \lambda_4\right\}
\end{equation}
where $(\lambda_1, \lambda_2, \lambda_3, \lambda_4)$ is the ordered set of eigenvalues of $\rho \sigma_Y\otimes \sigma_Y \rho^* \sigma_Y\otimes \sigma_Y$, and $s$ is defined as:
\begin{equation}
    s(x) = - \left(x \log_2 x\right) - \left((1-x) \log_2 (1-x)\right)
\end{equation}
\subsection{Entanglement of the optimised circuits}

\paragraph{}We start by looking at the amount of entanglement created by the whole parametric circuits. 

These parametric circuits are bipartite by design, and only make use of qubits. However, we discard a qubit system in each of the circuits. Hence, we will need to quantify entanglement for mixed states. 

\paragraph{Results \& Discussion}The amount of entanglement of formation for the trained parametric circuits is depicted in Fig.~\ref{fig:EFrounds}. As we can see, the amount of quantum correlations seems to increase as the circuit's accuracy increases, particularly for VO models. 

This would suggest that the process of disambiguation is ``truly parallel'' instead of having probabilistic mixtures of combinations of interpretations that can be selected, particularly in VO phrases. 

\begin{figure}[htb!]
    \centering
    \includegraphics[width=.5\linewidth]{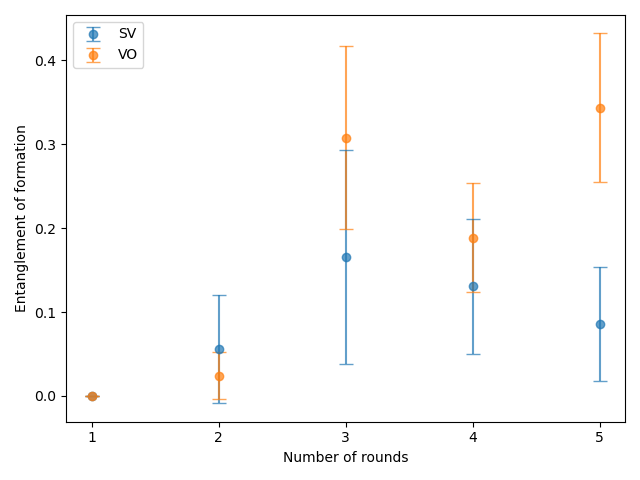}
    \caption{Entanglement of formation of the optimised parametric circuits, as the number of rounds increase.\label{fig:EFrounds}}
\end{figure}

\subsection{Entanglement of the noun-embeddings}

\paragraph{}We now want to study the amount of entanglement created for each word individually. However, since only one qubit in the output of verb-circuits is not discarded, any meaningful representation of verb vectors would be simply monopartite, and the notion of entanglement does not apply. We then primarily focus on the degrees of entanglement of nouns. In addition, since subject and object word-states are pure states by design, we can use entanglement entropy \eqref{eq:Eentropy} to measure the amount of quantum correlations created.

\paragraph{}The evolution of the entanglement entropy of noun states as the number of rounds in the ansatz increases is shown in Fig.~\ref{fig:Enouns}. The entanglement entropy of subjects is high for smaller numbers of rounds ($\left<E\right> = 0.95\pm 0.03$ for $n=1$ rounds) but decreases as the number of rounds increases ($\left<E\right> = 0.25\pm 0.18$ for $n=5$ rounds). The opposite happens for objects, that is, the amount of entanglement of noun states increases as the approximations get better and better (from $\left<E\right> = 0.55\pm 0.04$ for $n=1$ round to $\left<E\right> = 0.83\pm 0.10$ for $n=5$ rounds). 

This observation suggests that the correlations between the subject and the verb are not as strong as the correlations between the object and the verb, or that SV phrases are disambiguated more locally than VO phrases where more interaction between the two words would be needed. As before, this would require further experiments to confirm this trend.

\begin{figure}[htb!]
    \centering
    \includegraphics[width=.5\linewidth]{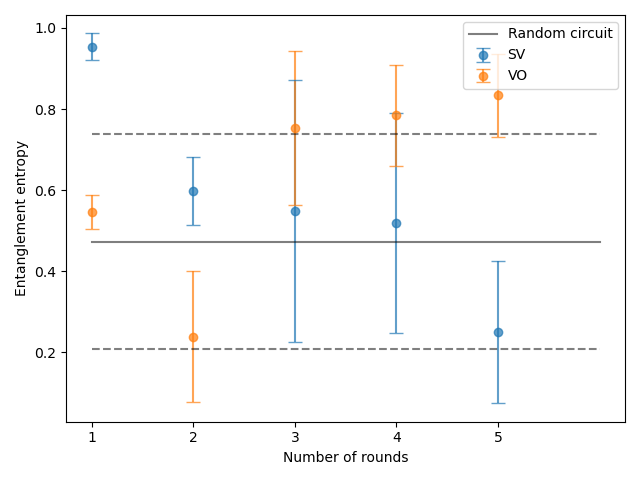}
    \caption{Evolution of the entanglement generated by the subject and object circuits, as the number of rounds increases.\label{fig:Enouns}}
\end{figure}

\paragraph{}In addition, the entanglement entropy of the nouns (subject or object) does not appear to depend on whether it is homonymous or polysemous. All of the $p$-values were $p>0.05$ for subjects and $p>0.19$ for objects. 

However, this is by design, as nouns are represented by pure states. Hence, the only way to obtain correlations between the nouns and the verb is through entanglement. It would be interesting to train density matrices in future work and check whether the ambiguity of the nouns affects the entanglement of its quantum state representation.
\section*{Summary of the Chapter}
\markboth{Summary of the chapter}{Quantum simulations of the disambiguation process}
% \def\titlebox{0}
% \begin{recapbox}
    \begin{itemize}
        \item Using the observations of Chapter~\ref{chap:lexicalFeatures}, we proposed a quantum model of the disambiguation of subject-verb and verb-object phrases.
        \item This model was successfully implemented by variational circuits.
        \item The optimised quantum circuits were used to predict the meaning of unseen phrases.
        \item Preliminary evidence suggest that these circuits could be used as embeddings in NLP tasks.
        \item The optimised circuits generated a non-negligible amount of entanglement.
    \end{itemize}
% \end{recapbox}
% \def\titlebox{1}

\part{Syntactic ambiguity}\label{part:Syntactic}

\chapter{Modeling the human parsing process}\label{chap:SyntacticModel}
\paragraph{} In this chapter, we introduce our models of the human parsing process using sheaf theory. We adopt a model in which all possible parses are available at any given stage of a sentence with some probabilities (which we estimate empirically in Chapter~\ref{chap:GPPred}). This description is compatible with parallel-ranked processing, where the reader constructs parses in parallel and associates each parse with a weight, and probabilistic serial strategies, where the reader creates each parse with a given probability. 

This model leads to empirical models similar to the ones described in the previous part and the ones of quantum mechanics~\cite{AbramskyBrad,sheafcausality,sheafcausalityB,abramskyCausality}. %Using these empirical models, we make predictions of reading times and compare these predictions with the ones made from surprisal described in Section \ref{subsec:surprisal}. Our predictions outperform surprisal in several ways and distinguish between NP/S and NP/Z sentences. 
Our empirical models only consider syntactic parses and no lexical or discourse information. Therefore, this is a preliminary model, and our results suggest that such models can indeed be used to represent human processes.

\paragraph{}In Section~\ref{sec:GPModel}, we present the structure of our models and how empirical statistics are collected. In Section~\ref{sec:GPfeatures}, we formally analyse the empirical models regarding quantum contextuality and causality. This section also includes intuitions about interpreting the signalling and causality in the models.

These models, as well as the predictions described in the next chapter can also be found in~\cite{WangGardenPath}.
\section{A sheaf-theoretic model of the syntax of sentences}\label{sec:GPModel}
\paragraph{}We start by describing a sheaf-theoretic model of human parsing based on the following key points:
\begin{enumerate}
    \item \textbf{Incrementality}. We want to study the evolution of the reader's mental representation as they encounter more information. The literature shows that human understanding is highly incremental. Therefore, we want to create a model that follows the linear order of the words in the sentence, i.e. how information is presented to the reader.
    \item \textbf{Grammatical structure}. This is our main object of interest. We want to capture what the reader thinks of the grammatical structure of a sentence or part of a sentence. In this work, only the grammatical structure is taken into account. In future work, we plan to include other factors such as plausibility, thematics, etc.
    \item \textbf{Statistics}. We follow the psycholinguistic hypothesis that the reader may keep in mind \emph{all} of the possible grammatical structures at each stage, but with different ratings (see Section~\ref{subsec:syntacticPsycho}). This implies that we need a way of ``rating'' different grammatical structures. In the following sections, we opt for probabilities over partial parses. This will give a parallel-ranked model of the parsing process.
\end{enumerate}
Sheaves are a promising way of combining these concepts in a single framework. The intuition is that the statistics are defined over the set of possible grammatical structures. In turn, the grammatical structures are only defined over a set of words that are presented to the reader, and evolve in a linear fashion. Let us describe this idea in more detail.

\subsection{Incrementality} \label{subsec:modelIncremental}
\paragraph{}As described above, we take our contexts to be words as appearing in a sentence or a phrase. The combination of these words forms sentence fragments by concatenation. Now, for a given sentence, e.g. \textit{The employees understood the contract}, we can define many different sentence fragments, for example, \textit{The employees}, \textit{The employees understood}, \textit{employees understood}, etc. We define a \emph{prefix order} over this set of sentence fragments as follows:
\begin{defs}
    A fragment $s_1$ of a sentence is included in another fragment $s_2$ iff $s_1$ is a prefix of $s_2$. We write:
    \begin{equation}
        s_1\leq_p s_2
    \end{equation}
    The set of fragments of a sentence $\texttt{S}$ equipped with the prefix order forms a (preorder) category $\mathcal{C}_\texttt{S}$.
\end{defs}
\begin{ex}
    In the sentence \textit{The employees understood the contract}, we have:
    \begin{equation}
        \text{\it The employees} \leq_p \text{\it The employees understood}
    \end{equation}
    However, the two fragments \textit{The employees} and \textit{employees understood} are not comparable.
\end{ex}

% \begin{rmk}
%     The category $\mathcal{C}_\texttt{S}$ can be seen as a subcategory of the \red{... category of the ... monad over the set $V(\texttt{S})$ which corresponds to the vocabulary of the sentence $\texttt{S}$. Then $\mathcal{C}_\texttt{S}$ is obtained by restricting to the object of ... which are subphrases of the original sentence.}
% \end{rmk}
\begin{rmk}
    We could have chosen morphism to be the simple inclusion of subphrases, e.g. taking: $\text{\it employees}\subseteq \text{\it The employees}$. However, in the case of a purely incremental model of parsing, the prefix order appears to be the most relevant, as it models the order of information available to the reader.
\end{rmk}

In order to study the incremental evolution of the probability distributions, we consider a \emph{sequence} of empirical models. Each empirical model will consist of two \emph{consecutive stages}. For example, if we want to study the behaviour at the word level (e.g. the difficulty of reading a word), we would take stages to be words and subsequently consider sequences of empirical models containing a pair of contexts, where the contexts only differ by one word. In this case, the sentence \textit{The employees understood the contract would change} would lead to the sequence of empirical models with contexts:
\begin{align*}
    \mathcal{M}_1 =& \left\{\text{\it The}, \text{\it The employees}\right\}\\
    \mathcal{M}_2 =& \left\{\text{\it The employees}, \text{\it The~employees~understood}\right\}\\
    \mathcal{M}_3 =& \left\{\text{\it The~employees~understood}, \text{\it The~employees~understood~the}\right\}\\
    &\vdots\\
    \mathcal{M}_6 =& \left\{\text{\it The~employees~understood~the~contract~would},\right.\\
    &\left.\text{\it The~employees~understood~the~contract~would~change}\right\}\\
\end{align*}
Similarly, we could take stages to correspond to regions or phrases of the sentence, in which case we would consider sequences of empirical models in which the contexts differ by a region. An example of such a sequence of empirical models would consider the following contexts:
\begin{align*}
    \mathcal{M}_1 =& \left\{\text{\it The~employees}, \text{\it The~employees~understood~the~contract}\right\}\\
    \mathcal{M}_2 =& \left\{\text{\it The~employees~understood~the~contract},\right.\\
    &\left.\text{\it The~employees~understood~the~contract~would~change}\right\}
\end{align*}
We could also consider a more fine-grained analysis and take tokens or morphemes to be the incremental unit. In this chapter, we will study word-by-word parsing behaviour. %\blue{Add reason why? I.e. because self-paced reading is usually word-by-word?}

\subsection{Grammatical structures}\label{subsec:modelGrammar}
\paragraph{}For each phrase or context, we want to be able to associate a grammatical structure. The grammatical structure will then be the \emph{outcomes} of our models. There are different ways to represent the grammatical structure of natural language input, including constituency trees~\cite{Chomsky}, dependency grammars~\cite{RobinsonDependency}, and categorial grammars~\cite{ajdukiewicz,BarHillel,Lambek,SteedmanCCG}.%\blue{Add a figure comparing all of the different grammars?}. 
In this work, we decide to work with dependency grammars. 

\subsubsection{Depedency grammar}
\paragraph{}In dependency grammar, each word of the sentence is associated with a \emph{head} and a \emph{dependency} or syntactic function. The main dependency structures are as follows:
\begin{itemize}
    \item The main verb of a sentence, or the main noun of a noun-phrase, is its own head and is associated with the dependency \texttt{ROOT};
    \item The head of the subject of a verb is the verb, and its dependency is \texttt{nsubj} (nominal subject);
    \item The head of the object of a transitive verb is once again the verb, and its dependency is \texttt{dobj} (direct object);
    \item The head of a determiner is its head noun and comes with dependency \texttt{det};
    \item The head of an adjective is the noun it is modifying, and its dependency is \texttt{amod} (adjective modifier);
    \item \ldots
\end{itemize}
\paragraph{}These dependency structures are usually represented as graphs, where we use labelled directed arrows $word\xrightarrow{depdency} head(word)$ to represent them (see Fig.~\ref{fig:fullDepParse} for example of such graphs).

\begin{figure}[htb!]
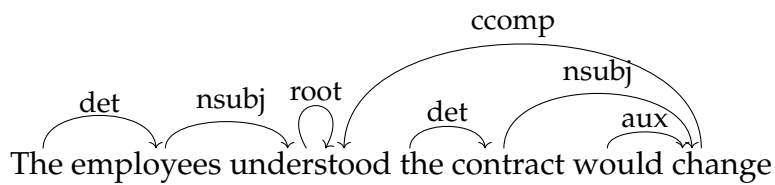

    \centering
    \tikzfig{Syntactic/figs/Full_dep_parse}
    \caption{Depedency relations in the sentence \textit{The employees understood the contract would change}.\label{fig:fullDepParse}}
\end{figure}

\subsubsection{The presheaf of events}

\paragraph{Events}The main reason for adopting dependency grammar instead of a different paradigm is its minimality. In particular, it is easy to convert a dependency graph to a function $s: U\to \mathbb{N}\times D$, where $U$ corresponds to the \emph{ordered} set of words in a sentence, and $D$ is the set of dependency relations. 

To do so, we start by labelling the words of the sentence (or sentence fragment) by their position in the sentence (resp. sentence fragment) -- then each word has a label in $\mathbb{N}$. Then, each element of $U$ can be seen as an element of $\mathcal{V}\times \mathbb{N}$, where $\mathcal{V}$ is the vocabulary. We want a pair $(w,n)\in \mathcal{V}\times \mathbb{N}$ to represent a word $w$ being found in position $n$ in a sentence fragment. Hence, we would also require that for each element of $U$:
\begin{equation}
    (w,k), (w',k)\in U\implies w=w'
\end{equation}
This last condition states that we can only have one word at position $k$. We can then see that the objects of $\mathcal{C}_\texttt{S}$, can be expressed in this fashion as:
\begin{equation}\label{eq:UtoObCS}
    w_1\ldots w_n := \left\{\left(w_1,1\right), \ldots, \left(w_n, n\right)\right\}
\end{equation}
\begin{ex}
    Consider the sentence $\mathtt{S}=$\textit{The employees understood the contract would change}. Here, we have:
    \begin{align*}
        \text{\it The} =& \left\{(\text{\it The}, 1)\right\}\\
        \text{\it The employees} =& \left\{(\text{\it The}, 1), (\text{\it employees}, 2)\right\}\\
        \text{\it The employees understood} =& \left\{(\text{\it The}, 1), (\text{\it employees}, 2), (\text{\it understood}, 3)\right\}\\
        &\vdots\\
        \mathtt{S} =& \left\{(\text{\it The}, 1), (\text{\it employees}, 2), (\text{\it understood}, 3), (\text{\it the}, 4), \right.\\
        & \left.(\text{\it contract}, 5), (\text{\it would}, 5), (\text{\it change}, 6)\right\}\\
    \end{align*}
\end{ex}

We are now ready to define the dependency structure of a sentence fragment $U$ as a function $s:U\to \mathbb{N}\times D$, such that $(w, k) \mapsto (k', d)$ signifies that the word $w$ (at position $k$) has a head at position $k'$ with dependency $d$. So, for instance:
\begin{equation}
    \tikzfig{Syntactic/figs/1\_2\_2\_full\_dep}\equiv \begin{cases}
        (\text{\it The}, 1)&\mapsto (2, \mathtt{det})\\
        (\text{\it employees}, 2)&\mapsto (3,\mathtt{nsubj})\\
        (\text{\it understood}, 3)&\mapsto (3, \mathtt{ROOT})
    \end{cases}
\end{equation}

\paragraph{Presheaf structure}Using the correspondance \eqref{eq:UtoObCS} between the defined sets $U$ and objects of $\mathcal{C}_\mathtt{S}$, we can define the functor:
\begin{equation}
    \begin{matrix}
        \tilde{\mathcal{E}}: &\mathcal{C}_\mathtt{S}^{op}&\to &\mathbf{Sets}\\
        &U &\mapsto &\left\{\tilde{s}: U\to \mathbb{N}\times D\right\}\\
        &U\leq_p V &\mapsto &\tilde{s}_V\mapsto \tilde{s}_V|_U
    \end{matrix}
\end{equation}
where the restriction morphisms are defined $s_V\mapsto s_V|_U$ as:
\begin{equation}\label{eq:restrictParse}
    \tilde{s}_V|_U((w,k)\in U) = \tilde{s}_V((w,k)\in V)
\end{equation}
Note that this is well-defined since $U\leq_p V$ implies that $U$ and $V$ are of the form:
\begin{align*}
    U =& \left\{(w_1,1)\ldots (w_n,n)\right\}\\
    V =& \left\{(w_1,1)\ldots (w_{k}, k)\right\}
\end{align*}
where $n\leq k$. 
\begin{ex}
    Let us again take the sentence $\mathtt{S} = $ \textit{The employees understood the contract would change}. For $U=$ \textit{The employees} and $V=$ \textit{The employees understood}, we would have:
    \begin{equation}
        \left.\tikzfig{Syntactic/figs/1\_2\_2\_full\_dep}\right|_{\text{\it The employees}} = \tikzfig{Syntactic/figs/1\_2\_full\_dep}
    \end{equation}
\end{ex} 
\begin{rmk}
Since we are dealing with sentence fragments as well as sentences, the head of a word in a fragment may be \emph{undefined} or at least \emph{underspecified}. This feature is quite useful from a cognitive plausibility point of view, as it is hypothesised that humans tend to make predictions about the completion of sentences. Therefore, dependencies may not be known in advance.%\blue{Something about dynamic syntax?}
\end{rmk}

\paragraph{} To simplify calculations, we restrict our data by only considering unlabelled attachments, i.e., to only consider the head of each word as a grammatical structure. One can see that this is enough to distinguish between the different syntactic structures of NP/S and NP/Z garden-path sentences (see Fig.~\ref{fig:twoParses}).
\begin{figure}[ht!]
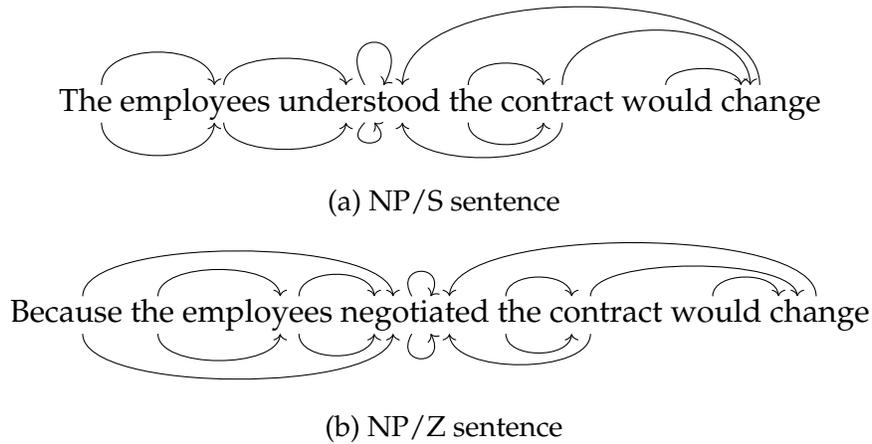

    \centering
    \begin{subfigure}{\linewidth}
        \centering
        \tikzfig{Syntactic/figs/NPS_twoparses}
        \caption{NP/S sentence}
    \end{subfigure}
    \begin{subfigure}{\linewidth}
        \centering
        \tikzfig{Syntactic/figs/NPZ_twoparses}
        \caption{NP/Z sentence}
    \end{subfigure}
    \caption{Examples of unlabelled dependency parses for NP/S and NP/Z sentences. The discarded parse is also shown (bottom/upside-down parse). \label{fig:twoParses}}
\end{figure}
Hence, we will take our \emph{presheaf of events} to be the functor:
\begin{equation}
    \begin{matrix}
        \mathcal{E}: &\mathcal{C}_\mathtt{S}^{op}&\to &\mathbf{Sets}\\
        &U &\mapsto &\left\{s: U\to \mathbb{N}\right\}\\
        &U\leq_p V &\mapsto &s_V\mapsto s_V|_U
    \end{matrix}
\end{equation}
where the parses $s:U\to \mathbb{N}\in \mathcal{E}(U)$ can be obtained from the parses $\tilde{s}: U\to \mathbb{N}\times D \in \tilde{\mathcal{E}}(U)$ as:
\begin{equation}
    s = \pi_1 \circ \tilde{s}
\end{equation}
\begin{ex}
    Taking a labelled parse to be:
    \begin{equation*}
        \tilde{s} = \tikzfig{Syntactic/figs/1\_2\_full\_dep}
    \end{equation*}
    the associated unlabelled parse is given as:
    \begin{equation*}
        s = \tikzfig{Syntactic/figs/1\_2}
    \end{equation*}
\end{ex}

\subsection{Probability distributions} \label{subsec:modelProb}
\paragraph{}We now associate probability scores with possible parses. As in previous chapters, this is done by selecting sections of the presheaf of events post-composed with the distribution monad $\mathcal{D}_{\mathbb{R}_+}: \mathbf{Sets}\to \mathbf{Sets}$. 

We recall that the functor $\mathcal{D}_{\mathbb{R}_+}$ associate with each set $A$, the set of probability distributions over $A$. Hence, for each $U\in ob\left(\mathcal{C}_\mathtt{S}\right)$, the set $\mathcal{D}_{\mathbb{R}_+}\mathcal{E}(U)$ corresponds to the set of all probability distributions over the set of possible (dependency) parses of $U$. Now, we are only interested in \emph{one} probability distribution in the set $\mathcal{D}_{\mathbb{R}_+}\mathcal{E}(U)$, namely, the probability distribution over parses corresponding the mental representation of the syntactic structure of the fragment $U$. 
\begin{ex}
    For $U=$ \textit{The employees understood}, we could single out the following probability distribution $e_{\text{\it The employees understood}}\in \mathcal{D}_{\mathbb{R}_+}\mathcal{E}(U)$:
\begin{equation}\label{eq:M5a}
    \begin{matrix}
        e_{\text{\it The employees understood}} \left(\scalebox{.8}{\tikzfig{Syntactic/figs/1\_2\_2}}\right)&=& 0.95\\
        e_{\text{\it The employees understood}} \left(\scalebox{.8}{\tikzfig{Syntactic/figs/1\_2\_7}}\right)&=& 0.02\\
        e_{\text{\it The employees understood}} \left(\text{other syntactic structures}\right)&<& 0.01\\
    \end{matrix}
\end{equation}
\end{ex}

Recall that we consider a sequence of empirical models such that in each empirical model, the measurement scenario consists of a pair of contexts differing by a single word. Consequently, an empirical model will consist of a pair of probability distributions. 
\begin{ex}
    An empirical model corresponding to $\mathcal{M}_3$ of the sentence \textit{The employees understoond the contract would change} could consist of the probability distribution of \eqref{eq:M5a} and :
\begin{equation}\label{eq:M5b}
    \begin{matrix}
        e_{\text{\it The employees understood the}} \left(\scalebox{.8}{\tikzfig{Syntactic/figs/1_2_2_4}}\right)&=& 0.37\\
        e_{\text{\it The employees understood the}} \left(\scalebox{.8}{\tikzfig{Syntactic/figs/1_2_2_5}}\right)&=& 0.35\\
        e_{\text{\it The employees understood the}} \left(\scalebox{.8}{\tikzfig{Syntactic/figs/1_2_2_6}}\right)&=& 0.26\\
        e_{\text{\it The employees understood the}} \left(\scalebox{.8}{\tikzfig{Syntactic/figs/1_2_6_4}}\right)&=& 0.01\\
        e_{\text{\it The employees understood the}} \left(\text{ other syntactic structures}\right)&<& 0.01\\
    \end{matrix}
\end{equation}
\end{ex}
\section{Contextuality, causality and signalling of the models}\label{sec:GPfeatures}
\paragraph{}We now investigate the properties of the created empirical models. In particular, we are going to focus on the properties of the signalling fraction $\mathsf{SF}$, which was first defined in~\cite{Emeriau2022} and Section~\ref{subsec:sheafContextuality}, and is going to be our main reading time predictor in Chapter~\ref{chap:GPPred}. First, we start with the impossibility of observing contextuality in this type of scenario.

\subsection{Contextuality}\label{subsec:GPcontext}
\paragraph{}Not all of the measurement scenarios are capable of hosting contextuality. This can be determined by looking at the structure of $\mathcal{M}$. In particular, it is known from Vorob'ev's theorem~\cite{Vorobev,SoaresBarbosa2014} that if the set $\mathcal{M} = \left\{M_1, \ldots, M_n\right\}$ satisfies\footnote{Topologically speaking, this means that $\mathcal{M}$ forms a simplex.}:
\begin{equation}
    M_1 \subseteq M_2 \subseteq \ldots M_n
\end{equation}
then, any compatible family $\left\{e_M~\middle|~M\in\mathcal{M}\right\}$ of probability distribution over $\mathcal{M}$ admits a global distribution over $\bigcup_{M\in \mathcal{M}} M$, which can in fact be shown to be the maximal distribution $e_{M_n}$ using the compatibility assumption. In other words, the empirical models described in Section~\ref{sec:GPModel} cannot exhibit contextuality.

\subsection{Causality and signalling}\label{subsec:GPCausSig}
\paragraph{}Here, we argue that the linguistic models correspond to both a contextuality and a causality scenario. To see this, we first note that for each empirical model, one context includes exactly one less word than the other. As a result, we can without loss of generality see empirical models as an $\{m, mw\}$ scenario. For example, in the empirical model $\mathcal{M}_2$, we have $m=$\textit{The employees} and $w=$ \textit{understood}. We can, therefore, express the compatibility relation of this model as: 
\begin{enumerate}
    \item A symmetric relation analogous to measurements that can be simultaneously measured. In this case, we interpret $m$ and $w$ as compatible, meaning they are somewhat parsed independently. In this interpretation, we have a situation similar to contextuality scenarios;
    \item An asymmetric relation analogous to causal scenarios. In this case, the compatibility relation $\preceq$ is read $m\preceq w$.
\end{enumerate}

\paragraph{}Both interpretations are possible and very much related. However, the meanings of the quantity $\mathsf{SF}$ are subtly different. 

In the symmetric interpretation, $\mathsf{SF}$ quantifies how consistent the two probability distributions are. On the other hand, in the causal interpretation, the signalling fraction is also a measure of the departure from a causal model following the linear \emph{reading} order. In other words, a high signalling fraction is evidence that parsing a particular subphrase is not incremental\footnote{In the linguistic sense, i.e. following the left-to-right reading order.} but instead should require information coming from words situated \emph{after} the phrase under consideration. 

\paragraph{}In either case, if we observe that an empirical model has a higher signalling fraction, this should signify that some reanalysis has to occur. According to psycholinguistic parsing theories, this should trigger a slowdown in reading time. 

Hence, we hypothesise that the signalling fraction $\mathsf{SF}$,  equivalently the non-causal fraction $\mathsf{NCausF}$, should correlate with human reading times. We investigated this in Section~\ref{sec:GPPred}. For uniformity purposes, we will focus on the models' signaling in the rest of this part.% -- although we could have equivalently decided to focus on the causal perspective.

\subsection{Computing $\mathsf{SF}$}
\paragraph{}Computing the signalling/causal fractions in a generic empirical model is not a trivial task, as it requires finding a solution to a linear optimisation problem~\cite{Emeriau2022} (see also the discussion in Section~\ref{sec:lexicalCausal}). However, given the specific structure of our empirical models, it is possible to find an expression of the signalling fraction $\mathsf{SF}$, which can be calculated efficiently. 
\begin{prop}\label{prop:GPSF}
    The signalling fraction can be computed via the following equation 
    \begin{equation}\label{eq:eNSA}
          \mathsf{SF} = 1 - \sum_{o} \min\left(\left.e_{mw}\right|_{m}(o), e_m(o)\right)
      \end{equation}
\end{prop}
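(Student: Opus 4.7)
The approach is to prove two inequalities: an upper bound on $\mathsf{NSF} = 1 - \mathsf{SF}$ via the structure of any valid decomposition, and a matching lower bound by explicitly constructing a decomposition that saturates it. The specific shape of our measurement scenario, where $\mathcal{M} = \{m, mw\}$ with $m \leq_p mw$, is crucial because the intersection of the two contexts is just the smaller context $m$ itself, so the compatibility condition reduces to a single equation $e_{NS,m}(o) = \left.e_{NS,mw}\right|_m(o)$ per outcome $o$ over parses of $m$.

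For the upper bound, I would start with any decomposition $e = \lambda \cdot e_{NS} + (1-\lambda) \cdot e'$ with $e_{NS}$ no-signalling. Setting $q(o) = e_{NS,m}(o) = \left.e_{NS,mw}\right|_m(o)$, the decomposition gives $e_m(o) \geq \lambda q(o)$ and $\left.e_{mw}\right|_m(o) \geq \lambda q(o)$ for every outcome $o$. Taking the minimum pointwise and summing over $o$ yields
\begin{equation*}
\sum_o \min\bigl(e_m(o), \left.e_{mw}\right|_m(o)\bigr) \geq \lambda \sum_o q(o) = \lambda,
\end{equation*}
so $\mathsf{NSF} \leq \sum_o \min\bigl(e_m(o), \left.e_{mw}\right|_m(o)\bigr)$.

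For the lower bound, set $\lambda^\star = \sum_o \min\bigl(e_m(o), \left.e_{mw}\right|_m(o)\bigr)$ and define a probability distribution $q(o) = \min\bigl(e_m(o), \left.e_{mw}\right|_m(o)\bigr)/\lambda^\star$ on parses of $m$ (the case $\lambda^\star = 0$ being trivial). I will build a no-signalling candidate $e_{NS}$ by taking $e_{NS,m} = q$ and defining $e_{NS,mw}$ on each fibre $O'(o) = \{o' : \left.o'\right|_m = o\}$ by conditioning: $e_{NS,mw}(o') = e_{mw}(o') \cdot q(o)/\left.e_{mw}\right|_m(o)$ when $\left.e_{mw}\right|_m(o) > 0$, and $0$ otherwise (in the latter case $q(o) = 0$ so no mass is required on that fibre). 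This automatically satisfies the no-signalling constraint by construction. The remaining step is to verify that $e'_C = (e_C - \lambda^\star e_{NS,C})/(1-\lambda^\star)$ is a genuine probability distribution on both contexts, which reduces to checking $\lambda^\star q(o) \leq e_m(o)$ and $\lambda^\star q(o)/\left.e_{mw}\right|_m(o) \leq 1$ respectively; both follow directly from $\lambda^\star q(o) = \min\bigl(e_m(o), \left.e_{mw}\right|_m(o)\bigr)$.

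The main technical subtlety is the construction of $e_{NS,mw}$ on the larger context from the target marginal $q$ on the smaller one: one must distribute the mass $q(o)$ over the fibre $O'(o)$ in a way that is compatible with being dominated by $e_{mw}$ itself (so that $e'_{mw}$ stays non-negative). The conditional-probability construction above handles this uniformly, and the identity $\lambda^\star q(o) = \min\bigl(e_m(o), \left.e_{mw}\right|_m(o)\bigr)$ is precisely what makes the dominations work on both contexts simultaneously. Combining the two inequalities yields $\mathsf{NSF} = \sum_o \min\bigl(e_m(o), \left.e_{mw}\right|_m(o)\bigr)$, and taking complements gives \eqref{eq:eNSA}.
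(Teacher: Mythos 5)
Your proof is correct and follows essentially the same route as the paper's: the same marginal-domination argument for the upper bound on $\mathsf{NSF}$, and the same conditional extension $e_{NS,mw}(o') = e_{mw}(o')\,q(o)/\left.e_{mw}\right|_m(o)$ to build the saturating no-signalling model. The only (cosmetic) difference is that the paper defines $e_{NS,m}$ through a sorted inductive construction that ultimately equals your direct normalisation $q(o) = \min\bigl(e_m(o), \left.e_{mw}\right|_m(o)\bigr)/\lambda^\star$, and your explicit check that the residual $e'$ is a genuine empirical model is a welcome tightening of a step the paper leaves partly implicit.
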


The proof of this proposition can be found in Appendix~\ref{app:GPSF}.

\paragraph{}We argue that the signalling fraction measures parsing difficulty. This claim is motivated by the fact that $\mathsf{SF}$ can be seen as a measure of distance between probability distributions observed at different stages of the sentence. Therefore, the higher the signalling fraction, the more readers will have to readjust their mental representation of the grammatical structure. We can even say that since the contexts $m_{i}, m_{i+1}\in \mathcal{M}_i$ only differ by a single word, the signalling fraction of the empirical model $e_i$ becomes related to the difficulty of understanding the extra word. 

\begin{ex}
    For the empirical corresponding to:
\begin{equation*}
    \mathcal{M}_3 = \{\text{\it The employees understood}, \text{\it The employees understood the}\}
\end{equation*}
defined above in \eqref{eq:M5a} and \eqref{eq:M5b}, we obtain a signalling fraction of $\mathsf{SF}_3 = 0.05$, hence showing that the word \textit{the} at the end of the fragment \textit{The employees understood the} is not difficult to parse. On the other hand, if we calculate the signalling fraction for the empirical model:
\begin{align*}
    \mathcal{M}_5 = \{&\text{\it The employees understood the contract}, \\
    &\text{\it The employees understood the contract would}\}
\end{align*}
(see Fig.~\ref{fig:emCritical}), the signalling fraction can be found to be $\mathsf{SF}_5 = 0.79$, which reflects the fact the parsing the word \textit{would} is quite difficult.

\begin{figure}[hbt!]
    \begin{subfigure}{\linewidth}
        \centering
        \begin{tabular}{|m{.7\linewidth}|m{.2\linewidth}|}
            \hline Parse & Probability\\\hline
            \tikzfig{Syntactic/figs/1\_2\_2\_4\_2} & 0.44\\\hline
            \tikzfig{Syntactic/figs/1\_2\_2\_4\_5} & 0.14\\\hline
            \tikzfig{Syntactic/figs/1\_2\_2\_5\_5} & 0.12\\\hline
            \tikzfig{Syntactic/figs/1\_2\_2\_4\_6} & 0.10\\\hline
            \tikzfig{Syntactic/figs/1\_2\_2\_5\_6} & 0.04\\\hline
            Other parses & $<0.01$\\\hline
        \end{tabular}
        \caption{Probability distribution for the context \textit{The employees understood the contract}}
    \end{subfigure}
    \begin{subfigure}{\linewidth}
        \centering
        \begin{tabular}{|m{.7\linewidth}|m{.2\linewidth}|}
            \hline Parse & Probability\\\hline
            \tikzfig{Syntactic/figs/1\_2\_2\_4\_6\_6} & 0.96\\\hline
            \tikzfig{Syntactic/figs/1\_2\_2\_4\_5\_2} & 0.02\\\hline
            Other parses & $<0.01$\\\hline
        \end{tabular}
        \caption{Probability distribution for the context \textit{The employees understood the contract would}}
    \end{subfigure}
    \caption{Example of an empirical model corresponding to $\mathcal{M}_5$ in the sentence \textit{The employees understood the contract would change} (adapted from the empirical model obtained for \textit{The faithful employees understood the technical contract would be changed}, which can be found in~\cite{datasetGardenPath}).\label{fig:emCritical}}
\end{figure}
\end{ex}

\section*{Summary of the chapter}
\markboth{Summary of the chapter}{Modeling the human parsing process}
% \begin{recapbox}
    \paragraph{}We consider a \textbf{sequence of empirical models} such that:
    \begin{itemize}
        \item Each empirical model contains a \textbf{pair of contexts} differing by a single word;
        \item For each context, we select a \textbf{probability distribution over possible parses} corresponding to the mental representation of the syntactic structure of a sentence fragment. The details of the computation of the probability distribution will be discussed in the next Chapter.
        % \item These probability distributions can be obtained empirically using large language models and dependency parsers.
    \end{itemize}

    \paragraph{}Although we cannot observe contextuality in these empirical models, we can use the signalling fraction $\mathsf{SF}$ to quantify the difficulty of parsing incoming information.    
% \end{recapbox}
\chapter{Predicting garden-path effects}\label{chap:GPPred}
\paragraph{}In the previous chapter, we introduce our model of the human parsing process. This chapter aims to test the predictions arising from the model using empirical data.
\paragraph{}We start by describing the procedure for computing the reading time predictions in Section~\ref{sec:GPMethods}. In Section~\ref{sec:GPPred}, we describe the predictions from empirical models, and in Section~\ref{sec:GPComp}, we compare these results with the ones obtained from surprisal theory. 
\section{Methods}\label{sec:GPMethods}
\paragraph{}In this section, we describe the procedure employed to predict reading times. We start by describing the computational tools used. Subsequently, we explain how we used these tools to collect probabilities. Finally, we describe the datasets from which the garden-path sentences and reading times are taken.

\subsection{Tools}
\paragraph{}In this work, we approximate this probability distribution using the large language model \texttt{BERT} and the state-of-the-art dependency parser \texttt{spaCy}.

\paragraph{}\texttt{BERT}~\cite{BERT} is one of the first language models to adopt the \emph{transformer} architecture. The transformer was first introduced in~\cite{attention} in 2017 and offered an alternative to Recurrent Neural Networks (RNNs). This architecture improved both the trainability of neural networks and their performance and is still considered state-of-the-art. See Section~\ref{subsec:WSD} for a more detailed description of \texttt{BERT}.

\texttt{spaCy} is an open-source Python library developed by the company Explosion. It is widely used in NLP, in particular for linguistic annotations. Its functionalities include tokenization, lemmatization, part-of-speech tagging, sentence boundary detection, and dependency parsing. In this work, we mostly made use of the latter. The dependency parser has been evaluated independently in~\cite{choi2015depends} over the OntoNotes5 corpus containing 2.9M tokens. It was shown that the \texttt{spaCy} dependency parser predicted the head of a word (Unlabelled Attachment Score) with a 89.61\% accuracy and the head and label of a word (Labelled Attachment Score) with an accuracy of 87.92\%~\cite{choi2015depends}. 

\paragraph{}We also worked with different variations of \texttt{BERT} and \texttt{spaCy} to see how the accuracies of the predictions would vary. 

For \texttt{BERT}, we used the following flavours:
\begin{itemize}
    \item \texttt{distilBERT}: a light version of \texttt{BERT}. It only has $40\%$ of the parameters of the original \texttt{bert-base} model, but runs $60\%$ faster while preserving $95\%$ of its performance accuracies in language understanding tasks;
    \item \texttt{bert-base-cased}: the most commonly used version of \texttt{BERT}. It has 110 million parameters and was trained on the Toronto Book Corpus and the English Wikipedia, both of which distinguish between lower and upper case letters. Uncased versions of the same algorithm exist and were developed for purposes of cross-lingual learning;
    \item \texttt{bert-large-cased}: a larger version of \texttt{bert-base-case} which has 340 million parameters.
\end{itemize}
For \texttt{spaCy}, we also worked with different models, namely:
\begin{itemize}
    \item \texttt{en\_core\_web\_sm}: its standard version. It was trained using convolutional neural networks on web text consisting of blogs, news, and comments;
    \item \texttt{en\_core\_web\_lg}: its larger version. Its training procedure is similar to the \texttt{en\_core\_web\_sm} version, but also contains a word vector table with 500k unique 300-dimensional vectors;
    \item \texttt{en\_core\_web\_trf}: its newer version. It has no word vectors but is trained using state-of-the-art transformer-based neural networks.
\end{itemize}

\subsection{Method}
\paragraph{} To obtain a probability distribution over parses, we implement the following procedure:
\begin{enumerate}
    \item Given a fragment of a sentence $\mathtt{S}$, we turn it into a complete sentence by \emph{masking} all of the remaining words of the $\mathtt{S}$, see Fig.~\ref{tab:BERTcontexts} for an example.
    \begin{rmk}
        Since the task we are interested in is closely related to the task \texttt{BERT} was trained on, we did not need to fine-tune the pre-trained \texttt{BERT} models.
    \end{rmk}
    \item \texttt{BERT} then provides a list of predictions of the completion of the subphrases and a \emph{logit} score $\sigma$ for each of these predictions, which is meant to rate the likelihood of each prediction. The common practice in NLP is to use the logistic function to turn these scores into probabilities, namely:
    \begin{equation}
       p = \frac{e^\sigma}{1+e^\sigma}
    \end{equation}
    \begin{rmk}
        We could have equally used a softmax function to convert those scores into probabilities. We leave the investigation of results using softmax as future work.
    \end{rmk}
    Now, these predictions are not always words and may include punctuations. We only include the text predictions to avoid complications by dropping the punctuation and renormalising the probability distribution.
    \item We then use \texttt{spaCy} to parse each of the predictions provided by \texttt{BERT}. To obtain the syntactic structure of the specific subphrase we are working with, we restrict the full parse to the words included in that subphrase using the restriction maps from the presheaf $\mathcal{E}$ (see eq. \eqref{eq:restrictParse}).
    \item The probability of each such partial parse is obtained by summing up all the \texttt{BERT}-prediction probabilities that restrict to the same parse. For example, the predictions for the continuations of \textit{The employees understood}:
    \begin{align*}
        \tikzfig{Syntactic/figs/The\_employees\_understood\_that\_their\_salaries\_varied}\\
        \tikzfig{Syntactic/figs/The\_employees\_understood\_the\_risks\_in\_advance}\\
        \tikzfig{Syntactic/figs/The\_employees\_understood\_they\_also\_had\_freedom}
    \end{align*}
    will lead to the same partial parse when restricted to the context \textit{The employees understood}, namely:
    \begin{equation*}
        \tikzfig{Syntactic/figs/1\_2\_2}
    \end{equation*}
    Their probabilities will, therefore, be summed up in the corresponding empirical model.
\end{enumerate}

\begin{figure}[t]
    \centering
    \resizebox{\linewidth}{!}{
    \begin{tabular}{l|l}
        \textbf{Context} & \textbf{\texttt{BERT} input}\\\hline
        \textit{The} & \texttt{The [MASK] [MASK] [MASK] [MASK] [MASK] [MASK]}\\
        \textit{The employees} & \texttt{The employees [MASK] [MASK] [MASK] [MASK] [MASK]}\\
        \textit{The employees understood} & \texttt{The employees understood [MASK] [MASK] [MASK] [MASK]}\\
        \textit{The employees understood the} & \texttt{The employees understood the [MASK] [MASK] [MASK]}\\
        \textit{The employees understood the contract} & \texttt{The employees understood the contract [MASK] [MASK]}\\
        \textit{The employees understood the contract would} & \texttt{The employees understood the contract would [MASK]}\\
    \end{tabular}}
    \caption{\texttt{BERT} inputs for the sentence \textit{The employees understood the contract would change}. \label{tab:BERTcontexts}}
\end{figure}

\subsection{Description of the datasets}
\paragraph{}In this work, we make use of two reading time datasets that have been collected in psychology, namely the one of Sturt et al.~\cite{SturtPick} and the one of Grodner et al.~\cite{grodner}. In both of the studies presented in~\cite{SturtPick} and~\cite{grodner}, the authors investigated the slowdowns of garden-path sentences of types NP/S and NP/Z, and both collected (non-cumulative) \emph{self-paced reading} times. 

\paragraph{Self-paced reading} In self-paced reading experiments, the participants are presented with a sentence or text, where most words are hidden, apart from a text window (usually consisting of a word or several words). Then, upon interaction with a computer (e.g. pressing the space bar), the window is allowed to move from left to right, revealing more text to the participant. 
\begin{figure}[htb!]
    \centering
    \resizebox{\linewidth}{!}{
        \begin{tabular}{|l|c|}
        \hline& Display\\\hline
        Step 1 & \texttt{The faithful employees\_\_\_\_\_\_\_\_\_\_\_\_\_\_\_\_\_\_\_\_\_\_\_\_\_\_\_\_\_\_\_\_\_\_\_\_\_\_\_\_\_\_\_\_\_\_\_\_\_\_\_\_\_\_\_\_\_\_\_\_\_}\\\hline
        Step 2 & \texttt{\_\_\_\_\_\_\_\_\_\_\_\_\_\_\_\_\_\_\_\_\_\_\_understood the technical contract\_\_\_\_\_\_\_\_\_\_\_\_\_\_\_\_\_\_\_\_\_\_\_\_\_\_\_}\\\hline
        Step 3 & \texttt{\_\_\_\_\_\_\_\_\_\_\_\_\_\_\_\_\_\_\_\_\_\_\_\_\_\_\_\_\_\_\_\_\_\_\_\_\_\_\_\_\_\_\_\_\_\_\_\_\_\_\_\_\_\_\_\_\_would be changed\_\_\_\_\_\_\_\_\_\_}\\\hline
        Step 3 & \texttt{\_\_\_\_\_\_\_\_\_\_\_\_\_\_\_\_\_\_\_\_\_\_\_\_\_\_\_\_\_\_\_\_\_\_\_\_\_\_\_\_\_\_\_\_\_\_\_\_\_\_\_\_\_\_\_\_\_would be changed\_\_\_\_\_\_\_\_\_\_}\\\hline
        Step 4 & \texttt{\_\_\_\_\_\_\_\_\_\_\_\_\_\_\_\_\_\_\_\_\_\_\_\_\_\_\_\_\_\_\_\_\_\_\_\_\_\_\_\_\_\_\_\_\_\_\_\_\_\_\_\_\_\_\_\_\_\_\_\_\_\_\_\_\_\_\_\_\_\_\_\_\_\_very soon}\\\hline
        Step 4 & \texttt{\_\_\_\_\_\_\_\_\_\_\_\_\_\_\_\_\_\_\_\_\_\_\_\_\_\_\_\_\_\_\_\_\_\_\_\_\_\_\_\_\_\_\_\_\_\_\_\_\_\_\_\_\_\_\_\_\_\_\_\_\_\_\_\_\_\_\_\_\_\_\_\_\_\_very soon}\\\hline
    \end{tabular}
    }
    \caption{Evolution of the display presented to participants in a (region-by-region) self-paced reading experiment, with input sentence \textit{The faithful employees understood the technical contract would be changed very soon}}
\end{figure}
In a non-cumulative setting, the participant cannot access previous text once the window has moved. 

Self-paced reading experiments provide less information than eye-tracking experiments do. For example, backtracking is not an option for participants, and they are arguably less natural than the eye-tracking setting. However, they are much more interpretable since they produce fewer variables to keep track of. In addition, they are less expensive to set up since all that is required is a computer, and online crowdsourcing is also possible.%\blue{May need some references for that.}

\subsubsection{The Sturt et al. dataset}
\paragraph{}The Sturt et al.~\cite{SturtPick} dataset consists of 32 pairs of sentences such as:
\begin{enumerate}[label=(1\alph*), series=sturt]
    \item \label{item:sturtNPS}The faithful employees understood the technical contract would be changed very soon.
    \item \label{item:sturtNPZ}Because the faithful employees negotiated the technical contract would be changed very soon.
\end{enumerate}
Each of these pairs contains an NP/S sentence, such as~\ref{item:sturtNPS}, and an NP/Z sentence, such as~\ref{item:sturtNPZ}, and both the sentences in a pair share overlap in vocabulary. 

In addition, each of these garden-path sentences is also associated with \emph{unambiguous version}, which is easier to parse. For NP/S sentences, this is done by adding the connective \textit{that} after the main verb. In NP/Z sentences, this is achieved by adding a comma after the main verb. For example, the following are the unambiguous versions of the sentences~\ref{item:sturtNPS} and~\ref{item:sturtNPZ} respectively:
\begin{enumerate}[label=(1\alph*), resume=sturt]
    \item The faithful employees understood \textbf{that} the technical contract would be changed very soon.
    \item Because the faithful employees negotiated\textbf{,} the technical contract would be changed very soon.
\end{enumerate}
\begin{rmk}
    In the following discussion, we sometimes use the term ``ambiguous sentences'' to describe the garden-path sentences. This is an \textit{abus de langage} since these sentences are not actually (globally) ambiguous. Similarly, the ``unambiguous sentences'' are not fully (locally) unambiguous. This terminology helps distinguish the sentences that cause difficulty parsing and those that don't. 
\end{rmk}

This gives a total of 128 sentences. 

\paragraph{}Each of these sentences is, in turn, divided into 4 regions. For instance, the sentences~\ref{item:sturtNPS} and~\ref{item:sturtNPZ} are respectively divided as:
\begin{enumerate}[label=(1\alph*), resume=sturt]
    \item The faithful employees / understood the technical contract / \underline{would be changed} / very soon.
    \item Because the faithful employees / negotiated the technical contract / \underline{would} \underline{ be changed} / very soon.
\end{enumerate}
and similarly for the unambiguous sentences. The critical regions are the ones underlined. 

The experiment described in~\cite{SturtPick} recorded the \emph{region-by-region} reading times, i.e. the participants were presented with one of these regions at a time and allowed to move one region forward at each event. The numbers reported in the study were average region reading times, averaged across sentences of the same type (i.e. NP/S ambiguous, NP/S unambiguous, NP/Z ambiguous, or NP/Z unambiguous), and across participants. These numbers can also be found in Table~\ref{tab:SturtRT}.

\begin{table}[htb!]
    \centering
    \begin{tabular}{|c|c|c|c|c|}
    \hline 
    & \multicolumn{4}{c|}{Regions}\\\cline{2-5}
    & 1 & 2 & 3 & 4\\\hline
    NP/S (ambiguous) & 990 & 1183 & 877 & 771\\ 
    NP/S (unambiguous) & 981 & 1282 & 790 & 768\\
    NP/Z (ambiguous) & 914 & 1269 & 1335 & 848\\
    NP/Z (unambiguous) & 998 & 1384 & 935 & 832\\\hline
    \end{tabular}
    \caption{Region-by-region self-paced reading times of garden-path sentences and their unambiguous variants (in ms). These numbers were taken from~\cite{SturtPick}\label{tab:SturtRT}}
\end{table}

\subsubsection{The Grodner et al. dataset}
\paragraph{}The Grodner et al. dataset~\cite{grodner} was created in a different way. As for the Sturt et al. dataset, it contains both NP/S and NP/Z sentences such as :
\begin{enumerate}[label=(2\alph*), series=grodner]
    \item\label{item:grodnerNPSunmod} The employees understood the contract would be changed to accommodate all parties.
    \item\label{item:grodnerNPZunmod} Even though the band left the party went on for at least another two hours.
\end{enumerate}

In addition, each of the NP/S and NP/Z sentences also come with a \emph{modified variant}, where a descriptive noun-phrase is added to the subject of the main verb. For example:
\begin{enumerate}[label=(2\alph*), resume=grodner]
    \item\label{item:grodnerNPSmod} The employees \textbf{who initiated the strike} understood the contract would be changed to accommodate all parties.
    \item\label{item:grodnerNPZmod} Even though the band \textbf{which played funk music} left the party went on for at least another two hours.
\end{enumerate}
\begin{rmk}
    The reason for having modified and unmodified versions of the same garden-path sentence was to identify whether human parsing strategies were more consistent with repair-based or reanalysis-based models (see Section~\ref{subsec:syntacticPsycho}). However, since we are not interested in this particular aspect of the parsing process, we ignored their distinction when analysing the garden-path effects. However, when creating a linear regression model, having a multiplicity of data points allows us to obtain a better model and reduces errors due to averaging.
\end{rmk}
The dataset taken from~\cite{grodner} contains 20 pairs of modified/unmodified NP/S garden-path sentences and 20 pairs of modified/unmodified NP/Z sentences. 

In addition, as for the Sturt et al. dataset, each sentence comes with an unambiguous version. For instance, for the unmodified versions~\ref{item:grodnerNPSunmod} and~\ref{item:grodnerNPZunmod}, these are:
\begin{enumerate}[label=(2\alph*), series=grodner]
    \item The employees understood \textbf{that} the contract would be changed to accommodate all parties.
    \item Even though the band left the party\textbf{,} went on for at least another two hours.
\end{enumerate}
Similarly, for the modified version~\ref{item:grodnerNPSmod} and~\ref{item:grodnerNPZmod}, the unambiguous variants are:
\begin{enumerate}[label=(2\alph*), series=grodner]
    \item The employees who initiated the strike understood \textbf{that} the contract would be changed to accommodate all parties.
    \item Even though the band which played funk music left the party\textbf{,} went on for at least another two hours.
\end{enumerate}
This gives us a total of 160 sentences. 

\paragraph{}Each of these sentences is also divided into regions, but contrary to the dataset of~\cite{SturtPick}, the number of regions is different for every type of sentence, and the length of regions is highly variable within a given sentence. These regions are depicted in Table~\ref{subtab:GrodnerRegions}. 

In the corresponding study, the authors collected \emph{word-by-word} self-paced reading times and reported the average word reading time for each region (aside from the last one for which numbers are omitted), averaged across sentences of the same type and across participants. The obtained averages are shown in Table~\ref{subtab:GrodnerRT}.

\begin{table}[htb!]
    \centering
    \begin{subtable}{\linewidth}
        \centering
        \resizebox{\linewidth}{!}{\begin{tabular}{|c|c|c|c|c|c|c|}
        \hline 
        & \multicolumn{6}{c|}{Regions}\\\cline{2-7}
        & 1 & 2 & 3 & 4 & 5 & 6\\\hline
        NP/S (unmod., amb.) & The employees & understood & the contract & \underline{would be changed} &&\\
        NP/S (unmod., unamb.) & The employees & understood & that & the contract & \underline{would be changed} &\\
        NP/S (mod., amb.) & The employees &who initiated the strike& understood & the contract & \underline{would be changed} &\\
        NP/S (mod., unamb.) & The employees &who initiated the strike& understood &that& the contract & \underline{would be changed}\\
        NP/Z (unmod., amb.) & Even though the band & left & the party & \underline{went on for}[...]&&\\
        NP/Z (unmod., unamb.) & Even though the band & left, & the party & \underline{went on for}[...]&&\\
        NP/Z (mod., amb.) & Even though the band &which played funk music& left & the party & \underline{went on for}[...]&\\
        NP/Z (mod., unamb.) & Even though the band &which played funk music& left, & the party & \underline{went on for}[...]&\\\hline
        \end{tabular}}
        \caption{Regions of the different sentence types in the Grodner et al. dataset. The critical regions are underlined. (Note that the last region is omitted)\label{subtab:GrodnerRegions}}
    \end{subtable}
    \begin{subtable}{\linewidth}
        \centering
        \begin{tabular}{|c|c|c|c|c|c|c|}
        \hline 
        & \multicolumn{6}{c|}{Regions}\\\cline{2-7}
        & 1 & 2 & 3 & 4 & 5 & 6\\\hline
        NP/S (unmod., amb.) & 397 & 467 & 412 & 424 &&\\
        NP/S (unmod., unamb.) & 393 & 460 & 431 & 396 & 410 &\\
        NP/S (mod., amb.) & 392 & 415 & 449 & 401 & 419 &\\
        NP/S (mod., unamb.) & 398 & 413 & 471 & 393 & 388 & 391\\
        NP/Z (unmod., amb.) & 452 & 402 & 382 & 452&&\\
        NP/Z (unmod., unamb.) & 400 & 452 & 402 & 383&&\\
        NP/Z (mod., amb.) & 433 & 407 & 464 & 415 & 432&\\
        NP/Z (mod., unamb.) & 405 & 400 & 494 & 448 & 395&\\\hline
        \end{tabular}
        \caption{Average word-by-word self-paced reading times of garden-path sentences and their unambiguous variants (in ms). (numbers taken from~\cite{grodner})\label{subtab:GrodnerRT}}
    \end{subtable}
    \caption{Description of the Grodner et al. dataset}
\end{table}

\begin{rmk}
     In addition, since the numbers quoted in the Sturt et al. dataset are region-by-region reading times, we decided to make region-by-region predictions over this dataset. Similarly, since the Grodner et al. dataset used word-by-word reading times, we decided to make word-by-word predictions over this dataset. This is at odds with the study of~\cite{vanSchijndelA} where, for uniformity purposes, the region-by-region reading times were averaged to produce word-by-word reading times. However, doing so would increase the amount of systematic error. For instance, assuming that our signalling fraction $\mathsf{SF}$ is indeed related to reading times and assuming that the buffering time associated with the change of stimulus of the screen is constant, this extra time (unrelated to the reading difficulty) is only added once per region in the region-by-region setting, but multiple times per region in the word-by-word setting. Hence, this buffering time is constant in the former but dependent on the length region in the latter. Due to these differences in the reading time collection process in the two datasets, we decided to present the predictions obtained for the Sturt et al. dataset and the ones for the Grodner et al. dataset separately.
\end{rmk}
\section{Analysis of the predictions}\label{sec:GPPred}
\paragraph{} In this section, we investigate the prediction power of the signalling fraction $\mathsf{SF}$. We start by looking at the empirical correlation between $\mathsf{SF}$ and reading times. Then, we create linear regression models of reading times from the signalling fraction. Using this model to produce prediction, we then investigate whether we can observe a garden-path effect and whether we can see a difference in this garden-path effect between NP/S and NP/Z sentences. In the following section, we will compare our results with the existing ones in the literature that use surprisal.

\begin{rmk}
    For the remainder of this chapter, we will study the \emph{linear correlations} between $\mathsf{SF}$ and reading times. We made this choice for simplicity and not according to any heuristic. Investigating other types of relations between these quantities is left to future work. 
\end{rmk}

\subsection{The Sturt et al. dataset}\label{subsec:GPpredSturt}

\begin{figure}[p]
    \begin{minipage}[c]{.45\linewidth}
        \centering
        \begin{subfigure}{\linewidth}
            \centering
            \includegraphics[width=\linewidth]{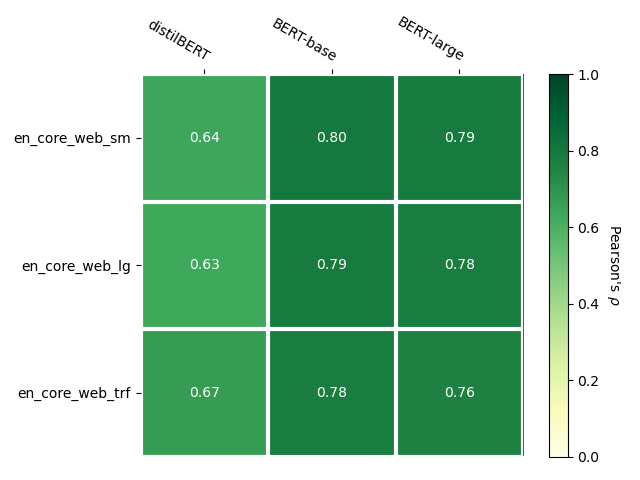}
            \caption{Pearson's $\rho$ coefficients.\label{subfig:SturtRho}}
        \end{subfigure}
        \begin{subfigure}{\linewidth}
            \centering
            \includegraphics[width=\linewidth]{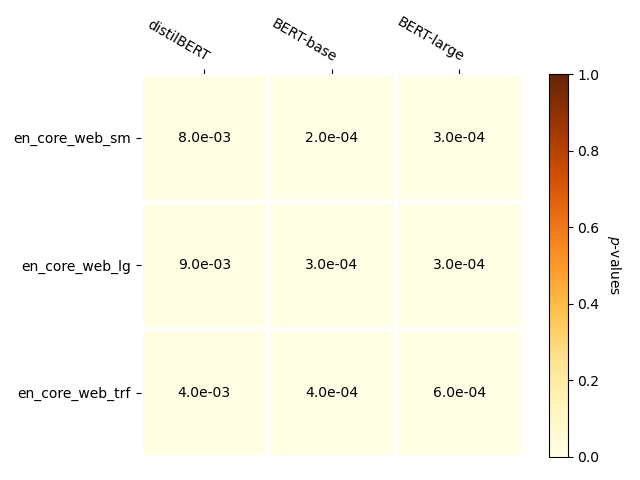}
            \caption{$p$-values associated with the Pearson's $\rho$ coefficients.\label{subfig:SturtRhoPval}}
        \end{subfigure}
    \end{minipage}\quad%
    \begin{minipage}[c]{.45\linewidth}
        \centering
        \begin{subfigure}{\linewidth}
            \centering
            \includegraphics[width=\linewidth]{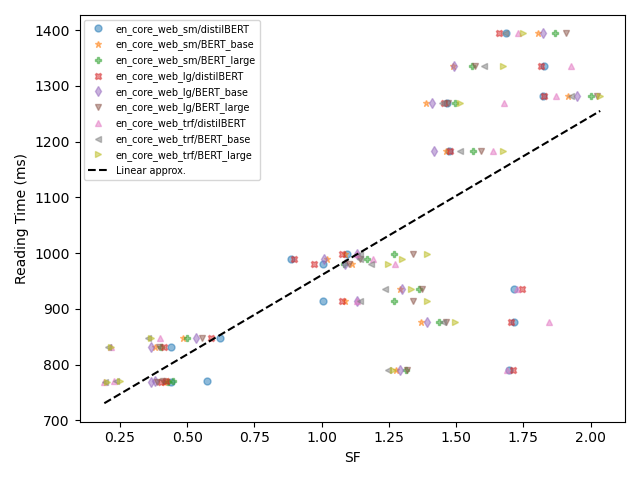}
            \caption{Linear correlation between $\mathsf{SF}$ and self-paced reading times.\label{subfig:SturtScatter}}
        \end{subfigure}
    \end{minipage}
    \begin{subfigure}{\linewidth}
        \centering
        \includegraphics[width=\linewidth]{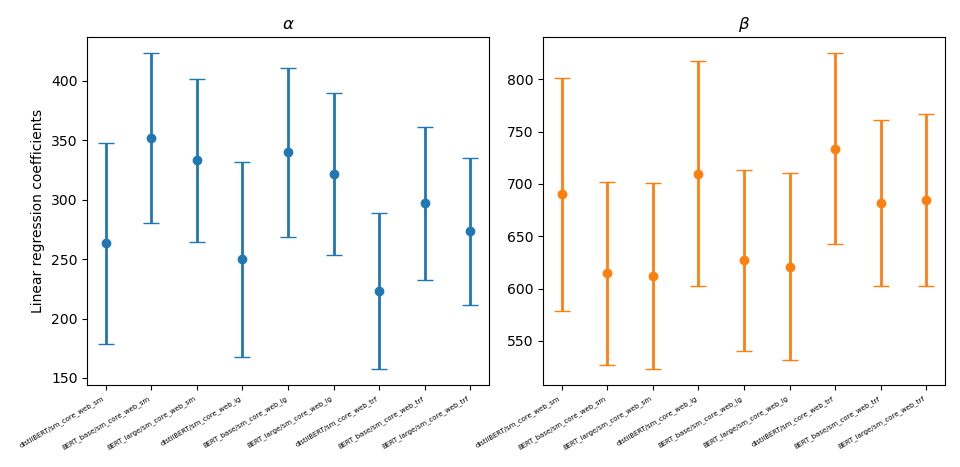}
        \caption{Coefficients of the linear regressions obtained for different \texttt{BERT} and \texttt{spaCy} variants. The standard error on these coefficients is depicted as error bars.\label{subfig:SturtLinReg}}
    \end{subfigure}
    \caption{Analysis of of the linear correlations between $\mathsf{SF}$ and reading times in the Sturt et al. dataset.}
\end{figure}

\subsubsection{The linear regression model}
\paragraph{}Starting from the assumption that the signalling fraction $\mathsf{SF}$ of an empirical model with contexts $\mathcal{M} =\{m, mw\}$ correlates with the amount of difficulty induced by reading the word $w$ (see Section~\ref{sec:GPfeatures}). From this, we expect that region reading time correlates with the sum of the signalling fractions, summed over all the words in the region. Now, given that we are studying linear relations between the signalling fractions and reading times, we expect this relation to be of the form:
\begin{equation}
    RT(r) = \alpha \sum_{w\in r} \mathsf{SF} (w) + \beta
\end{equation}
for any region $r$ and $w$ in that region. In the above equation, we denote by $\mathsf{SF} (w)$, the signalling fraction associated with the empirical model with contexts $\mathcal{M} = \{m, mw\}$. 

\paragraph{}Moreover, since~\cite{SturtPick} presented the average reading time over sentences of the same type, we thus can check whether the following holds: 
\begin{equation}\label{eq:SturtLinReg}
    \left<RT\big(r(\mathtt{S})\big)\right>_{\mathtt{S}} = \alpha \left<\sum_{w\in r(\mathtt{S})} \mathsf{SF} (w)\right>_{\mathtt{S}} + \beta
\end{equation}
where this time, the region $r(\mathtt{S})$ denotes a particular region of a given sentence $\mathtt{S}$. 

\subsubsection{Correlation with reading times}
\paragraph{} To test the above hypotheses, we calculate Pearson's $\rho$ coefficients associated with $\mathsf{SF}$ and reading times. The obtained $\rho$ coefficients and their associated $p$-values (testing the confidence level at which we have $\rho\neq 0$) are shown in Fig.~\ref{subfig:SturtRho} and~\ref{subfig:SturtRhoPval} respectively. 

\paragraph{}As we can see, these correlation coefficients are generally high (all of them $> 0.63$) and, importantly, all positive. Furthermore, the $p$-values associated with the correlation coefficients are statistically significant ($p<9\times 10^{-3}$ for any choice of \texttt{BERT} and \texttt{spaCy} variants). These results are evidence of a robust monotonic relation between $\mathsf{SF}$ and reading times. I.e. when $\mathsf{SF}$ increases, so does the reading time. 

\paragraph{Impact of the different \texttt{BERT} and \texttt{spaCy} variants}In addition, these coefficients are higher ($\rho \simeq 0.78$) for any empirical model obtained from the larger \texttt{BERT-base} or \texttt{BERT-large} as compared to the empirical models obtained from the lighter version \texttt{distilBERT}. Similarly, the $p$-values appear to be larger for models using \texttt{distilBERT} (of the order of magnitude of $p\sim 10^{-3}$) as opposed to ones obtained from the other \texttt{BERT} variants ($p\sim 10^{-4}$). This suggests that reducing the parameters of \texttt{BERT} may impact our performance accuracies. 

There is, however, no sign of the influence of the \texttt{spaCy} models by solely looking at the $\rho$ coefficients. 

\paragraph{Regression models}From the existence of a linear correlation, the use of the linear model of \eqref{eq:SturtLinReg} is justified. The coefficients $\alpha$ and $\beta$ calculated for empirical models calculated from different \texttt{BERT} and \texttt{spaCy} variants are shown in Fig.~\ref{subfig:SturtLinReg}. We can then see that the obtained coefficients are comparable for all of the \texttt{BERT} and \texttt{spaCy} models, which overall give a linear regression model around:
\begin{equation}
    \left<RT\big(r(\mathtt{S})\big)\right>_{\mathtt{S}} \simeq 295\left<\sum_{w\in r(\mathtt{S})} \mathsf{SF} (w)\right>_{\mathtt{S}} + 664
\end{equation}

For the rest of this work, however, we will take the individual regression models obtained for each of the \texttt{BERT} and \texttt{spaCy} variants. This will then ensure that we get the best possible predictions.

\subsubsection{Predicting garden-path effects}
\paragraph{}Using these regression models, we can make predictions of reading times from $\mathsf{SF}$. In turn, we can evaluate these predictions' accuracy by looking at what effect they can and cannot predict. 

\paragraph{Methods}We start by investigating whether $\mathsf{SF}$ can predict garden-path effects, i.e. whether the reading times of garden-path sentences are higher than the reading times for the equivalent unambiguous sentences (over their critical region). 

To do so, we calculate the so-called \emph{garden-path effect} of a garden-path sentence by simply taking the difference in reading time of the critical region in the ambiguous and unambiguous versions, i.e.:
\begin{equation}\label{eq:SturtGPE}
    GPE(\mathtt{S}) = RT(r_{critical}(\mathtt{S})) - RT(r_{critical}(unambiguous(\mathtt{S})))
\end{equation}
where $r_{critical}(\mathtt{S})$ isolates the critical region of $\mathtt{S}$, and $unambiguous(\mathtt{S})$ gives the unambiguous version of a sentence $\mathtt{S}$. 

\paragraph{Results}The average garden-path effect obtained for empirical models using the different variants of \texttt{BERT} and \texttt{spaCy} are shown in Fig.~\ref{subfig:SturtGPE}. We can observe that, on average, these predicted garden-path effects are indeed positive, therefore showing that $\mathsf{SF}$ predicts higher reading times for garden-path sentences than their unambiguous versions. 

To strengthen this result, we also conducted 1-sample $t$-tests testing the null hypothesis that this average is $0$ (i.e. that the reading time predictions are the same for ambiguous and unambiguous sentences). The resulting $p$-values are depicted in Fig.~\ref{subfig:SturtGPEttest}. We can see all of the $p$-values are indeed statistically significant, except for the empirical models using \texttt{distilBERT} and the \texttt{en\_core\_web\_lg} pipeline of \texttt{spaCy} (where even the $p$-value is relatively low and is $p = 0.07$). 

These results overall show that $\mathsf{SF}$ can confidently detect the existence of a garden-path effect.    

\begin{figure}[htb!]
    \centering
    \begin{subfigure}[t]{.45\linewidth}
        \centering
        \includegraphics[width=\linewidth]{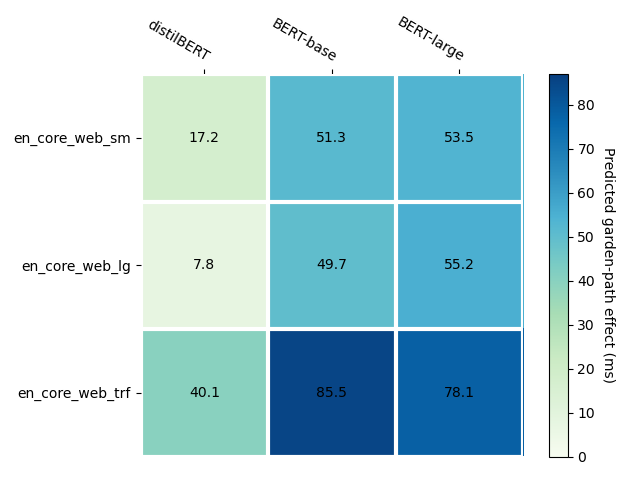}
        \caption{Average predicted garden-path effect.\label{subfig:SturtGPE}}
    \end{subfigure}\quad%
    \begin{subfigure}[t]{.45\linewidth}
        \centering
        \includegraphics[width=\linewidth]{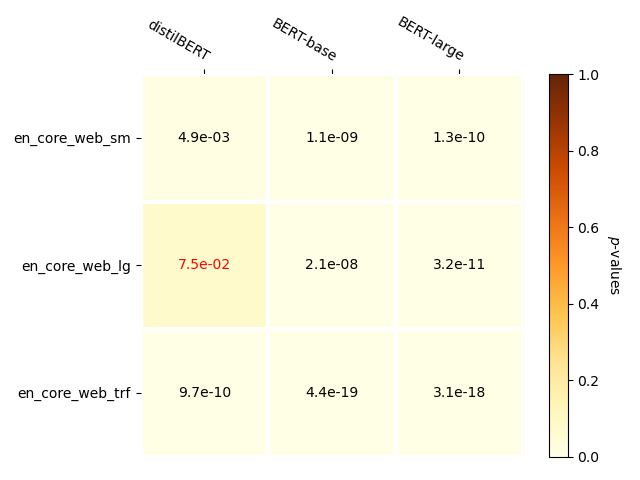}
        \caption{$p$-values associated with the 1-sample $t$-tests evaluating whether the average garden-path effect is $0$.\label{subfig:SturtGPEttest}}
    \end{subfigure}
    \caption{Analysis of the predicted garden-path effect over the Sturt et al. dataset.}
\end{figure}

\subsubsection{NP/S and NP/Z predictions}
\paragraph{}We now want to analyse the prediction for garden-path effects for NP/S and NP/Z sentences separately. 

\paragraph{}The distributions of the obtained garden-path effects for NP/S and NP/Z sentences are shown in Fig.~\ref{fig:SturtBoxplots}. As we can see, $\mathsf{SF}$ overall underestimates the garden-path effects, particularly for NP/Z sentences. This suggests that $\mathsf{SF}$ alone does not entirely explain the full difficulty of garden-paths. However, we can also observe that increasing the number of parameters of both the \texttt{BERT} and \texttt{spaCy} models improves the predictions. 

\begin{figure}[htb!]
    \centering
    \includegraphics[width=\linewidth]{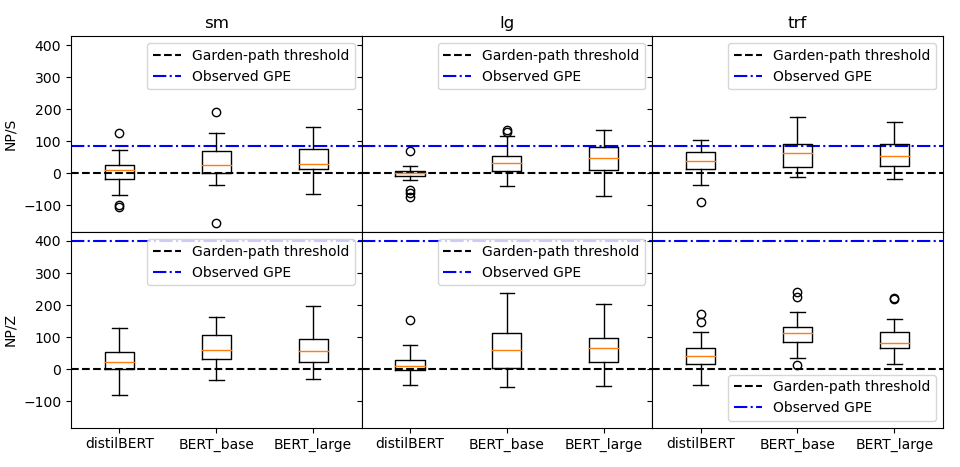}
    \caption{Boxplots of the garden-path effects predicted over the Sturt et al. dataset. The human baseline is also quoted.\label{fig:SturtBoxplots}}
\end{figure}

\paragraph{}Finally, we are also interested to see whether $\mathsf{SF}$ can detect different levels of difficulty, namely between the NP/S and NP/Z sentences. By comparing the average predicted garden-path effects of NP/S and NP/Z sentences shown in Figs.~\ref{subfig:SturtNPS} and~\ref{subfig:SturtNPZ}, one can see that the garden-path effects are generally higher for NP/S than for NP/Z sentences. 

We then tested this hypothesis by conducting $t$-test comparing them and found that this difference is statistically significant for most \texttt{BERT} and \texttt{spaCy} variants. Surprisingly, the empirical models using \texttt{BERT-large} did not perform well under these $t$-tests. This negative result could still be due to the noisiness of human data or the several averaging steps that have occurred to obtain data in the first place. 

On the whole, this gives us evidence that $\mathsf{SF}$ can identify different levels of parsing difficulty.

\begin{figure}[htb!]
    \centering
    \begin{minipage}[c]{.45\linewidth}
        \centering
        \begin{subfigure}{\linewidth}
            \centering
            \includegraphics[width=\linewidth]{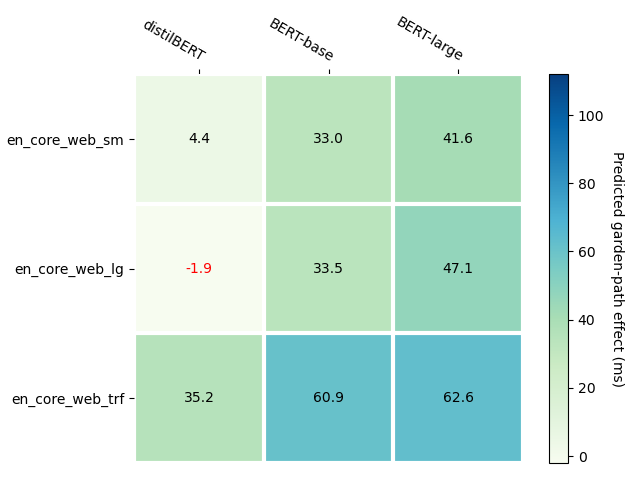}
            \caption{Average predicted garden-effects for NP/S sentences.\label{subfig:SturtNPS}}
        \end{subfigure}
        \begin{subfigure}{\linewidth}
            \centering
            \includegraphics[width=\linewidth]{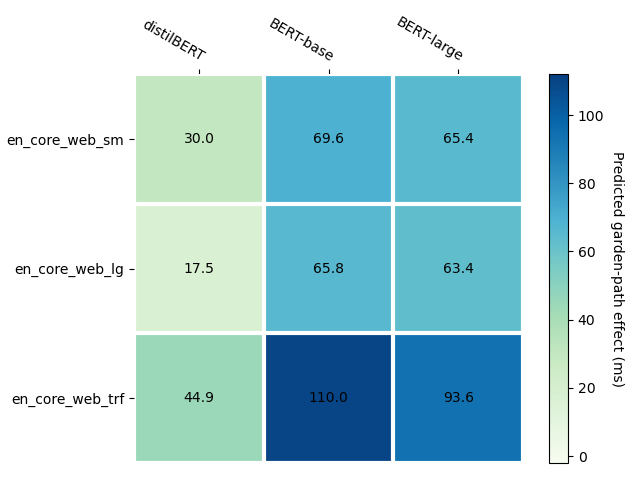}
            \caption{Average predicted garden-effects for NP/Z sentences.\label{subfig:SturtNPZ}}
        \end{subfigure}
    \end{minipage}\quad%
    \begin{minipage}[c]{.45\linewidth}
        \centering
        \begin{subfigure}{\linewidth}
            \centering
            \includegraphics[width=\linewidth]{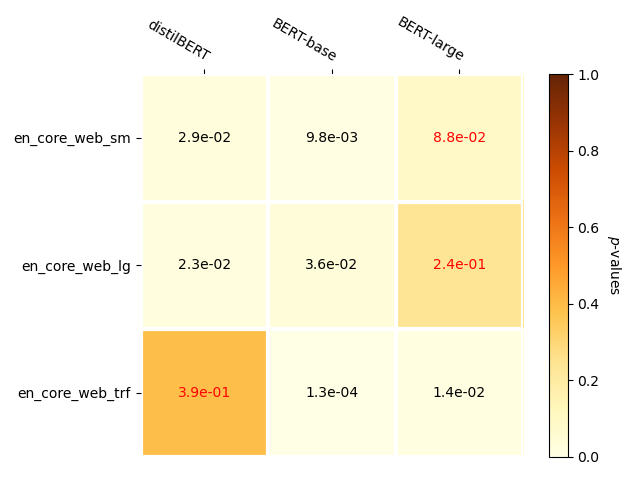}
            \caption{$p$-values associated with the $t$-test with the null hypothesis that garden-path effects from NP/S and NP/Z sentences are sampled from the same distribution.\label{subfig:SturtSZttest}}
        \end{subfigure}
    \end{minipage}
    \caption{Comparison of the predicted garden-path effects between NP/S and NP/Z sentences.}
\end{figure}

\subsection{The Grodner et al. dataset}\label{subsec:GPpredGrodner}

\begin{figure}[p]
    \begin{minipage}[c]{.45\linewidth}
        \centering
        \begin{subfigure}{\linewidth}
            \centering
            \includegraphics[width=\linewidth]{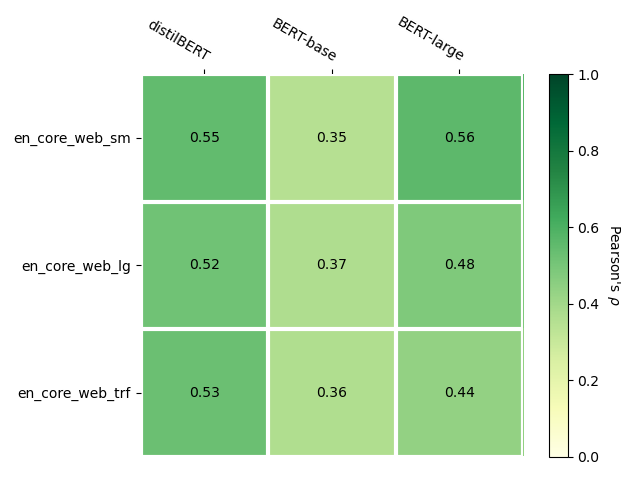}
            \caption{Pearson's $\rho$ coefficients.\label{subfig:GrodnerRho}}
        \end{subfigure}
        \begin{subfigure}{\linewidth}
            \centering
            \includegraphics[width=\linewidth]{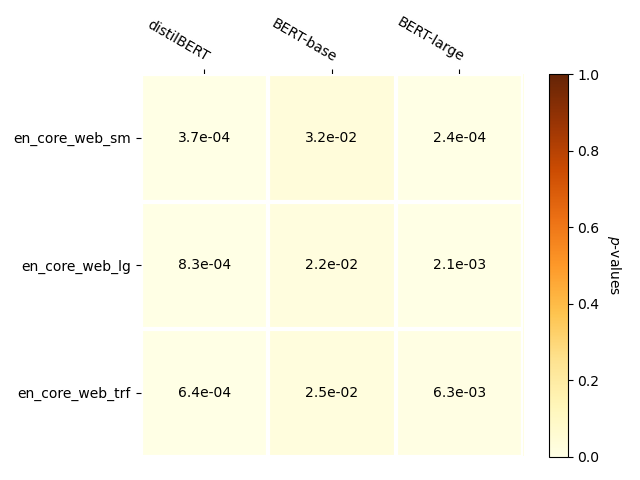}
            \caption{$p$-values associated with the Pearson's $\rho$ coefficients.\label{subfig:GrodnerRhoPval}}
        \end{subfigure}
    \end{minipage}\quad%
    \begin{minipage}[c]{.45\linewidth}
        \centering
        \begin{subfigure}{\linewidth}
            \centering
            \includegraphics[width=\linewidth]{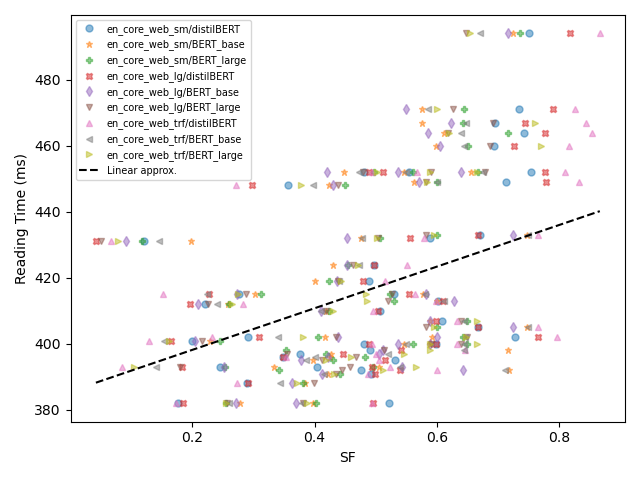}
            \caption{Linear correlation between $\mathsf{SF}$ and self-paced reading times.\label{subfig:GrodnerScatter}}
        \end{subfigure}
    \end{minipage}
    \begin{subfigure}{\linewidth}
        \centering
        \includegraphics[width=\linewidth]{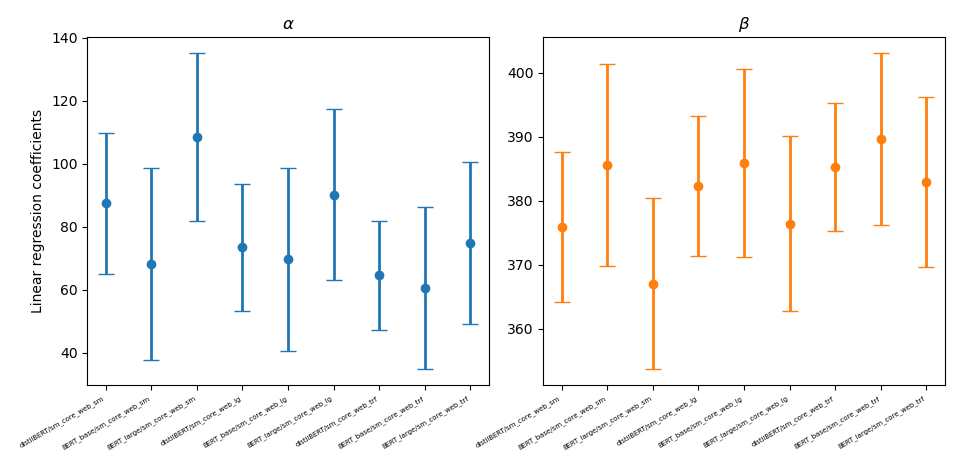}
        \caption{Coefficients of the linear regressions obtained for different \texttt{BERT} and \texttt{spaCy} variants. The standard error on these coefficients is depicted as error bars.\label{subfig:GrodnerLinReg}}
    \end{subfigure}
    \caption{Analysis of of the linear correlations between $\mathsf{SF}$ and reading times in the Grodner et al. dataset.}
\end{figure}

\subsubsection{The linear regression models}
\paragraph{}We now analyse the Grodner et al. dataset predictions. We first recall that the figures quoted in~\cite{grodner} are not region-by-region but word-by-word reading times, averaged for each region across different sentences. Hence, instead of finding a linear regression model of \eqref{eq:SturtLinReg}, we will be interested in models of the form:
\begin{equation}\label{eq:GrodnerLinReg}
    \left<RT(w)\right>_{w\in r(\mathtt{S}), \mathtt{S}} = \alpha \left<\mathsf{SF}(w)\right>_{w\in r(\mathtt{S}), \mathtt{S}} + \beta
\end{equation}

\subsubsection{Correlation with reading times}
\paragraph{}As we did for the Sturt et al. dataset, we test this hypothesis by first computing the associated Pearson's $\rho$ coefficients and associated $p$-values. These can be found in Fig.~\ref{subfig:GrodnerRho} and~\ref{subfig:GrodnerRhoPval} respectively. 

\paragraph{}The obtained correlation coefficients are found to be smaller than the ones obtained for the Sturt et al. dataset (here, $\rho >0.35$ for all \texttt{BERT} and \texttt{spaCy}) variant. The correlation is still positive, and we achieve correlations up to $\rho = 0.56$. Similarly, the $p$-values are generally higher than the ones of Fig.~\ref{subfig:SturtRhoPval}, but still statistically significant (all of the $p<0.04$). 
\paragraph{Impact of the \texttt{BERT} and \texttt{spaCy} variants}Similarly to the other dataset, we observe that the coefficients $\rho$ seem to be more affected by the choice of \texttt{BERT} model than the choice of \texttt{spaCy} variant. However, contrary to the previous results, we observe that it is the \texttt{BERT-base} model that led to the worse correlations and that the \texttt{distilBERT} empirical models lead to fairly high correlations ($\rho \sim 0.53$). 

Overall, the positive correlations and the low $p$-values strengthen our previous findings that $\mathsf{SF}$ are correlated with reading times. 

\paragraph{Linear regression}As before, we can use linear regression equations to predict reading times from signalling fractions. The different $\alpha$ and $\beta$ coefficients obtained for different choices of \texttt{BERT} and \texttt{spaCy} variants are shown in Fig.~\ref{subfig:GrodnerLinReg}. 

These coefficients are also highly similar and revolve around the following model:
\begin{equation}
    \left<RT(w)\right>_{w\in r(\mathtt{S}), \mathtt{S}} = 77 \left<\mathsf{SF}(w)\right>_{w\in r(\mathtt{S}), \mathtt{S}} + 381
\end{equation}

As done previously, we use each linear regression model for the rest of the analysis to reduce the number of errors due to averaging.

\subsubsection{Predicting garden-path effects}
\paragraph{}We then compute the predictions of the garden-path effect. Similarly to \eqref{eq:SturtGPE}, we calculate this garden-path effect as:
\begin{equation}
    GPE(\mathtt{S}) = \left<RT(w)\right>_{w\in r_{critical}(\mathtt{S})} -  \left<RT(w)\right>_{w\in r_{critical}(unambiguous(\mathtt{S}))}
\end{equation}

\paragraph{Results}As before, we want this garden-path effect to be greater than $0$. This is indeed the case (see Fig.~\ref{subfig:GrodnerGPE}). 

We furthermore conducted 1-sample $t$-tests to quantify the confidence of this finding and found that these garden-path effects are indeed statistically significantly non-zero, aside from the empirical models using \texttt{distilBERT} along with the \texttt{spaCy} models \texttt{en\_core\_web\_sm} and \texttt{en\_core\_web\_lg}. 

We therefore conclude that $\mathsf{SF}$ can also detect a garden-path effect in this dataset.

\begin{figure}[htb!]
    \centering
    \begin{subfigure}[t]{.45\linewidth}
        \centering
        \includegraphics[width=\linewidth]{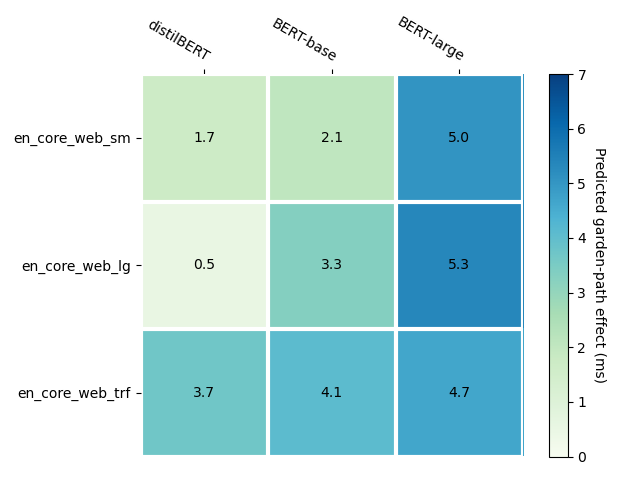}
        \caption{Average predicted garden-path effect.\label{subfig:GrodnerGPE}}
    \end{subfigure}\quad%
    \begin{subfigure}[t]{.45\linewidth}
        \centering
        \includegraphics[width=\linewidth]{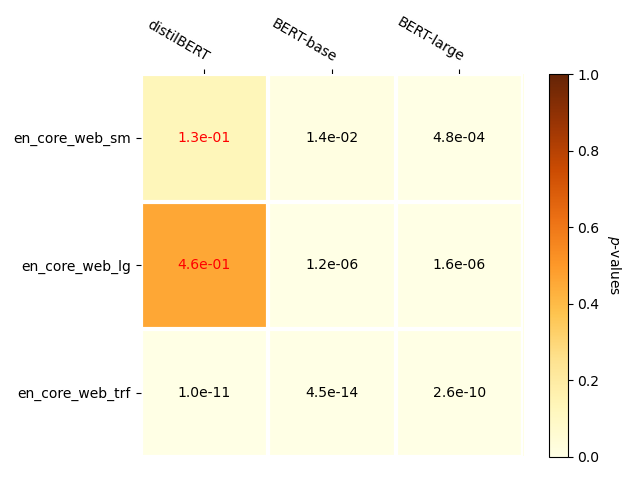}
        \caption{$p$-values associated with the 1-sample $t$-tests evaluating whether the average garden-path effect is $0$.\label{subfig:GrodnerGPEttest}}
    \end{subfigure}
    \caption{Analysis of the predicted garden-path effect over the Sturt et al. dataset.}
\end{figure}

\subsubsection{NP/S and NP/Z sentences}
\paragraph{}Now focusing on the difference in predictions between NP/S and NP/Z sentences, we depict the distributions of predicted garden-path effect for the two categories of garden-path sentences in Fig.~\ref{fig:GrodnerBoxplots}. 

\paragraph{}We observe that, as before, $\mathsf{SF}$ underestimates the garden-path effect of both NP/S and NP/Z sentences. In addition, in some NP/S empirical models, the predicted garden-path effect is \emph{negative}. The models showing a negative garden-path effect are, however, restricted to the small \texttt{BERT} (\texttt{distilBERT} or \texttt{BERT-base}), and \texttt{spaCy} (\texttt{en\_core\_web\_sm} or \texttt{en\_core\_web\_lg}) models. In the larger models, the predicted garden-path effects become positive.

\begin{figure}[htb!]
    \centering
    \includegraphics[width=\linewidth]{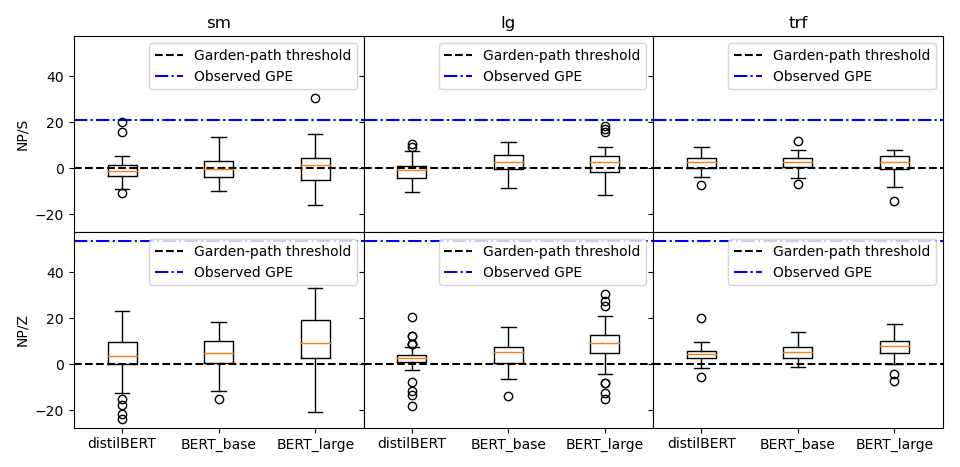}
    \caption{Boxplots of the garden-path effects predicted over the Sturt et al. dataset. The human baseline is also quoted.\label{fig:GrodnerBoxplots}}
\end{figure}

\paragraph{}It also appears that, as in the Sturt et al. dataset, the average garden-path effects are higher for NP/Z sentences than for NP/S sentences. This effect is shown empirically (see Figs.~\ref{subfig:GrodnerNPS} and~\ref{subfig:GrodnerNPZ}). 

Moreover, we can estimate the confidence levels of the claim that $\mathsf{SF}$ distinguishes between NP/S and NP/Z sentences using $t$-tests. The obtained $p$-values are depicted in Fig.~\ref{subfig:GrodnerSZttest}. These $t$-test are all statistically significant aside from the empirical models using \texttt{BERT-base} in conjunction with \texttt{en\_core\_web\_lg} pipeline of \texttt{spaCy} (and even then, the $p$-value is found to be $p<0.1$). 

Contrary to the ongoing trend, these $p$-values are more statistically significant than the ones obtained for the Sturt et al. dataset, therefore showing that the difference between garden-path effect predictions of NP/S and NP/Z sentences is \emph{more marked} in the Grodner et al. dataset.

\begin{figure}[htb!]
    \centering
    \begin{minipage}[c]{.45\linewidth}
        \centering
        \begin{subfigure}{\linewidth}
            \centering
            \includegraphics[width=\linewidth]{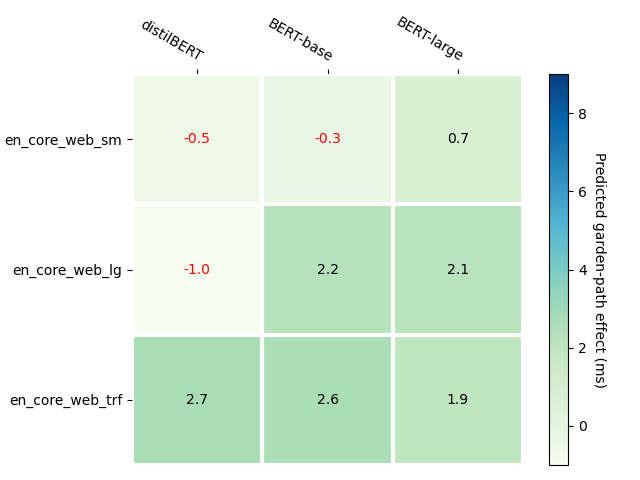}
            \caption{Average predicted garden-effects for NP/S sentences.\label{subfig:GrodnerNPS}}
        \end{subfigure}
        \begin{subfigure}{\linewidth}
            \centering
            \includegraphics[width=\linewidth]{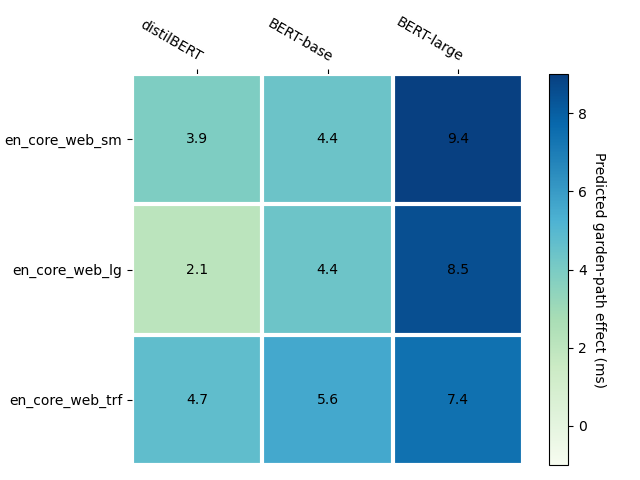}
            \caption{Average predicted garden-effects for NP/Z sentences.\label{subfig:GrodnerNPZ}}
        \end{subfigure}
    \end{minipage}\quad%
    \begin{minipage}[c]{.45\linewidth}
        \centering
        \begin{subfigure}{\linewidth}
            \centering
            \includegraphics[width=\linewidth]{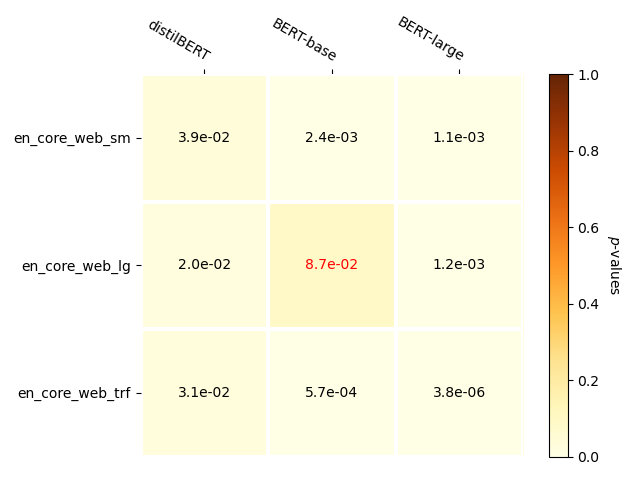}
            \caption{$p$-values associated with the $t$-test with the null hypothesis that garden-path effects from NP/S and NP/Z sentences are sampled from the same distribution.\label{subfig:GrodnerSZttest}}
        \end{subfigure}
    \end{minipage}
    \caption{Comparison of the predicted garden-path effects between NP/S and NP/Z sentences.}
\end{figure}

\subsection{General discussion}\label{subsec:GPpredDiscussion}
\paragraph{}By quantifying the amount of signalling in certain empirical models, we have obtained an alternative measure of the difficulty of parsing. We have seen that the signalling fraction $\mathsf{SF}$ does have a strong positive correlation with human reading times. Using a linear regression model lets us obtain predictions for reading times and garden-path effects. We have also found that this predicted garden-path effect is (statistically significantly) higher for NP/Z sentences than for NP/S sentences. Finding such differences has yet to be shown to be possible to identify using surprisal theory only~\cite{vanSchijndelA,vanSchijndelB,huang2023}.

\paragraph{}Although $\mathsf{SF}$ correctly identifies the relative difficulty levels of different garden-path sentences, the overall magnitude of the predicted effect is systematically underestimated. This is similar to the findings from surprisal theory~\cite{vanSchijndelA,vanSchijndelB,huang2023,linzenCCG}, and we therefore have similar hypotheses to explain why this is the case.

For instance, this could be evidence that the human parsing process is purely forward-looking. Many psycholinguistic theories suggest that backtracking or reprocessing is necessary to make sense of garden-path sentences, both of which are backward-looking processes. However, there is no clear evidence that this is indeed the case. The field of research is globally more interested in the cause of difficulty (e.g. vocabulary biases, plausibility, \ldots) and the nature of the parsing strategies (e.g. presence or absence of backtracking, parrallel or sequential parsing, \ldots), not the algorithmic parsing procedure (i.e. the specific steps the reader carry out when parsing a sentence), which is not even guaranteed to be the same for all individuals.

In addition, part of the difficulty also comes from the nature of the datasets used in this experiment. Indeed, only a few averaged data points were available to compute the linear regression models. Hence, taking those averages would have considerably increased the amount of error even before any computation was done. In particular, we have observed that the predictions obtained from the Grodner et al. dataset were almost consistently worse than those obtained from the Sturt et al. dataset. From these considerations, this could be because one more round of averaging was done in this dataset (i.e. averaging of the reading times for each region). 

Other explanations for this discrepancy could be due to the unevenness of the dataset. For instance, the number of characters per region, the number of words per region, and the vocabulary were not specifically controlled. To remedy this, the authors of~\cite{grodner} decided to use a linear regression model tailored to each participant to normalise the reading times and determine whether a word is read slower or faster than expected. These normalised reading times were referred to as \emph{residual reading times}. However, the parameters of these linear regressions are not available, and it was still more convenient to use raw reading times in our study rather than residual reading times. 

\paragraph{}To deal with those issues, we plan to use more detailed datasets in future works, such as self-paced reading time datasets collected in~\cite{prasad,huang2023}, which contain word-by-word reading times. Using a more detailed dataset also comes with its drawbacks. For instance, \emph{spillover} (i.e. delay in the observed difficulty) has to be accounted for. However, workarounds exist in the literature~\cite{vanSchijndelB,huang2023}. Our first steps will then use them. 

In addition, although surprisal is not by default capable of studying backward-looking processes, only a few modifications to the procedure described in Section~\ref{sec:GPModel} are necessary to be able to account for more complex parsing strategies. We expand this in more detail in the conclusion.
\section{Comparison with surprisal}\label{sec:GPComp}
\paragraph{}We now want to put our results in perspective and compare them with the state-of-the-art methods from computational linguistics, which use surprisal theory. To simplify this comparison, the work presented in~\cite{vanSchijndelA} used the same datasets as we did to produce garden-path effect predictions from surprisal. 

\begin{rmk}[On the fairness of the comparison] There are several reasons why the comparison between the predictions of~\cite{vanSchijndelA} and the ones presented in Section~\ref{sec:GPPred} may not be fair. Firstly, the calculations were not using the same language models. The authors of~\cite{vanSchijndelA} trained their own LSTM from the Wikipedia (2 million and 90 million tokens versions) and Wall Street Journal corpora. In contrast, we used the transformer model \texttt{BERT} (which did not exist when~\cite{vanSchijndelA} was published). The LSTMs trained in~\cite{vanSchijndelA} are not openly available. Therefore, it wasn't easy to obtain a fair comparison, and using pre-trained models was by far the most convenient solution for us.
\end{rmk}

\subsubsection{Magnitude of the garden-path effects}
\paragraph{}We start by comparing the magnitude of the garden-path effect predictions from surprisal and $\mathsf{SF}$. 

\paragraph{}The best predictions from Section~\ref{sec:GPPred}, as well as the best predictions from~\cite{vanSchijndelA} is presented in Table~\ref{tab:SurpVsSF}.

We observe that $\mathsf{SF}$ outperforms surprisal considerably better in the Sturt et al. dataset, where the predictions are up to $40\%$ more accurate than the ones of~\cite{vanSchijndelA} for NP/S sentences and about $20\%$ more accurate for NP/Z sentences. 

On the other hand, the surprisal predictions over the Grodner et al. dataset appear to be more accurate than the ones obtained using $\mathsf{SF}$. However, this accuracy decrease is only $20\%$ for NP/S sentences and $3\%$ for NP/Z sentences.

\paragraph{}What is quite interesting is that the study of van Schijndel and Linzen in~\cite{vanSchijndelA} reported that surprisal performed much better over the Grodner et al. dataset. In contrast, our investigation led to more accurate predictions over the Sturt et al. dataset. This discrepancy could explain why our results are better in the Sturt et al. dataset, whereas surprisal outperforms $\mathsf{SF}$ over the Grodner et al. dataset.

However, the cause of such discrepancy in accuracies is not clear. This suggests that surprisal and $\mathsf{SF}$ do not give the same weights to the same features.

\begin{table}[ht!]
    \centering
    \begin{tabular}{|c|c|c|c|c|}
        \hline
        && \multicolumn{2}{c|}{\textbf{Prediction} (ms)} & \multirow{2}{*}{\textbf{Observed} (ms)} \\\cline{3-4}
        && \textbf{$\mathsf{SF}$} & $S$& \\\hline
        % && mean $\pm \sigma$ & & \\\hline
        \multirow{2}{*}{Sturt et. al}& NP/S & \textbf{62.6} &$24^*$& 87\\\cline{2-5}
        & NP/Z & \textbf{110} & $30^*$& 400\\\hline\hline
        \multirow{2}{*}{Grodner et. al}&NP/S & 2.73& \textbf{7} & 21\\\cline{2-5}
        &NP/Z & 8.52 & \textbf{10} & 53.5\\\hline
        \end{tabular}
    \caption{Comparison of the garden-path effects obtained using surprisal  (numbers taken from~\cite{vanSchijndelA}) and $\mathsf{SF}$. ${}^*$These numbers have been converted to be a region reading time from the word-by-word reading times quoted in~\cite{vanSchijndelA}.\label{tab:SurpVsSF}}
\end{table}

\subsubsection{NP/S and NP/Z predictions}
\paragraph{}Our more clear-cut results were regarding the difference in garden-path effect predictions for sentences with different levels of difficulty. 

Indeed, even though $p$-values were not quoted in the various studies using surprisal for predicting garden-path effects~\cite{vanSchijndelA,vanSchijndelB,huang2023,linzenCCG}, the authors identify the main issue with surprisal as not being able to distinguish between NP/S and NP/Z sentences. In fact, the trend observed in a follow-up study was that NP/S garden-path effects were, on average, higher than the predictions for NP/Z sentences~\cite{vanSchijndelB}. In conjunction with consistent underestimation of the slowdown prediction, this is their primary motivation for advocating backward-looking mechanisms in parsing strategies.

The fact that we can find such statistical differences using a forward-looking model does not invalidate this hypothesis. After all, our predictions are still widely underestimating the slowdowns as well. However, this may show that there might be some features that surprisal cannot detect, which opens up the question of what other quantities (even apart from $\mathsf{SF}$) could contribute to the reading difficulty of garden-path sentences.

\subsubsection{Linking $\mathsf{SF}$ and surprisal}

\paragraph{}Even though our usage of  $\mathsf{SF}$  stemmed from similar motivations to those for surprisal, it is unclear whether they are mathematically related. The reason for the better performance of $\mathsf{SF}$ is that surprisal, as used in this~\cite{vanSchijndelA}, mostly focuses on lexical items. In contrast, syntactic structures are first-class citizens for the $\mathsf{SF}$ quantity described here. 

\paragraph{}Only very recently, the role of syntactic structure in conjunction with surprisal has come into light:  in~\cite{linzenCCG}, it was shown that syntactic surprisal performs slightly better than pure lexical surprisal but still falls short when distinguishing  NP/S from NP/Z and the differences in garden-path effects. 

The results of~\cite{linzenCCG} and the ones presented here motivate the hypothesis that syntactic structures are the main deciding factor in the difficulty of garden-path sentences. 

\paragraph{}Another aspect of our work, which may have led to more accurate results, is that our model can take long-distance dependencies into account, whereas surprisal is not.
\section*{Summary of the chapter}
\markboth{Summary of the chapter}{Predicting garden-path effects}
% \begin{recapbox}
    This Chapter used our sheaf-theoretic models to predict reading times and garden-path effects. Using two datasets from the psycholinguistic literature, we obtained the following results:
    \begin{itemize}
        \item The correlation between $\mathsf{SF}$ and reading times was positive and statistically significant;
        \item Using a linear regression model, we successfully predicted the existence of garden-path effects. However, we consistently underestimated the magnitude of the effect.
        \item We accurately predicted that the garden-path effects were higher for NP/Z than for NP/S sentences.
    \end{itemize}

    The signalling fraction clearly outperforms the surprisal predictions of the Sturt et al. dataset, both by the magnitude and by distinguishing the NP/S and NP/Z sentences. Over the Grodner et al. dataset, surprisal achieves more accurate predictions than $\mathsf{SF}$, but does not distinguish between NP/S and NP/Z sentences.
% \end{recapbox}

\chapter*{Conclusion}\label{chap:conclusion}
\addcontentsline{toc}{chapter}{Conclusion}
\markboth{Conclusion}{Conclusion}
\paragraph{} In this thesis, we studied natural language data from the perspective of foundational quantum mechanics. 

We observed how contextuality arises in natural language data and found uses for various quantities in psycholinguistics. To this end, we used the causal and signalling fractions, which were so far only used to describe quantum systems. Our results demonstrated that the sheaf-theoretic framework of contextuality and causality does uncover some linguistic phenomena relating to lexical and syntactic ambiguity arising from human behaviour.

\paragraph{Lexical Ambiguity} We started by looking at lexical ambiguities, where the analogy between words and quantum systems appeared more natural. Our detailed analysis showed that although contextuality is hard to obtain in the statistics of lexically ambiguous phrases, it is still possible to find witnesses of quantum-like contextuality, as defined under the Contextuality-by-Default framework. This is evidence of the essential role of the context in the disambiguation process of lexically ambiguous items. In addition, we also saw that the causal fractions of SV and VO empirical models confirmed that the observed probability distributions were primarily consistent with verb after subject and verb after object disambiguation orders. This finding showed that verbs tend to be disambiguated after their arguments, which is consistent with the psycholinguistic theories of the disambiguation of lexically ambiguous words. Using this finding, we simulated the lexical disambiguation process using variational quantum circuits. We also demonstrated that these circuits could, in turn, predict the different interpretation probability distributions associated with unseen phrases. This last result is exciting as it only required a small training set; in theory, only knowledge of 8 probability distributions should be able to predict the probability distributions of 8 new ones, and each of these probability distributions only required annotations of 25 participants, which is much lower than the resources needed to train a large language model.

To our knowledge, this project is the first to study the disambiguation process of phrases containing two target words of different grammatical roles and with explicit syntactic relations between them. Moreover, the proof-of-principle that variational quantum circuits can simulate the (human) lexical disambiguation process opens several questions regarding their performance in standard NLP tasks, including word-sense disambiguation. In addition, although we have only focused on two possible interpretations of each word, the approach can easily be extended to an arbitrary number of possible interpretations.

\paragraph{Syntactic ambiguity}We then turned our attention to syntactic ambiguities and notably focused on particular sentences, namely garden-path sentences, that are important for studying the human syntactic parsing process. 

We first observed high correlations between the signalling fraction and reading times of garden-path sentences. This result showed that the signalling fraction correlates with the difficulty of parsing a given sentence fragment. We then use such correlation to produce a linear model of reading times in terms of the signalling fraction. This linear regression model allowed us to predict reading times and garden-path effects associated with different sentences. These predictions outperformed the current state-of-the-art predictions of computational linguistics using surprisal theory. Among these, the most crucial improvement from $\mathsf{SF}$ was to find statistically significant differences in NP/S and NP/Z sentences, where the former is significantly easier to parse than the latter. This may be evidence that our (significantly simplified) parsing model may be closer to the actual human parsing process than surprisal theory.

\paragraph{}We believe this project paves the way for better quantum-based NLP algorithms, as it sheds some light on how different aspects of natural language ambiguities would benefit from quantum advantages. The mathematical frameworks we used led to meaningful results and provided proof of concept that they can be used to talk about linguistic phenomena.

\section*{Future work}\label{sec:future}
\markboth{Future Work}{Conclusion}
% \paragraph{}As hinted many times in the previous discussion, this project opened more questions than it provided answers, and thus, many research lines could be further explored.
\paragraph{}This approach adopted in this project offers more possible research lines. We here describe a few of the possible extensions of this project. 

\subsubsection*{General improvements}
\paragraph{}The first and most obvious way to extend our approach is to loosen some simplifications imposed on the different empirical models. For instance, it will be worth expanding our lexical ambiguity empirical models to include all of the possible interpretations of each word, e.g. using its different \texttt{WordNet} senses. Similarly, it would be interesting to see whether adding the labels of the dependencies in syntactic empirical models would impact the accuracy of the reading time predictions. Furthermore, we could also consider expanding from having two possible choices of words for subjects, verbs, and objects to having the full vocabulary in lexical ambiguity models. In the case of the syntactic models, we could consider all possible ways a sentence fragment can be completed instead of restricting ourselves to the observed continuation. By doing so, we may have a closer link with surprisal theory.

\paragraph{}In addition, it will be interesting to combine the syntax and semantics models, which were described independently in Parts~\ref{part:Lexical} and~\ref{part:Syntactic}. At the moment, we can envisage two ways of doing so. The first would consist of concatenating syntactic and semantic empirical models and possibly adding an ad-hoc notion of interaction between the two. On the other hand, the psychology literature suggests that syntactic information has more influence on the semantic level than the other way around. Hence, another (more complex) possibility would be to have a higher-order causal order in which syntactic empirical models could influence any semantic process (but not necessarily the other way around). We could extend this further by considering other knowledge sources, such as pragmatic information and plausibility.

\subsubsection*{Lexical Ambiguity}
\paragraph{Proof of quantum advantage}Regarding lexical ambiguity empirical models, even though we demonstrated the existence of contextual witnesses in lexical ambiguity data, it is still not clear that quantum systems are \emph{necessary} or would even provide a computational advantage, in simulating the disambiguation process -- further investigation will be needed in this regard. 

Besides, the simulations conducted as part of the project were merely proof of principle, and the obtained results have yet to be compared with their classical analogues. 

\paragraph{Investigating other parts-of-speech}The next step should be to investigate the behaviour of other grammatical types (e.g. adjective, adverbs, \ldots) and more compex phrases and sentences, e.g. subject-verb-object sentences. However, the need for more psycholinguistics research may be a hindrance. 

By doing so, we expect to provide a new compositional way of processing natural language data, which, although it comes from a different motivation, may be highly related to the approach of DisCoCat~\cite{DisCoCat} or DisCoCirc~\cite{Coecke2021} formalisms.

\paragraph{Improving the variational circuits}In addition, even though we observe differences in data from words of different levels of ambiguity or different grammatical types, they are, in the empirical models, treated in the same way. This may particularly affect Chapter~\ref{chap:lexicalCircuits} simulations. 

Examples of possible improvement may be allowing words to be represented as mixed states (i.e. probabilistic mixture of pure states) or pure states (i.e. superposition of states) -- note that at the moment, nouns are only represented as pure states. Using the intuition of~\cite{Piedeleu2015}, we would expect homonymous nouns to be represented as mixed states and polysemous nouns as pure states. 

Furthermore, the accuracy of the predictions and simulations may also increase by having an extra ancilla for verbs, thus allowing the verb to take in information from both the subject and object, even though one argument is not known. We could then represent underspecification by taking the partial trace over the system for which no information is provided. This representation for verbs would then be similar to the DisCoCat representation of a transitive verb, and by adopting this structure, we can train verb-states compatible with the DisCoCat formalism.

\paragraph{Including indefinite causal orders}It is also quite clear that the process of disambiguating, even SV or VO phrases, is not entirely one-way (i.e., the probability distributions associated with the activation of subject and objects depend on the choice of verbs, as the verb provides context for the ambiguous nouns). Therefore, to fully describe the disambiguation process, we must introduce the notion of \emph{indefinite causal order}. 

Indefinite causal orders have benefited from an increasing amount of research interest in the quantum foundations community, notably since a causal order can not only be probabilistic but also in \emph{superposition} in (higher-order) processes such as the quantum switch. 

The first question on the linguistic side is whether the disambiguation process is \emph{causally separable} (i.e. correspond to the probabilistic mixture of causal orders) or \emph{causally inseparable} (i.e. correspond to the superposition of causal orders). In the latter's case, this would provide an additional (and possibly more interpretable) advantage in using quantum resources. We could study this by calculating the so-called \emph{causal separability fraction} introduced in~\cite{sheafcausalityB}.

\subsubsection*{Syntactic ambiguity}
\paragraph{Further investigate the properties of our model}Regarding our syntactic model, our line of research offers excellent promises relating to modelling cognitive processes using sheaves and presheaves. There are still many avenues to explore, e.g., the nature of the correlations between $\mathsf{SF}$ and difficulty or the model's applicability to a broader class of sentences. In addition, our model still underestimates the garden-path effects of both NP/S and NP/Z sentences. It is, therefore, imperative to identify the reason for this discrepancy, i.e. whether it be because of the choice of regression or due to a more fundamental factor such as backtracking. Finally, the uncertainity introduced from averaging data in the psycholinguistic datasets used widely hindered our preliminary results. Using different and more detailed datasets is a way to address this point.

\paragraph{Introducing an edit distance}One possible criticism of our framework is that the different parses are treated as completely unrelated. In reality, this is not the case, as transformations between certain parses may be easy or hard. For instance, moving the head of a determiner by one place should be easier than changing the head of the whole sentence. These transformations between parses can also occur at different levels, for instance, within the same sentence fragment or across different fragments. Defining a measure of discrepancy between probability distributions over parses that considers this ``transformation difficulty'' is left as future work. 

\paragraph{Studying reanalysis}In addition, the main hypothesis behind garden-path effect underestimation, in surprisal theory at least, corresponds to backtracking or other related non-incremental processes. Contrary to surprisal, however, it would be fairly easy to alter our current parsing model to study non-incremental processes. In particular, this could be done by extending the morphisms from prefix order to standard inclusions and changing the choice of cover. This would amount to changing the causal order of interest. By comparing the causal fractions associated with different causal orders, we expect that the one(s) with the highest causal fraction would show up in eye-tracking data as the trajectories adopted by the different readers. In addition, it is not clear that only one causal order would be more advantageous as compared to others; in fact, we would expect multiple causal orders to have comparably high causal fractions. Hence, we would expect that the different causal fractions might predict which \emph{reanalysis patterns} could be employed by readers and at which frequency each of the possible patterns is adopted.
% \red{Conclude the thesis}

\bibliographystyle{plain}
\addcontentsline{toc}{chapter}{Bibliography}
\bibliography{others,quantum,psycho,compLing,nlp,qnlp}

\begin{thebibliography}{100}

\bibitem{senseval3}
{\em Proceedings of {SENSEVAL}-3, the Third International Workshop on the
  Evaluation of Systems for the Semantic Analysis of Text}, Barcelona, Spain,
  July 2004. Association for Computational Linguistics.

\bibitem{EMpBA}
Samson Abramsky and Rui~Soares Barbosa.
\newblock The logic of contextuality.
\newblock Schloss Dagstuhl – Leibniz-Zentrum für Informatik, 2021.

\bibitem{Abramsky2019}
Samson Abramsky, Rui~Soares Barbosa, Martti Karvonen, and Shane Mansfield.
\newblock A comonadic view of simulation and quantum resources.
\newblock volume 2019-June, 2019.

\bibitem{Abramsky2017}
Samson Abramsky, Rui~Soares Barbosa, and Shane Mansfield.
\newblock Contextual fraction as a measure of contextuality.
\newblock {\em Physical Review Letters}, 119, 2017.

\bibitem{abramskyCausality}
Samson Abramsky, Rui~Soares Barbosa, and Amy Searle.
\newblock Combining contextuality and causality: a game semantics approach,
  2023.

\bibitem{AbramskyBrad}
Samson Abramsky and Adam Brandenburger.
\newblock The sheaf-theoretic structure of non-locality and contextuality.
\newblock {\em New J. Phys.}, 13:113036, 2011.

\bibitem{Abramsky2004}
Samson Abramsky and Bob Coecke.
\newblock A categorical semantics of quantum protocols.
\newblock volume~19, 2004.

\bibitem{Abramsky2014semanticUnification}
Samson Abramsky and Mehrnoosh Sadrzadeh.
\newblock Semantic unification: A sheaf theoretic approach to natural language.
\newblock {\em Lecture Notes in Computer Science (including subseries Lecture
  Notes in Artificial Intelligence and Lecture Notes in Bioinformatics)}, 8222,
  2014.

\bibitem{Agirre2009}
Eneko Agirre, Llu{\'i}s M{\`a}rquez, and Richard Wicentowski.
\newblock Computational semantic analysis of language: Semeval-2007 and beyond.
\newblock {\em Language Resources and Evaluation}, 43(2):97--104, Jun 2009.

\bibitem{ahrens2013fundamental}
Johan Ahrens, Elias Amselem, Adan Cabello, and Mohamed Bourennane.
\newblock Two fundamental experimental tests of nonclassicality with qutrits,
  2013.

\bibitem{ajdukiewicz}
Kazimierz Ajdukiewicz.
\newblock Die syntaktische konnexitat.
\newblock {\em Studia philosophica}, pages 1--27, 1935.

\bibitem{Amaral2019}
Barbara Amaral.
\newblock Resource theory of contextuality.
\newblock {\em Philosophical Transactions of the Royal Society A: Mathematical,
  Physical and Engineering Sciences}, 377(2157):20190010, 2019.

\bibitem{Araujo2013}
Mateus Ara\`{u}jo, Marco~Túlio Quintino, Costantino Budroni, Marcelo~Terra
  Cunha, and Adán Cabello.
\newblock All noncontextuality inequalities for the $n$-cycle scenario.
\newblock {\em Physical Review A}, 88(2), August 2013.

\bibitem{linzenCCG}
Suhas Arehalli, Brian Dillon, and Tal Linzen.
\newblock Syntactic surprisal from neural models predicts, but underestimates,
  human processing difficulty from syntactic ambiguities.
\newblock In {\em Proceedings of the 26th Conference on Computational Natural
  Language Learning (CoNLL)}, pages 301--313, Abu Dhabi, United Arab Emirates
  (Hybrid), December 2022. Association for Computational Linguistics.

\bibitem{Arute2019}
Frank Arute, Kunal Arya, Ryan Babbush, Dave Bacon, Joseph~C. Bardin, Rami
  Barends, Rupak Biswas, Sergio Boixo, Fernando~G.S.L. Brandao, David~A. Buell,
  Brian Burkett, Yu~Chen, Zijun Chen, Ben Chiaro, Roberto Collins, William
  Courtney, Andrew Dunsworth, Edward Farhi, Brooks Foxen, Austin Fowler, Craig
  Gidney, Marissa Giustina, Rob Graff, Keith Guerin, Steve Habegger, Matthew~P.
  Harrigan, Michael~J. Hartmann, Alan Ho, Markus Hoffmann, Trent Huang,
  Travis~S. Humble, Sergei~V. Isakov, Evan Jeffrey, Zhang Jiang, Dvir Kafri,
  Kostyantyn Kechedzhi, Julian Kelly, Paul~V. Klimov, Sergey Knysh, Alexander
  Korotkov, Fedor Kostritsa, David Landhuis, Mike Lindmark, Erik Lucero, Dmitry
  Lyakh, Salvatore Mandrà, Jarrod~R. McClean, Matthew McEwen, Anthony Megrant,
  Xiao Mi, Kristel Michielsen, Masoud Mohseni, Josh Mutus, Ofer Naaman, Matthew
  Neeley, Charles Neill, Murphy~Yuezhen Niu, Eric Ostby, Andre Petukhov,
  John~C. Platt, Chris Quintana, Eleanor~G. Rieffel, Pedram Roushan,
  Nicholas~C. Rubin, Daniel Sank, Kevin~J. Satzinger, Vadim Smelyanskiy,
  Kevin~J. Sung, Matthew~D. Trevithick, Amit Vainsencher, Benjamin Villalonga,
  Theodore White, Z.~Jamie Yao, Ping Yeh, Adam Zalcman, Hartmut Neven, and
  John~M. Martinis.
\newblock Quantum supremacy using a programmable superconducting processor.
\newblock {\em Nature}, 574, 2019.

\bibitem{babbush2023exponential}
Ryan Babbush, Dominic~W Berry, Robin Kothari, Rolando~D Somma, and Nathan
  Wiebe.
\newblock Exponential quantum speedup in simulating coupled classical
  oscillators.
\newblock {\em arXiv preprint arXiv:2303.13012}, 2023.

\bibitem{BaezLauda2011}
John~C. Baez and Aaron~D. Lauda.
\newblock {\em A Prehistory of n-Categorical Physics}, page 13–128.
\newblock Cambridge University Press, 2011.

\bibitem{Banerjee2002}
Satanjeev Banerjee and Ted Pedersen.
\newblock An adapted lesk algorithm for word sense disambiguation using
  wordnet.
\newblock volume 2276, 2002.

\bibitem{BarHillel}
Yehoshua Bar-Hillel and Rudolf Carnap.
\newblock Semantic information.
\newblock {\em British Journal for the Philosophy of Science}, 4:147--157,
  1953.

\bibitem{SoaresBarbosa2014}
Rui~Soares Barbosa.
\newblock On monogamy of non-locality and macroscopic averages: examples and
  preliminary results.
\newblock {\em Electronic Proceedings in Theoretical Computer Science},
  172:36--55, dec 2014.

\bibitem{Basile2014}
Pierpaolo Basile, Annalina Caputo, and Giovanni Semeraro.
\newblock An enhanced lesk word sense disambiguation algorithm through a
  distributional semantic model.
\newblock 2014.

\bibitem{Bell1966}
John~S. Bell.
\newblock On the problem of hidden variables in quantum mechanics.
\newblock {\em Reviews of Modern Physics}, 38, 1966.

\bibitem{Bengio2003}
Yoshua Bengio, Réjean Ducharme, Pascal Vincent, and Christian Jauvin.
\newblock A neural probabilistic language model.
\newblock volume~3, 2003.

\bibitem{berry2007efficient}
Dominic~W Berry, Graeme Ahokas, Richard Cleve, and Barry~C Sanders.
\newblock Efficient quantum algorithms for simulating sparse hamiltonians.
\newblock {\em Communications in Mathematical Physics}, 270:359--371, 2007.

\bibitem{Bever}
Thomas~G Bever.
\newblock The cognitive basis for linguistic structures.
\newblock {\em Cognition and the development of language}, 1970.

\bibitem{Bohannon1986}
John~Neil Bohannon, Barbara Landau, and Lila Gleitman.
\newblock Language and experience: Evidence from the blind child.
\newblock {\em Language}, 62, 1986.

\bibitem{BohmI}
David Bohm.
\newblock A suggested interpretation of the quantum theory in terms of "hidden"
  variables. i.
\newblock {\em Phys. Rev.}, 85:166--179, Jan 1952.

\bibitem{BohmII}
David Bohm.
\newblock A suggested interpretation of the quantum theory in terms of "hidden"
  variables. ii.
\newblock {\em Phys. Rev.}, 85:180--193, Jan 1952.

\bibitem{Bohr}
Niels Bohr.
\newblock {On The Notions of Causality and Complementarity}.
\newblock In J{\o}rgen Kalckar, editor, {\em Foundations of Quantum Physics II
  (1933–1958)}, volume~7 of {\em Niels Bohr Collected Works}, pages 325--338.
  Elsevier, 1996.

\bibitem{Boser1992}
Bernhard~E. Boser, Isabelle~M. Guyon, and Vladimir~N. Vapnik.
\newblock Training algorithm for optimal margin classifiers.
\newblock 1992.

\bibitem{Bovi2015}
Claudio~Delli Bovi, Luca Telesca, and Roberto Navigli.
\newblock Large-scale information extraction from textual definitions through
  deep syntactic and semantic analysis.
\newblock {\em Transactions of the Association for Computational Linguistics},
  3, 2015.

\bibitem{Bruza2015}
Peter~D. Bruza, Kirsty Kitto, Brentyn~J. Ramm, and Laurianne Sitbon.
\newblock A probabilistic framework for analysing the compositionality of
  conceptual combinations.
\newblock {\em Journal of Mathematical Psychology}, 67, 2015.

\bibitem{cabello2010noncontextuality}
Adan Cabello, Simone Severini, and Andreas Winter.
\newblock (non-)contextuality of physical theories as an axiom, 2010.

\bibitem{Cabello1996}
Adán Cabello, José~M. Estebaranz, and Guillermo García-Alcaine.
\newblock Bell-kochen-specker theorem: A proof with 18 vectors.
\newblock {\em Physics Letters, Section A: General, Atomic and Solid State
  Physics}, 212, 1996.

\bibitem{cartan1950}
Henri Cartan.
\newblock Id{\'e}aux et modules de fonctions analytiques de variables
  complexes.
\newblock {\em Bulletin de la Soci{\'e}t{\'e} math{\'e}matique de France},
  78:29--64, 1950.

\bibitem{Cavalcanti2018}
Eric~G. Cavalcanti.
\newblock Classical causal models for bell and kochen-specker inequality
  violations require fine-tuning.
\newblock {\em Physical Review X}, 8, 2018.

\bibitem{Caves2002}
Carlton~M. Caves, Christopher~A. Fuchs, and Rüdiger Schack.
\newblock Quantum probabilities as bayesian probabilities.
\newblock {\em Physical Review A}, 65(2), January 2002.

\bibitem{choi2015depends}
Jinho~D Choi, Joel Tetreault, and Amanda Stent.
\newblock It depends: Dependency parser comparison using a web-based evaluation
  tool.
\newblock In {\em Proceedings of the 53rd Annual Meeting of the Association for
  Computational Linguistics and the 7th International Joint Conference on
  Natural Language Processing (Volume 1: Long Papers)}, pages 387--396, 2015.

\bibitem{Chomsky}
N.~{Chomsky}.
\newblock Three models for the description of language.
\newblock {\em IRE Transactions on Information Theory}, 2(3):113--124, Sep.
  1956.

\bibitem{Clauser1969}
John~F. Clauser, Michael~A. Horne, Abner Shimony, and Richard~A. Holt.
\newblock Proposed experiment to test local hidden-variable theories.
\newblock {\em Physical Review Letters}, 23, 1969.

\bibitem{Coecke2021}
Bob Coecke.
\newblock {\em The Mathematics of Text Structure}, volume~20.
\newblock 2021.

\bibitem{DisCoCat}
Bob Coecke, Mehrnoosh Sadrzadeh, and Stephen Clark.
\newblock Mathematical foundations for a compositional distributional model of
  meaning, 2010.

\bibitem{BNC}
BNC Consortium.
\newblock The british national corpus.
\newblock Oxford Text Archive, 2007.
\newblock XML Edition.

\bibitem{curry2015topological}
Justin Curry.
\newblock Topological data analysis and cosheaves, 2015.

\bibitem{Deligne}
Pierre Deligne.
\newblock La conjecture de {Weil} : {I}.
\newblock {\em Publications Math\'ematiques de l'IH\'ES}, 43:273--307, 1974.

\bibitem{BERT}
Jacob Devlin, Ming-Wei Chang, Kenton Lee, and Kristina Toutanova.
\newblock Bert: Pre-training of deep bidirectional transformers for language
  understanding, 2019.

\bibitem{Dopkins1992}
Stephen Dopkins, Robin~K. Morris, and Keith Rayner.
\newblock Lexical ambiguity and eye fixations in reading: A test of competing
  models of lexical ambiguity resolution.
\newblock {\em Journal of Memory and Language}, 31, 1992.

\bibitem{Duarte2018}
Cristhiano Duarte and Barbara Amaral.
\newblock Resource theory of contextuality for arbitrary prepare-and-measure
  experiments.
\newblock {\em Journal of Mathematical Physics}, 59, 2018.

\bibitem{Dzhafarov2016}
E.~N. Dzhafarov, Ru~Zhang, and Janne Kujala.
\newblock Is there contextuality in behavioural and social systems?
\newblock {\em Philosophical Transactions of the Royal Society A: Mathematical,
  Physical and Engineering Sciences}, 374, 2016.

\bibitem{Szhafarov2023sheaf}
Ehtibar~N. Dzhafarov.
\newblock The contextuality-by-default view of the sheaf-theoretic approach to
  contextuality, 2023.

\bibitem{Dzhafarov2017}
Ehtibar~N. Dzhafarov, Víctor~H. Cervantes, and Janne~V. Kujala.
\newblock Contextuality in canonical systems of random variables.
\newblock {\em Philosophical Transactions of the Royal Society A: Mathematical,
  Physical and Engineering Sciences}, 375, 2017.

\bibitem{Dzhafarov2016contextcontent}
Ehtibar~N. Dzhafarov and Janne~V. Kujala.
\newblock Context–content systems of random variables: The
  contextuality-by-default theory.
\newblock {\em Journal of Mathematical Psychology}, 74, 2016.

\bibitem{Dzhafarov2015}
Ehtibar~N. Dzhafarov, Janne~V. Kujala, and Jan Åke Larsson.
\newblock Contextuality in three types of quantum-mechanical systems.
\newblock {\em Foundations of Physics}, 45, 2015.

\bibitem{Edmonds2002}
Philip Edmonds.
\newblock Senseval: The evaluation of word sense disambiguation systems.
\newblock 2002.

\bibitem{Edmonds2001}
Philip Edmonds and Scott Cotton.
\newblock Sensev al-2: Overview.
\newblock 2001.

\bibitem{EhrlichRayner}
Susan~F. Ehrlich and Keith Rayner.
\newblock Contextual effects on word perception and eye movements during
  reading.
\newblock {\em Journal of Verbal Learning and Verbal Behavior}, 20(6):641--655,
  1981.

\bibitem{Eilenberg1945}
Samuel Eilenberg and Saunders MacLane.
\newblock General theory of natural equivalences.
\newblock {\em Transactions of the American Mathematical Society}, 58, 1945.

\bibitem{EPR}
A.~Einstein, B.~Podolsky, and N.~Rosen.
\newblock Can quantum-mechanical description of physical reality be considered
  complete?
\newblock {\em Phys. Rev.}, 47:777--780, May 1935.

\bibitem{QAOA}
Edward Farhi, Jeffrey Goldstone, and Sam Gutmann.
\newblock A quantum approximate optimization algorithm, 2014.

\bibitem{QAOAAdvantage}
Edward Farhi, Jeffrey Goldstone, and Sam Gutmann.
\newblock A quantum approximate optimization algorithm applied to a bounded
  occurrence constraint problem, 2015.

\bibitem{VQCClassification}
Edward Farhi and Hartmut Neven.
\newblock {Classification with Quantum Neural Networks on Near Term
  Processors}, 2018.

\bibitem{ukWaC}
Adriano Ferraresi, Eros Zanchetta, Marco Baroni, and Silvia Bernardini.
\newblock Introducing and evaluating ukwac , a very large web-derived corpus of
  english.
\newblock 2008.

\bibitem{Fine1982}
Arthur Fine.
\newblock Hidden variables, joint probability, and the bell inequalities.
\newblock {\em Phys. Rev. Lett.}, 48:291--295, Feb 1982.

\bibitem{Firth1957}
J.R. Firth.
\newblock A synopsis of linguistic theory 1930-55.
\newblock {\em Studies in Linguistic Analysis: Special Volume of the
  Philological Society}, 1957.

\bibitem{Flamini2020}
Fulvio Flamini, Arne Hamann, Sofiène Jerbi, Lea~M. Trenkwalder,
  Hendrik~Poulsen Nautrup, and Hans~J. Briegel.
\newblock Photonic architecture for reinforcement learning.
\newblock {\em New Journal of Physics}, 22, 2020.

\bibitem{fong2018seven}
Brendan Fong and David~I Spivak.
\newblock Seven sketches in compositionality: An invitation to applied category
  theory, 2018.

\bibitem{Frank2013}
Stefan~L. Frank.
\newblock Uncertainty reduction as a measure of cognitive load in sentence
  comprehension.
\newblock {\em Topics in Cognitive Science}, 5(3):475--494, 2013.

\bibitem{Frazier87}
Lyn Frazier.
\newblock {\em Sentence processing: A tutorial review.}, pages 559--586.
\newblock Attention and performance 12: The psychology of reading. Lawrence
  Erlbaum Associates, Inc, Hillsdale, NJ, US, 1987.

\bibitem{FrazierRayner1990}
Lyn Frazier and Keith Rayner.
\newblock Taking on semantic commitments: Processing multiple meanings vs.
  multiple senses.
\newblock {\em Journal of Memory and Language}, 29, 1990.

\bibitem{FREYDchoice}
Peter Freyd.
\newblock The axiom of choice.
\newblock {\em Journal of Pure and Applied Algebra}, 19:103--125, 1980.

\bibitem{underspecification}
Steven Frisson.
\newblock Semantic underspecification in language processing.
\newblock {\em Linguistics and Language Compass}, 3, 2009.

\bibitem{FrissonPickering1999}
Steven Frisson and Martin~J. Pickering.
\newblock The processing of metonymy: Evidence from eye movements.
\newblock {\em Journal of Experimental Psychology: Learning Memory and
  Cognition}, 25, 1999.

\bibitem{GARNSEY1997}
Susan~M. Garnsey, Neal~J. Pearlmutter, Elizabeth Myers, and Melanie~A. Lotocky.
\newblock The contributions of verb bias and plausibility to the comprehension
  of temporarily ambiguous sentences.
\newblock {\em Journal of Memory and Language}, 37(1):58--93, 1997.

\bibitem{Gentner1982}
D~Gentner.
\newblock Why nouns are learned before verbs: Linguistic relativity versus
  natural partitioning.
\newblock {\em Language development: Vol. 2. Language, thought, and culture},
  2, 1982.

\bibitem{Gentner1981}
Dedre Gentner.
\newblock Some interesting differences.
\newblock {\em Cognition and brain theory}, 4, 1981.

\bibitem{GentnerFrance2013}
Dedre Gentner and Ilene~M. France.
\newblock {\em The verb mutability effect: Studies of the combinatorial
  semantics of nouns and verbs}.
\newblock 2013.

\bibitem{gibson2000}
Edward Gibson and Neal~J Pearlmutter.
\newblock Distinguishing serial and parallel parsing.
\newblock {\em Journal of Psycholinguistic Research}, 29:231--240, 2000.

\bibitem{giustina2015significant}
Marissa Giustina, Marijn~AM Versteegh, S{\"o}ren Wengerowsky, Johannes
  Handsteiner, Armin Hochrainer, Kevin Phelan, Fabian Steinlechner, Johannes
  Kofler, Jan-{\AA}ke Larsson, Carlos Abell{\'a}n, et~al.
\newblock Significant-loophole-free test of bell’s theorem with entangled
  photons.
\newblock {\em Physical review letters}, 115(25):250401, 2015.

\bibitem{sheafcausality}
Stefano Gogioso and Nicola Pinzani.
\newblock {The Sheaf-Theoretic Structure of Definite Causality}.
\newblock {\em Electronic Proceedings in Theoretical Computer Science},
  343:301–324, Sep 2021.

\bibitem{sheafcausalityB}
Stefano Gogioso and Nicola Pinzani.
\newblock The geometry of causality, 2023.

\bibitem{topoi}
Robert Goldblatt.
\newblock {\em Topoi: The Categorial Analysis of Logic}.
\newblock Dover Publications, 1983.

\bibitem{gottesman1998}
Daniel Gottesman.
\newblock The heisenberg representation of quantum computers, 1998.

\bibitem{grodner}
Daniel Grodner, Edward Gibson, Vered Argaman, and Maria Babyonyshev.
\newblock Against repair-based reanalysis in sentence comprehension.
\newblock {\em Journal of Psycholinguistic Research}, 32:141--166, 2003.

\bibitem{Grothendieck1965}
Alexander Grothendieck.
\newblock Formule de lefschetz et rationalit{\'e} des fonctions \$l\$.
\newblock 1966.

\bibitem{Grothendieck1957}
Alexandre Grothendieck.
\newblock Sur quelques points d'alg{\`e}bre homologique.
\newblock {\em Tohoku Mathematical Journal, Second Series}, 9(2):119--183,
  1957.

\bibitem{Grover}
Lov~K. Grover.
\newblock A fast quantum mechanical algorithm for database search.
\newblock In {\em Proceedings of the Twenty-Eighth Annual ACM Symposium on
  Theory of Computing}, STOC '96, page 212–219, New York, NY, USA, 1996.
  Association for Computing Machinery.

\bibitem{Hale2001}
John Hale.
\newblock {A probabilistic Earley parser as a psycholinguistic model}.
\newblock In {\em Second meeting of the north american chapter of the
  association for computational linguistics}, 2001.

\bibitem{Hale2003}
John Hale.
\newblock The information conveyed by words in sentences.
\newblock {\em Journal of Psycholinguistic Research}, 32(2):101--123, Mar 2003.

\bibitem{Hale2006}
John Hale.
\newblock Uncertainty about the rest of the sentence.
\newblock {\em Cognitive Science}, 30(4):643--672, 2006.

\bibitem{Harris1954}
Zellig~S. Harris.
\newblock Distributional structure.
\newblock {\em WORD}, 10, 1954.

\bibitem{hensen2015loophole}
Bas Hensen, Hannes Bernien, Ana{\"\i}s~E Dr{\'e}au, Andreas Reiserer, Norbert
  Kalb, Machiel~S Blok, Just Ruitenberg, Raymond~FL Vermeulen, Raymond~N
  Schouten, Carlos Abell{\'a}n, et~al.
\newblock Loophole-free bell inequality violation using electron spins
  separated by 1.3 kilometres.
\newblock {\em Nature}, 526(7575):682--686, 2015.

\bibitem{Hochreiter1997}
Sepp Hochreiter and Jürgen Schmidhuber.
\newblock Long short-term memory.
\newblock {\em Neural Computation}, 9, 1997.

\bibitem{Holmes87}
V.~M. Holmes.
\newblock {\em Syntactic parsing: In search of the garden path.}, pages
  587--599.
\newblock Attention and performance 12: The psychology of reading. Lawrence
  Erlbaum Associates, Inc, Hillsdale, NJ, US, 1987.

\bibitem{Howard2014}
Mark Howard, Joel Wallman, Victor Veitch, and Joseph Emerson.
\newblock Contextuality supplies the 'magic' for quantum computation.
\newblock {\em Nature}, 510, 2014.

\bibitem{huang2023}
Kuan-Jung Huang, Suhas Arehalli, Mari Kugemoto, Christian Muxica, Grusha
  Prasad, Brian Dillon, and Tal Linzen.
\newblock Surprisal does not explain syntactic disambiguation difficulty:
  evidence from a large-scale benchmark.
\newblock 2023.

\bibitem{Iacobacci2016}
Ignacio Iacobacci, Mohammad~Taher Pilehvar, and Roberto Navigli.
\newblock Embeddings for word sense disambiguation: An evaluation study.
\newblock volume~2, 2016.

\bibitem{jacobs2019causal}
Bart Jacobs, Aleks Kissinger, and Fabio Zanasi.
\newblock Causal inference by string diagram surgery, 2019.

\bibitem{PTJelephant}
Peter~T Johnstone.
\newblock {\em Sketches of an Elephant: A Topos Theory Compendium}, volume~2.
\newblock Oxford University Press, 2002.

\bibitem{Jones2019}
Matt Jones.
\newblock Relating causal and probabilistic approaches to contextuality.
\newblock {\em Philosophical Transactions of the Royal Society A: Mathematical,
  Physical and Engineering Sciences}, 377, 2019.

\bibitem{Joos1950}
Martin Joos.
\newblock Description of language design.
\newblock {\em Journal of the Acoustical Society of America}, 22, 1950.

\bibitem{Jurafsky1996}
Daniel Jurafsky.
\newblock A probabilistic model of lexical and syntactic access and
  disambiguation.
\newblock {\em Cognitive Science}, 20(2):137--194, 1996.

\bibitem{jurafskyspeech}
Daniel Jurafsky and James~H Martin.
\newblock Speech and language processing: An introduction to natural language
  processing, computational linguistics, and speech recognition.

\bibitem{Kilgarriff2000}
A.~Kilgarriff and J.~Rosenzweig.
\newblock Framework and results for english senseval.
\newblock {\em Language Resources and Evaluation}, 34, 2000.

\bibitem{KissingerHobanCoecke}
Aleks Kissinger, Matty Hoban, and Bob Coecke.
\newblock Equivalence of relativistic causal structure and process terminality,
  2017.

\bibitem{Kissinger2019}
Aleks Kissinger and Sander Uijlen.
\newblock A categorical semantics for causal structure.
\newblock {\em Logical Methods in Computer Science}, 15, 2019.

\bibitem{KCBS}
Alexander~A. Klyachko, M.~Ali Can, Sinem Binicioğlu, and Alexander~S.
  Shumovsky.
\newblock Simple test for hidden variables in spin-1 systems.
\newblock {\em Physical Review Letters}, 101(2), July 2008.

\bibitem{Kochen1967}
Simon Kochen and E.~Specker.
\newblock The problem of hidden variables in quantum mechanics.
\newblock {\em Indiana University Mathematics Journal}, 17, 1967.

\bibitem{Kripke1965}
Saul~A. Kripke.
\newblock Semantical analysis of intuitionistic logic i.
\newblock In J.N. Crossley and M.A.E. Dummett, editors, {\em Formal Systems and
  Recursive Functions}, volume~40 of {\em Studies in Logic and the Foundations
  of Mathematics}, pages 92--130. Elsevier, 1965.

\bibitem{Kujala2016}
Janne~V. Kujala and Ehtibar~N. Dzhafarov.
\newblock Proof of a conjecture on contextuality in cyclic systems with binary
  variables.
\newblock {\em Foundations of Physics}, 46, 2016.

\bibitem{Kujala2019}
Janne~V. Kujala and Ehtibar~N. Dzhafarov.
\newblock Measures of contextuality and non-contextuality.
\newblock {\em Philosophical Transactions of the Royal Society A: Mathematical,
  Physical and Engineering Sciences}, 377, 2019.

\bibitem{Kujala2015}
Janne~V. Kujala, Ehtibar~N. Dzhafarov, and Jan Åke Larsson.
\newblock Necessary and sufficient conditions for an extended noncontextuality
  in a broad class of quantum mechanical systems.
\newblock {\em Physical Review Letters}, 115, 2015.

\bibitem{Lambek}
Joachim Lambek.
\newblock The mathematics of sentence structure.
\newblock {\em The American Mathematical Monthly}, 65(3):154--170, 1958.

\bibitem{lambek1980lambda}
Joachim Lambek.
\newblock From lambda-calculus to cartesian closed categories.
\newblock {\em To HB Curry: essays on combinatory logic, lambda calculus and
  formalism}, pages 375--402, 1980.

\bibitem{LambekScott}
Joachim Lambek and Philip~J Scott.
\newblock {\em Introduction to higher-order categorical logic}, volume~7.
\newblock Cambridge University Press, 1988.

\bibitem{Lawvere1964}
F~William Lawvere.
\newblock An elementary theory of the category of sets.
\newblock {\em Proceedings of the national academy of sciences},
  52(6):1506--1511, 1964.

\bibitem{lawvere1970quantifiers}
F~William Lawvere.
\newblock Quantifiers and sheaves.
\newblock In {\em Actes du congres international des mathematiciens, Nice},
  volume~1, pages 329--334, 1970.

\bibitem{Lee2002}
Yoong~Keok Lee and Hwee~Tou Ng.
\newblock An empirical evaluation of knowledge sources and learning algorithms
  for word sense disambiguation.
\newblock 2002.

\bibitem{Leray1945}
Jean Leray.
\newblock Sur la forme des espaces topologiques et sur les points fixes des
  repr{\'e}sentations.
\newblock {\em Journal de Math{\'e}matiques Pures et Appliqu{\'e}es},
  24:95--167, 1945.

\bibitem{Lesk1986}
Michael Lesk.
\newblock Automatic sense disambiguation using machine readable dictionaries.
\newblock 1986.

\bibitem{Lo2022}
Kin~Ian Lo, Mehrnoosh Sadrzadeh, and Shane Mansfield.
\newblock A model of anaphoric ambiguities using sheaf theoretic quantum-like
  contextuality and bert.
\newblock volume 366, 2022.

\bibitem{Lo2023}
Kin~Ian Lo, Mehrnoosh Sadrzadeh, and Shane Mansfield.
\newblock Generalised winograd schema and its contextuality.
\newblock {\em Electronic Proceedings in Theoretical Computer Science}, 384,
  2023.

\bibitem{QNLPinpractice}
Robin Lorenz, Anna Pearson, Konstantinos Meichanetzidis, Dimitri Kartsaklis,
  and Bob Coecke.
\newblock {QNLP in Practice: Running Compositional Models of Meaning on a
  Quantum Computer}, 2021.

\bibitem{lorenz2023causal}
Robin Lorenz and Sean Tull.
\newblock Causal models in string diagrams, 2023.

\bibitem{Loureiro2020}
Daniel Loureiro and Alípio~Mário Jorge.
\newblock Language modelling makes sense: Propagating representations through
  wordnet for full-coverage word sense disambiguation.
\newblock 2020.

\bibitem{Luo2018a}
Fuli Luo, Tianyu Liu, Zexue He, Qiaolin Xia, Zhifang Sui, and Baobao Chang.
\newblock Leveraging gloss knowledge in neural word sense disambiguation by
  hierarchical co-attention.
\newblock 2018.

\bibitem{Luo2018b}
Fuli Luo, Tianyu Liu, Qiaolin Xia, Baobao Chang, and Zhifang Sui.
\newblock Incorporating glosses into neural word sense disambiguation.
\newblock volume~1, 2018.

\bibitem{CategoriesWorkingMathematician}
Saunders Mac~Lane.
\newblock {\em Categories for the working mathematician}, volume~5.
\newblock Springer Science \& Business Media, 2013.

\bibitem{maclane2012sheaves}
Saunders MacLane and Ieke Moerdijk.
\newblock {\em Sheaves in geometry and logic: A first introduction to topos
  theory}.
\newblock Springer Science \& Business Media, 2012.

\bibitem{Madsen2022}
Lars~S. Madsen, Fabian Laudenbach, Mohsen~Falamarzi Askarani, Fabien Rortais,
  Trevor Vincent, Jacob~F.F. Bulmer, Filippo~M. Miatto, Leonhard Neuhaus,
  Lukas~G. Helt, Matthew~J. Collins, Adriana~E. Lita, Thomas Gerrits, Sae~Woo
  Nam, Varun~D. Vaidya, Matteo Menotti, Ish Dhand, Zachary Vernon, Nicolás
  Quesada, and Jonathan Lavoie.
\newblock Quantum computational advantage with a programmable photonic
  processor.
\newblock {\em Nature}, 606, 2022.

\bibitem{MansfieldSequential}
Shane Mansfield and Elham Kashefi.
\newblock Quantum advantage from sequential-transformation contextuality.
\newblock {\em Physical Review Letters}, 121(23), dec 2018.

\bibitem{Melamud2016}
Oren Melamud, Jacob Goldberger, and Ido Dagan.
\newblock context2vec: Learning generic context embedding with bidirectional
  lstm.
\newblock 2016.

\bibitem{Mermin1990}
N.~David Mermin.
\newblock Simple unified form for the major no-hidden-variables theorems.
\newblock {\em Physical Review Letters}, 65, 1990.

\bibitem{Mikolov2013}
Tomas Mikolov, Kai Chen, Greg Corrado, and Jeffrey Dean.
\newblock Efficient estimation of word representations in vector space.
\newblock 2013.

\bibitem{WordNet}
George~A. Miller.
\newblock Wordnet: A lexical database for english.
\newblock {\em Commun. ACM}, 38(11):39–41, nov 1995.

\bibitem{WordNetB}
George~A Miller, Richard Beckwith, Christiane Fellbaum, Derek Gross, and
  Katherine~J Miller.
\newblock Introduction to wordnet: An on-line lexical database.
\newblock {\em International journal of lexicography}, 3(4):235--244, 1990.

\bibitem{Miller1993}
George~A. Miller, Claudia Leacock, Randee Tengi, and Ross~T. Bunker.
\newblock A semantic concordance.
\newblock 1993.

\bibitem{VQCcircuitlearning}
K.~Mitarai, M.~Negoro, M.~Kitagawa, and K.~Fujii.
\newblock Quantum circuit learning.
\newblock {\em Phys. Rev. A}, 98:032309, Sep 2018.

\bibitem{Mitchell87}
D.~C. Mitchell.
\newblock {\em Lexical guidance in human parsing: Locus and processing
  characteristics.}, pages 601--618.
\newblock Attention and performance 12: The psychology of reading. Lawrence
  Erlbaum Associates, Inc, Hillsdale, NJ, US, 1987.

\bibitem{Mullaly}
Allison Mullaly, Christina Gagné, Thomas Spalding, and Kristan Marchak.
\newblock Examining ambiguous adjectives in adjective-noun phrases: Evidence
  for representation as a shared core-meaning with sense specialization.
\newblock {\em The Mental Lexicon}, 5:87--114, 06 2010.

\bibitem{VandeNest2013}
Maarten Van~Den Nest.
\newblock Universal quantum computation with little entanglement.
\newblock {\em Physical Review Letters}, 110, 2013.

\bibitem{nielsen2010quantum}
Michael~A Nielsen and Isaac~L Chuang.
\newblock {\em Quantum computation and quantum information}.
\newblock Cambridge university press, 2010.

\bibitem{Pearl2011}
Judea Pearl.
\newblock {\em Causality: Models, reasoning, and inference, second edition}.
\newblock 2011.

\bibitem{Peres1991}
A.~Peres.
\newblock Two simple proofs of the kochen-specker theorem.
\newblock {\em Journal of Physics A: General Physics}, 24, 1991.

\bibitem{entanglementMixed}
Asher Peres.
\newblock Separability criterion for density matrices.
\newblock {\em Phys. Rev. Lett.}, 77:1413--1415, Aug 1996.

\bibitem{Peters2018}
Matthew~E. Peters, Mark Neumann, Mohit Iyyer, Matt Gardner, Christopher Clark,
  Kenton Lee, and Luke Zettlemoyer.
\newblock Deep contextualized word representations.
\newblock volume~1, 2018.

\bibitem{PickeringFrisson2001}
Martin~J. Pickering and Steven Frisson.
\newblock Processing ambiguous verbs: Evidence from eye movements.
\newblock {\em Journal of Experimental Psychology: Learning Memory and
  Cognition}, 27, 2001.

\bibitem{pickering1998}
Martin~J Pickering and Matthew~J Traxler.
\newblock Plausibility and recovery from garden paths: An eye-tracking study.
\newblock {\em Journal of Experimental Psychology: Learning, Memory, and
  Cognition}, 24(4):940, 1998.

\bibitem{Piedeleu2015}
Robin Piedeleu, Dimitri Kartsaklis, Bob Coecke, and Mehrnoosh Sadrzadeh.
\newblock Open system categorical quantum semantics in natural language
  processing.
\newblock volume~35, 2015.

\bibitem{PiedeleuZanasi}
Robin Piedeleu and Fabio Zanasi.
\newblock An introduction to string diagrams for computer scientists, 2023.

\bibitem{Entanglement}
Martin~B. Plenio and Vlatko Vedral.
\newblock Teleportation, entanglement and thermodynamics in the quantum world.
\newblock {\em Contemporary Physics}, 39(6):431--446, nov 1998.

\bibitem{prasad}
Grusha Prasad and Tal Linzen.
\newblock Rapid syntactic adaptation in self-paced reading: Detectable, but
  only with many participants.
\newblock {\em Journal of Experimental Psychology: Learning, Memory, and
  Cognition}, 47(7):1156, 2021.

\bibitem{pritchett1992}
Bradley~L Pritchett.
\newblock {\em Grammatical competence and parsing performance}.
\newblock University of Chicago Press, 1992.

\bibitem{Qiskit}
{Qiskit contributors}.
\newblock Qiskit: An open-source framework for quantum computing, 2023.

\bibitem{Ramakrishnan2003}
Ganesh Ramakrishnan, Apurva Jadhav, Ashutosh Joshi, Soumen Chakrabarti, and
  Pushpak Bhattacharyya.
\newblock Question answering via bayesian inference on lexical relations.
\newblock 2003.

\bibitem{Rayner1977}
Keith Rayner.
\newblock Visual attention in reading: Eye movements reflect cognitive
  processes.
\newblock {\em Memory \& Cognition}, 5, 1977.

\bibitem{RaynerDuffy1986}
Keith Rayner and Susan~A. Duffy.
\newblock Lexical complexity and fixation times in reading: Effects of word
  frequency, verb complexity, and lexical ambiguity.
\newblock {\em Memory \& Cognition}, 14, 1986.

\bibitem{Rebentrost2014}
Patrick Rebentrost, Masoud Mohseni, and Seth Lloyd.
\newblock Quantum support vector machine for big data classification.
\newblock {\em Physical Review Letters}, 113, 2014.

\bibitem{RobinsonDependency}
Jane~J. Robinson.
\newblock Dependency structures and transformational rules.
\newblock {\em Language}, 46(2):259--285, 1970.

\bibitem{rosenfeld2017event}
Wenjamin Rosenfeld, Daniel Burchardt, Robert Garthoff, Kai Redeker, Norbert
  Ortegel, Markus Rau, and Harald Weinfurter.
\newblock Event-ready bell test using entangled atoms simultaneously closing
  detection and locality loopholes.
\newblock {\em Physical review letters}, 119(1):010402, 2017.

\bibitem{VQCQclassifier}
Maria Schuld, Alex Bocharov, Krysta~M. Svore, and Nathan Wiebe.
\newblock Circuit-centric quantum classifiers.
\newblock {\em Physical Review A}, 101(3), mar 2020.

\bibitem{Schutze1998}
Hinrich Sch\"{u}tze.
\newblock Automatic word sense discrimination.
\newblock {\em Computational Linguistics}, 24, 1998.

\bibitem{Selinger2007}
Peter Selinger.
\newblock Dagger compact closed categories and completely positive maps.
\newblock {\em Electronic Notes in Theoretical Computer Science}, 170:139--163,
  03 2007.

\bibitem{serre1955faisceaux}
Jean-Pierre Serre.
\newblock Faisceaux alg{\'e}briques coh{\'e}rents.
\newblock {\em Annals of Mathematics}, pages 197--278, 1955.

\bibitem{shalm2015strong}
Lynden~K Shalm, Evan Meyer-Scott, Bradley~G Christensen, Peter Bierhorst,
  Michael~A Wayne, Martin~J Stevens, Thomas Gerrits, Scott Glancy, Deny~R
  Hamel, Michael~S Allman, et~al.
\newblock Strong loophole-free test of local realism.
\newblock {\em Physical review letters}, 115(25):250402, 2015.

\bibitem{Shannon}
C.~E. Shannon.
\newblock A mathematical theory of communication.
\newblock {\em The Bell System Technical Journal}, 27(3):379--423, 1948.

\bibitem{shor1999polynomial}
Peter~W Shor.
\newblock Polynomial-time algorithms for prime factorization and discrete
  logarithms on a quantum computer.
\newblock {\em SIAM review}, 41(2):303--332, 1999.

\bibitem{Shutova2010}
Ekaterina Shutova.
\newblock Automatic metaphor interpretation as a paraphrasing task.
\newblock 2010.

\bibitem{smith2013effect}
Nathaniel~J Smith and Roger Levy.
\newblock The effect of word predictability on reading time is logarithmic.
\newblock {\em Cognition}, 128(3):302--319, 2013.

\bibitem{Steane2003}
A.~M. Steane.
\newblock A quantum computer only needs one universe.
\newblock {\em Studies in History and Philosophy of Science Part B - Studies in
  History and Philosophy of Modern Physics}, 34, 2003.

\bibitem{SteedmanCCG}
Mark Steedman.
\newblock {\em Combinators and Grammars}, pages 417--442.
\newblock Springer Netherlands, Dordrecht, 1988.

\bibitem{Stinespring1955}
W.~Forrest Stinespring.
\newblock Positive functions on $c^*$ -algebras.
\newblock {\em Proceedings of the American Mathematical Society}, 6, 1955.

\bibitem{Storz2023}
Simon Storz, Josua Sch{\"a}r, Anatoly Kulikov, Paul Magnard, Philipp Kurpiers,
  Janis L{\"u}tolf, Theo Walter, Adrian Copetudo, Kevin Reuer, Abdulkadir Akin,
  Jean-Claude Besse, Mihai Gabureac, Graham~J. Norris, Andr{\'e}s Rosario,
  Ferran Martin, Jos{\'e} Martinez, Waldimar Amaya, Morgan~W. Mitchell, Carlos
  Abellan, Jean-Daniel Bancal, Nicolas Sangouard, Baptiste Royer, Alexandre
  Blais, and Andreas Wallraff.
\newblock Loophole-free bell inequality violation with superconducting
  circuits.
\newblock {\em Nature}, 617(7960):265--270, May 2023.

\bibitem{SturtPick}
Patrick Sturt, Martin~J Pickering, and Matthew~W Crocker.
\newblock Structural change and reanalysis difficulty in language
  comprehension.
\newblock {\em Journal of Memory and Language}, 40(1):136--150, 1999.

\bibitem{Tanenhaus1979}
Michael~K. Tanenhaus, James~M. Leiman, and Mark~S. Seidenberg.
\newblock Evidence for multiple stages in the processing of ambiguous words in
  syntactic contexts.
\newblock {\em Journal of Verbal Learning and Verbal Behavior}, 18, 1979.

\bibitem{taylorCloze}
Wilson~L Taylor.
\newblock ``cloze procedure'': A new tool for measuring readability.
\newblock {\em Journalism quarterly}, 30(4):415--433, 1953.

\bibitem{Tierney2011}
M.~Tierney.
\newblock {\em Axiomatic Sheaf Theory : Some Constructions and Applications},
  pages 249--326.
\newblock Springer Berlin Heidelberg, Berlin, Heidelberg, 2011.

\bibitem{TierneyContinuum}
Myles Tierney.
\newblock Sheaf theory and the continuum hypothesis.
\newblock In F.~W. Lawvere, editor, {\em Toposes, Algebraic Geometry and
  Logic}, pages 13--42, Berlin, Heidelberg, 1972. Springer Berlin Heidelberg.

\bibitem{Tomasello1997}
Michael Tomasello, Nameera Akhtar, , Kelly Dodson, and Laura Rekau.
\newblock Differential productivity in young children's use of nouns and verbs.
\newblock {\em Journal of Child Language}, 24, 1997.

\bibitem{Traxeler1998}
Matthew~J. Traxler, Martin~J. Pickering, and Charles Clifton.
\newblock Adjunct attachment is not a form of lexical ambiguity resolution.
\newblock {\em Journal of Memory and Language}, 39(4):558--592, 1998.

\bibitem{TrueswellTanenhaus2007}
John~C. Trueswell and Michael~K. Tanenhaus.
\newblock Tense, temporal context and syntactic ambiguity resolution.
\newblock {\em Language and Cognitive Processes}, 6(4):303--338, 1991.

\bibitem{trueswell1993}
John~C Trueswell, Michael~K Tanenhaus, and Christopher Kello.
\newblock Verb-specific constraints in sentence processing: separating effects
  of lexical preference from garden-paths.
\newblock {\em Journal of Experimental psychology: Learning, memory, and
  Cognition}, 19(3):528, 1993.

\bibitem{Trugenberger2002}
Carlo~A. Trugenberger.
\newblock Quantum pattern recognition.
\newblock {\em Quantum Information Processing}, 1(6):471--493, Dec 2002.

\bibitem{Emeriau2022}
Kim Vall\'ee, Pierre-Emmanuel Emeriau, Boris Bourdoncle, Adel Sohbi, Shane
  Mansfield, and Damian Markham.
\newblock Corrected bell and non-contextuality inequalities for realistic
  experiments.
\newblock {\em Philosophical Transactions of the Royal Society A: Mathematical,
  Physical and Engineering Sciences}, 382(2268):20230011, 2024.

\bibitem{VanGompel2005}
Roger~P.G. {van Gompel}, Martin~J. Pickering, Jamie Pearson, and Simon~P.
  Liversedge.
\newblock Evidence against competition during syntactic ambiguity resolution.
\newblock {\em Journal of Memory and Language}, 52(2):284--307, 2005.

\bibitem{vanSchijndelA}
Marten Van~Schijndel and Tal Linzen.
\newblock Modeling garden path effects without explicit hierarchical syntax.
\newblock In {\em CogSci}, 2018.

\bibitem{vanSchijndelB}
Marten van Schijndel and Tal Linzen.
\newblock Single-stage prediction models do not explain the magnitude of
  syntactic disambiguation difficulty.
\newblock {\em Cognitive Science}, 45(6):e12988, 2021.

\bibitem{attention}
Ashish Vaswani, Noam Shazeer, Niki Parmar, Jakob Uszkoreit, Llion Jones,
  Aidan~N Gomez, \L~ukasz Kaiser, and Illia Polosukhin.
\newblock Attention is all you need.
\newblock In I.~Guyon, U.~Von Luxburg, S.~Bengio, H.~Wallach, R.~Fergus,
  S.~Vishwanathan, and R.~Garnett, editors, {\em Advances in Neural Information
  Processing Systems}, volume~30. Curran Associates, Inc., 2017.

\bibitem{entanglement1}
V.~Vedral, M.~B. Plenio, K.~Jacobs, and P.~L. Knight.
\newblock {Statistical inference, distinguishability of quantum states, and
  quantum entanglement}.
\newblock {\em Phys. Rev. A}, 56:4452--4455, Dec 1997.

\bibitem{Vorobev}
N.~N. Vorob'ev.
\newblock Consistent families of measures and their extensions.
\newblock {\em Theory of Probability \& Its Applications}, 7(2):147--163, 1962.

\bibitem{Wagner2023}
Rafael Wagner, Roberto~D. Baldijão, Alisson Tezzin, and Bárbara Amaral.
\newblock Using a resource theoretic perspective to witness and engineer
  quantum generalized contextuality for prepare-and-measure scenarios, 2023.

\bibitem{corpusDataset}
Daphne Wang.
\newblock The corpus dataset.
\newblock \url{https://github.com/wangdaphne/Cyclic_models}, 2022.

\bibitem{datasetGardenPath}
Daphne Wang.
\newblock Empirical models and signalling fractions of garden-path sentences.
\newblock \url{https://github.com/wangdaphne/garden-path-SF-dataset}, 2023.

\bibitem{WangGardenPath}
Daphne Wang and Mehrnoosh Sadrzadeh.
\newblock Causality and signalling of garden-path sentences.
\newblock {\em Philosophical Transactions of the Royal Society A: Mathematical,
  Physical and Engineering Sciences}, 382(2268):20230013, 2024.

\bibitem{Wang2021a}
Daphne Wang, Mehrnoosh Sadrzadeh, Samson Abramsky, and Víctor~H. Cervantes.
\newblock On the quantum-like contextuality of ambiguous phrases.
\newblock 2021.

\bibitem{Wang2021b}
Daphne Wang, Mehrnoosh Sadrzadeh, Samson Abramsky, H.~Víctor, and Cervantes.
\newblock Analysing ambiguous nouns and verbs with quantum contextuality tools.
\newblock {\em Journal of Cognitive Science}, 22, 2021.

\bibitem{Wang2013}
Zheng Wang and Jerome~R. Busemeyer.
\newblock A quantum question order model supported by empirical tests of an a
  priori and precise prediction.
\newblock {\em Topics in Cognitive Science}, 5, 2013.

\bibitem{weaver1952translation}
Warren Weaver.
\newblock Translation.
\newblock In {\em Proceedings of the Conference on Mechanical Translation},
  1952.

\bibitem{WilsonGhicaZanasi}
Paul Wilson, Dan Ghica, and Fabio Zanasi.
\newblock String diagrams for non-strict monoidal categories, 2022.

\bibitem{EntanglementConcurrence}
William~K. Wootters.
\newblock Entanglement of formation and concurrence.
\newblock {\em Quantum Info. Comput.}, 1(1):27–44, jan 2001.

\bibitem{Vandenbussche2021}
Pierre yves Vandenbussche, Tony Scerri, and Ron~Daniel Jr.
\newblock Word sense disambiguation with transformer models.
\newblock {\em Proceedings of the 6th Workshop on Semantic Deep Learning
  (SemDeep-6)}, 2021.

\bibitem{Zeng2016}
William Zeng and Bob Coecke.
\newblock Quantum algorithms for compositional natural language processing.
\newblock volume 221, 2016.

\bibitem{zhong2020quantum}
Han-Sen Zhong, Hui Wang, Yu-Hao Deng, Ming-Cheng Chen, Li-Chao Peng, Yi-Han
  Luo, Jian Qin, Dian Wu, Xing Ding, Yi~Hu, et~al.
\newblock Quantum computational advantage using photons.
\newblock {\em Science}, 370(6523):1460--1463, 2020.

\bibitem{Zhong2010}
Zhi Zhong and Hwee~Tou Ng.
\newblock It makes sense: A wide-coverage word sense disambiguation system for
  free text.
\newblock 2010.

\bibitem{VQCgenerativemodelling1}
D.~Zhu, N.~M. Linke, M.~Benedetti, K.~A. Landsman, N.~H. Nguyen, C.~H.
  Alderete, A.~Perdomo-Ortiz, N.~Korda, A.~Garfoot, C.~Brecque, L.~Egan,
  O.~Perdomo, and C.~Monroe.
\newblock Training of quantum circuits on a hybrid quantum computer.
\newblock {\em Science Advances}, 5(10):eaaw9918, 2019.

\end{thebibliography}

\appendix
\part{Appendix}
\chapter{Sections of a presheaf and the sheafication of a presheaf}\label{app:sections}
%Sections of a presheaf and the sheafication of a presheaf.
\paragraph{Bundles} A \emph{bundle} is an alternative presentation of families of sets $\left\{X_i\right\}_{i\in I}$ as a map $p: E\to I$, where $E = \bigsqcup_{i\in I} X_i = \left\{(i, x)|x\in X_i, i\in I\right\}$, and each of the $X_i$ is a set. The map $p$ in this case is simply defined as:
\begin{equation}
    p:: (i, x) \mapsto i
\end{equation}
The space $E$ is refered to as the \emph{espace \'{e}tal\'{e}} or the \emph{total space}, while $I$ is refered to as the \emph{base space} of the bundle. The equivalence of between families of sets and the bundle can be seen as each of the $X_i$ can be retrieved from the map $p$ as:
\begin{equation}
    X_i = p^{-1}(i) 
\end{equation}
It is usually said that the set $X_i$ ``sits on top'' of the point $i$ in the base space. The sets $X_i$ are called the \emph{stalks} or \emph{fibres} of $p$ at $i\in I$ and each of the elements of a given $X_i$ are called the \emph{germs} for the stalk $X_i$. This terminology comes from a vegetal analogy where, stalks of a plant, e.g. say wheat, grows up from the soil (here the base space), each each of the stalk consists germs (see Fig.~\ref{fig:bundle}). We will also define a \emph{section} of the bundle $p$ as a map $s: I\to E$ s.t. $p\circ s = id_{I}$; the idea is that sections will select a single germ in each of the fibres of $p$.

\begin{figure}[htb!]
    \centering
    % \red{add sections to the picture}
    \includegraphics[width=.5\linewidth]{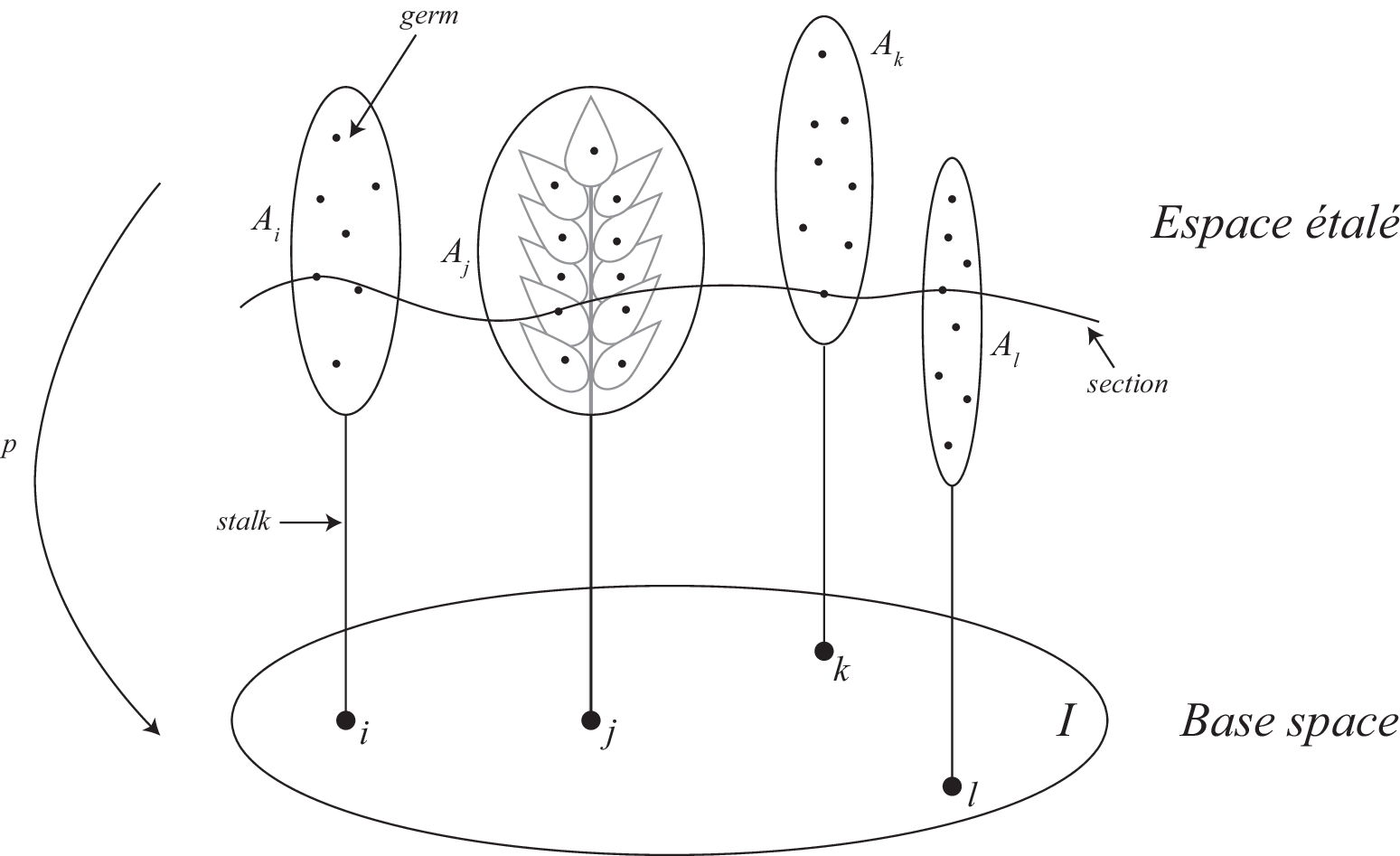}
    \caption{Illustration of a bundle.\label{fig:bundle}}
\end{figure}

Furthermore, for every function $p: E\to I$ (for any sets $E$ and $I$) defines a family of sets $\left\{X_i\right\}_{i\in I}$ where the $X_i$'s are obtained exactly as $X_i = p^{-1}(i)$. 

So far, no structure is assumed on the base and \'etal\'e space, and notably all of the points of $I$ are considered as unrelated. Let us now consider the case when $E$ and $I$ are topological spaces. Then, if $p: E\to I$ is a continuous map, then the associated bundle will have some nice properties as well. In particular, the continuity condition (with respect to the topology of $E$) implies that stalks are ``glued together'' in the sense that any two open neighbourhoods of a point $x\in E$ will be mapped to (open) sets in $I$ which contains the point $p(x)$.%\blue{rephrase}. 
The notion of section is then extend as follows. Given a continuous map $p: E\to I$, we define a section of the open set $U\subseteq I$ as a continuous map $s: U\to E$ s.t. the following pullback square commutes in $\mathbf{Top}$.

\begin{equation}
    \begin{tikzpicture}
        \node (preim) at (-1, 1){$p^{-1}U$};
        \node (E) at (1, 1){$E$};
        \node (I) at (1, -1){$I$};
        \node (U) at (-1, -1){$U$};
        \node (corner) at (preim.south east){$\lrcorner$};
        \draw [right hook ->] (U) to (I);
        \draw [right hook ->] (preim) to (E);
        \draw [->] (E) to node[right]{$p$} (I);
        \draw [->] (preim) to node[left]{$p_U$} (U);
        \draw [->, dashed] (U) to node[above]{$s$} (E); 
    \end{tikzpicture}
\end{equation}

In turn, these sections give rise to the \emph{sheaf of sections} associated each any continuous bundle $p: E\to I$, as the functor:
\begin{equation}
    \begin{matrix}
        \Gamma_p: &\mathcal{T}(I)^{op} & \to & \mathbf{Sets}\\
        & U & \mapsto & \left\{s: U\to E \middle| U\xrightarrow{s} E \xrightarrow{p} I = U \hookrightarrow X\right\} 
    \end{matrix}
\end{equation}
And the morphisms are defined as restriction morhpisms as described in Section~\ref{subsec:sheaves}. By continuity of $p$, this will indeed satisfy the sheaf condition. In addition, given a continuous bundle $p$, the sections over $U$ are indeed the elements of $\Gamma_p$.

% \begin{itemize}
%     \item Equivalence of sheaves and bundles?
% \end{itemize}

\paragraph{Sheafication} We have seen that the sections of a bundle indeed correspond to the elements of the images of a certain (pre)sheaf associated with the bundle. We now try to go in the reverse direction, namely, the elements of $PU$ for an arbitrary $P: \mathcal{T}(X)^{op}\to \mathbf{Sets}$ will corresponds to the sections of a bundle $p: E\to X$. As a bonus, the construction of $p$ also gives us a construction of a sheaf from an arbitrary presheaf, which satisfy a universal property; this construction is therefore known as \emph{sheafication}.

We start by defining the \emph{germs} at a point $x\in U$ of an element $s\in PU$ as the following set:
\begin{align}
    germ_x s = &\left\{t\in PV| V\text{ open neighbourhood of }x \right.\nonumber\\
    &\left.\wedge \exists W \text{ open neighbourhood of } x. \left.s\right|_W = \left.t\right|_{W}\right\}
\end{align} 

We can then define the \emph{stalks} of a presheaf $P$ at $x\in X$ as:
\begin{equation}
    P_x = \left\{germ_x s \middle| \exists U \text{ open neighbourhood of } x. s\in PU\right\}
\end{equation}

Then, by defining the set:
\begin{equation}
    \Lambda_P = \bigsqcup_{x\in X} P_x
\end{equation}
we can define the following bundle:
\begin{equation}
    \begin{matrix}
        p: & \Lambda_P &\to& X\\
        & (x, germ_x s) &\mapsto& x 
    \end{matrix}
\end{equation}

Furthermore, for any $s\in PU$, we can define the following map:
\begin{equation}
    \begin{matrix}
        \tilde{s}: & U & \to & \Lambda_P\\
        & x &\mapsto& (x, germ_x s)
    \end{matrix}
\end{equation}
and it is not hard to verify that this is indeed a section of the bundle $p$. This bundle will, in turn, give rise to a sheaf of sections. This completes the sheafication process.
% \begin{itemize}
%     \item How this extend to a functor from $\mathcal{S}:[\mathcal{T}(\mathcal{X})^{op},\mathbf{Sets}]\to Sh(\mathcal{X})$.
%     \item Universality of the onbtained sheaf in the category of presheaves
%     \item Describe the adjunction between category of presheaves and the category of bundles.
% \end{itemize}
\chapter{Proof of the CHSH inequality}\label{app:CHSH}
In order to prove \eqref{eq:CHSH}, we start by obtaining a bound for the quantity $\left|\left<a'b\right> - \left<a'b'\right>\right|$. Since the hidden variable model should give back the observed probability distributions we have:
\begin{equation}
    \left<a'b\right> = \int_{\Lambda}d\lambda p(\lambda) A(a', \lambda) B(b,\lambda)
\end{equation}
where $A: \{a,a'\}\times \Lambda\to \{\pm1\}$ and $B: \{b,b'\}\times\Lambda\to \{\pm1\}$ are function associating a pair of input and hidden-variable with the deterministic outcome this environment gives out. Similarly, we have:
\begin{equation}
    \left<a'b'\right> = \int_{\Lambda}d\lambda p(\lambda) A(a', \lambda) B(b',\lambda)
\end{equation}
and:
\begin{equation}
    \left|\left<a'b\right> - \left<a'b'\right>\right| = \left|\int_{\Lambda}d\lambda p(\lambda) \left(A(a', \lambda) B(b,\lambda) - A(a', \lambda) B(b',\lambda)\right)\right|
\end{equation}
Now, since the hidden variables determine the values of $a, a', b, b'$ simultaneously, there is nothing stopping us from writing:
\begin{align}
    \left|\left<a'b\right> - \left<a'b'\right>\right| =& \bigg|\int_{\Lambda}d\lambda p(\lambda) \left(A(a', \lambda) B(b,\lambda) - A(a', \lambda) B(b',\lambda) \right.\nonumber\\
    &\left.A(a, \lambda) B(b,\lambda)A(a', \lambda) B(b',\lambda) - A(a, \lambda) B(b,\lambda)A(a', \lambda) B(b',\lambda)\right) \bigg|\\
    =& \bigg|\int_{\Lambda}d\lambda p(\lambda)A(a',\lambda)B(b,\lambda) \left(1+A(a,\lambda)B(b',\lambda)\right) \nonumber\\
    &- \int_\Lambda d\lambda p(\lambda)A(a',\lambda)B(b',\lambda) \left(1+A(a,\lambda)B(b,\lambda)\right)\bigg|\label{eq:CHSHproofA}\\
    =& \bigg|\int_{\Lambda}d\lambda p(\lambda)A(a',\lambda)B(b,\lambda) \left(1-A(a,\lambda)B(b',\lambda)\right) \nonumber\\
    &- \int_\Lambda d\lambda p(\lambda)A(a',\lambda)B(b',\lambda) \left(1-A(a,\lambda)B(b,\lambda)\right)\bigg|\label{eq:CHSHproofB}
\end{align}

Focusing on \eqref{eq:CHSHproofA}, for now, we can apply the triangle inequality (twice) to obtain:
\begin{align}
    \left|\left<a'b\right> - \left<a'b'\right>\right| \leq& \bigg|\int_{\Lambda}d\lambda p(\lambda)A(a',\lambda)B(b,\lambda) \left(1+A(a,\lambda)B(b',\lambda)\right)\bigg| \nonumber\\
    &+ \bigg|\int_\Lambda d\lambda p(\lambda)A(a',\lambda)B(b',\lambda) \left(1+A(a,\lambda)B(b,\lambda)\right)\bigg|\\
    \leq & \int_{\Lambda}d\lambda \bigg|p(\lambda)A(a',\lambda)B(b,\lambda) \left(1+A(a,\lambda)B(b',\lambda)\right)\bigg| \nonumber\\
    &+ \int_\Lambda d\lambda \bigg|p(\lambda)A(a',\lambda)B(b',\lambda) \left(1+A(a,\lambda)B(b,\lambda)\right)\bigg|\label{eq:CHSHproofAcont}
\end{align}
Now, since $A(a, \lambda),A(a',\lambda),B(b,\lambda),B(b',\lambda)\in \{\pm1\}$ for all $\lambda\in\Lambda$, then:
\begin{align}
    \left|A(a',\lambda)B(b,\lambda)\right| = \left|A(a',\lambda)B(b',\lambda)\right| = 1
\end{align}
And:
\begin{align}
    p(\lambda) \geq& 0\\
    1 \pm A(a,\lambda)B(b,\Lambda), 1 \pm A(a,\lambda)B(b',\Lambda) \geq& 0
\end{align}
So, \eqref{eq:CHSHproofAcont} becomes:
\begin{equation}
    \left|\left<a'b\right> - \left<a'b'\right>\right|\leq \int_{\Lambda}d\lambda p(\lambda)\left(1+A(a,\lambda)B(b',\lambda)\right) + \int_\Lambda d\lambda p(\lambda)\left(1+A(a,\lambda)B(b,\lambda)\right)
\end{equation}
Now, using:
\begin{equation}
    \int_{\Lambda}d\lambda p(\lambda) = 1
\end{equation}
we get:
\begin{equation}\label{eq:CHSHproofAend}
    \left|\left<a'b\right> - \left<a'b'\right>\right|\leq 2 + \int_{\Lambda}d\lambda p(\lambda)A(a,\lambda)B(b',\lambda) + \int_\Lambda d\lambda p(\lambda)A(a,\lambda)B(b,\lambda) = 2 + \left<ab'\right> + \left<ab\right>
\end{equation}
Similarly, starting from \eqref{eq:CHSHproofB}, we can adopt a similar reasoning to get:
\begin{equation}\label{eq:CHSHproofBend}
    \left|\left<a'b\right> - \left<a'b'\right>\right|\leq 2 - \left( \left<ab'\right> + \left<ab\right>\right)
\end{equation}
Now, using both \eqref{eq:CHSHproofAend} and \eqref{eq:CHSHproofBend}, this gives:
\begin{equation}
    \left|\left<a'b\right> - \left<a'b'\right>\right| \leq 2 - \left|\left<ab'\right> + \left<ab\right>\right|
\end{equation}
which by rearraging gives the CHSH equation:
\begin{equation}
    \left|\left<ab'\right> + \left<ab\right> + \left<a'b\right> - \left<a'b'\right>\right| \leq 2
\end{equation}
\chapter{Original proofs}
\section{Proof of proposition~\ref{prop:deltas}}\label{app:deltas}
\paragraph{}In order to prove Proposition~\ref{prop:deltas} we need some results from CbD and M-contextuality. In the CbD framework, given a cyclic system, or more generally a system for which every content is part of exactly 2 contexts, we want to minimise the probability $P\left[S^c_q=S^{c'}_q\right]=\sum_{o\in O}P\left[S^c_q=S^{c'}_q=o\right]$ (where $O$ is the set of possible outcomes) for a globally imposed joint distribution $S$ across all contexts (coupling), which agrees with the observed distributions.

\begin{lemma}
    Given a content $q$ and contexts $c,c'$ containing $q$ and outcome $o$, the maximum of $P\left[S^c_q=S^{c'}_q=o\right]$ for any coupling of the system is given by :
    \begin{equation}
        \min \left(P\left[R^c_q=o\right], P\left[R^{c'}_q=o\right]\right)
    \end{equation}
\end{lemma}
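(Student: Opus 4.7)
\textbf{Upper bound.} The first step is straightforward: since any coupling $\{S^{c''}_{q''}\}$ preserves the marginal distribution of each variable, we have $P[S^c_q = o] = P[R^c_q = o]$ and $P[S^{c'}_q = o] = P[R^{c'}_q = o]$. By the elementary inclusion $\{S^c_q = o\} \cap \{S^{c'}_q = o\} \subseteq \{S^c_q = o\}$, and similarly $\subseteq \{S^{c'}_q = o\}$, monotonicity of probability gives
\begin{equation*}
    P[S^c_q = S^{c'}_q = o] \leq \min\left(P[R^c_q = o], P[R^{c'}_q = o]\right).
\end{equation*}
This bound holds for \emph{any} coupling of the system, so it in particular bounds the maximum.

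\textbf{Achievability.} The second step is to exhibit a coupling that saturates this bound. I would proceed by first constructing a bivariate distribution for the pair $(S^c_q, S^{c'}_q)$ on the outcome set $V\times V$ whose marginals equal the laws of $R^c_q$ and $R^{c'}_q$ and which places mass $\min(P[R^c_q = v], P[R^{c'}_q = v])$ on the diagonal entry $(v,v)$ for every $v\in V$ (this is the classical maximum coupling between two distributions on the same space, routinely obtainable by a greedy construction, distributing the residual mass on the off-diagonal arbitrarily to fix the marginals).

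\textbf{Extension to a full coupling.} The genuine obstacle is lifting this bivariate construction to a coupling of the entire system, i.e. a joint law on $\prod_{(c'',q''): q''\in c''} S^{c''}_{q''}$ that still reproduces, within each context $c''$, the jointly observed distribution of $(R^{c''}_{q''})_{q''\in c''}$. My plan is to build this conditionally. Start from the observed joint distributions on each context, viewed as regular conditional kernels for the variables $(S^{c''}_{q''})_{q''\in c''}$ given $S^{c''}_q$ (for each context $c''$ containing $q$). Then glue: first sample the pair $(S^c_q, S^{c'}_q)$ from the maximum coupling above; next, sample the remaining variables in contexts $c$ and $c'$ from their respective conditional distributions given the realized values of $S^c_q$ and $S^{c'}_q$; finally, sample the variables of all other contexts (which by hypothesis do not share the content $q$ in the way that constrains $c,c'$) independently from their observed joint distributions. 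The resulting joint law is a coupling of the system (each within-context marginal is the observed one by construction), and it achieves $P[S^c_q = S^{c'}_q = o] = \min(P[R^c_q = o], P[R^{c'}_q = o])$ by the choice of the bivariate block. Combined with the upper bound, this establishes the lemma. The delicate point, which I would check carefully, is that this extension is consistent when $q$ may appear in further contexts beyond $c$ and $c'$; here the hypothesis of the lemma (that $q$ is in exactly two contexts, as for cyclic systems) ensures that no conflict arises.
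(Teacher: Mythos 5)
Your proposal is correct and follows essentially the same route as the paper: the upper bound is obtained identically from marginal compatibility of the coupling plus monotonicity of probability. For achievability the paper simply cites Theorem 3.3 of the Contextuality-by-Default reference, whereas you construct the maximal bivariate coupling explicitly and glue it to the within-context distributions via conditional kernels; this is a valid, more self-contained version of the same step (and your closing caveat is handled exactly by the cyclic-system hypothesis that each content lies in precisely two contexts).
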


\begin{proof}
    We need a coupling to be compatible with the observed probability distributions, i.e. that the marginals of $S$ coincide with the original distributions. This condition means that:
    \begin{equation}
        \sum_{o'\in O} P\left[S^c_q = o, S^{c'}_q = o'\right] = P\left[R^c_q = o\right]
    \end{equation}
    for each context $c, c'$ sharing the content $q$, and for every value $o\in O$. In particular, this implies both of the following inequalities:
    \begin{align}
        P\left[S^c_q = o, S^{c'}_q = o\right] \leq& P\left[R^c_q = o\right]\\
        P\left[S^c_q = o, S^{c'}_q = o\right] \leq& P\left[R^{c'}_q = o\right]
    \end{align}
    and so:
    \begin{equation}
        P\left[S^c_q = o, S^{c'}_q = o\right] \leq \min\left(P\left[R^c_q = o\right], P\left[R^{c'}_q = o\right]\right)
    \end{equation}
    In addition, given any system with content $q$, it is always possible to construct a coupling for which $P\left[S^c_q=S^{c'}_q\right]$ does attain its maximum (Theorem 3.3 of~\cite{Dzhafarov2016contextcontent}). The above bound is therefore saturated.
\end{proof}

One consequence of this is that:
    \begin{equation}
        \min P\left[S^c_q\neq S^{c'}_q\right] = 1 - \max P\left[S^c_q=S^{c'}_q\right]
    \end{equation}
    
    We now use one of the main results about the correspondence between CbD and M-contextuality.

    \begin{prop}[Proposition 8.4 of~\cite{Jones2019}]
        Given a measurement system (i.e. context-content system with associated probability distributions), for each compatible canonical model $\mathcal{M}$, there exists a coupling $S$ such that:
    \begin{equation}\label{eq:Mcont-CBD}
        \Delta_{c,c'}\left(F_q\right)=P\left[S^c_q \neq S^{c'}_q\right]
    \end{equation}
    for every content $q$. Conversely, for every coupling $S$, there exists a canonical model $\mathcal{M}$ such that \eqref{eq:Mcont-CBD} is satisfied.
    \end{prop}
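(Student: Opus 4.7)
The claim is a two-way translation between canonical causal models (à la Cavalcanti/Jones, as in Fig.~\ref{fig:baysianNet}) and probabilistic couplings of the observed random variables $\{R^c_q\}$. My plan is to construct each translation explicitly and then verify that (i) marginal compatibility with the observed distributions is preserved, and (ii) the quantitative equality $\Delta_{c,c'}(F_q)=P\left[S^c_q\neq S^{c'}_q\right]$ holds on the nose.

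\textbf{From canonical model to coupling.} Suppose a compatible canonical model $\mathcal{M}$ is given by a context variable $C$, a latent variable $\Lambda$ independent of $C$, and deterministic functions $F_q$ such that $R^c_q \stackrel{d}{=} F_q(\Lambda,c)$ for every content $q$ in context $c$. I would define the candidate coupling by
\begin{equation}
S^c_q \;:=\; F_q(\Lambda, c),
\end{equation}
where the $S^c_q$ are all defined on the same probability space, namely the one carrying $\Lambda$. The family $\{S^c_q\}$ is automatically jointly distributed because every member is a measurable function of the single random variable $\Lambda$. Compatibility $S^c_q \stackrel{d}{=} R^c_q$ holds by the assumption that $\mathcal{M}$ reproduces the observed distributions, together with the independence of $\Lambda$ and $C$ (which allows us to fix the conditioning $C=c$ without altering the law of $\Lambda$). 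Finally, unpacking the definition of direct influence~\eqref{eq:M-directInfluence},
\begin{equation}
P\left[S^c_q \neq S^{c'}_q\right] \;=\; P\left[F_q(\Lambda,c)\neq F_q(\Lambda,c')\right] \;=\; P\left[\Lambda\in\{\lambda : F_q(\lambda,c)\neq F_q(\lambda,c')\}\right] \;=\; \Delta_{c,c'}(F_q).
\end{equation}

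\textbf{From coupling to canonical model.} Conversely, given a coupling $S = \{S^c_q\}$ defined on some probability space $(\Omega,\Sigma,\mu)$, I would take $\Lambda$ to be the identity random variable on $(\Omega,\Sigma,\mu)$ (i.e.\ $\Lambda$ records the entire joint outcome of the coupling), and declare $\Lambda$ independent of $C$ by building the canonical model on the product space $(\Omega\times \mathcal{C}, \mu\otimes P_C)$, where $P_C$ is any chosen distribution over contexts. The deterministic functions are then defined pointwise by
\begin{equation}
F_q(\lambda, c) \;:=\; S^c_q(\lambda).
\end{equation}
These are deterministic in $(\lambda,c)$ by construction, and the Bayesian network of Fig.~\ref{fig:baysianNet} is respected. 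Compatibility with the observed distributions follows from the assumption that the marginal law of $S^c_q$ equals the law of $R^c_q$. Evaluating the direct influence gives
\begin{equation}
\Delta_{c,c'}(F_q) \;=\; P\left[\Lambda : S^c_q(\Lambda)\neq S^{c'}_q(\Lambda)\right] \;=\; P\left[S^c_q \neq S^{c'}_q\right],
\end{equation}
which is exactly the required identity for every content $q$.

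\textbf{Main obstacle.} The essentially formal content of the proof is quite light --- both directions are constructions followed by unpacking definitions. The only genuine subtlety I anticipate is ensuring the independence requirement between $\Lambda$ and $C$ is handled cleanly in the coupling-to-model direction: one must enlarge the probability space to place $C$ alongside $\Lambda$ without introducing spurious correlations, and one must check that the canonical model so obtained still ``reproduces'' the observed $R^c_q$ in the sense that $F_q(\Lambda,c) \stackrel{d}{=} R^c_q$ for each fixed $c$. Both are straightforward once the product construction is made explicit, but they are exactly the kind of details one has to be careful with when moving between stochastically unrelated random variables (as the $R^c_q$ initially are) and their joint realisations (as in the coupling $S$ and the latent-variable $\Lambda$).
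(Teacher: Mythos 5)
The paper does not prove this statement itself; it is imported verbatim as Proposition~8.4 of~\cite{Jones2019} and used as a black box in the proof of Proposition~\ref{prop:deltas}, so there is no in-paper argument to compare against. Your proof is correct and is the natural one: both directions are reparametrisations (a deterministic canonical model with latent $\Lambda$ \emph{is} a coupling read off along $\Lambda$, and conversely a coupling's underlying sample space serves as $\Lambda$), after which the identity $\Delta_{c,c'}(F_q)=P\left[S^c_q\neq S^{c'}_q\right]$ is a literal unwinding of the definition in~\eqref{eq:M-directInfluence}. Two small points worth making explicit if you write this up: compatibility in CbD requires preserving the \emph{joint} distribution of the bunch $\left(S^c_q\right)_{q\in c}$ within each context, not merely the individual marginals (your constructions do achieve this, since in both directions the whole bunch is a function of the single variable $\Lambda$ at fixed $c$, but the text only checks marginals); and in the coupling-to-model direction $F_q(\lambda,c)$ must be assigned some value for contexts $c$ not containing $q$ to fit the Bayesian-network form of Fig.~\ref{fig:baysianNet} --- any arbitrary choice works, as those values never enter $\Delta_{c,c'}(F_q)$ or the observed distributions.
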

    
    \begin{cor}
        The minimum of direct influence given a content $q$ and pair of contexts $c,c'$, coincides with the minimum for $P\left[S^c_q\neq S^{c'}_q\right]$.
    \end{cor}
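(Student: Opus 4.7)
The plan is to deduce the corollary as a direct consequence of the two-way correspondence asserted by the preceding proposition (Proposition 8.4 of~\cite{Jones2019}). The key observation is that the proposition does not merely relate a single canonical model to a single coupling for a fixed content $q$: it exhibits a map in each direction between the class of compatible canonical models $\mathcal{M}$ and the class of couplings $S$ of the system, and along this correspondence the equality $\Delta_{c,c'}(F_q)=P\left[S^c_q\neq S^{c'}_q\right]$ holds for \emph{every} content $q$ simultaneously. This uniformity over $q$ is exactly what is needed to translate an optimisation over canonical models into an optimisation over couplings without changing the value of the quantity being minimised.

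First, I would fix a content $q$ and a pair of contexts $c,c'$ that both contain $q$. I would then define the two sets of achievable values
\[
    \mathcal{A}_{\mathrm{cm}} = \bigl\{\Delta_{c,c'}(F_q)\ \big|\ \mathcal{M}\text{ canonical model compatible with the system}\bigr\},
\]
\[
    \mathcal{A}_{\mathrm{cp}} = \bigl\{P[S^c_q\neq S^{c'}_q]\ \big|\ S\text{ coupling of the system}\bigr\}.
\]
From the forward direction of the proposition, every element of $\mathcal{A}_{\mathrm{cm}}$ is realised as an element of $\mathcal{A}_{\mathrm{cp}}$ via the associated coupling, giving $\mathcal{A}_{\mathrm{cm}}\subseteq \mathcal{A}_{\mathrm{cp}}$. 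The converse direction of the proposition provides, from any coupling $S$, a canonical model $\mathcal{M}$ such that the same equality holds, yielding $\mathcal{A}_{\mathrm{cp}}\subseteq \mathcal{A}_{\mathrm{cm}}$. Hence $\mathcal{A}_{\mathrm{cm}}=\mathcal{A}_{\mathrm{cp}}$, and therefore
\[
    \Delta^*_{c,c'}(F_q) = \inf \mathcal{A}_{\mathrm{cm}} = \inf \mathcal{A}_{\mathrm{cp}} = \min_{S} P\left[S^c_q\neq S^{c'}_q\right],
\]
which is the content of the corollary.

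The main obstacle, as I see it, is not the equality of the two sets of values but rather checking that the correspondence is sufficiently well-behaved for the infima to be attained (so one can legitimately write ``minimum'' rather than ``infimum''). Since the set of couplings of a finite system is compact and the map $S\mapsto P[S^c_q\neq S^{c'}_q]$ is continuous, the infimum on the coupling side is attained, and by Proposition 8.4 this attaining coupling yields an attaining canonical model on the other side. This step is routine but worth recording explicitly, and it is the only substantive verification beyond invoking Proposition 8.4.
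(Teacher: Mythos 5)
Your proposal is correct and follows essentially the same route as the paper, which states the corollary immediately after Proposition 8.4 of Jones (2019) and treats it as a direct consequence of the two-way correspondence between canonical models and couplings. Your explicit argument that the two sets of achievable values coincide (and your remark on attainment of the infimum by compactness) simply spells out what the paper leaves implicit.
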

    
    We can now prove Proposition 1.
    \begin{proof}[Proof of Proposition~\ref{prop:deltas}]
        By definition, we have:
        \begin{equation}
            \Delta = \sum_q \left|\left<R^{c_q}_q\right> - \left<R^{c'_q}_q\right>\right|
        \end{equation}
        Since only binary variables are considered for this definition to make sense, each individual term of the sum is given by:
        \begin{align}
            \left|\left<R^{c_q}_q\right> - \left<R^{c'_q}_q\right>\right| =& \left|P\left[R^{c_q}_q=+1\right] - P\left[R^{c_q}_q=-1\right] - P\left[R^{c'_q}_q=+1\right] + P\left[R^{c'_q}_q=-1\right]\right|\nonumber\\
            %=& \left|P\left[R^{c_q}_q=+1\right] - 1+P\left[R^{c_q}_q=+1\right] -P\left[R^{c'_q}_q=+1\right] + 1 - P\left[R^{c'_q}_q=+1\right]\right|\nonumber\\
            =& 2 \left|P\left[R^{c_q}_q=+1\right] - P\left[R^{c'_q}_q=+1\right]\right|
        \end{align}
        Now, let 
        \[
        m_{q-} = \min \left(P\left[R^{c_q}_q=-1\right],P\left[R^{c'_q}_q=-1\right]\right)\]
         and respectively 
         \[m_{q+} = \min \left(P\left[R^{c_q}_q=+1\right],P\left[R^{c'_q}_q=+1\right]\right)\]
          Then, each of the above terms reduces to:
        \begin{equation}
            \left|\left<R^{c_q}_q\right> - \left<R^{c'_q}_q\right>\right| = 2 \left(1 - \left(m_{q+} + m_{q-}\right)\right)
        \end{equation}
        Hence, following our previous corollary, the result follows.
    \end{proof}
\section{Proof of proposition~\ref{prop:SFDelta}}\label{app:SFDelta}
What we want to prove is that for most empirical models we have:

\begin{equation}\label{eq:thm}
    \max_X \Delta^*_{C,C'}(X) = \sigma
\end{equation}

\section*{Definitions}
First, we need to define all the terms in \eqref{eq:thm}, and unify the notation used.

We start from a list of contexts $\mathcal{C}$ (we want a rather definition of ``context'', so a \emph{context} will include the list of measurements + potential dependence to variables that depends on each individual context; that means that two identical lists of measurements can refer to two different contexts) from which we define the empirical model $e=\left(e^C\right)_{C\in \mathcal{C}}$
\begin{itemize}
    \item Each of the $e^C$ are probability distribution over possible outcomes in context $C$
    \item We will denote $\mathcal{O}_C$ the set of possible outcomes in context $C$
    \item Given an observable $X$ in the measurement context of $C$ (write $X \in C$), we write $e^C_{X} = \left.e^{C}\right|_{X}$ the marginal distribution corresponding to the observable $X$ in the context $C$. Similarly, we define the set $\mathcal{O}_X$ as the set of possible outcomes of the observable $X$.
\end{itemize}

A hidden variable model (HVM) of an empirical model $e$ is here defined as $\Omega = \left(h = \left(h^{\lambda}\right)_{\lambda \in \Lambda}, p_\Lambda\right)$ where:
\begin{itemize}
    \item $\Lambda$ is the set of hidden/latent variables in the HVM
    \item For all hidden variable $\lambda$ and context $C$, $h^{\lambda,C}$ is a probability distribution over $\mathcal{O}_C$; therefore $\left(h^{\lambda,C}\right)_{C\in \mathcal{C}}$ forms an empirical model.
    \item $p_\Lambda$ is a probability distribution over $\Lambda$
    \item For every context $C$ we have:
    \begin{equation}\label{eq:eFromLambda}
        e^C = \sum_{\lambda\in \Lambda} p_\Lambda(\lambda)h^{\lambda,C}
    \end{equation}
\end{itemize}

For all hidden variable $\lambda$ in a HVM, we can decompose $h^\lambda$ as:
\begin{equation}
    h^\lambda = c_{NS}^\lambda h^{\lambda}_{NS} + \left(1-c_{NS}^\lambda\right) {h}'^{\lambda}
\end{equation}
where $h^\lambda_{NS}$ is no-signalling, and ${h'}^\lambda$ can be any empirical model.

We then define the \emph{signalling fraction} $\sigma$ as:
\begin{equation}
    \sigma = \min_{HVM} \max_{\lambda\in \Lambda} 1-c^\lambda_{NS}
\end{equation}

In the M-contextuality framework (framework fundamentally related to the CbD framework), we are interested in \emph{canonical causal models} $\mathcal{M}$ in which each of the individual observable $X$ is associated with a random variable; the (different choices of) contexts are also modelled as a single random variable which can influence (all of the different) observable variables. In addition, we also define a latent variable $\Lambda$ which is independent of the context variable but can also influence all of the observable variables. These models can themselves be viewed as hidden variable models in the sense described above by setting:
\begin{equation}
    h^{\lambda,C}(o) = Pr_\mathcal{M}\left[o~\middle|~C,\lambda\right]
\end{equation}
and from where we can also recover \eqref{eq:eFromLambda}. When obvious we will drop the $\mathcal{M}$ subscript.

Without loss of generality, it is also enough to restrict ourselves to canonical models $\mathcal{M}$ such that for all $\lambda\in\Lambda, C\in \mathcal{C}, X\in C$ and $x\in \mathcal{O}_X$, we have $h^{\lambda, C}_X(x) \in \{0,1\}$. As each of the $h^{\lambda, C}_X$ are probability distributions over $\mathcal{O}_X$, we can therefore define for each pair $(\lambda, C)$ and observable $X\in C$ a function $F_X: \Lambda\times \mathcal{C}\to \mathcal{O}_X$ such that:
\begin{equation}
    F_X(\lambda, C) = x \quad \iff \quad h^{\lambda, C}_X(x) = 1
\end{equation}

Given a canonical model $\mathcal{M}$, we can define the degree of direct influence from the (change of) context $C\leftrightarrow C'$ on the observable variable $X \in C\cup C'$ as:
\begin{equation}
    \Delta_{C,C'}(X) = Pr\left[\left\{\lambda~\middle|~F_X(\lambda,C)\neq F_X(\lambda, C'\right\}\right]
\end{equation}

We now introduce a couple of results from CbD and M-contextuality:
\begin{itemize}
    \item For all observables $X$, we define:
    \begin{equation}
        Pr\left[e^C_{X} = e^{C'}_{X}\right] = \sum_{o\in \mathcal{O}_X} \min_{\tilde{C}\in \{C, C'\}} e^{\tilde{C}}_X(o)
    \end{equation}
    \item The above equation can also be extended to the situation where more than two contexts intersect at the observable $X$ as follows:
    \begin{equation}
        Pr\left[e^{C_1}_{X} = e^{C_2}_{X} = \ldots = e^{C_n}_{X}\right] = \min_{(i,j) \in \{1, 2, \ldots n\}^2, i\neq j} Pr\left[e^{C_i}_{X} = e^{C_j}_{X}\right]
    \end{equation}
    \item For all canonical models $\mathcal{M}$, we always have:
    \begin{equation}\label{eq:maxDI}
        \Delta_{C,C'}(X) \leq Pr\left[e^C_X\neq e^{C'}_X\right] = 1 - Pr\left[e^C_X= e^{C'}_X\right]
    \end{equation}
    \item For any empirical model, for any observable $X$, there exist a canonical model such that:
    \begin{equation}
        \Delta_{C,C'}(X) = Pr\left[e^C_X\neq e^{C'}_X\right] 
    \end{equation}
    i.e. the maximum in \eqref{eq:maxDI} can always be attained in a canonical model. We will also write:
    \begin{equation}
        \Delta^*_{C,C'}(X) = Pr\left[e^C_X\neq e^{C'}_X\right] = \max_{\mathcal{M}}\Delta_{C,C'}(X)
    \end{equation}
\end{itemize}

\section*{Inequality 1 (the general case)}
We first prove that for an empirical model:
\begin{equation}\label{eq:ineq1}
    \max_X \Delta^*_{C,C'}(X) \leq  \sigma
\end{equation}

\begin{proof}
Suppose that there exists an observable $X$ in an empirical model for which:
\begin{equation}
    \Delta^*_{C,C'}(X) > \sigma = \max_\lambda 1 - c^\lambda_{NS}
\end{equation}
From M-contextuality, there exists a canonical model $\mathcal{M}$ in which:
\begin{equation}
    \Delta_{C,C'}(X) = \Delta^*_{C,C'}(X) = 1 - \sum_{x\in \mathcal{O}_X}\min_{\tilde{C}\in \{C, C'\}} e^{\tilde{C}}_X (x)
\end{equation}
Now, using the HVM corresponding to this canonical model, we have:
\begin{equation}
    e^{\tilde{C}}_X (x) = \sum_{\lambda\in \Lambda} p_\lambda(\lambda)h^{\lambda, \tilde{C}}_X (x)
\end{equation}
so:
\begin{equation}
    \Delta_{C,C'}(X) =1 - \sum_{x\in \mathcal{O}_X}\min_{\tilde{C}\in \{C, C'\}} \sum_{\lambda\in \Lambda} p_\lambda(\lambda)h^{\lambda, \tilde{C}}_X (x) > \max_\lambda 1 - c^\lambda_{NS} = 1 - \min_\lambda c^\lambda_{NS}
\end{equation}
In turns that implies that:
\begin{equation}
    \sum_{x\in \mathcal{O}_X}\min_{\tilde{C}\in \{C, C'\}} \sum_{\lambda\in \Lambda} p_\lambda(\lambda)h^{\lambda, \tilde{C}}_X (x) < \min_\lambda c^\lambda_{NS}
\end{equation}
We then decompose $h^\lambda$ as the convex sum of a no-signalling and a signalling part:
\begin{equation}
    \sum_{x\in \mathcal{O}_X}\min_{\tilde{C}\in \{C, C'\}} \sum_{\lambda\in \Lambda} p_\lambda(\lambda)\left[c^\lambda_{NS}h^{\lambda, \tilde{C}}_{NS,X} (x) + \left(1-c^\lambda_{NS}\right) h^{'\lambda, C}_X (x)\right] < \min_\lambda c^\lambda_{NS}
\end{equation}
Now, since $\left(1-c^\lambda_{NS}\right) h^{'\lambda, C}_X (x)>0$, this implies that:
\begin{equation}\label{eq:ineq1eq7}
     \sum_{x\in \mathcal{O}_X}\min_{\tilde{C}\in \{C, C'\}} \sum_{\lambda\in \Lambda} p_\lambda(\lambda) c^\lambda_{NS}h^{\lambda, \tilde{C}}_{NS,X} (x) < \min_\lambda c^\lambda_{NS}
\end{equation}
We then use the fact that $h^\lambda_{NS}$ is no-signalling, and therefore:
\begin{equation}
    h^{\lambda, C}_{NS,X} (x) = h^{\lambda, C'}_{NS,X} (x)
\end{equation}
So \eqref{eq:ineq1eq7} simplifies to:
\begin{equation}
    \sum_{x\in \mathcal{O}_X}\sum_{\lambda\in \Lambda} p_\lambda(\lambda) c^\lambda_{NS}h^{\lambda, C}_{NS,X} (x) = \sum_{\lambda\in \Lambda} p_\lambda(\lambda) c^\lambda_{NS}\sum_{x\in \mathcal{O}_X} h^{\lambda, C}_{NS,X} (x) < \min_\lambda c^\lambda_{NS}
\end{equation}
Now, $h^{\lambda, C}_{NS, X}$ is a probability distribution over $\mathcal{O}_X$, so $\sum_x h^{\lambda, C}_{NS, X}(x) = 1$, and:
\begin{equation}
    \sum_{\lambda\in \Lambda} p_\lambda(\lambda) c^\lambda_{NS} < \min_\lambda c^\lambda_{NS}
\end{equation}
In addition, for all $\lambda$, we have $\min_\lambda c^\lambda_{NS}\leq c^\lambda_{NS}$, so:
\begin{equation}
    \sum_{\lambda\in \Lambda} p_\lambda(\lambda) \min_{\lambda'}c^{\lambda'}_{NS} \leq \sum_{\lambda\in \Lambda} p_\lambda(\lambda) c^\lambda_{NS}< \min_\lambda c^\lambda_{NS}
\end{equation}
and:
\begin{equation}
    \sum_{\lambda\in \Lambda} p_\lambda(\lambda) \min_{\lambda'}c^{\lambda'}_{NS} = \min_{\lambda'}c^{\lambda'}_{NS} \sum_{\lambda\in \Lambda} p_\lambda(\lambda) = \min_{\lambda'}c^{\lambda'}_{NS}
\end{equation}
So we then obtain the contradiction:
\begin{equation}
    \min_{\lambda'}c^{\lambda'}_{NS} < \min_{\lambda}c^{\lambda}_{NS}
\end{equation}
\end{proof}

\section*{Inequality 2}
The second inequality, i.e.:
\begin{equation}\label{eq:ineq2}
    \max_X \Delta^*_{C,C'}(X) \geq \sigma
\end{equation}
only holds in cases when the notion of consistent connectedness and no-signalling coincides.
\begin{proof}
We want to construct an HVM such that:
\begin{equation}
    \max_{\lambda} c^\lambda_{NS}= \max_X \Delta^*_{C, C'} (X)
\end{equation}

We isolate the observable $X$ such that $\max_{\tilde{X}} \Delta^*_{C, C'}(\tilde{X}) = \Delta^*_{C, C'}(X)$, and will denote for simplicity $\Delta = \Delta^*_{C, C'}(X)$. Then, from M-contextuality, we know that there exist a canonical model $\mathcal{M}$ such that $\Delta_{C, C'}(X) = \Delta$. We will use this canonical model to create a HVM with a single hidden variable $\lambda$ where:
\begin{equation}
    h^{\lambda, D}_{NS, Y} (y) = h^{\lambda, D}_{NS, Y} (y) = \frac{1}{1-\Delta_{D, D'} (Y)}Pr_{\mathcal{M}}\left[\left\{\lambda~\middle|~F_Y(\lambda, D) = F_Y(\lambda, D') = y\right\}\right]
\end{equation}
\textbf{Note:} if the observable $Y$ is part of more than 2 contexts $D_1, D_2, D_3\ldots$, we will replace $\Delta_{D, D'}(Y)$ by $\min_{i, j} \Delta_{D_i, D_j}(Y)$, and $Pr_{\mathcal{M}}\left[\left\{\lambda~\middle|~F_Y(\lambda, D) = F_Y(\lambda, D') = y\right\}\right]$ by: $$Pr_{\mathcal{M}}\left[\left\{\lambda~\middle|~F_Y(\lambda, D_1) = F_Y(\lambda, D_2) = F_Y(\lambda, D_3) = \ldots = y \right\}\right]$$.

Since we are only working with a single hidden variable, we will also drop the $\lambda$ superscript.

We then show that this actually leads to a valid HVM.

\begin{claim}
\begin{enumerate}
    \item $h_{NS, Y}$ is a probability distribution over $\mathcal{O}_Y$
    \item For all $y\in \mathcal{O}_Y$:
    \begin{equation}
        \left(1 - \Delta\right) h_{NS, Y}(y) \leq e^D_Y(y), e^{D'}_Y(y)
    \end{equation}
\end{enumerate}
\end{claim}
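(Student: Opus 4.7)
Both statements essentially follow from unpacking the definition of $h_{NS,Y}$ and exploiting the fact that $\Delta = \max_{\tilde X} \Delta^*_{C,C'}(\tilde X)$ is, by construction, an upper bound for $\Delta_{D,D'}(Y)$ on every other observable $Y$. The work is largely bookkeeping: I will first treat the two-context case, then indicate how to adapt the argument to the case where $Y$ belongs to more than two contexts.

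For item (1), non-negativity of $h_{NS,Y}(y)$ is immediate from the fact that both the numerator and the denominator $1-\Delta_{D,D'}(Y)$ are probabilities in $[0,1]$ (the latter is strictly positive whenever the definition makes sense; the degenerate case $\Delta_{D,D'}(Y)=1$ is vacuous). For normalisation, I would observe that the events $E_y = \{\lambda \mid F_Y(\lambda,D)=F_Y(\lambda,D')=y\}$ are pairwise disjoint as $y$ varies over $\mathcal{O}_Y$, and that
\begin{equation*}
\bigcup_{y\in\mathcal{O}_Y} E_y = \{\lambda \mid F_Y(\lambda,D)=F_Y(\lambda,D')\}.
\end{equation*}
By the very definition of the direct influence $\Delta_{D,D'}(Y)$, the complement of this union has probability $\Delta_{D,D'}(Y)$, so $\sum_y Pr_{\mathcal{M}}[E_y] = 1-\Delta_{D,D'}(Y)$ and the normalisation constant $1/(1-\Delta_{D,D'}(Y))$ is exactly what is needed.

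For item (2), the key inequality is $\Delta = \max_{\tilde X}\Delta^*_{C,C'}(\tilde X) \geq \Delta^*_{D,D'}(Y) \geq \Delta_{D,D'}(Y)$, so $1-\Delta \leq 1-\Delta_{D,D'}(Y)$. Combining this with the set inclusion $E_y \subseteq \{\lambda \mid F_Y(\lambda,D)=y\}$ (and similarly for $D'$), monotonicity of probability gives
\begin{equation*}
(1-\Delta)\,h_{NS,Y}(y) \;\leq\; (1-\Delta_{D,D'}(Y))\,h_{NS,Y}(y) \;=\; Pr_{\mathcal{M}}[E_y] \;\leq\; Pr_{\mathcal{M}}[F_Y(\cdot,D)=y] \;=\; e^D_Y(y),
\end{equation*}
and the same bound for $e^{D'}_Y(y)$ is obtained by replacing $D$ with $D'$ in the last two steps.

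The only non-routine point is the generalisation to observables lying in more than two contexts $D_1,D_2,\ldots$, where $\Delta_{D,D'}(Y)$ is replaced by $\min_{i\neq j}\Delta_{D_i,D_j}(Y)$. Here one has to be careful that the ``agreement event'' $\{\lambda \mid F_Y(\lambda,D_1)=\cdots=F_Y(\lambda,D_n)\}$ has probability at least $1-\min_{i\neq j}\Delta_{D_i,D_j}(Y)$: this follows by a union-bound style argument applied to the complementary pairwise disagreement events. Beyond this technicality, which I anticipate being the only mildly delicate step, the proof is a direct computation.
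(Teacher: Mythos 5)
Your proof of the claim as stated is correct and follows essentially the same route as the paper: item (1) via the disjoint decomposition of the agreement event $\{\lambda \mid F_Y(\lambda,D)=F_Y(\lambda,D')\}$ into the $E_y$, whose total probability is exactly the normalising constant $1-\Delta_{D,D'}(Y)$, and item (2) via the chain $\Delta \geq \Delta^*_{D,D'}(Y)\geq \Delta_{D,D'}(Y)$ together with $E_y\subseteq\{\lambda\mid F_Y(\lambda,D)=y\}$ and monotonicity of probability.

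One caution about your closing aside on observables lying in more than two contexts: the assertion that the joint agreement event $\{\lambda\mid F_Y(\lambda,D_1)=\cdots=F_Y(\lambda,D_n)\}$ has probability at least $1-\min_{i\neq j}\Delta_{D_i,D_j}(Y)$ by a ``union-bound style argument'' does not hold. A union bound on the pairwise disagreement events only yields the weaker lower bound $1-\sum_{i<j}\Delta_{D_i,D_j}(Y)$, and in fact the joint agreement probability is bounded \emph{above} by $\min_{i,j}\bigl(1-\Delta_{D_i,D_j}(Y)\bigr)=1-\max_{i,j}\Delta_{D_i,D_j}(Y)$, since joint agreement implies each pairwise agreement. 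To make the normalisation work in the multi-context case one must instead invoke the CbD result that there exists a coupling (equivalently, a canonical model) attaining $Pr\left[e^{D_1}_Y=\cdots=e^{D_n}_Y\right]=\min_{i\neq j}Pr\left[e^{D_i}_Y=e^{D_j}_Y\right]$, and work with that specific model rather than an arbitrary one. Since the claim itself is stated for a pair of contexts $D,D'$, this does not affect the correctness of your proof of the statement, but the generalisation should not be presented as a routine union-bound computation.
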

\begin{proof}
1. By definition:
\begin{align}
    \Delta_{D, D'}(Y) =& 1- Pr_\mathcal{M}\left[\left\{\lambda~\middle|~F_Y(\lambda, D) = F_Y(\lambda, D')\right\}\right]\\
    =& 1-\sum_{y\in \mathcal{O}_Y} Pr_\mathcal{M} \left[\left\{\lambda~\middle|~F_Y(\lambda, D) = F_Y(\lambda, D') = y\right\}\right]
\end{align}
Then
\begin{equation}
    \sum_{y\in \mathcal{O}_Y} h_{NS, Y}(y) = \frac{1-\sum_{y} Pr_\mathcal{M} \left[\left\{\lambda~\middle|~F_Y(\lambda, D) = F_Y(\lambda, D') = y\right\}\right]}{1- \sum_{y'} Pr_\mathcal{M} \left[\left\{\lambda~\middle|~F_Y(\lambda, D) = F_Y(\lambda, D') = y'\right\}\right]} = 1
\end{equation}

2. First recall that in the HVM $\left(\left(g^\lambda\right)_{\lambda\in \Lambda}, q_\Lambda\right)$ we have:
\begin{equation}
    e^D_Y(y) = \sum_\lambda q_\Lambda(\lambda)g^{\lambda, D}_{Y}(y)
\end{equation}
where $g^{\lambda, D}_Y(y)=1$ iff $F_Y(\lambda, D) = y$ and $g^{\lambda,D}_Y(y)=0$ otherwise. So, in fact:
\begin{equation}
    e^D_Y(y) = Pr_\mathcal{M}\left[\left\{\lambda~\middle|~F_Y(\lambda, D)=y\right\}\right]
\end{equation}
In addition, we have:
\begin{equation}
    \Delta = \max_Y \max_{\mathcal{M}} \Delta_{D, D'}(Y)
\end{equation}
So:
\begin{align}
    \Delta \geq \Delta_{D, D'} (Y) \quad &\implies \quad 1 - \Delta \leq 1 - \Delta_{D, D'} (Y)\\
    &\implies \quad \frac{1 - \Delta}{1 - \Delta_{D, D'} (Y)} \leq 1
\end{align}

Now:
\begin{align}
    (1 - \Delta) h_{NS, Y} (y) =& \frac{1-\Delta}{1 - \Delta_{D, D'}(Y)}Pr_\mathcal{M} \left[\left\{\lambda~\middle|~F_Y(\lambda, D) = F_Y(\lambda, D') = y\right\}\right]\\
    \leq& Pr_\mathcal{M} \left[\left\{\lambda~\middle|~F_Y(\lambda, D) = F_Y(\lambda, D') = y\right\}\right]\\
    \leq& Pr_\mathcal{M} \left[\left\{\lambda~\middle|~F_Y(\lambda, D) = y\right\}\right] = e^D_Y(y)
\end{align}
since $F_Y(\lambda, D) = F_Y(\lambda, D') = y \implies F_Y(\lambda, D) = y$.

A similar proof can be done for $D'$.
\end{proof}

\begin{claim}
We can define $h^D_{NS, Y}$, $h^{D'}_{NS, Y}$ distributions over $\mathcal{O}_D$ and $\mathcal{O}_{D'}$ respectively such that:
\begin{enumerate}
    \item $h^D_{NS, Y}\left(\left.o\right|_Y\right) = h^{D'}_{NS, Y}\left(\left.o\right|_Y\right) = h_{NS, Y}\left(\left.o\right|_Y\right)$
    \item For all $o\in \mathcal{O}_D$:
    \begin{equation}
        (1-\Delta)h^D_{NS}(o) \leq e^D(o)
    \end{equation}
    (and similarly for $D'$).
\end{enumerate}
\end{claim}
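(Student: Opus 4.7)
The plan is to extend the distribution $h_{NS,Y}$ on $\mathcal{O}_Y$ to distributions $h^D_{NS}$ and $h^{D'}_{NS}$ over $\mathcal{O}_D$ and $\mathcal{O}_{D'}$ by using the empirical conditional distributions on the remaining observables as the ``glue''. Concretely, for each $o \in \mathcal{O}_D$, I would define
\begin{equation}
h^D_{NS}(o) = \begin{cases} h_{NS,Y}(o|_Y) \cdot \dfrac{e^D(o)}{e^D_Y(o|_Y)} & \text{if } e^D_Y(o|_Y) > 0,\\ 0 & \text{otherwise,}\end{cases}
\end{equation}
and analogously for $h^{D'}_{NS}$ using $e^{D'}$. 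The intuition is that $h_{NS,Y}(y)$ dictates how much ``mass'' to place on the fibre over $y$, and $e^D$ dictates how to spread that mass across the joint outcomes compatible with $y$. This treats $h^D_{NS}$ as $h_{NS,Y}$ times the empirical conditional $e^D(\,\cdot\, | Y = y)$.

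Next I would verify the three required properties. For well-definedness, note that the previous claim gives $(1-\Delta)h_{NS,Y}(y) \leq e^D_Y(y)$; so whenever $\Delta < 1$ and $e^D_Y(y) = 0$, we must have $h_{NS,Y}(y) = 0$, which makes the second case of the definition harmless. (If $\Delta = 1$ the inequality to prove is vacuous, so any marginal-respecting extension suffices.) For the marginal condition, summing over all $o$ restricting to $y$ yields
\begin{equation}
\sum_{o:\, o|_Y = y} h^D_{NS}(o) \;=\; h_{NS,Y}(y) \cdot \frac{\sum_{o:\, o|_Y = y} e^D(o)}{e^D_Y(y)} \;=\; h_{NS,Y}(y),
\end{equation}
and summing over $y$ confirms $h^D_{NS}$ is a probability distribution on $\mathcal{O}_D$. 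The same holds for $h^{D'}_{NS}$, so property 1 is immediate. For the dominance property 2, I simply apply the inequality from the previous claim:
\begin{equation}
(1-\Delta)\, h^D_{NS}(o) \;=\; (1-\Delta)\, h_{NS,Y}(o|_Y) \cdot \frac{e^D(o)}{e^D_Y(o|_Y)} \;\leq\; e^D_Y(o|_Y) \cdot \frac{e^D(o)}{e^D_Y(o|_Y)} \;=\; e^D(o),
\end{equation}
and symmetrically for $D'$.

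There is no serious obstacle here; the only delicate point is the degenerate case $e^D_Y(y) = 0$, which is handled by appealing to the previous claim to kill $h_{NS,Y}(y)$. The construction essentially \emph{uses} the empirical model to provide the conditional structure, which is exactly what guarantees both the marginal constraint and the inequality simultaneously. Once this claim is in hand, the subsequent step of the larger proof will presumably invoke it once for each context containing the observable $Y$ in order to produce compatible no-signalling pieces across all contexts, ultimately yielding a valid HVM with $c_{NS}^\lambda = 1 - \Delta$ and hence establishing $\sigma \leq \max_X \Delta^*_{C,C'}(X)$.
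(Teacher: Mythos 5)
Your construction is exactly the one the paper uses: $h^D_{NS}(o) = h_{NS,Y}(o|_Y)\,e^D(o)/e^D_Y(o|_Y)$, with the marginal and dominance properties verified the same way (the paper re-expands $h_{NS,Y}$ in terms of $Pr_\mathcal{M}$ and the ratio $\frac{1-\Delta}{1-\Delta_{D,D'}(Y)}\leq 1$, whereas you invoke the previous claim's inequality $(1-\Delta)h_{NS,Y}(y)\leq e^D_Y(y)$ directly, which is equivalent and slightly cleaner). Your explicit handling of the degenerate case $e^D_Y(y)=0$ is a small point the paper glosses over, but otherwise the two proofs coincide.
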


\begin{proof}
We'll only show this for the context $D$ (but the same applies for $D'$).

We define:
\begin{equation}
    h^D_{NS}(o) = \frac{e^D(o)}{e^D_Y\left(\left.o\right|_Y\right)} h_{NS, Y}\left(\left.o\right|_Y\right)
\end{equation}
This defines a probability distribution since:
\begin{align}
    \sum_{o\in \mathcal{O}_D} h^D_{NS} (o) =& \sum_{o\in \mathcal{O}_D} \frac{e^D(o)}{e^D_Y\left(\left.o\right|_Y\right)}h_{NS, Y}\left(\left.o\right|_Y\right)\\
    =& \sum_{y\in \mathcal{O}_Y}\sum_{o\in \mathcal{O}_D|\left.o\right|_Y = y} \frac{e^D(o)}{e^D_Y\left(y\right)}h_{NS, Y}\left(y\right)\\
    =& \sum_{y\in \mathcal{O}_Y}\frac{h_{NS, Y}\left(y\right)}{e^D_Y\left(y\right)}\sum_{o\in \mathcal{O}_D|\left.o\right|_Y = y}e^D(o) = \sum_{y\in \mathcal{O}_Y}\frac{h_{NS, Y}\left(y\right)}{e^D_Y\left(y\right)}e^D\left(y\right)\\
    =& \sum_{y\in \mathcal{O}_Y}h_{NS, Y}\left(y\right) = 1
\end{align}

1. We can just check that:
\begin{equation}
    \left.h^D_{NS}\right|_Y (y) = \sum_{o\in \mathcal{O}_D|\left.o\right|_Y=y} \frac{e^D(o)}{e^D_Y(y)} h_{NS, Y}(y) = h_{NS, Y}(y)
\end{equation}

2. Similarly, we just check that:
\begin{align}
    (1-\Delta) h^D_{NS, Y}(o) =& \frac{1-\Delta}{1-\Delta_{D, D'}(Y)} \frac{e^D(o)}{e^D_Y\left(\left.o\right|_Y\right)}Pr_\mathcal{M}\left[\left\{\lambda~\middle|~F_Y(\lambda, D) = F_Y(\lambda, D')=y\right\}\right]\\
    \leq& e^D(o)\frac{Pr_\mathcal{M}\left[\left\{\lambda~\middle|~F_Y(\lambda, D) = F_Y(\lambda, D')=y\right\}\right]}{e^D_Y\left(\left.o\right|_Y\right)}\\
    \leq& e^D(o)
\end{align}
\end{proof}

\begin{cor} There is some empirical model $h'$ such that:
\begin{equation}
    e = (1-\Delta) h_{NS} + \Delta h'
\end{equation}
\end{cor}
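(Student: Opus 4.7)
The plan is to define $h'$ explicitly by inverting the desired decomposition context-by-context, and then verify that each local distribution is a genuine probability distribution. Specifically, assuming first that $\Delta > 0$, I set
\[
h'^D(o) \;=\; \frac{e^D(o) - (1-\Delta)\,h^D_{NS}(o)}{\Delta}
\]
for every context $D \in \mathcal{C}$ and outcome $o \in \mathcal{O}_D$. Rearranging this equation immediately yields $e^D = (1-\Delta)\,h^D_{NS} + \Delta\,h'^D$, which, if valid for every $D$, establishes the stated decomposition $e = (1-\Delta)\,h_{NS} + \Delta\,h'$ at the level of empirical models.

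The only real content of the proof is showing that each $h'^D$ is indeed a probability distribution on $\mathcal{O}_D$. Non-negativity, $h'^D(o) \geq 0$, is where I invoke the work already done: this is equivalent to $(1-\Delta)\,h^D_{NS}(o) \leq e^D(o)$, which is exactly Claim 2, part 2. Normalisation is a one-line computation,
\[
\sum_{o \in \mathcal{O}_D} h'^D(o) \;=\; \frac{1}{\Delta}\Bigl(\sum_{o} e^D(o) - (1-\Delta)\sum_{o} h^D_{NS}(o)\Bigr) \;=\; \frac{1 - (1-\Delta)}{\Delta} \;=\; 1,
\]
using that $e^D$ and $h^D_{NS}$ are both probability distributions. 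Since an empirical model is, by definition, just a family of probability distributions indexed by contexts (no further structural condition on $h'$ is required, as $h'$ is allowed to be signalling), the collection $h' = (h'^D)_{D \in \mathcal{C}}$ is a valid empirical model.

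The two boundary cases are painless and can be disposed of in a single sentence. If $\Delta = 0$, then the ambient decomposition degenerates to $e = h_{NS}$, and we may take $h'$ to be any empirical model (for instance $h' = e$). If $\Delta = 1$, the no-signalling term contributes nothing and we simply set $h' = e$. I do not expect any genuine obstacle here: the corollary is essentially a bookkeeping step that packages Claims 1 and 2 into the convex decomposition that witnesses the inequality $\sigma \leq \max_X \Delta^*_{C,C'}(X)$. The only subtlety worth flagging is that the construction relies on the assumption $|C_i \cap C_j| \leq 1$ implicit in the setup, since Claim 2 fixed a single distinguished observable $Y$ to build the $h^D_{NS}$; under that hypothesis, the marginals of the $h^D_{NS}$ agree on their (at most singleton) intersections, so $h_{NS}$ is no-signalling as required.
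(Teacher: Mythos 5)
Your proposal is correct and is essentially identical to the paper's own proof: the paper defines $h' = \frac{1}{\Delta}\left(e - (1-\Delta)h_{NS}\right)$, gets non-negativity from the previously established inequality $(1-\Delta)h_{NS}\leq e$, and checks normalisation by the same one-line sum. Your explicit treatment of the degenerate cases $\Delta=0$ and $\Delta=1$ is a small extra care the paper omits, but it does not change the argument.
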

\begin{proof}
We have already shown that $(1-\Delta)h_{NS}\leq e$. We then define:
\begin{equation}
    h' = \frac{1}{\Delta} \left(e - (1-\Delta)h_{NS}\right) \geq 0
\end{equation}
This is an empirical mode since for all contexts $D$, we have:
\begin{equation}
    \sum_{o\in \mathcal{O}_D} h'^D(o) = \frac{1}{\Delta}\left[\sum_{o\in \mathcal{O}_D}e^D(o) - (1-\Delta)\sum_{o\in \mathcal{O}_D}h^D_{NS}(o)\right] = \frac{1}{\Delta}\left[1 - (1 - \Delta)\right] = 1
\end{equation}
\end{proof}

\end{proof}
\subsection*{Why it doesn't work when $\left|C\cap C'\right|>1$}
The proof of \eqref{eq:ineq2} relies on the fact that there is for each observable $X$ a function $F_X(\lambda, C): \Lambda\times \mathcal{C}\to X$ which defines the probability distribution of the hidden-variable distributions $h^\lambda$. Now, if we were in a situation when $\left|C\cap C'\right|\geq 2$, then we would not only (in the sheaf-theoretic framework) check the no-signalling condition on each of the $X\in C\cap C'$ but also for each $S\subseteq C\cap C'$.

The above reasoning could easily be extended if there was a function $F_S: \Lambda\times \mathcal{C}\to \prod_{X\in S} \mathcal{O}_X$, which restrict to $F_X$ by post-composing with a projection operator (recall that we want \emph{all} restrictions to be well-defined). But then, we would also have:
\begin{align}
    1 - \Delta_{C, C'} \left(\{X, Y\}\right) &= Pr\left[\left\{\lambda|F_{\{X, Y\}}(\lambda, C) = F_{\{X, Y\}}(\lambda, C')\right\}\right]\\
    &\leq Pr\left[\left\{\lambda|\pi_X\circ F_{\{X, Y\}}(\lambda, C) = \pi_X\circ F_{\{X, Y\}}(\lambda, C')\right\}\right] \nonumber\\
    &= \Delta_{C, C'}(X)
\end{align}
And therefore, if $\mathcal{X}$ denotes the set of observables:
\begin{equation}
    \min_{S\in \mathcal{P}(\mathcal{X})} \Delta_{C, C'}(S) \geq \min_{X\in \mathcal{X}} \Delta_{C, C'}(X)
\end{equation}
in every canonical model. Hence, there could exist a model such that:
\begin{equation}
    \max_\mathcal{M}\max_{S\subseteq \mathcal{X}} \Delta_{C, C'}(S)\geq\max_{\lambda} 1-c^\lambda_{NS}> \max_\mathcal{M}\max_{X\in \mathcal{X}} \Delta_{C, C'}(X)
\end{equation}
\section{Equivalence between causality notions}\label{app:causality}
\paragraph{}We start by discussing the equivalence between distributions of causal \emph{functions} and causality as defined in terms of causal \emph{processes} for bipartite systems, and we will (without loss of generality) consider the causal order $A\preceq B$. We will then extend this to arbitrary causal orders.

\paragraph{}First, we need to state that distributions of causal functions in $\mathcal{D}_{\mathbb{R}_+}\left(\mathcal{L}_{A\preceq B}(\left\{A,B\right\})\right)$ live in classical probability spaces while the output of quantum processes such as the ones in~\eqref{eq:terminalityA} or~\eqref{eq:terminalityB} are quantum states. Hence, we will here show an equivalence between causality as described in Section~\ref{sec:QDescription} and the \emph{measurement statistics} of causal processes. Namely, for a quantum circuit:
\begin{equation*}
    \tikzfig{Appendix/tikzit/bipartiteCircuit}
\end{equation*}
and the interpretation of input $a,b$ and outputs $o_a, o_b$ as quantum states $\ket{a},\ket{b},\ket{o_a}, \ket{o_b}$, we define the conditional probability:
\begin{equation}
    P\left[o_a, o_b~\middle|~a,b\right] = \tikzfig{Appendix/tikzit/condProbCircuit}
\end{equation}

Moreover, assuming that the set $\left\{\ket{o_{B}}\right\}$ spans the entire space $B_{out}$, i.e. that:
\begin{equation}
    \sum_{o_b} \tikzfig{Appendix/tikzit/ob} = \tikzfig{Appendix/tikzit/discard}
\end{equation}
This is reasonable as we expect that, at each run, some outcome is detected. Furthermore, it should also be the case that:
\begin{equation}
    \tikzfig{Appendix/tikzit/discardState} = 1
\end{equation}
i.e. discarding a state is deterministic. 

\paragraph{}Using the above assumptions, the operational condition of equation~\eqref{eq:terminalityA} for the causal order $A\preceq B$ implies that:
\begin{equation}
    \tikzfig{Appendix/tikzit/ApreceqBState} = \sum_{o_b} \tikzfig{Appendix/tikzit/condProbCircuit} = \tikzfig{Appendix/tikzit/ApreceqBRHSState}
\end{equation}
which, in terms of conditional probabilities translate as:
\begin{equation}
    \sum_{o_b}P\left[o_a,o_b~\middle|~a,b\right] = P\left[o_a~\middle|a\right]
\end{equation}

Now, let $F$ being a set of functions $f: A\times B\to O_A\times O_B$ corresponding to events then, for any section $\mu\in \mathcal{D}_{\mathbb{R}_+}\left(F\right)$, we have:
\begin{equation}
    P\left[o_a,o_b~\middle|~a,b\right] = \sum_{f~s.t.~f(a,b)=(o_a,o_b)} \mu(f)
\end{equation}
and:
\begin{equation}
    P\left[o_a~\middle|~a\right] = \sum_{f~s.t.~\left.f\right|_A(a)=(o_a)} \mu(f)
\end{equation}
Therefore, we have $\mu$ compatible with the causal order $A\preceq B$ with respect to the operational definition of equation~\eqref{eq:terminalityA} iff:
\begin{equation}\label{eq:AprecBeqTarget}
    \sum_{o_b} \sum_{f~s.t.~f(a,b)=(o_a,o_b)} \mu(f) = \sum_{f~s.t.~\left.f\right|_A(a)=(o_a)} \mu(f)
\end{equation}
Now, since the LHS of equation~\eqref{eq:AprecBeqTarget} is marginalising $o_b$, the free variables on the LHS are $a,b,o_a$, whereas the free variables in the RHS are $a,o_A$. Therefore, the above equation is satisfied iff the values of $\left.f\right|_A$ do not depend on the value of $b$, which is exactly the condition for causality of functions with respect to the causal order $A\preceq B$.

\paragraph{}In order to extend this result to arbitrary causal order, we employ a trick from \cite{KissingerHobanCoecke}, namely to split any given causal order $\left(\Sigma, \preceq\right)$ as a coarse-grained causal order $A\preceq B$, where $A$ consists on the parties $\Sigma\/\left\{B\right\}$, and $B$ is a \emph{maximal box}, i.e. a party which does not influence any other one. Then, we know from the above result that a circuit is compatible with the causal order $A\preceq B$ iff the statistics of the measurements correspond to a section $\mu\in \mathcal{D}_{\mathbb{R}_+}\left(F\right)$, where $F$ is a set of causal functions with respect to $A\preceq B$. Therefore, we need to check that $\left.\mu\right|_A$ is still compatible with the causal order $\Sigma$. We then proceed to isolate a new $B'\in A$ such that $B'$ is maximal in the reduced causal order $\Sigma\/\{B\}$. We then keep going until the reduced scenario only consist of 2 parties.
\section{Proof of proposition~\ref{prop:causalFracForm}}\label{app:causalFracForm}
In order to prove the formula for the causal fraction in (2,2,2)-Bell scenarios, we start by proving a more general equation that the causal fraction needs to satisfy, in any empirical model.

\begin{prop}
    For a family of probability distributions where the causal order is not known, an upper bound of the causal fraction can be calculated as follows\footnote{Note: the order of the restrictions is from left to right.}:
\begin{equation}
\label{eq:causual2}
    \gamma \leq \min_{U,V} 1 - \left|\left.\left.e_{\underline{i}}\right|_U\right|_{U\cap V}\left(\underline{o}\right) - \left.\left.e_{\underline{i}}\right|_V\right|_{U\cap V}\left(\underline{o}\right)\right|
\end{equation}
where $\left.\left.e_{\underline{i}}\right|_U\right|_{U\cap V}$ corresponds to the restriction of $e_{\underline{i}}$ to first $U$ and then from $U$ to $U\cap V$ (and similarly for $\left.\left.e_{\underline{i}}\right|_V\right|_{U\cap V}$).
\end{prop}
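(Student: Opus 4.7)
The plan is to start from the definition of the causal fraction and exploit the linearity of restriction. By assumption, $\gamma = \mathsf{CausF}_\Sigma$ is the largest value such that the observed empirical model admits a convex decomposition $e = \gamma \cdot e_\Sigma + (1-\gamma) \cdot e'$, where $e_\Sigma$ is consistent with the causal order $\Sigma$ and $e'$ is an arbitrary empirical model. Because the restriction morphisms of the presheaf $\mathcal{D}_{\mathbb{R}_+}\mathcal{E}_\Sigma$ are linear (they are realised by summing probabilities over outcomes being marginalised out), any such convex decomposition commutes with iterated restrictions.

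First I would fix contexts $U$ and $V$ and push the decomposition through each of the two double restrictions $\left.\left.(\cdot)\right|_U\right|_{U\cap V}$ and $\left.\left.(\cdot)\right|_V\right|_{U\cap V}$, obtaining
\begin{align*}
\left.\left.e_{\underline{i}}\right|_U\right|_{U\cap V} &= \gamma\,\left.\left.(e_\Sigma)_{\underline{i}}\right|_U\right|_{U\cap V} + (1-\gamma)\,\left.\left.e'_{\underline{i}}\right|_U\right|_{U\cap V},\\
\left.\left.e_{\underline{i}}\right|_V\right|_{U\cap V} &= \gamma\,\left.\left.(e_\Sigma)_{\underline{i}}\right|_V\right|_{U\cap V} + (1-\gamma)\,\left.\left.e'_{\underline{i}}\right|_V\right|_{U\cap V}.
\end{align*}
Subtracting these would cancel the causal term, since consistency of $e_\Sigma$ with $\Sigma$ forces its two double restrictions onto the common subcontext $U\cap V$ to agree. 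Taking absolute values at any joint outcome $\underline{o}$ and using that a difference of probabilities lies in $[-1,1]$, I would bound the resulting expression by $(1-\gamma)$, and then rearrange to obtain
\[
\gamma \;\leq\; 1 - \left|\left.\left.e_{\underline{i}}\right|_U\right|_{U\cap V}(\underline{o}) - \left.\left.e_{\underline{i}}\right|_V\right|_{U\cap V}(\underline{o})\right|,
\]
for each admissible triple $(U,V,\underline{o})$. Since the inequality holds pair by pair, taking the minimum on the right-hand side preserves it and yields the claim.

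The main obstacle I anticipate is pinning down precisely the class of pairs $(U,V)$ over which the cancellation $\left.\left.(e_\Sigma)_{\underline{i}}\right|_U\right|_{U\cap V} = \left.\left.(e_\Sigma)_{\underline{i}}\right|_V\right|_{U\cap V}$ is actually guaranteed, given that the statement of the proposition is agnostic about the causal order $\Sigma$. This cancellation is the causal analogue of the no-signalling condition, and in the framework of Section~\ref{subsec:sheafContextuality} it is forced whenever $U\cap V$ (viewed via the intersection of sub-scenarios in $\mathcal{L}_\Sigma$) sits in a position where both restrictions are well defined as sections of $\mathcal{E}_\Sigma$. On any remaining pair the right-hand side of the claimed inequality is at most $1$, so such pairs contribute trivially to the minimum and can safely be included. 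Once this bookkeeping is done, the linearity argument above gives the bound uniformly in $\Sigma$, which is precisely the sense in which the causal order need not be known.
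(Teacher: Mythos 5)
Your core argument is correct and is essentially the route the paper takes: both proofs start from the convex decomposition $e = \gamma\, e_\Sigma + (1-\gamma)\,e'$, use the fact that the causal part restricts consistently onto $U\cap V$ through either $U$ or $V$, and finish with an elementary probability bound. The only cosmetic difference is in the last step: you cancel the causal terms by subtraction and bound the residual $(1-\gamma)$-weighted difference of the $e'$ restrictions by $1-\gamma$, whereas the paper's appendix derives the two one-sided bounds $\gamma\,e_{\Sigma,\underline{i}}|_{U\cap V}(\underline{o}) \leq \min_{X\in\{U,V\}} e_{\underline{i}}|_X|_{U\cap V}(\underline{o})$ and $\gamma\bigl(1 - e_{\Sigma,\underline{i}}|_{U\cap V}(\underline{o})\bigr) \leq 1 - \max_{X\in\{U,V\}} e_{\underline{i}}|_X|_{U\cap V}(\underline{o})$ and adds them to get $\gamma \leq 1 - m_+ + m_-$. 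Your version is, if anything, slightly more direct.

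One small correction to your closing paragraph: the worry about pairs $(U,V)$ on which the cancellation might fail is resolved the wrong way round. If there genuinely were a pair for which $\left.\left.(e_\Sigma)_{\underline{i}}\right|_U\right|_{U\cap V} \neq \left.\left.(e_\Sigma)_{\underline{i}}\right|_V\right|_{U\cap V}$, you could \emph{not} "safely include" it in the minimum --- adding a pair to a minimum can only decrease the right-hand side, so an unproven pair would threaten the inequality rather than contribute trivially. The reason the proof goes through is that no such pair exists: consistency of $e_\Sigma$ with the causal order is precisely the statement that its restrictions onto any common subcontext agree whichever path they are taken along, so the cancellation holds for every pair over which the minimum ranges.
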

\begin{proof}
    For every causal empirical causal model $e^\Omega$ with respect to a causal scenario $\Sigma = \left(\Omega, \underline{I}, \underline{O}\right)$, if we have $\gamma \cdot e^\Omega \preceq e$, then both:
    \begin{equation}
        \gamma \cdot \left.e^\Omega_{\underline{i}}\right|_{U\cap V}(\underline{o})\leq \left.\left.e_{\underline{i}}\right|_U\right|_{U\cap V}(\underline{o})
    \end{equation}
    and
    \begin{equation}
        \gamma \cdot\left.e^\Omega_{\underline{i}}\right|_{U\cap V}(\underline{o})\leq \left.\left.e_{\underline{i}}\right|_V\right|_{U\cap V}(\underline{o})
    \end{equation}
    So:
    \begin{equation}\label{eq:p1e1}
        \gamma \cdot \left.e^\Omega_{\underline{i}}\right|_{U\cap V}(\underline{o})\leq \min_{X\in \{U,V\}}\left.\left.e_{\underline{i}}\right|_X\right|_{U\cap V}(\underline{o})
    \end{equation}
    Now, since $e_{\underline{i}}$ are probability distributions:
    \begin{equation}
        1 - \left.\left.e_{\underline{i}}\right|_X\right|_{U\cap V}(\underline{o}) = \sum_{\underline{o'}\neq \underline{o}} \left.\left.e_{\underline{i}}\right|_X\right|_{U\cap V}(\underline{o'})
    \end{equation}
    and similarly for $e^\Omega$. Therefore, using $\gamma e^\Omega\preceq e$ once again:
    \begin{equation}\label{eq:p1e2}
        \gamma \left(1 - \left.\left.e^\Omega_{\underline{i}}\right|_X\right|_{U\cap V}(\underline{o})\right) \leq \min_{X\in \{U,V\}} 1 - \left.\left.e_{\underline{i}}\right|_X\right|_{U\cap V}(\underline{o}) = 1 - \max_{X\in \{U,V\}}\left.\left.e_{\underline{i}}\right|_X\right|_{U\cap V}(\underline{o})
    \end{equation}
    Then, writing $m_- = \min_{X\in \{U,V\}}\left.\left.e_{\underline{i}}\right|_X\right|_{U\cap V}(\underline{o})$ and $m_+ =\max_{X\in \{U,V\}}\left.\left.e_{\underline{i}}\right|_X\right|_{U\cap V}(\underline{o})$ for simplicity, we use \eqref{eq:p1e1} and \eqref{eq:p1e2} to get:
    \begin{equation}
        \gamma \leq 1 - m_+ + m_-
    \end{equation}
    Now, using binary minima and maxima this reduces to:
    \begin{equation}
        \gamma \leq 1 - \left|\left.\left.e_{\underline{i}}\right|_U\right|_{U\cap V}\left(\underline{o}\right) - \left.\left.e_{\underline{i}}\right|_V\right|_{U\cap V}\left(\underline{o}\right)\right|
    \end{equation}
    And since this has to be the case for all $U, V \in \mathcal{L}$, the claimed inequality has to hold.
\end{proof}

Let's now describe a construction of a causal empirical model $e^\Omega$ which satisfies $\gamma\cdot e^\Omega\preceq e$, for any given (2,2,2) Bell-type  model $e$, where $\gamma$ is given as in \eqref{eq:causalF}. This would therefore give proof that the above inequality can be saturated, and therefore that the causal fraction of such models can be known with certainty. 

    We start by constructing a probability distribution for the event $A$ as follows. For any $a\in I_A$, we select $o^*_A \in O$ such that:
    \begin{equation}
        \min_{b\in I_B}\left.e_{\left(a,b\right)}\right|_{A}(o^*_A) = \min_{o\in O}\min_{b\in I_B}\left.e_{\left(a,b\right)}\right|_{A}(o)
    \end{equation}
    and set:
    \begin{equation}
        \left.e^\Omega_{(a, b)}\right|_{A}\left(o^*_A\right) = \frac{\min_{b\in I_B} \left.e_{(a, b)}\right|_{A}\left(o^*_A\right)}{\gamma}
    \end{equation}
    and $\left.e^\Omega_{(a, b)}\right|_{A}\left(\neg o^*_A\right) = 1 - \left.e^\Omega_{(a, b)}\right|_{A}\left(o^*_A\right)$. Then we have:
    \begin{equation}
        \gamma \cdot \left.e^\Omega_{(a, b)}\right|_{A}\left(o\right) \leq \left.e_{(a, b)}\right|_{A}\left(o\right)
    \end{equation}
    for all $\left(a, b\right)\in I_A\times I_B$, and for all possible outcome $o\in O$.
    
    One can then extend this distribution to the lowerset $A\to B = \Omega$ by setting, for example:
    \begin{equation}
        e^\Omega_{(a, b)} \left(o_A, o_B\right) = \frac{e^\Omega_{(a, b)}\left(o_A, o_B\right)}{\left.e_{(a, b)}\right|_{A}\left(o_A\right)}\left.e^\Omega_{(a, b)}\right|_{A}\left(o_A\right)
    \end{equation} 

    It is routine to check that this construction leads to a valid empirical model $e^\Omega$, which does indeed satisfy $\gamma\cdot e^\Omega \preceq e$.
\section{Proof of proposition~\ref{prop:minCost}}\label{app:minCost}
\paragraph{}Without loss of generality, let us assume that:
\begin{equation}\label{eq:gammaAssumption}
    \gamma = 1 - \left.e_{(a_1, b_1)}\right|_{A}(0)  + \left.e_{(a_1, b_2)}\right|_{A}(0)
\end{equation}
(If this is not the case, it is possible to form a new empirical model by relabelling inputs of $A$ and $B$ such that the above equation is valid). We then write for simplicity:
\begin{align}
    \alpha_{2i_A+i_B-3} = \left.e_{(a_{i_A}, b_{i_B})}\right|_{A}(0)%\\
    % \alpha_1 = \left.e_{(\{A, B\}, (0, 1))}\right|_{(\{A\}, (0))}(0)
\end{align}
Let us also write:
\begin{equation}
    e_{(a_{i_A}, b_{i_B})}(o_A,o_B) = \alpha_{2i_A+i_B-3, 2o_A+o_B}
\end{equation}

We are then looking for a target empirical $e_\Sigma$ of the form:
\begin{center}
    \begin{tabular}{c|c|c|c|c}
     & (0,0) & (0,1) & (1,0) & (1,1) \\\hline
     $a_1, b_1 $ & $p_1x_1$ & $(1-p_1)x_1$ & $p_2(1-x_1)$ & $(1-p_2)(1-x_1)$\\\hline
     $a_1, b_2 $ & $q_1x_1$ & $(1-q_1)x_1$ & $q_2(1-x_1)$ & $(1-q_2)(1-x_1)$\\\hline
     $a_2, b_1 $ & $r_1x_2$ & $(1-r_1)x_2$ & $r_2(1-x_2)$ & $(1-r_2)(1-x_2)$\\\hline
     $a_2, b_2 $ & $s_1x_2$ & $(1-s_1)x_2$ & $s_2(1-x_2)$ & $(1-s_2)(1-x_2)$\\\hline
\end{tabular}
\end{center}
such that:
\begin{equation}\label{eq:gammaeq}
    \gamma \cdot e_\Sigma \leq e
\end{equation}

Some intermediate results about the target empirical model $e_\Sigma$ are:
\begin{enumerate}
    \item $x_1 = \frac{\alpha_0 - \left(1-\gamma\right)}{\gamma} = \frac{\alpha_1}{\gamma}$
    \item $p_2 = 1 - \frac{\alpha_{0,3}}{\gamma\left(1-x_1\right)} = \frac{\alpha_{0,2}}{\gamma\left(1-x_1\right)}$
    \item $q_1 = 1 - \frac{\alpha_{1, 1}}{\gamma x_1} = \frac{\alpha_{1,0}}{\gamma x_1}$
\end{enumerate}
\begin{proof}
    \begin{enumerate}
        \item Since $\gamma = 1 - \alpha_0 + \alpha_1$, and $\gamma\geq 0$ then we have:
        \begin{equation}
            \alpha_0 \geq \alpha_1
        \end{equation}
        Now, from \eqref{eq:gammaeq}, we have have both of:
        \begin{align}
            \gamma \cdot x_1 \leq \alpha_1\quad \iff& \quad x_1\leq \frac{\alpha_1}{\gamma}\\
            \gamma \cdot (1-x_1) \leq 1-\alpha_0 \quad \iff& \quad \frac{\gamma-1 + \alpha_0}{\gamma} \leq x_1
        \end{align}
        Using \eqref{eq:gammaeq} again, we have $\alpha_1 = \gamma - 1 +\alpha_0$ so in fact:
        \begin{equation}\label{eq:prop11}
            x_1 = \frac{\alpha_1}{\gamma} = \frac{\alpha_0 - (1-\gamma)}{\gamma}
        \end{equation}
        
        \item From \eqref{eq:gammaeq}, we know that:
        \begin{align}
            \gamma \cdot p_2 (1-x_1) \leq& \alpha_{0,2}\\
            \gamma \cdot (1-p_2) (1-x_1) \leq& \alpha{0,3}\\
        \end{align}
        iff:
        \begin{equation}
            1 - \frac{\alpha_{0,3}}{\gamma(1-x_1)} \leq p_2 \leq \frac{\alpha_{0,2}}{\gamma(1-x_1)}
        \end{equation}
        Now, if \eqref{eq:prop11} holds, then we have:
        \begin{equation}
            \gamma \cdot (1-x_1) = 1-\alpha_0
        \end{equation}
        and from the definition of $\alpha_0$, we get:
        \begin{equation}
            1 - \alpha_0 = \alpha_{0,2} + \alpha_{0,3}
        \end{equation}
        So:
        \begin{equation}
            1 - \frac{\alpha_{0,3}}{\gamma(1-x_1)} = \frac{\alpha_{0,2}}{1-\alpha_0}
        \end{equation}
        and therefore:
        \begin{equation}\label{eq:prop12}
            p_2 = \frac{\alpha_{0,2}}{\gamma(1-x_1)} = 1 - \frac{\alpha_{0,3}}{\gamma(1-x_1)}
        \end{equation}

        \item Similarly, given \eqref{eq:gammaeq}, we have:
        \begin{align}
            \gamma q_1 x_1 \leq& \alpha_{1,0}\\
            \gamma(1-q_1)x_1 \leq& \alpha_{1,1}
        \end{align}
        iff:
        \begin{equation}
           \quad 1 - \frac{\alpha_{1,1}}{\gamma x_1} \leq q_1 \leq \frac{\alpha_{1,0}}{\gamma}
        \end{equation}
        and using \eqref{eq:prop11}, we have:
        \begin{equation}
            \gamma x_1 = \alpha_1
        \end{equation}
        and from the definition of $\alpha_1$:
        \begin{equation}
            \alpha_1 = \alpha_{1,0} + \alpha_{1,1}
        \end{equation}
        So:
        \begin{equation}
            1 - \frac{\alpha_{1,1}}{\gamma x_1} = \frac{\alpha_{1,0}}{\gamma x_1}
        \end{equation}
        Therefore:
        \begin{equation}\label{eq:prop13}
            q_1 = 1 - \frac{\alpha_{1,1}}{\gamma x_1} = \frac{\alpha_{1,0}}{\gamma x_1}
        \end{equation}
    \end{enumerate}
\end{proof}

We then write:
\begin{equation}
    V_{a_k, b_j} = \frac{1}{2} \sum_o \left|e_{\Sigma, (a_k, b_j)}(o) - e_{(a_k, b_j)}(o)\right|
\end{equation}
so:
\begin{equation}
    \min_{e_\Sigma} TV(e_\Sigma, e) = \min_{(p_i, q_i, r_i, s_i)_{i\in \{1,2\}}}\max_{(k, l)\in \{1,2\}^2} V_{a_k, b_l}
\end{equation}

Then, using \eqref{eq:prop12} and \eqref{eq:prop13}, we get:
\begin{align}
    V_{a_1, b_1} =& \frac{1}{2}\left[\left(1-\alpha_0\right)\frac{1-\gamma}{\gamma} + f\left(p_1\right)\right] \\
    V_{a_1, b_2} =& \frac{1}{2}\left[\alpha_1\frac{1-\gamma}{\gamma} + g\left(q_2\right)\right] 
\end{align}
where:
\begin{equation}
    f\left(p_1\right) = \begin{cases} \alpha_{0,0} - \alpha_{0,1} - 2p_1x_1 + x_1 \quad &\text{if } p_1 \in \left[1 - \frac{\alpha_{0,1}}{\gamma x_1}, 1 - \frac{\alpha_{0,1}}{x_1}\right]\\
    \alpha_0 - x_1 \quad &\text{if } p_1\in \left[1 - \frac{\alpha_{0,1}}{x_1}, \frac{\alpha_{0,0}}{x_1}\right]\\
    \alpha_{0,1} - \alpha_{0,0} + 2p_1x_1-x_1 \quad &\text{if } p_1\in \left[\frac{\alpha_{0,0}}{x_1}, \frac{\alpha_{0,0}}{\gamma x_1}\right]
    \end{cases}
\end{equation}
and:
\begin{equation}
    g\left(q_2\right) = \begin{cases} \alpha_{1,2} - \alpha_{1,3} - 2\left(1-x_1\right)q_2 + 1- x_1 \quad &\text{if } q_2 \in \left[1 - \frac{\alpha_{1,3}}{\gamma \left(1-x_1\right)}, 1 - \frac{\alpha_{1,3}}{1 - x_1}\right]\\
    x_1 - \alpha_1 \quad &\text{if } q_2\in \left[1 - \frac{\alpha_{1,3}}{1 - x_1}, \frac{\alpha_{1,2}}{1 - x_1}\right]\\
    \alpha_{1,3} - \alpha_{1,2} + 2\left(1-x_1\right)q_2 -1 + x_1 \quad &\text{if } q_2\in \left[\frac{\alpha_{1,2}}{1 - x_1}, \frac{\alpha_{1,2}}{\gamma \left(1 - x_1\right)}\right]
    \end{cases}
\end{equation}

which are both continuous functions with minima:
\begin{align}
    \min_{p_1} f(p_1) =& \alpha_0 - x_1 = \frac{1-\gamma}{\gamma} \left(1 - \alpha_0\right)\\
    \min_{q_2} g(q_2) =& x_1 - \alpha_1 = \frac{1-\gamma}{\gamma}\alpha_1
\end{align}

For $a_2,b_1$, we consider the case where:
\begin{enumerate}
    \item $r_1x_2 = \alpha_{2,0}$
    \item $(1-r_1)x_2 = \alpha_{2,1}$
    \item $r_1(1-x_2) = \alpha_{2,2}$
    \item $(1-r_1)(1-x_2) = \alpha_{2,3}$
\end{enumerate}
Therefore:
\begin{equation}
    V_{a_2, b_1} = 0
\end{equation}
Note that this is possible since this would imply that:
\begin{equation}
    x_2 = \alpha_2 
\end{equation}
and:
\begin{equation}
    \gamma \alpha_2 \leq \min_{k\in \{2, 3\}}\alpha_k
\end{equation}
from \eqref{eq:gammaeq} and similarly:
\begin{equation}
    \gamma \left(1 - \alpha_2\right) \leq 1 - \max_{k\in \{2, 3\}} \alpha_k
\end{equation}
And for $r_1$, we have:
\begin{equation}
    \frac{\alpha_{2,0}}{\alpha_2} \leq \frac{\alpha_{2,0}}{\gamma\alpha_2}
\end{equation}
and:
\begin{equation}
    \frac{\alpha_{2,0}}{\alpha_2} = 1 - \frac{1 - \alpha_{2,1}}{\alpha_2} \geq 1 - \frac{1 - \alpha_{2,1}}{\gamma\alpha_2}
\end{equation}
(and similarly for $r_2$).

Now, for $a_2, b_2$, we have:
\begin{equation}
    \alpha_2 = x_2 \geq \alpha_3 \iff \frac{a_{2} - \alpha_{2,1}}{x_2} \geq\frac{\alpha_{2,0}}{x_2}
\end{equation}
so $\left[1 - \frac{\alpha_{2,1}}{\alpha_2}, \frac{\alpha_{2,0}}{\alpha_2}\right]$ is a non-empty interval iff $\alpha_2\leq \alpha_3$ and in which case:
\begin{equation}
    s_1\in \left[1 - \frac{\alpha_{2,1}}{\alpha_2}, \frac{\alpha_{2,0}}{\alpha_2}\right] \implies s_1x_2 \geq \alpha_{2,0} \wedge (1-s_1)x_2 \geq \alpha_{2,1}
\end{equation}
So in this case:
\begin{align}
    V_{a_2, b_2} =& \frac{1}{2} \left[\alpha_2 - \alpha_3 + \left|s_2 (1-\alpha_2)  - \alpha_{2,2} \right| + \left|(1-s_2) (1-\alpha_2)  - \alpha_{2,3} \right|\right] \\
    =& \frac{1}{2} \left[\alpha_2 - \alpha_3 + h(s_2)\right]
\end{align}
where:
\begin{equation}
    h(s_2) = \begin{cases}
        (1 - 2s_2)(1-\alpha_2)  + \alpha_{2,3} - \alpha_{2,2} &\quad\text{if }s_2\in \left[1 - \frac{\alpha_{2,3}}{\gamma(1 - x_2)}, 1 - \frac{\alpha_{2,3}}{1 - x_2}\right]\\
        \alpha_2 - \alpha_3 &\quad\text{if }s_2\in \left[1 - \frac{\alpha_{2,3}}{1 - x_2}, \frac{\alpha_{2,2}}{1 - x_2}\right]\\
        (2s_2 - 1)(1-\alpha_2)  + \alpha_{2,2} - \alpha_{2,3} &\quad\text{if }s_2\in \left[\frac{\alpha_{2,2}}{1 - x_2}, \frac{\alpha_{2,2}}{\gamma(1 - x_2)}\right]\\
    \end{cases}
\end{equation}
Now, since:
\begin{equation}
    \gamma = \min \left\{1 - \alpha_0 + \alpha_1, 1 - \alpha_2 + \alpha_3\right\}
\end{equation}
then:
\begin{equation}
    \alpha_2 - \alpha_3 \leq \frac{1 - \gamma}{\gamma} \min\{1 - \alpha_0, \alpha_1\}
\end{equation}
Hence:
\begin{equation}
    \max_{(k, l)\in \{1, 2\}^2} V_{a_k, b_l} = \frac{1 - \gamma}{\gamma}\min \{1 - \alpha_0, \alpha_1\}
\end{equation}
(and similar reasoning can be made in the case $\alpha_3 \leq \alpha_2$).

Thus, we then obtain the bound:
\begin{equation}
    \min_{e_\Sigma} TV(e_\Sigma, e) \leq \frac{1 - \gamma}{\gamma}\min \{1 - \alpha_0, \alpha_1\}
\end{equation}
\section{Proof of proposition~\ref{prop:GPSF}}\label{app:GPSF}
Each of our contexts includes exactly one less word as the next one. As a result given a pair of successive contexts, we can without loss of generality consider the 2-context scenario as an $\{m, mw\}$ scenario.  The no-signalling condition  of the model is then as follows:
\begin{equation}
    \left.e_{mw}\right|_{m} = e_{m}
\end{equation}

\noindent
Given an arbitrary 2-context empirical model as above, we want to find the following decomposition:
\begin{equation}
    e = \mathsf{NSF}\cdot e_{NS} + \mathsf{SF}\cdot e'
\end{equation}
where $e_{NS}$ is the maximum possible across all such decompositions. Let us assume, as above, that  $m$ is a single ``observable'' with possible outcomes in $O^m=\left\{0, \ldots, n^{|m|}-1\right\}$, where $|m|$ is the number of words in the context $m$. Our first goal is to find a distribution for $m$ in $e_{NS}$, which satisfies the following for all $o_i\in O^m$:
\begin{equation}
    \mathsf{NSF} e_{NS,m}(o_i)\leq \min\left(\left.e_{mw}\right|_m\left(o_i\right), e_m\left(o_i\right)\right)
\end{equation}
From the above it follows that
\begin{equation}
    \sum_{o_i} \mathsf{NSF} e_{NS,m}(o_i) = \mathsf{NSF} \leq \sum_{o_i} \min\left(\left.e_{mw}\right|_m\left(o_i\right), e_m\left(o_i\right)\right)
\end{equation}

\noindent
One can always construct an empirical model $e_{NS}$ such that:
\begin{equation}
    \sum_{o_i} \min\left(\left.e_{mw}\right|_m\left(o_i\right), e_m\left(o_i\right)\right) e_{NS} \leq e
\end{equation}

\noindent
In order to see this, first observe that the  probability distribution of $e_{NS,m}$  can be constructed by first relabeling the outcomes to  $o_{i_k}$ for $0\leq k \leq n^{|m|}-1$  such that for  $N=n^{|m|}-1$, the following holds:
    \begin{align}
        \min \left(\left.e_{mw}\right|_{m}\left(o_{i_0}\right), e_{m}\left(o_{i_0}\right)\right) \leq& \min \left(\left.e_{mw}\right|_{m}\left(o_{i_1}\right), e_{m}\left(o_{i_1}\right)\right) \nonumber\\
        &\leq ... \nonumber\\
        &\leq \min \left(\left.e_{mw}\right|_{m}\left(o_{i_N}\right), e_{m}\left(o_{i_N}\right)\right)
    \end{align}
    
\noindent
    Then, we take  $\sigma$ to be $\sum_{o_i} \min\left(\left.e_{mw}\right|_m\left(o_i\right), e_m\left(o_i\right)\right)$ and  set:
    \begin{equation}
        e_{NS, m}(o_{i_0}) = \frac{\min \left(\left.e_{mw}\right|_{m}\left(o_{i_0}\right), e_{m}\left(o_{i_0}\right)\right)}{\sigma}
    \end{equation}
    
    \noindent
    We can then inductively define:
    \begin{equation}
        e_{NS,m}\left(o_{i_k}\right) = \min\left(\frac{\min \left(\left.e_{mw}\right|_{m}\left(o_{i_0}\right), e_{m}\left(o_{i_0}\right)\right)}{\sigma}, 1 - \sum_{j=0}^{k-1} e_{NS,m}(o_{i_j})\right)
    \end{equation}

\noindent
    From this definition, we know that for all $k$, we have the following:
    \begin{equation}
        \sigma e_{NS, m} \left(o_{i_k}\right) \leq \left.e_{mw}\right|_{m}\left(o_{i_k}\right), e_{m}\left(o_{i_k}\right)
    \end{equation}

\noindent
    In addition, the above forms a valid probability distribution as, if there exists a $k$ such that $e_{NS, m} \left(o_{i_k}\right) = 1 - \sum_{j=0}^{k-1} e_{NS, m} \left(o_{i_j}\right)$, then $e_{NS, m} \left(o_{i_k'}\right)=0$ for all $k'>k$ and therefore:
    \begin{equation}\label{eq:eNSA}
        \sum_{k} e_{NS, m} \left(o_{i_k}\right) = 1
    \end{equation}
    
    \noindent
    Similarly, if for all $k$, $e_{NS, m} \left(o_{i_k}\right) = \frac{\min \left(\left.e_{mw}\right|_{m}\left(o_{i_0}\right), e_{m}\left(o_{i_0}\right)\right)}{\sigma}$, then by definition of $\sigma$, we also have \eqref{eq:eNSA}. We  extend this probability distribution to an empirical model over $\{m,mw\}$, by defining:
    \begin{equation}
        e_{NS, mw} \left(o_m, o_w\right) = e_{mw}(o_m, o_w)\frac{e_{NS,m}(o_m)}{\left.e_{mw}\right|_{m}(o_m)}
    \end{equation}
    
    \noindent
   It is now easy to show that $\left.e_{NS, mw}\right|_{m} = e_{NS, m}$, and in addition, we have:
    \begin{equation}
        \sigma e_{NS, mw}\left(o_m, o_w\right) = e_{mw}(o_m, o_w)\frac{\sigma e_{NS,m}(o_m)}{\left.e_{mw}\right|_{m}(o_m)} \leq e_{mw}(o_m, o_w)
    \end{equation}

\noindent
As a result of the above calculations, the signalling fraction can be computed as follows:
\begin{equation}
    \mathsf{SF} = 1 - \sum_{o} \min\left(\left.e_{mw}\right|_{m}(o), e_m(o)\right)
\end{equation}
\chapter{Lexical ambiguity dataset}
\section{List of ambiguous words}\label{app:listWords}
\subsection*{Ambiguous nouns}
\begin{center}
    \scalebox{.85}{% [inline block 0: 76 envs, 115606 chars -> data_tex | \begin{tabular}{p{.15\textwidth}p{.2\textwidth}p{.4\textwidth}p{.25\textwidth}}         \textbf{Noun} & \textbf{Ambiguit...]
}
    \end{minipage}
\end{document}